\crefname{equation}{}{}
\Crefname{equation}{}{}
\crefname{definition}{\textbf{definition}}{definitions}
\Crefname{definition}{Definition}{Definitions}
\crefname{assumption}{\textbf{assumption}}{assumptions}
\Crefname{assumption}{Assumption}{Assumptions}
\definecolor{maroon}{RGB}{192,80,77}
\definecolor{mypink3}{cmyk}{0, 0.7808, 0.4429, 0.1412}
\newtheorem{theorem}{Theorem}[section]
\newtheorem{lemma}[theorem]{Lemma}
\newtheorem{corollary}[theorem]{Corollary}
\newtheorem{definition}[theorem]{Definition}
\newtheorem{example}[theorem]{Example}
\newtheorem{remark}[theorem]{Remark}
\newtheorem{assumption}[theorem]{Assumption}
\newcommand\norm[1]{\left\lVert#1\right\rVert}
\newcommand{\argmax}{\mathop{\mathrm{argmax}}}
\newcommand*\circled[1]{\scriptsize\tikz[baseline=(char.base)]{
		\node[shape=circle,draw,inner sep=0.2pt] (char) {#1};}}
\def\E{\mathbb{E}}
\def\P{\mathbb{P}}
\def\Var{\mathrm{Var}}
\def\sign{\mathrm{sign}}
\def\R{\mathbb{R}}
\def\cA{\mathcal{A}}
\def\cD{\mathcal{D}}
\def\cN{\mathcal{N}}
\def\cS{\mathcal{S}}
\def\cU{\mathcal{U}}
\def\Q{\mathcal{Q}}
\def\cN{\mathcal{N}}
\begin{document}

\title{\textbf{Near-optimal Offline Reinforcement Learning with Linear Representation: Leveraging Variance Information with Pessimism\footnote{To appear at {International Conference on Learning Representations} (ICLR), 2022.}}}

\author[1,2]{Ming Yin}
\author[4]{Yaqi Duan}
\author[3]{Mengdi Wang}
\author[1]{Yu-Xiang Wang}
\affil[1]{Department of Computer Science, UC Santa Barbara}
\affil[2]{Department of Statistics and Applied Probability, UC Santa Barbara}
\affil[3]{Department of Electrical and Computer Engineering, Princeton}
\affil[4]{Department of Operations Research and Financial Engineering, Princeton}
\affil[ ]{\texttt{ming\_yin@ucsb.edu}  \quad \texttt{yaqid@princeton.edu} \quad \texttt{mengdiw@princeton.edu}\quad
	\texttt{yuxiangw@cs.ucsb.edu}}

\date{}
\maketitle

\begin{abstract}

\emph{Offline reinforcement learning}, which seeks to utilize offline/historical data to optimize sequential decision-making strategies, has gained surging prominence in recent studies. Due to the advantage that appropriate function approximators can help mitigate the sample complexity burden in modern reinforcement learning problems, existing endeavors usually enforce powerful function representation models (\emph{e.g.} neural networks) to learn the optimal policies. However, a precise understanding of the statistical limits with function representations, remains elusive, even when such a representation is linear.

Towards this goal, we study the statistical limits of offline reinforcement learning with linear model representations. To derive the tight offline learning bound, we design the \emph{variance-aware pessimistic value iteration} (VAPVI), which adopts the conditional variance information of the value function for time-inhomogeneous episodic linear \emph{Markov decision processes} (MDPs). VAPVI leverages estimated variances of the value functions to reweight the Bellman residuals in the least-square pessimistic value iteration and provides improved offline learning bounds over the best-known existing results (whereas the Bellman residuals are equally weighted by design). More importantly, our learning bounds are expressed in terms of system quantities, which provide natural instance-dependent characterizations that previous results are short of. We hope our results draw a clearer picture of what offline learning should look like when linear representations are provided.

\end{abstract}

\newpage


\section{Introduction}\label{sec:introduction}

\emph{Offline reinforcement learning} (offline RL or batch RL \cite{lange2012batch,levine2020offline}) is the framework for learning a reward-maximizing policy in an unknown environment (\emph{Markov Decision Process} or MDP)\footnote{The environment could have other forms as well, \emph{e.g.} \emph{partially-observed MDP} (POMDP) or \emph{non-markovian decision process} (NMDP).} using the logged data coming from some behavior policy $\mu$. Function approximations, on the other hand, are well-known for generalization in the standard supervised learning. Offline RL with function representation/approximation, as a result, provides generalization across large state-action spaces for the challenging sequential decision-making problems when no iteration is allowed (as opposed to online learning). This paradigm is crucial to the success of modern RL problems as many deep RL algorithms find their prototypes in the literature of offline RL. For example, \cite{xie2020q} provides a view that \emph{Fitted Q-Iteration} \citep{gordon1999approximate,ernst2005tree} can be considered as the theoretical prototype of the deep \emph{Q}-networks algorithm (DQN) \cite{mnih2015human} with neural networks being the function representors. On the empirical side, there are a huge body of deep RL-based algorithms \citep{mnih2015human,silver2017mastering,fujimoto2019off,kumar2019stabilizing,wu2019behavior,kidambi2020morel,yu2020mopo,kumar2020conservative,janner2021reinforcement,chen2021decision,kostrikov2021offline} that utilize function approximations to achieve respective successes in the offline regime. However, it is also realized that practical function approximation schemes can be quite sample inefficient (\emph{e.g.} millions of samples are needed for deep \emph{Q}-network to solve certain Atari games \cite{mnih2015human}).

To understand this phenomenon, there are numerous studies consider how to achieve sample efficiency with function approximation from the theoretical side, as researchers find sample efficient algorithms are possible with particular model representations, in either online RL (\emph{e.g.} \cite{yang2019sample,yang2020reinforcement,modi2020sample,jin2020provably,ayoub2020model,jiang2017contextual,du2019provably,sun2019model,zanette2020learning,zhou2021nearly,jin2021bellman,du2021bilinear}) or offline RL (\emph{e.g.} \cite{munos2003error,chen2019information,xie2020q,jin2021pessimism,xie2021bellman,duan2021risk,min2021variance,nguyen2021finite,zanette2021provable}). 

Among them, the linear MDP model \citep{yang2020reinforcement,jin2020provably}, where the transition is represented as a linear combinations of the given $d$-dimensional feature, is (arguably) the most studied setting in function approximation and there are plenty of extensions based upon it (\emph{e.g.} generalized linear model \citep{wang2021optimism}, reward-free RL \citep{wang2020reward}, gap-dependent analysis \citep{he2021logarithmic} or generative adversarial learning \citep{liu2021provably}). Given its prosperity, however, there are still unknowns for understanding function representations in RL, especially in the offline case.

\begin{itemize}
	\item While there are surging researches in showing provable sample efficiency (polynomial sample complexity is possible) under a variety of function approximation schemes, how to improve the sample efficiency for a given class of function representations remains understudied. For instance, given a neural network approximation class, an algorithm that learns the optimal policy with complexity $O(H^{10})$ is far worse than the one that can learn in $O(H^3)$ sample complexity, despite that both algorithms are considered sample efficient. Therefore, how to achieve the optimal/tight sample complexity when function approximation is provided is a valuable question to consider. On the other hand, it is known that tight sample complexity, due to the limit of the existing statistical analysis tools, can be very tough to establish when function representation has a very complicated form. However, does this mean tight analysis is not hopeful even when the representation is linear?
	
	\item Second, in the existing analysis of offline RL (with function approximation or simply the tabular MDPs), the learning bounds depend either explicitly on the data-coverage quantities (\emph{e.g.} uniform concentrability coefficients \cite{chen2019information,xie2020q}, uniform visitation measure \cite{yin2021near,yin2021optimal} and single concentrability \cite{rashidinejad2021bridging,xie2021policy}) or the horizon length $H$ \citep{jin2021pessimism,uehara2021pessimistic}. While those results are valuable as they do not depend on the structure of the particular problem (therefore, remain valid even for pathological MDPs), in practice, the empirical performances of offline reinforcement learning are often far better than those non-adaptive bounds would indicate. Can the learning bounds reflect the nature of individual MDP instances when the MDP model has a certain function representation? 
\end{itemize}

In this work, we think about offline RL from the above two aspects. In particular, we consider the fundamental linear model representations and ask the following question of interest:
\vspace{1em}

\centerline{\textbf{\emph{Can we achieve the statistical limits for offline RL when models have linear representations?}}}

\subsection{Related works}

\textbf{Offline RL with general function representations.}
The finite sample analysis of offline RL with function approximation is initially conducted by Fitted \emph{Q}-Iteration (FQI) type algorithms and can be dated back to \citep{munos2003error,szepesvari2005finite,antos2008fitted,antos2008learning}. Later, \cite{chen2019information,le2019batch,xie2020q} follow this line of research and derive the improved learning results. However, owing to the aim for tackling general function approximation, those learning bounds are expressed in terms of the stringent \emph{concentrability coefficients} (therefore, are less adaptive to individual instances) and are usually only \emph{information-theoretical}, due to the computational intractability of the optimization procedure over the general function classes. Other works impose weaker assumptions (\emph{e.g.} partial coverage \citep{liu2020provably,kidambi2020morel,uehara2021pessimistic}), and their finite sample analysis are generally suboptimal in terms of $H$ or the effective horizon $(1-\gamma)^{-1}$.

\textbf{Offline RL with tabular models.} For tabular MDPs, tight learning bounds can be achieved under several data-coverage assumptions. For the class of problems with uniform data-visitation measure $d_m$, the near-optimal sample complexity bound has the rate $O(H^3/d_m\epsilon^2)$ for time-inhomogeneous MDPs \citep{yin2021near} and $O(H^2/d_m\epsilon)$ for time-homogeneous MDPs \citep{yin2021optimal,ren2021nearly}. Under the single concentrability assumption, the tight rate $O(H^3SC^\star/\epsilon^2)$ is obtained by \cite{xie2021policy}. In particular, the recent study \cite{yin2021towards} introduces the \emph{intrinsic offline learning bound} that is not only instance-dependent but also subsumes previous optimal results. More recently, \cite{shi2022pessimistic} uses the model-free approaches to achieve the  minimax rate with a $[0,H^{-1}]$ $\epsilon$-range.

\textbf{Offline RL with linear model representations.} Recently, there is more focus on studying the provable efficient offline RL under the linear model representations. \cite{jin2021pessimism} first shows offline RL with linear MDP is provably efficient by \emph{the pessimistic value iteration}. Their analysis deviates from their lower bound by a factor of $d\cdot H$ (check their Theorem~4.4 and 4.6). Later, \cite{xie2021bellman} considers function approximation under the Bellman-consistent assumptions, and, when realized to linear MDP setting, improves the sample complexity guarantee of \cite{jin2021pessimism} by an order $O(d)$ (Theorem~3.2).\footnote{This comparison is based on translating their infinite horizon discounted setting to the finite-horizon case.} However, their improvement only holds for finite action space (due to the dependence $\log |\mathcal{A}|$) and by the direct reduction (from Theorem 3.1) their result does not imply a computationally tractable algorithm with the same guarantee. 

Concurrently, \cite{zanette2021provable} considers the Linear Bellman Complete model and designs the \emph{actor-critic} style algorithm that achieves tight result under the assumption that the value function is bounded by $1$. While their algorithm is efficient (which is based on solving a sequence of second-order cone programs), the resulting learning bound requires the action space to be finite due to the mirror descent updates in the \emph{Actor} procedure \citep{agarwal2021theory}. Besides, assuming the value function to be less than $1$ simplifies the challenges in dealing with horizon $H$ since when rescaling their result to $[0,H]$, there is a $H$ factor blow-up, which makes no horizon improvement comparing to \cite{jin2021pessimism}. As a result, none of the existing algorithms can achieve the statistical limit for the well-structured linear MDP model with the general (infinite or continuous) state-action spaces. On the other hand, \cite{wang2021statistical,zanette2021exponential} study the statistical hardness of offline RL with linear representations by proving the exponential lower bounds. Recently, \cite{foster2021offline} shows realizability and concentrability are not sufficient for offline learning when state space is arbitrary large.

\textbf{Variance-aware studies.} \cite{talebi2018variance} first incorporates the variance structure in online tabular MDPs and \cite{zanette2019tighter} tightens the result. For linear MDPs, \cite{zhou2021nearly} first uses variance structure to achieve near-optimal result and the \emph{Weighted OFUL} incorporates the variance structure explicitly in the regret bound. Recently, Variance-awareness is also considered in \cite{zhang2021variance} for horizon-free setting and for OPE problem \citep{min2021variance}. In particular, We point out that \cite{min2021variance} is the first work that uses variance reweighting for policy evaluation in offline RL, which inspires our study for policy optimization problem. The guarantee of \cite{min2021variance} strictly improves over \cite{duan2020minimax} for OPE problem.

\subsection{Our contribution}\label{sec:contribution}

In this work, we study offline RL for time-inhomogeneous episodic linear Markov decision processes. Linear MDPs serve as one critical step towards understanding function approximation in RL since: \textbf{1.} unlike general function representation, linear MDP representation has the well-structured form by the given feature representors, which makes delicate statistical analysis hopeful; \textbf{2.} unlike tabular representation, which only works for finite models, linear MDP provides generalization as it adapts to infinite or continuous state-action spaces. 

Especially, we design the \emph{variance-aware pessimistic value iteration} (VAPVI, Algorithm~\ref{alg:VAPVI}) which incorporates the conditional variance information of the value function and, by the variance structure, Theorem~\ref{thm:main} is able to improve over the aforementioned state-of-the-art guarantees. In addition, we further improve the state-action guarantee by designing an even tighter bonus (\ref{eqn:VAPVI-I}). VAPVI-Improved (Theorem~\ref{thm:main_improved}) is near-minimax optimal as indicated by our lower bound (Theorem~\ref{thm:lb}). Importantly, the resulting learning bounds from VAPVI/VAPVI-Improved are able to characterize the adaptive nature of individual instances and yield different convergence rates for different problems. Algorithmically, our design builds upon the nice \cite{min2021variance} with pessimism as we use the estimated variances to reweight the Bellman residual learning objective so that the (training) samples with high uncertainty get less attention (Section~\ref{sec:algorithm}). This is the key to obtaining instance-adaptive guarantees.

\section{Preliminaries }\label{sec:formulation}

\subsection{Problem settings}

\textbf{Episodic time-inhomogeneous linear Markov decision process.} A finite-horizon \emph{Markov Decision Process} (MDP) is denoted as $M=(\mathcal{S}, \mathcal{A}, P, r, H, d_1)$ \citep{sutton2018reinforcement}, where $\mathcal{S}$ is the arbitrary state space and $\mathcal{A}$ is the arbitrary action space which can be infinite or even continuous. A time-inhomogeneous transition kernel $P_h:\mathcal{S}\times\mathcal{A} \mapsto \Delta^{\mathcal{S}}$ ($\Delta^\mathcal{S}$ represents a probability simplex) maps each state action$(s_h,a_h)$ to a probability distribution $P_h(\cdot|s_h,a_h)$ and $P_h$ can be different across time. In addition, $r : \mathcal{S} \times{A} \mapsto \mathbb{R}$ is the mean reward function satisfying $0\leq r\leq1$. $d_1$ is the initial state distribution. $H$ is the horizon. A policy $\pi=(\pi_1,\ldots,\pi_H)$ assigns each state $s_h \in \mathcal{S}$ a probability distribution over actions according to the map $s_h\mapsto \pi_h(\cdot|s_h)$ $\forall h\in[H]$ and induces a random trajectory $ s_1, a_1, r_1, \ldots, s_H,a_H,r_H,s_{H+1}$ with $s_1 \sim d_1, a_h \sim \pi(\cdot|s_h), s_{h+1} \sim P_h (\cdot|s_h, a_h), \forall h \in [H]$. In particular, we adopts the linear MDP protocol from \cite{jin2020provably,jin2021pessimism}, meaning that the transition kernel and the mean reward function admit linear structures in the feature map.\footnote{{For completeness, we also provide a brief discussion for the related Linear mixture model \citep{cai2020provably} setting (Appendix~\ref{sec:linear_mixture}).}}
\begin{definition}[Linear MDPs]\label{def:linear_MDP}
	\footnote{This definition is a standard extension over the tabular MDPs by referencing the similar notions from the bandit literature, \emph{i.e.} from \emph{Multi-armed Bandit} to \emph{Linear Bandit} \citep{lattimore2020bandit}.} An episodic MDP $(\mathcal{S},\mathcal{A},H,P,r)$ is called a linear MDP with a known (unsigned) feature map $\phi:\mathcal{S}\times\mathcal{A}\rightarrow \mathbb{R}^d$ if there exist $d$ unknown (unsigned) measures $\nu_h=(\nu_h^{(1)},\ldots,\nu_h^{(d)})$ over $\mathcal{S}$ and an unknown vector $\theta_h\in\R^d$ such that 
	\[
	{P}_{h}\left(s^{\prime} \mid s, a\right)=\left\langle\phi(s, a), \nu_{h}\left(s^{\prime}\right)\right\rangle, \quad r_{h}\left(s, a\right) =\left\langle\phi(x, a), \theta_{h}\right\rangle,\quad\forall s',s\in\mathcal{S}, \;a\in\mathcal{A}, \;h\in[H].
	\]
	where $\norm{\nu_h(\mathcal{S})}_2\leq\sqrt{d}$ and $\max(\norm{\phi(s,a)}_2,\norm{\theta_h}_2)\leq 1$ for all $h\in[H]$ and $\forall s,a\in\mathcal{S}\times\mathcal{A}$. $\norm{\mu_h(\mathcal{S})}=\int_\mathcal{S}\norm{\mu_h(s)}ds$.
\end{definition}
\textbf{$V$-values and $Q$-values.} For any policy $\pi$, the $V$-value functions $V^\pi_h(\cdot)\in \R^S$ and Q-value functions $Q^\pi_h(\cdot,\cdot)\in \R^{S\times A}$ are defined as:
{\small$
V^\pi_h(s)=\E_\pi[\sum_{t=h}^H r_{t}|s_h=s] ,\;\;Q^\pi_h(s,a)=\E_\pi[\sum_{t=h}^H  r_{t}|s_h,a_h=s,a],\;\forall s,a,h\in\mathcal{S},\mathcal{A},[H].
$} The performance measure is defined as {\small$v^\pi:=\E_{d_1}\left[V^\pi_1\right]=\E_{\pi,d_1}\left[\sum_{t=1}^H  r_t\right]$}. The Bellman (optimality) equations follow {\small$\forall h\in[H]$:
$
Q^\pi_h=r_h+P_hV^\pi_{h+1},\;\;V^\pi_h=\E_{a\sim\pi_h}[Q^\pi_h], \;\;\;Q^\star_h=r_h+P_hV^\star_{h+1},\; V^\star_h=\max_a Q^\star_h(\cdot,a)$} (where $Q_h,V_h,P_h$ are vectors). By Definition~\ref{def:linear_MDP}, the $Q$-values also admit linear structures, \emph{i.e.} $Q^\pi_h=\langle\phi,w^\pi_h\rangle$ for some $w^\pi_h\in\R^d$ (Lemma~\ref{lem:w_h}). Lastly, for a policy $\pi$, we denote the induced occupancy measure over the state-action space at any time $h\in[H]$ to be: for any $E\subseteq \mathcal{S}\times\mathcal{A}$, $d^\pi_h(E):= \E[(s_h,a_h)\in E| s_1\sim d_1,a_i\sim \pi(\cdot|s_i),s_{i}\sim P_{i-1}(\cdot|s_{i-1},a_{i-1}),1\leq i\leq h]$ and $\E_{\pi,h}[f(s,a)]:=\int_{\mathcal{S}\times\mathcal{A}}f(s,a)d^\pi_h(s,a)dsda$. Here for notation simplicity we abuse $d^\pi_h(\cdot)$ to denote either probability measure or density function.

\textbf{Offline learning setting.} Offline RL requires the agent to learn the policy $\pi$ that maximizes $v^\pi$, provided with the historical data {\small$\mathcal{D}=\left\{\left(s_{h}^{\tau}, a_{h}^{\tau}, r_{h}^{\tau}, s_{h+1}^{\tau}\right)\right\}_{\tau\in[K]}^{h\in[H]}$} rolled out from some behavior policy $\mu$. The offline nature requires we cannot change $\mu$ and in particular we do not know the data generating distribution of $\mu$. To sum up, the agent seeks to find a policy $\pi_\text{alg}$ such that $v^\star-v^{\pi_\text{alg}}\leq\epsilon$ for the given batch data $\mathcal{D}$ and a given targeted accuracy $\epsilon>0$.

\subsection{Assumptions}

It is known that learning a near-optimal policy from the offline data $\mathcal{D}$ cannot be sample efficient without certain data-coverage assumptions \citep{wang2021statistical,yin2021towards}. To begin with, we define the population covariance matrix under the behavior policy $\mu$ for all $h\in[H]$:
\begin{equation}
\Sigma^p_h:=\E_{\mu,h}\left[\phi(s,a)\phi(s,a)^\top\right],
\end{equation}
since $\Sigma^p_h$ measure the coverage of state-action space for data $\mathcal{D}$, we make the following assumption.

\begin{assumption}[Feature Coverage]\label{assum:cover} 
	The data distributions $\mu$ satisfy the minimum eigenvalue condition: $\forall h\in[H]$, $\kappa_h:=\lambda_{\mathrm{min}}(\Sigma^p_h)>0$ and denote $\kappa=\min_h \kappa_h$. Note $\kappa$ is a system-dependent (non-universal) quantity as it is upper bounded by $1/d$ (Assumption~2 in \cite{wang2021statistical}).
\end{assumption}

We make this assumption for the following reasons. First of all, our offline learning guarantee (Theorem~\ref{thm:main}) provides simultaneously comparison to all the policies, which is stronger than only competing with the optimal policy (whereas relaxed assumption suffices, \emph{e.g.} $\sup_{x\in\R^d}\frac{x\Sigma_{\pi^\star}x^\top}{x\Sigma_{\mu}x^\top}<\infty$ \citep{uehara2021pessimistic}). As a consequence, the behavior distribution $\mu$ must be able to explore each feature dimension for the result to be valid. 

Even if Assumption~\ref{assum:cover} does not hold, we can always restrict our algorithmic design to the \emph{effective subspan} of $\Sigma^p_h$, which causes the alternative notion of $\kappa:=\min_{h\in[H]}\{\kappa_h:s.t.\;\kappa_h=\text{smallest positive eigenvalue at time }h\}$ (see Appendix~\ref{sec:discuss_assumption} for detailed discussions). In this scenario, learning the optimal policy cannot be guaranteed as a constant suboptimality gap needs to be suffered due to the lack of coverage and this is formed as \emph{assumption-free RL} in \cite{yin2021towards}. Lastly, previous works analyzing the linear MDPs impose very similar assumptions, \emph{e.g.} \cite{xie2021bellman} Theorem~3.2 where $\Sigma_{\mathcal{D}}^{-1}$ exists and \cite{min2021variance} for the OPE problem.

Next, for any function $V_{h+1}(\cdot)\in[0,H-h]$, we define the conditional variance $\sigma_{V_{h+1}}:\mathcal{S}\times\mathcal{A}\rightarrow \R_+$ as $
\sigma_{V_{h+1}}(s,a)^2:=\max\{1,\Var_{P_h}(V_{h+1})(s,a)\}
$.\footnote{The $\max(1,\cdot)$ applied here is for technical reason only.  In general, it suffices to think $\sigma^2_{V_{h+1}}\approx \Var_h V_{h+1}$.} Based on this definition, we can define the variance-involved population covariance matrices as:
\[
\Lambda_h^p:=\E_{\mu,h}\left[\sigma_{V_{h+1}}(s,a)^{-2}\phi(s,a) \phi(s,a)^\top\right].
\] 
In particular, when $V_h=V_h^\star$, we use the notation $\Lambda_h^{\star p}$ instead. Since $\sup_{(s,a)\in\mathcal{S}\times\mathcal{A}}\sigma_{V_h}(s,a)^2\leq H^2$, then by Assumption~\ref{assum:cover} we directly have the following corollary.

\begin{corollary}
	Define $\iota_h:=\lambda_{\min}(\Lambda_h^p)$, $\iota:=\min_h \iota_h$. Then $\iota_h\geq \frac{\kappa_h}{H^2}>0$ $\forall h\in[H]$, and $\iota>0$.
\end{corollary}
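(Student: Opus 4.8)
The plan is to establish both inequalities in the corollary directly from the definitions, using the uniform upper bound on the conditional variance together with the feature-coverage Assumption~\ref{assum:cover}. The core observation is a pointwise matrix domination: since $\sigma_{V_{h+1}}(s,a)^2 = \max\{1, \Var_{P_h}(V_{h+1})(s,a)\}$ and any value function under consideration takes values in $[0,H-h]\subseteq[0,H]$, one has $1 \le \sigma_{V_{h+1}}(s,a)^2 \le H^2$ for every $(s,a)\in\mathcal{S}\times\mathcal{A}$ and every $h\in[H]$ (the lower bound is immediate from the $\max$; the upper bound follows because the variance of a $[0,H]$-valued random variable is at most $H^2/4\le H^2$). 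Consequently $\sigma_{V_{h+1}}(s,a)^{-2}\ge H^{-2}$ uniformly.

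First I would use this pointwise bound inside the expectation defining $\Lambda_h^p$. For any fixed $h$ and any vector $x\in\R^d$,
\[
x^\top \Lambda_h^p x = \E_{\mu,h}\!\left[\sigma_{V_{h+1}}(s,a)^{-2}\,(x^\top\phi(s,a))^2\right] \ge \frac{1}{H^2}\,\E_{\mu,h}\!\left[(x^\top\phi(s,a))^2\right] = \frac{1}{H^2}\,x^\top\Sigma^p_h x,
\]
since $\sigma_{V_{h+1}}^{-2}\ge H^{-2}$ and $(x^\top\phi(s,a))^2\ge 0$. This is exactly the Loewner-order statement $\Lambda_h^p \succeq \frac{1}{H^2}\Sigma^p_h$. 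Taking minimum eigenvalues on both sides (which is monotone with respect to $\succeq$) gives $\iota_h = \lambda_{\min}(\Lambda_h^p) \ge \frac{1}{H^2}\lambda_{\min}(\Sigma^p_h) = \frac{\kappa_h}{H^2}$.

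Then I would close out the argument: by Assumption~\ref{assum:cover}, $\kappa_h>0$ for all $h$, hence $\iota_h\ge \kappa_h/H^2>0$ for all $h\in[H]$, and taking the minimum over the finite index set $[H]$ yields $\iota=\min_h\iota_h \ge \kappa/H^2 >0$. I would also note that $\Lambda_h^p$ is well-defined (finite entries) because the integrand is bounded by $\|\phi(s,a)\phi(s,a)^\top\| \le \|\phi(s,a)\|_2^2\le 1$ in operator norm, so the expectation exists and $\Lambda_h^p$ is symmetric positive semidefinite, making $\lambda_{\min}(\Lambda_h^p)$ a meaningful quantity.

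There is essentially no obstacle here; the statement is a one-line corollary of Assumption~\ref{assum:cover} and the definition of $\sigma_{V_{h+1}}$. The only point requiring a moment's care is justifying the uniform bound $\Var_{P_h}(V_{h+1})\le H^2$ — which is where the assumption $V_{h+1}\in[0,H-h]$ (equivalently the reward bound $0\le r\le 1$ propagated through the Bellman recursion) is used — and recalling that $\lambda_{\min}$ is monotone under the Loewner order, i.e. $A\succeq B\succeq 0 \Rightarrow \lambda_{\min}(A)\ge\lambda_{\min}(B)$, which follows from the Courant–Fischer variational characterization $\lambda_{\min}(A)=\min_{\|x\|_2=1}x^\top A x$.
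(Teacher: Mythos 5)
Your proof is correct and follows exactly the route the paper intends: the paper simply notes that $\sup_{(s,a)}\sigma_{V_{h+1}}(s,a)^2\leq H^2$ and invokes Assumption~\ref{assum:cover}, which is precisely your pointwise domination $\Lambda_h^p\succeq \frac{1}{H^2}\Sigma_h^p$ combined with monotonicity of $\lambda_{\min}$ under the Loewner order. Your write-up just makes the implicit steps explicit.
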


\section{Algorithm}\label{sec:algorithm}

Least square regression is usually considered as one of the ``default'' tools for handling problems with linear structures (\emph{e.g.} LinUCB algorithm for linear Bandits) and finds its popularity in RL as well since \emph{Least-Square Value Iteration} (LSVI, \cite{jin2020provably}) is shown to be provably efficient for linear MDPs, due to that $V_{h+1}(s')$ is an unbiased estimator of $[P_hV_{h+1}](s,a)$. Concretely, it solves the ridge regression problems at each time steps (with $\lambda>0$ being the regularization parameter): 
{
\begin{equation}\label{eqn:LSVI}
\widehat{{w}}_{h}:=\underset{{w} \in \mathbb{R}^{d}}{\operatorname{argmin}} \;\lambda\|{w}\|_{2}^{2}+\sum_{k=1}^{K}\left[\langle {\phi}(s_{h}^k,a_{h}^k), {w}\rangle-r_{ h}^k-V_{h+1}(s_{h+1}^{\prime k})\right]^{2}
\end{equation}}and has the closed-form solution $\widehat{{w}}_{h}={\Sigma}_h^{-1}\sum_{k=1}^{K} {\phi}(s_{h}^k,a_{h}^k)[r_{k, h}+{V}_{h+1}(s_{h}^{\prime k})]$ with the cumulative sample covariance ${\Sigma}_h^{-1} = \sum_{k=1}^K\phi({s}_h^k,{a}_h^k)\phi({s}_h^k,{a}_h^k)^\top+\lambda I$. In offline RL, this has also been leveraged in \emph{pessimistic value iteration} \citep{jin2021pessimism} and \emph{fitted Q-evaluation} \citep{duan2020minimax}. Nevertheless, LSVI could only yield suboptimal guarantees, as illustrated by the following example.
\begin{example}\label{example}
	Instantiate PEVI (Theorem~4.4 in \cite{jin2021pessimism}) with $\phi(s,a)=\mathbf{1}_{s,a}$ (\emph{i.e.} tabular MDPs)\footnote{This provides a valid illustration since tabular MDP is a special case of linear MDPs.}, by direct calculation the learning bound has the form {\small$O(d H\cdot\sum_{h,s,a}d^{\pi^\star}_h(s,a)\sqrt{\frac{1}{K\cdot d^\mu_h(s,a)}})$} and the optimal result (\cite{yin2021towards} Theorem~4.1) gives {\small$O(\sum_{h,s,a}d^{\pi^\star}_h(s,a)\sqrt{\frac{\Var_{P_{s,a}}(r+V^\star_{h+1})}{K\cdot d^\mu_h(s,a)}})$}. The former has the horizon dependence $H^2$ and the latter is $H^{3/2}$ by law of total variance.
\end{example}

\textbf{Motivation.} By comparing the above two expressions, it can be seen that PEVI cannot get rid of the explicit $H$ factor due to missing the variance information (\emph{w.r.t} $V^\star$). If we go deeper, one could find that it might not be all that ideal to put equal weights on all the training samples in the least square objective (\ref{eqn:LSVI}), since, unlike linear regression where the randomness coming from one source distribution, we are regressing over a sequence of distributions in RL (\emph{i.e.} each $s_h,a_h$ corresponds to a different distribution $P(\cdot|s_h,a_h)$ and there are possibly infinite many of them). Therefore, conceptually, the sample piece $(s_h,a_h,s_{h+1})$ that has higher variance distribution $P(\cdot|s_h,a_h)$ tends to be less ``reliable'' than the one $(s'_h,a'_h,s'_{h+1})$ with lower variance (hence should not have equal weight in (\ref{eqn:LSVI})). This suggests reweighting scheme might help improve the learning guarantee and reweighting over the variance of the value function stands as a natural choice.

\subsection{Variance-Aware Pessimistic Value Iteration}

Now we explain our framework that incorporates the variance information. Our design is motivated by the previous \cite{zhou2021nearly} (for online learning) and \cite{min2021variance} (for policy evaluation). By the offline nature, we can use the independent episodic data $\mathcal{D}'=\{(\bar{s}_{ h}^\tau, \bar{a}_{h}^\tau, \bar{r}_{h}^\tau, \bar{s}_{h}^{\tau\prime})\}^{h\in[H]}_{\tau \in[K]}$ (from $\mu$) to estimate the conditional variance of any $V$-values $V_{h+1}$ via the definition $[\Var_h V_{h+1}](s,a)=[P_h(V_{h+1})^2](s,a)-([P_hV_{h+1}](s,a))^2$. For the second order moment, by Definition~\ref{def:linear_MDP}, it holds
\[
\left[{P}_{h}V_{h+1}^{2}\right](s, a)=\int_{\mathcal{S}} V^2_{h+1}\left(s^{\prime}\right) \mathrm{~d} {P}_{h}\left(s^{\prime} \mid s, a\right)={\phi}(s, a)^{\top} \int_{\mathcal{S}} V_{h+1}^2\left(s^{\prime}\right)\mathrm{~d} {\nu}_{h}\left(s^{\prime}\right).
\]
Denote $\beta_h:=\int_{\mathcal{S}} V_{h+1}^2\left(s^{\prime}\right) \mathrm{~d} {\nu}_{h}\left(s^{\prime}\right)$, then ${P}_{h}V_{h+1}^{2}=\langle\phi ,\beta_h\rangle$ and we can estimator it via:
\[
\bar{{\beta}}_{h}=\underset{{\beta} \in \mathbb{R}^{d}}{\operatorname{argmin}} \sum_{k=1}^{K}\left[\left\langle{\phi}(\bar{s}^k_h,\bar{a}^k_h), {\beta}\right\rangle-{V}_{h+1}^2\left(\bar{s}_{h+1}^k\right)\right]^{2}+\lambda\|{\beta}\|_{2}^{2}={\bar{\Sigma}}_{h}^{-1} \sum_{k=1}^{K} {{\phi}}(\bar{s}^k_h,\bar{a}^k_h) {V}_{h+1}^2\left(\bar{s}_{h+1}^k\right)
\]
and, similarly, the first order moment ${P}_{h}V_{h+1}:=\langle\phi ,\theta_h\rangle$ can be estimated via:
\[
\bar{{\theta}}_{h}=\underset{{\theta} \in \mathbb{R}^{d}}{\operatorname{argmin}} \sum_{k=1}^{K}\left[\left\langle{\phi}(\bar{s}^k_h,\bar{a}^k_h), {\theta}\right\rangle-{V}_{h+1}\left(\bar{s}_{h+1}^k\right)\right]^{2}+\lambda\|{\theta}\|_{2}^{2}={\bar{\Sigma}}_{h}^{-1} \sum_{k=1}^{K} {{\phi}}(\bar{s}^k_h,\bar{a}^k_h) {V}_{h+1}\left(\bar{s}_{h+1}^k\right)
\]
The final estimator is defined as $\widehat{\sigma}_{V_h}^2(\cdot,\cdot):=\max\{1,\widehat{\Var}_h V_{h+1}(\cdot,\cdot)\}$ with $\widehat{\Var}_{h} {V}_{h+1}(\cdot, \cdot)=\langle\phi(\cdot, \cdot), \bar{{\beta}}_{h}\rangle_{\left[0,(H-h+1)^{2}\right]}-\big[\langle\phi(\cdot, \cdot), \bar{{\theta}}_{h}\rangle_{[0, H-h+1]}\big]^{2}$.\footnote{The truncation used here is a standard treatment for making the estimator to be within the valid range.} In particular, when setting $V_{h+1}= \widehat{V}_{h+1}$, it recovers $\widehat{\sigma}_h$ in Algorithm~\ref{alg:VAPVI} line 8. Here $\bar{\Sigma}_h=\sum_{\tau=1}^K\phi(\bar{s}_{h}^\tau,\bar{a}_{h}^\tau)\phi(\bar{s}_{h}^\tau,\bar{a}_{h}^\tau)^\top +\lambda I_d$.

\begin{algorithm}[thb]
	{
		\caption{\textbf{V}ariance-\textbf{A}ware \textbf{P}essimistic \textbf{V}alue \textbf{I}teration (VAPVI)}
		\label{alg:VAPVI}
		\begin{algorithmic}[1]
			\STATE {\bfseries Input:}  Dataset $\mathcal{D}=\left\{\left(s_{h}^{\tau}, a_{h}^{\tau}, r_{h}^{\tau}\right)\right\}_{\tau, h=1}^{K, H}$ $\mathcal{D}'=\left\{\left(\bar{s}_{h}^{\tau}, \bar{a}_{h}^{\tau},\bar{r}_{h}^{\tau}\right)\right\}_{\tau, h=1}^{K, H}$. Universal constant $C$.
			
			\STATE {\bfseries Initialization:} Set $\widehat{V}_{H+1}(\cdot) \leftarrow 0$.
			
			\FOR{$h=H,H-1,\ldots,1$}
			\STATE Set $\bar{\Sigma}_h\leftarrow \sum_{\tau=1}^K\phi(\bar{s}_h^\tau,\bar{a}_h^\tau)\phi(\bar{s}_h^\tau,\bar{a}_h^\tau)^\top+\lambda I$
			\STATE Set $\bar{\beta}_h\leftarrow \bar{\Sigma}_h^{-1}\sum_{\tau=1}^K\phi(\bar{s}_h^\tau,\bar{a}_h^\tau)\cdot \widehat{V}_{h+1}(\bar{s}_{h+1}^\tau)^2$
			\STATE Set $\bar{\theta}_h\leftarrow \bar{\Sigma}_h^{-1}\sum_{\tau=1}^K\phi(\bar{s}_h^\tau,\bar{a}_h^\tau)\cdot \widehat{V}_{h+1}(\bar{s}_{h+1}^\tau)$
			\STATE Set $\big[\widehat{\Var}_{h} \widehat{V}_{h+1}\big](\cdot, \cdot)=\big\langle\phi(\cdot, \cdot), \bar{\boldsymbol{\beta}}_{h}\big\rangle_{\left[0,(H-h+1)^{2}\right]}-\big[\big\langle\phi(\cdot, \cdot), \bar{\boldsymbol{\theta}}_{h}\big\rangle_{[0, H-h+1]}\big]^{2}$
			\STATE Set $\widehat{\sigma}_h(\cdot,\cdot)^2\leftarrow\max\{1,\widehat{\Var}_{P_h}\widehat{V}_{h+1}(\cdot,\cdot)\} $
			\STATE Set $\widehat{\Lambda}_{h} \leftarrow \sum_{\tau=1}^{K} \phi\left(s_{h}^{\tau}, a_{h}^{\tau}\right) \phi\left(s_{h}^{\tau}, a_{h}^{\tau}\right)^{\top}/\widehat{\sigma}^2(s^\tau_h,a^\tau_h)+\lambda \cdot I$,
			\STATE Set $\widehat{w}_{h} \leftarrow \widehat{\Lambda}_{h}^{-1}\left(\sum_{\tau=1}^{K} \phi\left(s_{h}^{\tau}, a_{h}^{\tau}\right) \cdot\left(r_{h}^{\tau}+\widehat{V}_{h+1}\left(s_{h+1}^{\tau}\right)\right)/\widehat{\sigma}^2(s^\tau_h,a^\tau_h)\right)$
			\STATE Set $\Gamma_{h}(\cdot, \cdot) \leftarrow C\sqrt{d} \cdot\left(\phi(\cdot, \cdot)^{\top} \widehat{\Lambda}_{h}^{-1} \phi(\cdot, \cdot)\right)^{1 / 2}+\frac{2 H^3\sqrt{d}}{K}$
			\qquad (Use $\Gamma^I_h$ for the improved version)
			\STATE  Set  $\bar{Q}_{h}(\cdot, \cdot) \leftarrow \phi(\cdot, \cdot)^{\top} \widehat{w}_{h}-\Gamma_{h}(\cdot, \cdot)$
			\STATE Set  $\widehat{Q}_{h}(\cdot, \cdot) \leftarrow \min \left\{\bar{Q}_{h}(\cdot, \cdot), H-h+1\right\}^{+}$
			\STATE  Set  $\widehat{\pi}_{h}(\cdot \mid \cdot) \leftarrow \arg \max _{\pi_{h}}\big\langle\widehat{Q}_{h}(\cdot, \cdot), \pi_{h}(\cdot \mid \cdot)\big\rangle_{\mathcal{A}}$, $\widehat{V}_{h}(\cdot) \leftarrow\max _{\pi_{h}}\big\langle\widehat{Q}_{h}(\cdot, \cdot), \pi_{h}(\cdot \mid \cdot)\big\rangle_{\mathcal{A}}$
			\ENDFOR
			\STATE {\bfseries Output:} $\left\{\widehat{\pi}_{h}\right\}_{h=1}^{H}$.
			
		\end{algorithmic}
	}
\end{algorithm}

\textbf{Variance-weighted LSVI.} The idea of LSVI (\ref{eqn:LSVI}) is based on approximate the Bellman updates: $\mathcal{T}_h(V)(s,a)=r_h(s,a)+(P_hV)(s,a)$. With variance estimator $\widehat{\sigma}_h$ at hand, we can modify (\ref{eqn:LSVI}) to solve the variance-weighted LSVI instead (Line 10 of Algorithm~\ref{alg:VAPVI})
{
\[
\widehat{{w}}_{h}:=\underset{{w} \in \mathbb{R}^{d}}{\operatorname{argmin}} \;\lambda\|{w}\|_{2}^{2}+\sum_{k=1}^{K}\frac{\left[\langle {\phi}(s_{h}^k,a_{h}^k), {w}\rangle-r_{ h}^k-\widehat{V}_{h+1}(s_{h+1}^{\prime k})\right]^{2}}{\widehat{\sigma}^2_h(s_h^k,a_h^k)}=\widehat{\Lambda}_{h}^{-1}\sum_{k=1}^{K}\frac{ \phi\left(s_{h}^{k}, a_{h}^{k}\right) \cdot\left[r_{h}^{k}+\widehat{V}_{h+1}\left(s_{h+1}^{k}\right)\right]}{\widehat{\sigma}^2(s^k_h,a^k_h)}
\]}where $
\widehat{\Lambda}_h=\sum_{k=1}^K\phi(s_{h}^k,a_{h}^k)\phi(s_{h}^k,a_{h}^k)^\top/\widehat{\sigma}^2_h(s_{h}^k,a_{h}^k) +\lambda I_d.$ The estimated Bellman update $\widehat{\mathcal{T}}_h$ (acts on $\widehat{V}_{h+1}$) is defined as: $
(\widehat{\mathcal{T}}_{h} \widehat{V}_{h+1})(\cdot, \cdot)=\phi(\cdot, \cdot)^{\top} \widehat{w}_{h}$ and the pessimism $\Gamma_h$ is assigned to update $\widehat{Q}_h\approx \widehat{\mathcal{T}}_{h} \widehat{V}_{h+1}-\Gamma_h$, \emph{i.e.} Bellman update + Pessimism (Line 10-12 in Algorithm~\ref{alg:VAPVI}).

\textbf{Tighter Pessimistic Design.} To improve the learning guarantee, we create a tighter penalty design that includes $\widehat{\Lambda}^{-1}_h$ rather than $\bar{\Sigma}^{-1}_h$ and an extra higher order $O(\frac{1}{K})$ term:
\[
\Gamma_h \leftarrow O\left(\sqrt{d} \cdot (\phi(\cdot, \cdot)^{\top} \widehat{\Lambda}_{h}^{-1} \phi(\cdot, \cdot) )^{1 / 2}\right)+\frac{2 H^3\sqrt{d}}{K}
\]
Note such a design admits no explicit factor in $H$ in the main term (as opposed to \cite{jin2021pessimism}) therefore is the key for achieving adaptive/problem-dependent results (as we shall discuss later). The full algorithm VAPVI is stated in Algorithm~\ref{alg:VAPVI}. In particular, we halve the offline data into two independent parts with $\mathcal{D}=\{(s_{h}^\tau, a_{h}^\tau, r_{ h}^\tau, s_{ h}^{\tau\prime})\}^{h\in[H]}_{\tau \in[K]}$ and $\mathcal{D}'=\{(\bar{s}_{ h}^\tau, \bar{a}_{h}^\tau, \bar{r}_{h}^\tau, \bar{s}_{h}^{\tau\prime})\}^{h\in[H]}_{\tau \in[K]}$ for different purposes (estimating variance and updating $Q$-values).

\subsection{Main result}\label{sec:main_result}

We denote quantities $\mathcal{M}_1,\mathcal{M}_2,\mathcal{M}_3,\mathcal{M}_4$ as in the notation list~\ref{sec:notation}. Then VAPVI provides the following result. The complete proof is provided in Appendix~\ref{sec:proof_main}. 

\begin{theorem}\label{thm:main}
	
	Let $K$ be the number of episodes. If $K>\max\{\mathcal{M}_1,\mathcal{M}_2,\mathcal{M}_3,\mathcal{M}_4\}$ and $\sqrt{d}>\xi$, where $\xi:=\sup_{V\in[0,H],\;s'\sim P_h(s,a),\;h\in[H]}\left|\frac{r_h+{V}\left(s'\right)-\left(\mathcal{T}_{h} {V}\right)\left(s, a\right)}{{\sigma}_{{V}}(s,a)}\right|$. Then for any $0<\lambda<\kappa$, with probability $1-\delta$, for all policy $\pi$ simultaneously, the output $\widehat{\pi}$ of Algorithm~\ref{alg:VAPVI} satisfies
	{
	\[
	v^\pi-v^{\widehat{\pi}}\leq \widetilde{O}\big(\sqrt{d}\cdot\sum_{h=1}^H\mathbb{E}_{\pi}\bigg[\sqrt{\phi(\cdot, \cdot)^{\top} \Lambda_{h}^{-1} \phi(\cdot, \cdot)}\bigg]\big)+\frac{2 H^4\sqrt{d}}{K}
	\]
}where $\Lambda_h=\sum_{k=1}^K \frac{\phi(s_{h}^k,a_{h}^k)\cdot \phi(s_{h}^k,a_{h}^k)^\top}{\sigma^2_{\widehat{V}_{h+1}(s_{h}^k,a_{h}^k)}}+\lambda I_d$. In particular, we have with probability $1-\delta$,
{
	\begin{equation}\label{eqn:optimal_eqn}
	v^\star-v^{\widehat{\pi}}\leq \widetilde{O}\big(\sqrt{d}\cdot\sum_{h=1}^H\mathbb{E}_{\pi^\star}\bigg[\sqrt{\phi(\cdot, \cdot)^{\top} \Lambda_{h}^{\star-1} \phi(\cdot, \cdot)}\bigg]\big)+\frac{2 H^4\sqrt{d}}{K}
	\end{equation}
}where $\Lambda^\star_h=\sum_{k=1}^K \frac{\phi(s_{h}^k,a_{h}^k)\cdot \phi(s_{h}^k,a_{h}^k)^\top}{\sigma^2_{{V}^\star_{h+1}(s_{h}^k,a_{h}^k)}}+\lambda I_d$ and $\widetilde{O}$ hides universal constants and the Polylog terms.
\end{theorem}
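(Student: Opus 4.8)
The plan is to follow the by-now standard pessimistic value iteration analysis template (as in \cite{jin2021pessimism}), but carefully track the variance-reweighting so that the $H$-factor in the bonus is avoided. The proof has three conceptual pillars: (i) a \emph{concentration/good-event} step showing that the estimated weighted Bellman operator $\widehat{\mathcal{T}}_h$ is close to the true $\mathcal{T}_h$ on the relevant value functions, so that the bonus $\Gamma_h$ indeed dominates the estimation error; (ii) a \emph{pessimism} step showing $\widehat{Q}_h \le Q^\pi_h$ (and more generally $\widehat{V}_h \le V^\pi_h$ along the iterates) on the good event, for all $\pi$ simultaneously; and (iii) an \emph{error decomposition / telescoping} step converting the per-step bonuses into the final $\widetilde O(\sqrt d \sum_h \mathbb{E}_\pi[\sqrt{\phi^\top \Lambda_h^{-1}\phi}])$ bound.

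More concretely: First I would set up the good event $\mathcal{E}$. Using the linear-MDP structure, $r_h + \widehat V_{h+1}(s') $ is an unbiased estimate of $(\mathcal{T}_h\widehat V_{h+1})(s,a)$ with conditional "noise" bounded by $\sigma_{\widehat V_{h+1}}(s,a)\cdot\xi$ after dividing by $\widehat\sigma_h$; then a self-normalized / Bernstein-type concentration bound (in the spirit of the Weighted-OFUL analysis of \cite{zhou2021nearly} and the OPE analysis of \cite{min2021variance}) gives $\|\sum_k \phi(s_h^k,a_h^k)[r_h^k + \widehat V_{h+1}(s_{h+1}^k) - (\mathcal{T}_h \widehat V_{h+1})(s_h^k,a_h^k)]/\widehat\sigma_h^2\|_{\widehat\Lambda_h^{-1}} = \widetilde O(\sqrt d)$, which needs a covering-number argument over the class of possible $\widehat V_{h+1}$'s. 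I also need a separate concentration piece controlling the variance-estimation error: $|\widehat{\Var}_h \widehat V_{h+1} - \Var_h \widehat V_{h+1}|$ is small (this is where the conditions $K > \mathcal{M}_i$ and $\lambda < \kappa$ enter — they guarantee $\bar\Sigma_h$, hence $\widehat\Lambda_h$, is well-conditioned with $\Omega(K\kappa)$ eigenvalues, so $\phi^\top \widehat\Lambda_h^{-1}\phi = O(1/(K\kappa))$ and all lower-order terms are absorbable). Combining, on $\mathcal{E}$ one has $|(\widehat{\mathcal{T}}_h\widehat V_{h+1})(s,a) - (\mathcal{T}_h\widehat V_{h+1})(s,a)| \le \Gamma_h(s,a)$ for all $(s,a)$, where the extra $2H^3\sqrt d/K$ term in $\Gamma_h$ covers the ridge-regularization bias and the variance-mis-estimation slack.

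Next, on $\mathcal{E}$, a backward induction on $h$ shows $\widehat V_h(s) \le V^\pi_h(s)$ for every $\pi$ (the key point of pessimism: since $\widehat Q_h = \min\{\phi^\top\widehat w_h - \Gamma_h,\,H-h+1\}^+$ and $\phi^\top\widehat w_h = \widehat{\mathcal{T}}_h\widehat V_{h+1} \le \mathcal{T}_h\widehat V_{h+1} + \Gamma_h \le \mathcal{T}_h V^\pi_{h+1}+\Gamma_h$ by the induction hypothesis and monotonicity of $\mathcal{T}_h$). Hence $v^\pi - v^{\widehat\pi} \le v^\pi - \mathbb{E}_{d_1}[\widehat V_1]$, and the standard performance-difference / extended-value-difference identity (e.g. Lemma 3.1 type in \cite{jin2021pessimism}) gives $v^\pi - \mathbb{E}_{d_1}[\widehat V_1] \le \sum_{h=1}^H \mathbb{E}_{\pi}[(\mathcal{T}_h\widehat V_{h+1})(s_h,a_h) - \widehat Q_h(s_h,a_h)] \le 2\sum_{h=1}^H \mathbb{E}_{\pi}[\Gamma_h(s_h,a_h)]$, again using the good event. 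Finally, plug in $\Gamma_h(s,a) = C\sqrt d\,(\phi^\top\widehat\Lambda_h^{-1}\phi)^{1/2} + 2H^3\sqrt d/K$; the $H$ copies of the $2H^3\sqrt d/K$ term give the $2H^4\sqrt d/K$ tail, and I replace $\widehat\Lambda_h$ by the population-style $\Lambda_h$ (with $\sigma^2_{\widehat V_{h+1}}$ in place of $\widehat\sigma_h^2$) at the cost of constants, using that $\widehat\sigma_h^2 \asymp \sigma^2_{\widehat V_{h+1}}$ on $\mathcal{E}$ — this yields the first display. The second display specializing to $\pi = \pi^\star$ additionally requires showing $\Lambda_h$ (built from $\sigma^2_{\widehat V_{h+1}}$) can be swapped for $\Lambda_h^\star$ (built from $\sigma^2_{V^\star_{h+1}}$); this follows because on $\mathcal{E}$ the iterates $\widehat V_{h+1}$ are uniformly close to $V^\star_{h+1}$ (a consequence of pessimism plus the near-optimality just proved, propagated backward), so the two variance functions agree up to $1\pm o(1)$ factors.

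I expect the main obstacle to be pillar (i), specifically handling the \emph{statistical dependence} between the estimated variance weights $\widehat\sigma_h^2$ and the dataset $\mathcal{D}$ used in the weighted regression, together with getting a clean uniform (over all $\widehat V_{h+1}$ in a covering net) self-normalized concentration bound whose radius is $\widetilde O(\sqrt d)$ rather than $\widetilde O(\sqrt{dH})$ — this is exactly why the data is split into $\mathcal{D}$ and $\mathcal{D}'$ and why $\sigma^2$ is floored at $1$, and it is the place where the variance-aware Bernstein argument of \cite{zhou2021nearly,min2021variance} must be adapted to the pessimistic, all-policy offline setting. A secondary technical nuisance is verifying that the sample sizes $\mathcal{M}_1,\dots,\mathcal{M}_4$ are large enough that (a) $\widehat\Lambda_h,\bar\Sigma_h \succeq \Omega(K\kappa)I$, (b) the variance estimates are accurate enough to be within a constant factor of the truth, and (c) the $O(1/K)$ correction term genuinely dominates all ridge/covering residuals; these are routine but bookkeeping-heavy.
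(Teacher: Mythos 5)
Your overall architecture matches the paper's: a good event on which $|(\mathcal{T}_h\widehat V_{h+1}-\widehat{\mathcal{T}}_h\widehat V_{h+1})|\le\Gamma_h$, the extended-value-difference/pessimism step reducing $v^\pi-v^{\widehat\pi}$ to $2\sum_h\E_\pi[\Gamma_h]$, the swap $\widehat\sigma_h^2\to\sigma^2_{\widehat V_{h+1}}$ (giving $\Lambda_h$), and finally $\Lambda_h\to\Lambda_h^\star$ via closeness of $\widehat V_{h+1}$ to $V^\star_{h+1}$. One cosmetic remark: your backward induction claim ``$\widehat V_h\le V_h^\pi$ for every $\pi$'' is false as stated (the induction only closes for $\pi=\widehat\pi$ or $\pi=\pi^\star$, since $\widehat V_h=\max_a\widehat Q_h$ while $V_h^\pi=\E_{a\sim\pi_h}Q_h^\pi$); what you actually use is only $\widehat V_1\le V_1^{\widehat\pi}$, which is exactly the paper's ``$\zeta_h\ge 0$'' half of Lemma~\ref{lem:bound_by_bouns}, so this is a wording fix, not a structural problem.

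The genuine gap is in your pillar (i). You propose to obtain $\big\|\sum_\tau x_\tau\eta_\tau\big\|_{\widehat\Lambda_h^{-1}}=\widetilde O(\sqrt d)$ by a Bernstein self-normalized bound combined with ``a covering-number argument over the class of possible $\widehat V_{h+1}$'s.'' But the covering number of the value class $\mathcal V_{h+1}$ (parameterized by $\widehat w$ and the matrix inside the bonus) has metric entropy $\widetilde O(d^2)$, so a union bound over the net inflates the Bernstein radius to $\widetilde O(\sqrt{d\cdot d^2})=\widetilde O(d^{3/2})$ and the theorem's $\sqrt d$ main term is lost. The paper deliberately does \emph{not} pay covering entropy in the main Bernstein term (Lemma~\ref{lem:self_normalized_bound}). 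Instead it first bootstraps a crude uniform bound $\sup_h\|\widehat V_h-V^\star_h\|_\infty\le\widetilde O(H^2\sqrt{d/(\kappa K)})$ (Lemma~\ref{lem:crude_bound}), uses it together with Lemma~\ref{lem:diff} and Lemma~\ref{lem:sigma_to_optimal} to show that the \emph{conditional variance} of the reweighted residual $\eta_\tau$ is $O(1)$ (because $\Var(\mathcal B_h\widehat V_{h+1})$ is within $o(1)$ of $\Var(\mathcal B_h V^\star_{h+1})\approx\widehat\sigma_h^2$), and only then applies Bernstein for self-normalized martingales with variance parameter $O(1)$ and absolute bound $\xi$; the hypothesis $\sqrt d>\xi$ is what lets the $R\log(\cdot)$ term be absorbed into $\widetilde O(\sqrt d)$. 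Covering is confined to the lower-order variance-estimation lemma (Lemma~\ref{lem:bound_variance}), where the resulting $d^{3/2}$ is harmless because it only perturbs $\widehat\sigma_h^2$ at rate $K^{-1/2}$. Your proposal is missing this anchoring-to-$V^\star$ bootstrap entirely (you invoke $\widehat V\approx V^\star$ only at the very end, for the $\Lambda_h\to\Lambda_h^\star$ swap), and without it your concentration step either pays covering entropy in the leading term or leaves the conditional variance at order $H^2$ — in either case failing to deliver the stated $\widetilde O(\sqrt d)$ bonus.
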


Theorem~\ref{thm:main} provides improvements over the existing best-known results and we now explain it. However, before that, we first discuss about our theorem condition. 

\textbf{Comparing to \cite{zhou2021nearly}.} In the online regime, \cite{zhou2021nearly} is the first result that achieves optimal regret rate with $O(dH\sqrt{T})$ in the linear (mixture) MDPs. However, this result requires the condition $d\geq H$ (their Theorem~6 and Remark~7). In offline RL, VAPVI only requires a milder condition $\sqrt{d}>\xi$ comparing to $d\geq H$ (since for any fixed $V\in[0,H]$, the standardized quantity $\frac{r+V(s')-(\mathcal{T}_h V)(s,a)}{\sigma_V(s,a)}$ is bounded by constant with high probability, \emph{e.g.} by \emph{chebyshev} inequality), which makes our result apply to a wider range of linear MDPs.

\textbf{Comparing to \cite{jin2021pessimism}.} \cite{jin2021pessimism} first shows \emph{pessimistic value iteration} (PEVI) is provably efficient for Linear MDPs in offline RL. VAPVI improves PEVI over $O(\sqrt{d})$ on the feature dimension, and improves the horizon dependence as $\Lambda_h \succcurlyeq \frac{1}{H^2}\Sigma_h$ implies $\Lambda_h^{-1} \preccurlyeq {H^2}\Sigma_h^{-1}$. In addition, when instantiate to the tabular case, \emph{i.e.} $\phi(s,a)=\mathbf{1}_{s,a}$, VAPVI gives {\small$O(\sqrt{d}\sum_{h,s,a}d^{\pi^\star}_h(s,a)\sqrt{\frac{\Var_{P_{s,a}}(r+V^\star_{h+1})}{K\cdot d^\mu_h(s,a)}})$}, which enjoys $O(\sqrt{H})$ improvement over PEVI (recall Example~\ref{example}) and the order $O(H^{3/2})$ is tight (check Section~\ref{sec:detailed_discussion} for the detailed derivation).

\textbf{Comparing to \cite{xie2021bellman}.} Their linear MDP guarantee in Theorem 3.2. enjoys the same rate as VAPVI in feature dimension but the horizon dependence is essentially the same as \cite{jin2021pessimism} (by translating $H\approx O(\frac{1}{1-\gamma})$) therefore is not optimal. The general function approximation scheme in \cite{xie2021bellman} provides elegant characterizations for on-support error and off-support error, but the algorithmic framework is information-theoretical only (and the practical version PSPI will not yield the same learning guarantee). Also, due to the use finite function class and policy class, the reduction to linear MDP only works with finite action space. As a comparison, VAPVI has no constraints on any of these.

\textbf{Comparing to \cite{zanette2021provable}.} Concurrently, \cite{zanette2021provable} considers offline RL with the linear Bellman complete model, which is more general than linear MDPs and, with the assumption $Q^\pi\leq 1$, their PACLE algorithm provides near-minimax optimal guarantee in this setting. However, when recovering to the standard setting $Q^\pi\in[0,H]$, their bound will rescale by an $H$ factor,\footnote{Check their Footnote 2 in Page 9.} which could be suboptimal due to the variance-unawareness. The reason behind this is: when $Q^\pi\leq 1$, lack of variance information encoding will not matter, since in this case $\Var_P(V^\pi)\leq 1$ has constant order (therefore will not affect the optimal rate); when $Q^\pi\in[0,H]$, $\Var_P(V^\pi)$ can be as large as $H^2$, effectively leveraging the variance information can help improve the sample efficiency, \emph{e.g.} via \emph{law of total variances}, just like VAPVI does. On the other hand, their guarantee also requires finite action space, due to the mirror descent style analysis. Nevertheless, we do point out \cite{zanette2021provable} has improved state-action measure than VAPVI, as $\norm{E_\pi[\phi(\cdot,\cdot)]}_{M^{-1}} \leq \E_\pi[\norm{\phi(\cdot,\cdot)}_{M^{-1}}]$ by Jensen's inequality and that norm $\norm{\cdot}_{M^{-1}}$ is convex for some positive-definite matrix $M$.

\textbf{Adaptive characterization and faster convergence.}\label{instance} Comparing to existing works, one major improvement is that the main term for VAPVI $\sqrt{d}\sum_{h=1}^H\mathbb{E}_{\pi^\star}\big[\sqrt{\phi(\cdot, \cdot)^{\top} \Lambda_{h}^{\star-1} \phi(\cdot, \cdot)}\big]$ admits no explicit dependence on $H$, which provides a more adaptive/instance-dependent characterization. For instance, if we ignore the technical treatment by taking $\lambda=0$ and $\sigma^\star_h\approx \Var_P(V^\star_{h+1})$, then for the \textbf{partially deterministic systems} (where there are $t$ stochastic $P_h$'s and $H-t$ deterministic $P_h$'s), the main term diminishes to $\sqrt{d}\sum_{i=1}^t\mathbb{E}_{\pi^\star}\big[\sqrt{\phi(\cdot, \cdot)^{\top} \Lambda_{h_i}^{\star-1} \phi(\cdot, \cdot)}\big]$ with $h_i\in\{h:\;s.t. \;P_{h}\;\text{is stochastic}\}$ and can be a much smaller quantity when $t\ll H$. Furthermore, for the \textbf{fully deterministic system}, VAPVI automatically provides faster convergence rate $O(\frac{1}{K})$ from the higher order term, given that the main term degenerates to $0$. Those adaptive/instance-dependent features are not enjoyed by \citep{xie2021bellman,zanette2021provable}, as they always provide the standard statistical rate $O(\frac{1}{\sqrt{K}})$ (also check Remark~\ref{remark:crude} for a related discussion).

\subsection{VAPVI-Improved: Further improvement in state-action dimension}\label{sec:VAPVI-I}

Can we further improve the VAPVI? Indeed, by deploying a carefully tuned tighter penalty, we are able to further improve the state-action dependence if the feature is non-negative ($\phi\geq 0$). Concretely, we replace the following $\Gamma^I_h$ in Algorithm~\ref{alg:VAPVI} instead, and call the algorithm VAPVI-Improved (or VAPVI-I for short). The proof can be found in Appendix~\ref{sec:proof_VAPVI-I}.
{
\begin{equation}\label{eqn:VAPVI-I}
\Gamma_h^I(s,a)\leftarrow \phi(s, a)^{\top} \bigg|\widehat{\Lambda}_{h}^{-1}\sum_{\tau=1}^{K} \frac{\phi\left(s_{h}^{\tau}, a_{h}^{\tau}\right) \cdot\left(r_{h}^{\tau}+\widehat{V}_{h+1}\left(s_{h+1}^{\tau}\right)-\left(\widehat{\mathcal{T}}_{h} \widehat{V}_{h+1}\right)\left(s_{h}^{\tau}, a_{h}^{\tau}\right)\right)}{\widehat{\sigma}^2_h(s^\tau_h,a^\tau_h)}\bigg|+\widetilde{O}(\frac{H^3d/\kappa}{K})
\end{equation}}

\begin{theorem}\label{thm:main_improved}
	
	Suppose the feature is non-negative ($\phi\geq 0$). Let $K$ be the number of episodes. If $K>\max\{\mathcal{M}_1,\mathcal{M}_2,\mathcal{M}_3,\mathcal{M}_4\}$ and $\sqrt{d}>\xi$. Deploying $\Gamma^I_h$ (\ref{eqn:VAPVI-I}) in Algorithm~\ref{alg:VAPVI}. Then for any $0<\lambda<\kappa$, with probability $1-\delta$, for all policy $\pi$ simultaneously, the output $\widehat{\pi}$ of Algorithm~\ref{alg:VAPVI} (VAPVI-I) satisfies
	{
		\[
		v^\pi-v^{\widehat{\pi}}\leq \widetilde{O}\big(\sqrt{d}\cdot\sum_{h=1}^H\sqrt{\mathbb{E}_{\pi}[\phi(\cdot, \cdot)]^{\top} \Lambda_{h}^{-1} \mathbb{E}_{\pi}[\phi(\cdot, \cdot)]}\big)+\widetilde{O}(\frac{H^4d/\kappa}{K})
		\]
	}In particular, when choosing $\pi=\pi^\star$, the above guarantee holds true with $\Lambda_{h}^{-1}$ replaced by $\Lambda_{h}^{\star-1}$. Here $\Lambda_{h}^{-1}$, $\Lambda_{h}^{\star-1}$, $\xi$ are defined the same as Theorem~\ref{thm:main}.
\end{theorem}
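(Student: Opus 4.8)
The argument mirrors that of Theorem~\ref{thm:main}; only two ingredients are new — checking that the sharper bonus $\Gamma_h^I$ still certifies pessimism, and the final bonus-summation step, which now exploits $\phi\ge 0$ — so I focus on those. The plan is: (Step~1) produce a high-probability ``pessimism event'' $\mathcal{E}$, depending only on $\mathcal{D},\mathcal{D}'$ (hence not on $\pi$), on which $\big|(\widehat{\mathcal{T}}_h\widehat{V}_{h+1})(s,a)-(\mathcal{T}_h\widehat{V}_{h+1})(s,a)\big|\le\Gamma_h^I(s,a)$ for all $(s,a)$ and all $h$; (Step~2) on $\mathcal{E}$, a backward induction gives $\widehat{Q}_h\le\mathcal{T}_h\widehat{V}_{h+1}$ hence $\widehat{V}_h\le V_h^\pi$ pointwise, and the extended value-difference decomposition combined with the greediness of $\widehat{\pi}_h$ yields, simultaneously for every $\pi$, $0\le v^\pi-v^{\widehat{\pi}}\le 2\sum_{h=1}^H\mathbb{E}_{\pi,h}\big[\Gamma_h^I(s_h,a_h)\big]$; (Step~3) bound $\mathbb{E}_{\pi,h}[\Gamma_h^I]$ and sum over $h$.

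The point of the design (\ref{eqn:VAPVI-I}) is that $\Gamma_h^I(s,a)=\phi(s,a)^\top b_h+\widetilde{O}(H^3d/(\kappa K))$ for a data-dependent vector $b_h\in\mathbb{R}^d$ (the coordinate-wise modulus of $\widehat{\Lambda}_h^{-1}$ applied to the reweighted Bellman-residual sum) that does \emph{not} depend on $(s,a)$. Hence the expectation passes directly onto the feature: $\mathbb{E}_{\pi,h}[\Gamma_h^I]=\mathbb{E}_{\pi,h}[\phi(s,a)]^\top b_h+\widetilde{O}(H^3d/(\kappa K))$ — and this is where $\phi\ge 0$ enters, since otherwise one cannot commute the coordinate-wise $|\cdot|$ past the expectation. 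To control $\mathbb{E}_{\pi,h}[\phi]^\top b_h$ I would combine a self-normalized Bernstein-type bound showing the reweighted residual sum has $\widehat{\Lambda}_h^{-1}$-norm $\widetilde{O}(\sqrt{d})$ with a careful use of non-negativity that absorbs the coordinate-wise absolute value into a $\|\cdot\|_{\widehat{\Lambda}_h^{-1}}$-type quantity, obtaining $\mathbb{E}_{\pi,h}[\Gamma_h^I]\lesssim\sqrt{d}\,\|\mathbb{E}_{\pi,h}[\phi]\|_{\widehat{\Lambda}_h^{-1}}+\widetilde{O}(H^3d/(\kappa K))$; without $\phi\ge 0$ one is stuck with $\mathbb{E}_{\pi,h}[\|\phi\|_{\widehat{\Lambda}_h^{-1}}]$ as in Theorem~\ref{thm:main}, and the gain is exactly $\|\mathbb{E}_{\pi,h}[\phi]\|_{\Lambda_h^{-1}}\le\mathbb{E}_{\pi,h}[\|\phi\|_{\Lambda_h^{-1}}]$ (Jensen). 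Passing from $\widehat{\Lambda}_h$ to the $\sigma^2_{\widehat{V}_{h+1}}$-weighted empirical matrix $\Lambda_h$ of Theorem~\ref{thm:main} costs only constant factors once $\widehat{\sigma}_h^2\asymp\sigma^2_{\widehat{V}_{h+1}}$ is available; for the $\pi=\pi^\star$ claim one then replaces $\Lambda_h$ by $\Lambda_h^\star$ using $\widehat{V}_{h+1}\approx V^\star_{h+1}$ (so $\sigma^2_{\widehat{V}_{h+1}}\asymp\sigma^2_{V^\star_{h+1}}$), proved by the same backward induction as in Theorem~\ref{thm:main}. Summing over $h$ gives the stated bound with higher-order term $\widetilde{O}(H^4d/(\kappa K))$.

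I expect the main obstacle to be Step~1: certifying that the much smaller penalty $\Gamma_h^I$ still dominates the Bellman error pointwise. Two pieces are required. (i) A self-normalized (Bernstein) concentration for the variance-reweighted noise $\sum_\tau \phi(s_h^\tau,a_h^\tau)\,\eta_h^\tau/\widehat{\sigma}_h^2(s_h^\tau,a_h^\tau)$ — where $\eta_h^\tau$ behaves, up to the controlled estimation error, like the martingale difference $r_h^\tau+\widehat{V}_{h+1}(s_{h+1}^\tau)-(\mathcal{T}_h\widehat{V}_{h+1})(s_h^\tau,a_h^\tau)$ — combined with a covering-number argument over the data-dependent class of value functions $\widehat{V}_{h+1}$ induced by Algorithm~\ref{alg:VAPVI} (parametrized by the $\widehat{w}$'s, the $\widehat{\Lambda}$'s, the bonus, and the truncations). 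The hypothesis $\sqrt{d}>\xi$ forces each reweighted per-sample noise to be of constant order, so this bound scales as $\widetilde{O}(\sqrt{d})$ rather than $\widetilde{O}(\sqrt{d}\,H)$ — which is precisely what removes the explicit $H$ from the leading term and is the source of the instance-adaptivity. (ii) A uniform two-sided comparison $\widehat{\sigma}_h^2(s,a)\asymp\sigma^2_{\widehat{V}_{h+1}}(s,a)$, obtained from the accuracy of the independent-half least-squares estimators $\bar{\theta}_h,\bar{\beta}_h$ of $P_h\widehat{V}_{h+1}$ and $P_h\widehat{V}_{h+1}^2$, together with Assumption~\ref{assum:cover} and its corollary $\iota\ge\kappa/H^2>0$ and the sample-size thresholds $K>\max\{\mathcal{M}_1,\dots,\mathcal{M}_4\}$; the approximation errors incurred here are what get absorbed into the $\widetilde{O}(H^3d/(\kappa K))$ term inside $\Gamma_h^I$, the factor $1/\kappa$ arising from the bound $\|\widehat{\Lambda}_h^{-1}\|\lesssim 1/(K\iota)\lesssim H^2/(K\kappa)$. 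Finally, the elementary inequality $|\phi(s,a)^\top v|\le\phi(s,a)^\top|v|$ for $\phi\ge 0$ converts the self-normalized error estimate into the displayed $\Gamma_h^I$, which closes the pessimism step.
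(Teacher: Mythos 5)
Your proposal matches the paper's proof in both structure and substance: pessimism is certified by decomposing $(\mathcal{T}_h\widehat{V}_{h+1}-\widehat{\mathcal{T}}_h\widehat{V}_{h+1})(s,a)$ into the empirical residual term (dominated by the main part of $\Gamma_h^I$ via $|\phi^\top v|\le\phi^\top|v|$ for $\phi\ge 0$) plus regularization and $\widehat{\mathcal{T}}_h$-versus-$\mathcal{T}_h$ corrections absorbed into the $\widetilde{O}(H^3d/\kappa K)$ term, the extended value-difference lemma reduces the suboptimality to $2\sum_h\E_\pi[\Gamma_h^I]$, and linearity of the bonus in $\phi$ together with the self-normalized Bernstein bound and the $\widehat{\Lambda}_h\to\Lambda_h\to\Lambda_h^\star$ conversions yields the stated bound, exactly as in the paper. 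The only cosmetic difference is that you file the Bernstein concentration under the pessimism step, whereas the paper needs it only when bounding the expected bonus (since $\Gamma_h^I$ literally contains the empirical residual, certifying pessimism for the main term requires only the $\phi\ge 0$ inequality and a crude Hoeffding-type bound for the correction terms); this does not affect correctness.
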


Theorem~\ref{thm:main_improved} maintains nearly all the features of Theorem~\ref{thm:main} (except higher order term is slightly worse) and the dominate term evolves from $\E_{\pi}\norm{\phi}_{\Lambda_h^{-1}}$ to $\norm{\E_{\pi}[\phi]}_{\Lambda_h^{-1}}$. Clearly, the two bounds differ by the magnitude of Jensen's inequality. To provide a concrete view of how much improvement is made, we check the parameter dependence in the context of tabular MDPs (where we ignore the higher order term for conciseness). In particular, we compare the results under the single-policy concentrability.

\begin{assumption}[\cite{rashidinejad2021bridging,xie2021policy}]\label{assum:single_concen}
	There exists a optimal policy $\pi^\star$, s.t. $\sup_{h,s,a}d^{\pi^\star}_h(s,a)/d^{\mu}_h(s,a):=C^\star<\infty$, where $d^\pi$ is the marginal state-action probability under $\pi$.
\end{assumption} 
In tabular RL, $\phi(s,a)=\mathbf{1}_{s,a}$ and $d=S\cdot A$ ($S,A$ be the finite state, action cardinality), then
{
\begin{equation}\label{eqn:comparison}
\begin{aligned}
\mathrm{Theorem~\ref{thm:main}}\rightarrow& \sqrt{SA}\sum_{h}^H\sum_{s,a}d^{\pi^\star}_h(s,a)\sqrt{\frac{\Var_{P_{s,a}}(r+V^\star_{h+1})}{K\cdot d^\mu_h(s,a)}}\leq \sqrt{\frac{H^3C^\star S^2A}{K}};\\
\mathrm{Theorem~\ref{thm:main_improved}}\rightarrow& \sqrt{SA}\sum_{h}^H\sqrt{\sum_{s,a}d^{\pi^\star}_h(s,a)^2\frac{\Var_{P_{s,a}}(r+V^\star_{h+1})}{K\cdot d^\mu_h(s,a)}}\leq \sqrt{\frac{H^3C^\star SA}{K}}.
\end{aligned}
\end{equation}
}Theorem~\ref{thm:main_improved} enjoys a $S$ state improvement over Theorem~\ref{thm:main} and nearly recovers the minimax rate $\sqrt{\frac{H^3C^\star S}{K}}$ \citep{xie2021policy}. The detailed derivation can be found in Appendix~\ref{sec:detailed_discussion}. Also, to show our result is near-optimal, we provide the corresponding lower bound. The proof is in Appendix~\ref{sec:proof_lower}.

\begin{theorem}[Minimax lower bound]\label{thm:lb}
	There exist a pair of universal constants $c, c' > 0$ such that given dimension $d$, horizon $H$ and sample size $K > c' d^3$, one can always find a family of linear MDP instances $\mathcal{M}$ such that (where $\Lambda_h^{\star} = \sum_{k=1}^K \frac{\phi(s_h^k, a_h^k) \cdot \phi(s_h^k,a_h^k)^{\top}}{\Var_h(V_{h+1}^{\star})(s_h^k, a_h^k)} $ satisfies $(\Lambda_h^{\star})^{-1}$ exists and $\Var_h(V_{h+1}^{\star})(s_h^k, a_h^k)>0$ $\forall M\in\mathcal{M}$)
	{\begin{equation}\label{eq:lb}
	\inf_{\widehat{\pi}} \sup_{M \in \mathcal{M}} \E_M \big[v^{\star} - v^{\widehat{\pi}} \big] \big/\bigg(\sqrt{d} \cdot \sum_{h=1}^H \sqrt{\E_{\pi^{\star}} [ \phi]^{\top} (\Lambda_h^{\star})^{-1} \E_{\pi^{\star}} [ \phi]} \bigg)\geq c.
	\end{equation}}
\end{theorem}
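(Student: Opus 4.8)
The plan is to prove Theorem~\ref{thm:lb} by an Assouad-type argument. I would exhibit a family $\mathcal M=\{M_{\boldsymbol\zeta}\}$ of linear MDPs indexed by sign vectors $\boldsymbol\zeta=(\zeta^{(1)},\dots,\zeta^{(H)})$ with $\zeta^{(h)}\in\{\pm1\}^d$, in which every layer $h$ hides a product of $d$ independent two-armed stochastic sub-problems whose correct arms are read off from $\zeta^{(h)}$, and then lower bound the minimax risk coordinate by coordinate, summing over the $dH$ sub-problems.

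\textbf{Construction.} At each layer $h$ the behavior policy $\mu$ reaches a set of ``context'' states with a fixed ($\boldsymbol\zeta$-independent) distribution that spreads mass uniformly over $d$ groups, and within group $j$ offers two actions carrying two distinct bounded feature directions; the transition out of group $j$ is a genuinely stochastic jump between two successor states whose $V^\star$-values differ by a gap $\Delta=\Theta(1)$, with the $\zeta^{(h)}_j$-correct action attaining the favourable branch with probability $\tfrac12+\delta$ and the other action with probability $\tfrac12-\delta$. Before invoking the information argument I would verify, by direct computation, three structural facts: (i) $M_{\boldsymbol\zeta}$ is a valid linear MDP in the sense of Definition~\ref{def:linear_MDP} for every $\boldsymbol\zeta$, which holds because $\delta$ is taken $\lesssim1$ and the feature map and the measures $\nu_h$ are explicitly normalized; (ii) because the jump is stochastic with a nonzero value gap, $\Var_h(V^\star_{h+1})(s,a)=(\tfrac14-\delta^2)\Delta^2>0$ at every $(s,a)$ visited by $\mu$ — so the quantities $\Lambda_h^\star$ of the statement are well defined — and moreover $\Var_h(V^\star_{h+1})$ equals, up to $O(\delta^2)$, the $\boldsymbol\zeta$-independent constant $\sigma^2:=\tfrac14\Delta^2$ on the support of $\mu$; and (iii) $\E_{\mu,h}[\phi\phi^\top]$ has full rank with smallest eigenvalue $\Theta(1/d)$ (consistent with Assumption~\ref{assum:cover}) for every member of $\mathcal M$, so that with probability one $(\Lambda_h^\star)^{-1}$ exists. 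The optimal policy $\pi^\star$ of $M_{\boldsymbol\zeta}$ plays the $\zeta^{(h)}_j$-correct action in every group; since $\pi^\star$ cannot change the context distribution, $\E_{\pi^\star,h}[\phi]$ is an average of $d$ basis-type directions with $\|\E_{\pi^\star,h}[\phi]\|_2^2=\Theta(1/d)$.

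\textbf{Information-theoretic core.} Work on the high-probability event that the random variance-reweighted covariance $\Lambda_h^\star=\sum_{k}\phi(s_h^k,a_h^k)\phi(s_h^k,a_h^k)^\top/\Var_h(V^\star_{h+1})(s_h^k,a_h^k)$ concentrates around its mean $\tfrac{K}{\sigma^2}\,\E_{\mu,h}[\phi\phi^\top]\asymp\tfrac{K}{d\sigma^2}\,I$; this matrix-concentration step is where the hypothesis $K>c'd^3$ enters, and it also keeps the perturbation $\delta\asymp\sqrt{d/K}$ in the admissible small regime so KL divergences are controlled by their quadratics. On this event the denominator of \eqref{eq:lb} is, layer by layer, of order $\sqrt d\cdot\sqrt{\tfrac{d\sigma^2}{K}\cdot\tfrac1d}=\sqrt{\tfrac{d\sigma^2}{K}}\asymp\Delta\sqrt{d/K}$, hence of total order $H\Delta\sqrt{d/K}$. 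For the numerator, fix layer $h$ and coordinate $j$ and compare $M_{\boldsymbol\zeta}$ with the neighbour obtained by flipping $\zeta^{(h)}_j$: the two data laws differ only in the $\asymp K/d$ layer-$h$ samples landing in group $j$, and their KL divergence is $\lesssim (K/d)\,\delta^2$, so with $\delta\asymp\sqrt{d/K}$ every neighbouring pair has total-variation distance a constant below $1$. A performance-difference identity (the linear-MDP simulation lemma) shows that misidentifying $\zeta^{(h)}_j$ forces a value loss of order $(1/d)\cdot\delta\Delta$, the factor $1/d$ being the $\pi^\star$-occupancy of group $j$ at layer $h$, and, because the $dH$ sub-problems are statistically independent, these losses add; Assouad's lemma then gives
\[
\inf_{\widehat\pi}\ \sup_{M\in\mathcal M}\ \E_M\big[v^\star-v^{\widehat\pi}\big]\ \gtrsim\ \sum_{h=1}^H d\cdot\frac{\delta\Delta}{d}\ =\ H\,\delta\Delta\ \asymp\ H\Delta\sqrt{d/K},
\]
which matches the denominator up to absolute constants and yields \eqref{eq:lb} with a universal $c>0$.

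\textbf{Main obstacle.} The information argument itself is short; the real difficulty is engineering a single linear-MDP gadget in which all of the following coexist: the $dH$ binary sub-problems are exactly statistically decoupled (so Assouad's per-coordinate bound is literally applicable and each wrong sign contributes an \emph{additive, nonnegative} value loss); the transitions are stochastic with a value gap bounded away from $0$ at every visited state, so that $\Var_h(V^\star_{h+1})>0$ for all $h$ as the statement demands and $\sigma^2$ is essentially $\boldsymbol\zeta$-independent; the behavior policy visits a full-rank feature set at every layer, so the random matrices $\Lambda_h^\star$ are invertible and comparable to $\tfrac{K}{d\sigma^2}I$ (which is exactly what the regime $K>c'd^3$ buys, through matrix concentration applied to the data-dependent weights $\Var_h(V^\star_{h+1})(s_h^k,a_h^k)$); and at the same time $\E_{\pi^\star}[\phi]$ has squared $\Lambda_h^\star$-norm of order $1/K$, so the $\sqrt d$ prefactor in \eqref{eq:lb} is precisely the factor generated by the $d$-fold Assouad sum rather than being over- or under-counted — all while respecting the norm constraints of Definition~\ref{def:linear_MDP}. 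Reconciling these requirements in one explicit, recursively defined instance is where the genuine work lies.
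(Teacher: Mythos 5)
Your skeleton — Assouad's method over sign vectors indexed by $(h,j)$, reduction to binary testing with KL controlled by $\delta^2$, and a separate computation showing the uncertainty term $\sqrt{d}\sum_h\sqrt{\E_{\pi^\star}[\phi]^\top(\Lambda_h^\star)^{-1}\E_{\pi^\star}[\phi]}$ matches the resulting rate — is exactly the paper's skeleton (Lemmas~\ref{lemma:Assouad}, \ref{lemma:lb_testing}, \ref{lemma:lb_uncertain}), and your scalings are internally consistent: per-layer loss $\asymp\delta\Delta$ with $\delta\asymp\sqrt{d/K}$ against a per-layer uncertainty term $\asymp\Delta\sqrt{d/K}$ gives a constant ratio, just as the paper's $\delta\asymp d/\sqrt{K}$ gives $d^{3/2}H/\sqrt{K}$ against $c_3 d^{3/2}H/\sqrt{K}$. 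Where you diverge is the instance family itself, and this is not a cosmetic difference. The paper puts the perturbation in the \emph{reward observations}: the transition is a fixed, action-independent coin flip between two states, the reward is Gaussian with mean $s/\sqrt{6}+\delta\langle a,u_h\rangle/\sqrt{2d}$, and a single action $a\in\{-1,0,+1\}^{d-2}$ encodes all coordinates at once. This choice makes every hard step trivial: all instances share one data-generating transition law, so the $dH$ tests decouple exactly and the KL is a Gaussian mean-shift computation; $\Var_{P_h}(V^\star_{h+1})=1/6$ exactly and independently of $u$, so $\Lambda_h^\star$ is a single explicit matrix invertible by inspection; and the suboptimality identity $V_u^\star-V_u^\pi=\frac{\delta}{\sqrt{2d}}\sum_h\|u_h-\E_\pi[a_h]\|_1$ is exact, so the coordinatewise losses add with no error terms.

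You instead put the perturbation in the \emph{transition probabilities} ($1/2\pm\delta$ to a favourable successor), and you correctly identify that the resulting construction problem is "where the genuine work lies" — but that deferred step is precisely the content of the theorem, and your route makes it strictly harder than the paper's. Concretely: (i) with $\zeta$-dependent transitions, $V^\star_{h+1}$ and hence $\Var_h(V^\star_{h+1})$ and the gap $\Delta$ are instance-dependent, so your claims that they are "$\zeta$-independent up to $O(\delta^2)$" and that the per-coordinate losses "add" both acquire error terms that must be shown not to swamp the $H\delta\Delta$ main term; (ii) exact decoupling of the layers requires the marginal context distribution at layer $h+1$ to be $\zeta^{(h)}$-independent, which forces additional symmetry constraints on $\mu$; (iii) encoding $\pm\delta$ transition perturbations in $\nu_h$ generically requires signed entries, which sits uneasily with the unsigned-measure requirement of Definition~\ref{def:linear_MDP}, whereas the paper's $\nu_h$ is a fixed nonnegative vector and only the reward parameter $\theta_{u,h}$ carries the signs; and (iv) realizing $2d$ distinct state-action feature directions per layer inside $\R^d$ with $\|\phi\|_2\le 1$ and full-rank coverage is left unspecified. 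So the proposal is a plausible plan with the right information-theoretic accounting, but it does not yet constitute a proof: the family $\mathcal{M}$ is never exhibited, and the design decision you made (perturbing transitions rather than reward noise) is the one the paper deliberately avoids because it is what makes the construction hard.
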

Theorem~\ref{thm:lb} nearly matches the main term in VAPVI-I (Theorem~\ref{thm:main_improved}) and certifies it is near-optimal. On the other hand, it is worth understanding how the above lower bound compares to the lower bound in \cite{jin2021pessimism}. In general, they are not directly comparable since both results are global minimax (not instance-dependent/local-minimax) lower bounds as the hardness only hold for a family of hard instances (which makes comparison outside of the family instances vacuum). However, for all the instances within the family, we can verify our lower bound~\ref{thm:lb} is tighter (see Appendix~\ref{sec:low_discuss} for detailed discussion).


\section{Proof Overview}\label{sec:proof_sketch}
In this section, we provide a brief overview of the key proving ideas of the theorems. We begin with Theorem~\ref{thm:main}. First, by \emph{the extended value difference lemma} (Lemma~\ref{lem:evd}), we can convert bounding the suboptimality gap of $v^\star-v^{\widehat{\pi}}$ to bounding $\sum_{h=1}^H 2\cdot \E_\pi\left[\Gamma_h(s_h,a_h)\right]$, given that $|(\mathcal{T}_h\widehat{V}_{h+1}-\widehat{\mathcal{T}}_h\widehat{V}_{h+1})(s,a)|\leq \Gamma_h(s,a)$ for all $s,a,h$. To bound $\mathcal{T}_h\widehat{V}_{h+1}-\widehat{\mathcal{T}}_h\widehat{V}_{h+1}$, by decomposing it reduces to bounding the key quantity 
\begin{equation}\label{eqn:key}
\phi(s, a)^{\top} \widehat{\Lambda}_{h}^{-1}\bigg[\sum_{\tau=1}^{K} \phi\left(s_{h}^{\tau}, a_{h}^{\tau}\right) \cdot\left(r_{h}^{\tau}+\widehat{V}_{h+1}\left(s_{h+1}^{\tau}\right)-\left(\mathcal{T}_{h} \widehat{V}_{h+1}\right)\left(s_{h}^{\tau}, a_{h}^{\tau}\right)\right)/\widehat{\sigma}^2_h(s^\tau_h,a^\tau_h)\bigg]
\end{equation}
The term is treated in two steps. First, we bound the gap of $\norm{\sigma^2_{\widehat{V}_{h+1}}-\widehat{\sigma}_h^2}$ so we can convert $\widehat{\sigma}_h^2$ to $\sigma^2_{\widehat{V}_{h+1}}$. Next, since $\Var\left[r_{h}^{\tau}+\widehat{V}_{h+1}\left(s_{h+1}^{\tau}\right)-\left(\mathcal{T}_{h} \widehat{V}_{h+1}\right)\left(s_{h}^{\tau}, a_{h}^{\tau}\right)\mid s^\tau_h,a^\tau_h\right]\approx \sigma^2_{\widehat{V}_{h+1}}$, therefore by the variance-weighted scheme in (\eqref{eqn:key}), we can leverage the recent technical development \emph{Bernstein inequality for self-normalized martingale} (Lemma~\ref{lem:Bern_mart}) for 
acquiring the tight result, in contrast to the previous treatment of  Hoeffding inequality for self-normalized martingale + Covering.\footnote{Variance-reweighting in (\ref{eqn:key}) is important, since applying \emph{Bernstein inequality for self-normalized martingale} (Lemma~\ref{lem:Bern_mart}) without variance-reweighting cannot provide any improvement.} For the second part, one needs to further convert $\sigma^2_{\widehat{V}_{h+1}}$ to $\sigma_h^{\star2}$ ($\Lambda_h^{-1}$ to $\Lambda_h^{\star-1}$) with appropriate concentrations. The proof of Theorem~\ref{thm:main_improved} is similar but with more complicated computations and relies on using the linear representation of $\phi$ in $\Gamma^I_h$ (\ref{eqn:VAPVI-I}), so that the expectation over $\pi$ is inside the square root by taking expectation over the linear representation at the beginning. The lower bound proof uses a simple modification of \cite{zanette2021provable} which consists of the reduction from learning to testing with Assouad's method, and the use of standard information inequalities (\emph{e.g.} from total variation to KL divergence). For completeness, we provide the full proof in Appendix~\ref{sec:proof_lower}.


\section{Discussion and Conclusion}\label{sec:conclusion}


This work studies offline RL with linear MDP representation and contributes \emph{Variance Aware Pessimistic Value Iteration} (VAPVI) which adopts the conditional variance information of the value function. VAPVI uses the estimated variances to reweight the Bellman residuals in the least-square pessimistic value iteration and provides improved offline learning bounds over the existing best-known results. VAPVI-I further improves over VAPVI in the state-action dimension and is near-minimax optimal. One highlight of the theorems is that our learning bounds are expressed in terms of system quantities, which automatically provide natural instance-dependent characterizations that previous results are short of. 

On the other hand, while VAPVI/VAPVI-I close the existing gap from previous literature \citep{jin2021pessimism,xie2021bellman}, the optimal guarantee is in the minimax sense. Although our upper bounds possess instance-dependent characterizations, the lower bound only holds true for a class of hard instances. In this sense, whether ``instance-dependent optimality'' can be achieved remains elusive in the current linear MDP setting (such a discussion is recently initiated in MAB problems \citep{xiao2021optimality}). Furthermore, removing the dependence on $\kappa$ in the higher order terms (\emph{e.g.} Theorem~\ref{thm:main_improved}) is challenging and the recent development \citep{wagenmaker2021first} using robust estimation has the potential to address this issue. We leave these as future works.

\subsubsection*{Acknowledgments}

The authors would like to thank Quanquan Gu for explaining \cite{min2021variance} and introducing a couple of related literatures. Ming Yin would like to thank Zhuoran Yang for the helpful suggestions and Dan Qiao for a careful proofreading. Mengdi Wang gratefully acknowledges funding from Office of Naval Research (ONR) N00014-21-1-2288, Air Force Office of Scientific Research (AFOSR) FA9550-19-1-0203, and NSF 19-589, CMMI-1653435. Ming Yin and Yu-Xiang Wang are gratefully supported by National Science Foundation (NSF) Awards \#2007117 and \#2003257.

\bibliographystyle{plainnat}
\bibliography{sections/stat_rl}

\begin{thebibliography}{76}
\providecommand{\natexlab}[1]{#1}
\providecommand{\url}[1]{\texttt{#1}}
\expandafter\ifx\csname urlstyle\endcsname\relax
  \providecommand{\doi}[1]{doi: #1}\else
  \providecommand{\doi}{doi: \begingroup \urlstyle{rm}\Url}\fi

\bibitem[Abbasi-Yadkori et~al.(2011)Abbasi-Yadkori, P{\'a}l, and
  Szepesv{\'a}ri]{abbasi2011improved}
Yasin Abbasi-Yadkori, D{\'a}vid P{\'a}l, and Csaba Szepesv{\'a}ri.
\newblock Improved algorithms for linear stochastic bandits.
\newblock In \emph{Advances in Neural Information Processing Systems}, pages
  2312--2320, 2011.

\bibitem[Agarwal et~al.(2021)Agarwal, Kakade, Lee, and
  Mahajan]{agarwal2021theory}
Alekh Agarwal, Sham~M Kakade, Jason~D Lee, and Gaurav Mahajan.
\newblock On the theory of policy gradient methods: Optimality, approximation,
  and distribution shift.
\newblock \emph{Journal of Machine Learning Research}, 22\penalty0
  (98):\penalty0 1--76, 2021.

\bibitem[Amani et~al.(2021)Amani, Thrampoulidis, and Yang]{amani2021safe}
Sanae Amani, Christos Thrampoulidis, and Lin~F Yang.
\newblock Safe reinforcement learning with linear function approximation.
\newblock \emph{arXiv preprint arXiv:2106.06239}, 2021.

\bibitem[Antos et~al.(2008{\natexlab{a}})Antos, Munos, and
  Szepesvari]{antos2008fitted}
Andras Antos, Remi Munos, and Csaba Szepesvari.
\newblock Fitted q-iteration in continuous action-space mdps.
\newblock In \emph{Advances in Neural Information Processing Systems}, pages
  9--16, 2008{\natexlab{a}}.

\bibitem[Antos et~al.(2008{\natexlab{b}})Antos, Szepesv{\'a}ri, and
  Munos]{antos2008learning}
Andr{\'a}s Antos, Csaba Szepesv{\'a}ri, and R{\'e}mi Munos.
\newblock Learning near-optimal policies with bellman-residual minimization
  based fitted policy iteration and a single sample path.
\newblock \emph{Machine Learning}, 71\penalty0 (1):\penalty0 89--129,
  2008{\natexlab{b}}.

\bibitem[Ayoub et~al.(2020)Ayoub, Jia, Szepesvari, Wang, and
  Yang]{ayoub2020model}
Alex Ayoub, Zeyu Jia, Csaba Szepesvari, Mengdi Wang, and Lin Yang.
\newblock Model-based reinforcement learning with value-targeted regression.
\newblock In \emph{International Conference on Machine Learning}, pages
  463--474. PMLR, 2020.

\bibitem[Cai et~al.(2020)Cai, Yang, Jin, and Wang]{cai2020provably}
Qi~Cai, Zhuoran Yang, Chi Jin, and Zhaoran Wang.
\newblock Provably efficient exploration in policy optimization.
\newblock In \emph{International Conference on Machine Learning}, pages
  1283--1294. PMLR, 2020.

\bibitem[Chen and Jiang(2019)]{chen2019information}
Jinglin Chen and Nan Jiang.
\newblock Information-theoretic considerations in batch reinforcement learning.
\newblock In \emph{International Conference on Machine Learning}, pages
  1042--1051, 2019.

\bibitem[Chen et~al.(2021{\natexlab{a}})Chen, Lu, Rajeswaran, Lee, Grover,
  Laskin, Abbeel, Srinivas, and Mordatch]{chen2021decision}
Lili Chen, Kevin Lu, Aravind Rajeswaran, Kimin Lee, Aditya Grover, Michael
  Laskin, Pieter Abbeel, Aravind Srinivas, and Igor Mordatch.
\newblock Decision transformer: Reinforcement learning via sequence modeling.
\newblock \emph{arXiv preprint arXiv:2106.01345}, 2021{\natexlab{a}}.

\bibitem[Chen et~al.(2021{\natexlab{b}})Chen, Zhou, and Gu]{chen2021almost}
Zixiang Chen, Dongruo Zhou, and Quanquan Gu.
\newblock Almost optimal algorithms for two-player markov games with linear
  function approximation.
\newblock \emph{arXiv preprint arXiv:2102.07404}, 2021{\natexlab{b}}.

\bibitem[Chung and Lu(2006)]{chung2006concentration}
Fan Chung and Linyuan Lu.
\newblock Concentration inequalities and martingale inequalities: a survey.
\newblock \emph{Internet Mathematics}, 3\penalty0 (1):\penalty0 79--127, 2006.

\bibitem[Du et~al.(2019)Du, Krishnamurthy, Jiang, Agarwal, Dudik, and
  Langford]{du2019provably}
Simon Du, Akshay Krishnamurthy, Nan Jiang, Alekh Agarwal, Miroslav Dudik, and
  John Langford.
\newblock Provably efficient rl with rich observations via latent state
  decoding.
\newblock In \emph{International Conference on Machine Learning}, pages
  1665--1674. PMLR, 2019.

\bibitem[Du et~al.(2021)Du, Kakade, Lee, Lovett, Mahajan, Sun, and
  Wang]{du2021bilinear}
Simon~S Du, Sham~M Kakade, Jason~D Lee, Shachar Lovett, Gaurav Mahajan, Wen
  Sun, and Ruosong Wang.
\newblock Bilinear classes: A structural framework for provable generalization
  in rl.
\newblock \emph{International Conference on Machine Learning}, 2021.

\bibitem[Duan et~al.(2020)Duan, Jia, and Wang]{duan2020minimax}
Yaqi Duan, Zeyu Jia, and Mengdi Wang.
\newblock Minimax-optimal off-policy evaluation with linear function
  approximation.
\newblock In \emph{International Conference on Machine Learning}, pages
  8334--8342, 2020.

\bibitem[Duan et~al.(2021)Duan, Jin, and Li]{duan2021risk}
Yaqi Duan, Chi Jin, and Zhiyuan Li.
\newblock Risk bounds and rademacher complexity in batch reinforcement
  learning.
\newblock \emph{International Conference on Machine Learning}, 2021.

\bibitem[Ernst et~al.(2005)Ernst, Geurts, and Wehenkel]{ernst2005tree}
Damien Ernst, Pierre Geurts, and Louis Wehenkel.
\newblock Tree-based batch mode reinforcement learning.
\newblock \emph{Journal of Machine Learning Research}, 6:\penalty0 503--556,
  2005.

\bibitem[Foster et~al.(2021)Foster, Krishnamurthy, Simchi-Levi, and
  Xu]{foster2021offline}
Dylan~J Foster, Akshay Krishnamurthy, David Simchi-Levi, and Yunzong Xu.
\newblock Offline reinforcement learning: Fundamental barriers for value
  function approximation.
\newblock \emph{arXiv preprint arXiv:2111.10919}, 2021.

\bibitem[Fujimoto et~al.(2019)Fujimoto, Meger, and Precup]{fujimoto2019off}
Scott Fujimoto, David Meger, and Doina Precup.
\newblock Off-policy deep reinforcement learning without exploration.
\newblock In \emph{International Conference on Machine Learning}, pages
  2052--2062. PMLR, 2019.

\bibitem[Gao et~al.(2021)Gao, Xie, Du, and Yang]{gao2021provably}
Minbo Gao, Tianle Xie, Simon~S Du, and Lin~F Yang.
\newblock A provably efficient algorithm for linear markov decision process
  with low switching cost.
\newblock \emph{arXiv preprint arXiv:2101.00494}, 2021.

\bibitem[Gordon(1999)]{gordon1999approximate}
Geoffrey~J Gordon.
\newblock \emph{Approximate solutions to Markov decision processes}.
\newblock Carnegie Mellon University, 1999.

\bibitem[He et~al.(2021)He, Zhou, and Gu]{he2021logarithmic}
Jiafan He, Dongruo Zhou, and Quanquan Gu.
\newblock Logarithmic regret for reinforcement learning with linear function
  approximation.
\newblock In \emph{International Conference on Machine Learning}, pages
  4171--4180. PMLR, 2021.

\bibitem[Janner et~al.(2021)Janner, Li, and Levine]{janner2021reinforcement}
Michael Janner, Qiyang Li, and Sergey Levine.
\newblock Reinforcement learning as one big sequence modeling problem.
\newblock \emph{arXiv preprint arXiv:2106.02039}, 2021.

\bibitem[Jiang et~al.(2017)Jiang, Krishnamurthy, Agarwal, Langford, and
  Schapire]{jiang2017contextual}
Nan Jiang, Akshay Krishnamurthy, Alekh Agarwal, John Langford, and Robert~E
  Schapire.
\newblock Contextual decision processes with low bellman rank are
  pac-learnable.
\newblock In \emph{International Conference on Machine Learning-Volume 70},
  pages 1704--1713, 2017.

\bibitem[Jin et~al.(2020)Jin, Yang, Wang, and Jordan]{jin2020provably}
Chi Jin, Zhuoran Yang, Zhaoran Wang, and Michael~I Jordan.
\newblock Provably efficient reinforcement learning with linear function
  approximation.
\newblock In \emph{Conference on Learning Theory}, pages 2137--2143. PMLR,
  2020.

\bibitem[Jin et~al.(2021{\natexlab{a}})Jin, Liu, and
  Miryoosefi]{jin2021bellman}
Chi Jin, Qinghua Liu, and Sobhan Miryoosefi.
\newblock Bellman eluder dimension: New rich classes of rl problems, and
  sample-efficient algorithms.
\newblock \emph{arXiv preprint arXiv:2102.00815}, 2021{\natexlab{a}}.

\bibitem[Jin et~al.(2021{\natexlab{b}})Jin, Yang, and Wang]{jin2021pessimism}
Ying Jin, Zhuoran Yang, and Zhaoran Wang.
\newblock Is pessimism provably efficient for offline rl?
\newblock In \emph{International Conference on Machine Learning}, pages
  5084--5096. PMLR, 2021{\natexlab{b}}.

\bibitem[Kidambi et~al.(2020)Kidambi, Rajeswaran, Netrapalli, and
  Joachims]{kidambi2020morel}
Rahul Kidambi, Aravind Rajeswaran, Praneeth Netrapalli, and Thorsten Joachims.
\newblock Morel: Model-based offline reinforcement learning.
\newblock \emph{Advances in Neural Information Processing Systems}, 2020.

\bibitem[Kostrikov et~al.(2022)Kostrikov, Nair, and
  Levine]{kostrikov2021offline}
Ilya Kostrikov, Ashvin Nair, and Sergey Levine.
\newblock Offline reinforcement learning with in-sample q-learning.
\newblock In \emph{International Conference on Learning Representations}, 2022.

\bibitem[Krishnamurthy et~al.(2016)Krishnamurthy, Agarwal, and
  Langford]{krishnamurthy2016pac}
Akshay Krishnamurthy, Alekh Agarwal, and John Langford.
\newblock Pac reinforcement learning with rich observations.
\newblock \emph{Advances in neural information processing systems}, 2016.

\bibitem[Kumar et~al.(2019)Kumar, Fu, Tucker, and Levine]{kumar2019stabilizing}
Aviral Kumar, Justin Fu, George Tucker, and Sergey Levine.
\newblock Stabilizing off-policy q-learning via bootstrapping error reduction.
\newblock \emph{Advances in Neural Information Processing Systems}, 2019.

\bibitem[Kumar et~al.(2020)Kumar, Zhou, Tucker, and
  Levine]{kumar2020conservative}
Aviral Kumar, Aurick Zhou, George Tucker, and Sergey Levine.
\newblock Conservative q-learning for offline reinforcement learning.
\newblock \emph{Advances in Neural Information Processing Systems}, 2020.

\bibitem[Lange et~al.(2012)Lange, Gabel, and Riedmiller]{lange2012batch}
Sascha Lange, Thomas Gabel, and Martin Riedmiller.
\newblock Batch reinforcement learning.
\newblock In \emph{Reinforcement learning}, pages 45--73. Springer, 2012.

\bibitem[Lattimore and Szepesv{\'a}ri(2020)]{lattimore2020bandit}
Tor Lattimore and Csaba Szepesv{\'a}ri.
\newblock \emph{Bandit algorithms}.
\newblock Cambridge University Press, 2020.

\bibitem[Le et~al.(2019)Le, Voloshin, and Yue]{le2019batch}
Hoang Le, Cameron Voloshin, and Yisong Yue.
\newblock Batch policy learning under constraints.
\newblock In \emph{International Conference on Machine Learning}, pages
  3703--3712, 2019.

\bibitem[Levine et~al.(2020)Levine, Kumar, Tucker, and Fu]{levine2020offline}
Sergey Levine, Aviral Kumar, George Tucker, and Justin Fu.
\newblock Offline reinforcement learning: Tutorial, review, and perspectives on
  open problems.
\newblock \emph{arXiv preprint arXiv:2005.01643}, 2020.

\bibitem[Liu et~al.(2020)Liu, Swaminathan, Agarwal, and
  Brunskill]{liu2020provably}
Yao Liu, Adith Swaminathan, Alekh Agarwal, and Emma Brunskill.
\newblock Provably good batch reinforcement learning without great exploration.
\newblock \emph{arXiv preprint arXiv:2007.08202}, 2020.

\bibitem[Liu et~al.(2021)Liu, Zhang, Fu, Yang, and Wang]{liu2021provably}
Zhihan Liu, Yufeng Zhang, Zuyue Fu, Zhuoran Yang, and Zhaoran Wang.
\newblock Provably efficient generative adversarial imitation learning for
  online and offline setting with linear function approximation.
\newblock \emph{arXiv preprint arXiv:2108.08765}, 2021.

\bibitem[Min et~al.(2021)Min, Wang, Zhou, and Gu]{min2021variance}
Yifei Min, Tianhao Wang, Dongruo Zhou, and Quanquan Gu.
\newblock Variance-aware off-policy evaluation with linear function
  approximation.
\newblock \emph{Advances in neural information processing systems}, 2021.

\bibitem[Mnih et~al.(2015)Mnih, Kavukcuoglu, Silver, Rusu, Veness, Bellemare,
  Graves, Riedmiller, Fidjeland, Ostrovski, et~al.]{mnih2015human}
Volodymyr Mnih, Koray Kavukcuoglu, David Silver, Andrei~A Rusu, Joel Veness,
  Marc~G Bellemare, Alex Graves, Martin Riedmiller, Andreas~K Fidjeland, Georg
  Ostrovski, et~al.
\newblock Human-level control through deep reinforcement learning.
\newblock \emph{nature}, 518\penalty0 (7540):\penalty0 529--533, 2015.

\bibitem[Modi et~al.(2020)Modi, Jiang, Tewari, and Singh]{modi2020sample}
Aditya Modi, Nan Jiang, Ambuj Tewari, and Satinder Singh.
\newblock Sample complexity of reinforcement learning using linearly combined
  model ensembles.
\newblock In \emph{International Conference on Artificial Intelligence and
  Statistics}, pages 2010--2020. PMLR, 2020.

\bibitem[Munos(2003)]{munos2003error}
R{\'e}mi Munos.
\newblock Error bounds for approximate policy iteration.
\newblock In \emph{ICML}, volume~3, pages 560--567, 2003.

\bibitem[Nguyen-Tang et~al.(2021)Nguyen-Tang, Gupta, Tran-The, and
  Venkatesh]{nguyen2021finite}
Thanh Nguyen-Tang, Sunil Gupta, Hung Tran-The, and Svetha Venkatesh.
\newblock On finite-sample analysis of offline reinforcement learning with deep
  relu networks.
\newblock \emph{arXiv preprint arXiv:2103.06671}, 2021.

\bibitem[Rashidinejad et~al.(2021)Rashidinejad, Zhu, Ma, Jiao, and
  Russell]{rashidinejad2021bridging}
Paria Rashidinejad, Banghua Zhu, Cong Ma, Jiantao Jiao, and Stuart Russell.
\newblock Bridging offline reinforcement learning and imitation learning: A
  tale of pessimism.
\newblock \emph{arXiv preprint arXiv:2103.12021}, 2021.

\bibitem[Ren et~al.(2021)Ren, Li, Dai, Du, and Sanghavi]{ren2021nearly}
Tongzheng Ren, Jialian Li, Bo~Dai, Simon~S Du, and Sujay Sanghavi.
\newblock Nearly horizon-free offline reinforcement learning.
\newblock \emph{Advances in neural information processing systems}, 2021.

\bibitem[Sampson and Guttorp(1992)]{sampson1992nonparametric}
Paul~D Sampson and Peter Guttorp.
\newblock Nonparametric estimation of nonstationary spatial covariance
  structure.
\newblock \emph{Journal of the American Statistical Association}, 87\penalty0
  (417):\penalty0 108--119, 1992.

\bibitem[Shi et~al.(2022)Shi, Li, Wei, Chen, and Chi]{shi2022pessimistic}
Laixi Shi, Gen Li, Yuting Wei, Yuxin Chen, and Yuejie Chi.
\newblock Pessimistic q-learning for offline reinforcement learning: Towards
  optimal sample complexity.
\newblock \emph{arXiv preprint arXiv:2202.13890}, 2022.

\bibitem[Silver et~al.(2017)Silver, Schrittwieser, Simonyan, Antonoglou, Huang,
  Guez, Hubert, Baker, Lai, Bolton, et~al.]{silver2017mastering}
David Silver, Julian Schrittwieser, Karen Simonyan, Ioannis Antonoglou, Aja
  Huang, Arthur Guez, Thomas Hubert, Lucas Baker, Matthew Lai, Adrian Bolton,
  et~al.
\newblock Mastering the game of go without human knowledge.
\newblock \emph{nature}, 550\penalty0 (7676):\penalty0 354--359, 2017.

\bibitem[Sun et~al.(2019)Sun, Jiang, Krishnamurthy, Agarwal, and
  Langford]{sun2019model}
Wen Sun, Nan Jiang, Akshay Krishnamurthy, Alekh Agarwal, and John Langford.
\newblock Model-based rl in contextual decision processes: Pac bounds and
  exponential improvements over model-free approaches.
\newblock In \emph{Conference on learning theory}, pages 2898--2933. PMLR,
  2019.

\bibitem[Sutton and Barto(2018)]{sutton2018reinforcement}
Richard~S Sutton and Andrew~G Barto.
\newblock \emph{Reinforcement learning: An introduction}.
\newblock MIT press, 2018.

\bibitem[Szepesv{\'a}ri and Munos(2005)]{szepesvari2005finite}
Csaba Szepesv{\'a}ri and R{\'e}mi Munos.
\newblock Finite time bounds for sampling based fitted value iteration.
\newblock In \emph{Proceedings of the 22nd international conference on Machine
  learning}, pages 880--887, 2005.

\bibitem[Talebi and Maillard(2018)]{talebi2018variance}
Mohammad~Sadegh Talebi and Odalric-Ambrym Maillard.
\newblock Variance-aware regret bounds for undiscounted reinforcement learning
  in mdps.
\newblock In \emph{Algorithmic Learning Theory}, pages 770--805. PMLR, 2018.

\bibitem[Tropp(2012)]{tropp2012user}
Joel~A Tropp.
\newblock User-friendly tail bounds for sums of random matrices.
\newblock \emph{Foundations of computational mathematics}, 12\penalty0
  (4):\penalty0 389--434, 2012.

\bibitem[Uehara and Sun(2021)]{uehara2021pessimistic}
Masatoshi Uehara and Wen Sun.
\newblock Pessimistic model-based offline rl: Pac bounds and posterior sampling
  under partial coverage.
\newblock \emph{arXiv preprint arXiv:2107.06226}, 2021.

\bibitem[Wagenmaker et~al.(2021)Wagenmaker, Chen, Simchowitz, Du, and
  Jamieson]{wagenmaker2021first}
Andrew Wagenmaker, Yifang Chen, Max Simchowitz, Simon~S Du, and Kevin Jamieson.
\newblock First-order regret in reinforcement learning with linear function
  approximation: A robust estimation approach.
\newblock \emph{arXiv preprint arXiv:2112.03432}, 2021.

\bibitem[Wang et~al.(2020)Wang, Du, Yang, and Salakhutdinov]{wang2020reward}
Ruosong Wang, Simon~S Du, Lin~F Yang, and Ruslan Salakhutdinov.
\newblock On reward-free reinforcement learning with linear function
  approximation.
\newblock \emph{Advances in neural information processing systems}, 2020.

\bibitem[Wang et~al.(2021{\natexlab{a}})Wang, Foster, and
  Kakade]{wang2021statistical}
Ruosong Wang, Dean~P Foster, and Sham~M Kakade.
\newblock What are the statistical limits of offline rl with linear function
  approximation?
\newblock \emph{International Conference on Learning Representations},
  2021{\natexlab{a}}.

\bibitem[Wang et~al.(2021{\natexlab{b}})Wang, Wang, Du, and
  Krishnamurthy]{wang2021optimism}
Yining Wang, Ruosong Wang, Simon~Shaolei Du, and Akshay Krishnamurthy.
\newblock Optimism in reinforcement learning with generalized linear function
  approximation.
\newblock In \emph{International Conference on Learning Representations},
  2021{\natexlab{b}}.

\bibitem[Wu et~al.(2019)Wu, Tucker, and Nachum]{wu2019behavior}
Yifan Wu, George Tucker, and Ofir Nachum.
\newblock Behavior regularized offline reinforcement learning.
\newblock \emph{arXiv preprint arXiv:1911.11361}, 2019.

\bibitem[Xiao et~al.(2021)Xiao, Wu, Mei, Dai, Lattimore, Li, Szepesvari, and
  Schuurmans]{xiao2021optimality}
Chenjun Xiao, Yifan Wu, Jincheng Mei, Bo~Dai, Tor Lattimore, Lihong Li, Csaba
  Szepesvari, and Dale Schuurmans.
\newblock On the optimality of batch policy optimization algorithms.
\newblock In \emph{International Conference on Machine Learning}, pages
  11362--11371. PMLR, 2021.

\bibitem[Xie and Jiang(2020)]{xie2020q}
Tengyang Xie and Nan Jiang.
\newblock Q* approximation schemes for batch reinforcement learning: A
  theoretical comparison.
\newblock In \emph{Uncertainty in Artificial Intelligence}, pages 550--559,
  2020.

\bibitem[Xie et~al.(2021{\natexlab{a}})Xie, Cheng, Jiang, Mineiro, and
  Agarwal]{xie2021bellman}
Tengyang Xie, Ching-An Cheng, Nan Jiang, Paul Mineiro, and Alekh Agarwal.
\newblock Bellman-consistent pessimism for offline reinforcement learning.
\newblock \emph{Advances in neural information processing systems},
  2021{\natexlab{a}}.

\bibitem[Xie et~al.(2021{\natexlab{b}})Xie, Jiang, Wang, Xiong, and
  Bai]{xie2021policy}
Tengyang Xie, Nan Jiang, Huan Wang, Caiming Xiong, and Yu~Bai.
\newblock Policy finetuning: Bridging sample-efficient offline and online
  reinforcement learning.
\newblock \emph{Advances in neural information processing systems},
  2021{\natexlab{b}}.

\bibitem[Yang and Wang(2019)]{yang2019sample}
Lin Yang and Mengdi Wang.
\newblock Sample-optimal parametric q-learning using linearly additive
  features.
\newblock In \emph{International Conference on Machine Learning}, pages
  6995--7004. PMLR, 2019.

\bibitem[Yang and Wang(2020)]{yang2020reinforcement}
Lin Yang and Mengdi Wang.
\newblock Reinforcement learning in feature space: Matrix bandit, kernels, and
  regret bound.
\newblock In \emph{International Conference on Machine Learning}, pages
  10746--10756. PMLR, 2020.

\bibitem[Yin and Wang(2020)]{yin2020asymptotically}
Ming Yin and Yu-Xiang Wang.
\newblock Asymptotically efficient off-policy evaluation for tabular
  reinforcement learning.
\newblock In \emph{International Conference on Artificial Intelligence and
  Statistics}, pages 3948--3958. PMLR, 2020.

\bibitem[Yin and Wang(2021{\natexlab{a}})]{yin2021optimal}
Ming Yin and Yu-Xiang Wang.
\newblock Optimal uniform ope and model-based offline reinforcement learning in
  time-homogeneous, reward-free and task-agnostic settings.
\newblock \emph{Advances in neural information processing systems},
  2021{\natexlab{a}}.

\bibitem[Yin and Wang(2021{\natexlab{b}})]{yin2021towards}
Ming Yin and Yu-Xiang Wang.
\newblock Towards instance-optimal offline reinforcement learning with
  pessimism.
\newblock \emph{Advances in neural information processing systems},
  2021{\natexlab{b}}.

\bibitem[Yin et~al.(2021)Yin, Bai, and Wang]{yin2021near}
Ming Yin, Yu~Bai, and Yu-Xiang Wang.
\newblock Near-optimal provable uniform convergence in offline policy
  evaluation for reinforcement learning.
\newblock In \emph{International Conference on Artificial Intelligence and
  Statistics}, pages 1567--1575. PMLR, 2021.

\bibitem[Yu et~al.(2020)Yu, Thomas, Yu, Ermon, Zou, Levine, Finn, and
  Ma]{yu2020mopo}
Tianhe Yu, Garrett Thomas, Lantao Yu, Stefano Ermon, James Zou, Sergey Levine,
  Chelsea Finn, and Tengyu Ma.
\newblock Mopo: Model-based offline policy optimization.
\newblock \emph{arXiv preprint arXiv:2005.13239}, 2020.

\bibitem[Zanette(2021)]{zanette2021exponential}
Andrea Zanette.
\newblock Exponential lower bounds for batch reinforcement learning: Batch rl
  can be exponentially harder than online rl.
\newblock \emph{International Conference on Machine Learning}, 2021.

\bibitem[Zanette and Brunskill(2019)]{zanette2019tighter}
Andrea Zanette and Emma Brunskill.
\newblock Tighter problem-dependent regret bounds in reinforcement learning
  without domain knowledge using value function bounds.
\newblock In \emph{International Conference on Machine Learning}, pages
  7304--7312. PMLR, 2019.

\bibitem[Zanette et~al.(2020)Zanette, Lazaric, Kochenderfer, and
  Brunskill]{zanette2020learning}
Andrea Zanette, Alessandro Lazaric, Mykel Kochenderfer, and Emma Brunskill.
\newblock Learning near optimal policies with low inherent bellman error.
\newblock In \emph{International Conference on Machine Learning}, pages
  10978--10989. PMLR, 2020.

\bibitem[Zanette et~al.(2021)Zanette, Wainwright, and
  Brunskill]{zanette2021provable}
Andrea Zanette, Martin~J. Wainwright, and Emma Brunskill.
\newblock Provable benefits of actor-critic methods for offline reinforcement
  learning, 2021.

\bibitem[Zhang et~al.(2021)Zhang, Yang, Ji, and Du]{zhang2021variance}
Zihan Zhang, Jiaqi Yang, Xiangyang Ji, and Simon~S Du.
\newblock Variance-aware confidence set: Variance-dependent bound for linear
  bandits and horizon-free bound for linear mixture mdp.
\newblock \emph{arXiv preprint arXiv:2101.12745}, 2021.

\bibitem[Zhou et~al.(2021{\natexlab{a}})Zhou, Gu, and
  Szepesvari]{zhou2021nearly}
Dongruo Zhou, Quanquan Gu, and Csaba Szepesvari.
\newblock Nearly minimax optimal reinforcement learning for linear mixture
  markov decision processes.
\newblock In \emph{Conference on Learning Theory}, pages 4532--4576. PMLR,
  2021{\natexlab{a}}.

\bibitem[Zhou et~al.(2021{\natexlab{b}})Zhou, He, and Gu]{zhou2021provably}
Dongruo Zhou, Jiafan He, and Quanquan Gu.
\newblock Provably efficient reinforcement learning for discounted mdps with
  feature mapping.
\newblock In \emph{International Conference on Machine Learning}, pages
  12793--12802. PMLR, 2021{\natexlab{b}}.

\end{thebibliography}

\appendix

\begin{center}
	 {\LARGE \textbf{Appendix}}
\end{center}


\section{Notation List}\label{sec:notation}

\bgroup
\def\arraystretch{1.5}
\begin{tabular}{p{1.25in}p{3.25in}}
	$\Sigma^p_h$ &$\E_{\mu,h}\left[\phi(s,a)\phi(s,a)^\top\right]$\\
	$\Lambda_h^p$ & $\E_{\mu,h}\left[\sigma_{V_{h+1}}(s,a)^{-2}\phi(s,a) \phi(s,a)^\top\right]$\\
	$\kappa$ & $\min_h \lambda_{\mathrm{min}}(\Sigma^p_h)$\\
	$\iota$ & $\min_h \lambda_{\mathrm{min}}(\Lambda^p_h)\geq \kappa/H^2$ for any $V_h$\\
	$\sigma_{V}^2(s,a)$ & $\max\{1,\Var_{P_h}(V)(s,a)\}$ for any $V$\\
	$\sigma_{V_{h+1}}^2(s,a)$ & $\max\{1,\Var_{P_h}(V_{h+1})(s,a)\}$\\
	$\widehat{\sigma}_{V_h}^2(s,a)$ & $\max\{1,\widehat{\Var}_h V_{h+1}(s,a)\}$\\
	$\bar{\Sigma}_h$ & $\sum_{\tau=1}^K\phi(\bar{s}_{h}^\tau,\bar{a}_{h}^\tau)\phi(\bar{s}_{h}^\tau,\bar{a}_{h}^\tau)^\top +\lambda I_d$\\
	$\widehat{\Lambda}_h$ & $\sum_{k=1}^K\phi(s_{h}^k,a_{h}^k)\phi(s_{h}^k,a_{h}^k)^\top/\widehat{\sigma}^2_h(s_{h}^k,a_{h}^k) +\lambda I_d$\\
	$\mathcal{M}_1$ & $\max\{2\lambda, 128\log(2d/\delta),128H^4\log(2d/\delta)/\kappa^2\}$\\
	$\mathcal{M}_2$ & $\max\{\frac{\lambda^2}{\kappa\log((\lambda+K)H/\lambda\delta)}, 96^2H^{12}d\log((\lambda+K)H/\lambda\delta)/\kappa^5\}$\\
	$\mathcal{M}_3$ & $\max \left\{512 H^4/\kappa^2 \log \left(\frac{2 d}{\delta}\right), 4 \lambda H^2/\kappa\right\}$\\  
	$\mathcal{M}_4$ & $12\sqrt{H^4d\log((\lambda+K)H/\lambda\delta)/\kappa}$\\
	$\delta$ & Failure probability\\
	$\xi$ & $\sup_{V\in[0,H],\;s'\sim P_h(s,a),\;h\in[H]}\left|\frac{r_h+{V}\left(s'\right)-\left(\mathcal{T}_{h} {V}\right)\left(s, a\right)}{{\sigma}_{{V}}(s,a)}\right|$\\
	$C_{H,d,\kappa,K}$ &$36\sqrt{\frac{H^4d^3}{\kappa } \log \left(\frac{(\lambda+K)2KdH^2}{\lambda\delta}\right)}+12\lambda\frac{H^2\sqrt{d}}{\kappa}$
\end{tabular}
\egroup
\vspace{0.25cm}

\section{Extended Literature Review}

\subsection{Linear model representation and its extension in online RL}

There are numerous works in online RL that study linear model representations. \cite{yang2019sample,yang2020reinforcement,jin2020provably} propose Linear MDP, which assumes the transition kernel and the reward are linear in given features. \cite{cai2020provably,ayoub2020model,modi2020sample,zhou2021provably} propose Linear mixture MDP, which assumes the transition probability is a linear combination of some base kernels. Linear Bellman Complete model \citep{zanette2020learning} generalizes linear MDP model by allowing linear functions to approximate the \emph{Q}-function and the function class is closed under the Bellman update. The notion of low Bellman rank \citep{jiang2017contextual} subsumes not only linear MDPs but also models including linear quadratic regulator (LQR), Reactive POMDP \citep{krishnamurthy2016pac} and Block MDP \citep{du2019provably}. There are also other models, \emph{e.g.} factored MDP \citep{sun2019model}, Bellman Eluder dimension \citep{jin2021bellman} and Bilinear class \citep{du2021bilinear}. With the linear MDP model itself, there are also fruitful extensions, \emph{e.g.} gap-dependent analysis with logarithmic regret \citep{he2021logarithmic}, low-switching cost RL \citep{gao2021provably}, safe RL \citep{amani2021safe}, reward-free RL \citep{wang2020reward}, generalized linear model (GLM) \citep{wang2021optimism}, two-player markov game \citep{chen2021almost} and generative adversarial learning \citep{liu2021provably}. In particular, \cite{zhou2021nearly} shows  UCRL-VTR+ is near-minimax optimal when feature dimension $d\geq H$.

\subsection{Existing results in offline RL with model representations}

\textbf{Offline RL with general function representations.}
The finite sample analysis of offline RL with function approximation is initially conducted by Fitted \emph{Q}-Iteration (FQI) type algorithms and can be dated back to \citep{munos2003error,szepesvari2005finite,antos2008fitted,antos2008learning}. Later, \cite{chen2019information,le2019batch,xie2020q} follow this line of research and derive the improved learning results. However, owing to the aim for tackling general function approximation, those learning bounds are expressed in terms of the stringent \emph{concentrability coefficients} (therefore, are less adaptive to individual instances) and are usually only \emph{information-theoretical}, due to the computational intractability of the optimization procedure over the general function classes. Other works impose weaker assumptions (\emph{e.g.} partial coverage \citep{liu2020provably,kidambi2020morel,uehara2021pessimistic}), and their finite sample analysis are generally suboptimal in terms of $H$ or the effective horizon $(1-\gamma)^{-1}$.

\textbf{Offline RL with tabular models.} For tabular MDPs, tight learning bounds can be achieved under several data-coverage assumptions. For the class of problems with uniform data-visitation measure $d_m$, the near-optimal sample complexity bound has the rate $O(H^3/d_m\epsilon^2)$ for time-inhomogeneous MDPs \citep{yin2021near} and $O(H^2/d_m\epsilon)$ for time-homogeneous MDPs \citep{yin2021optimal,ren2021nearly}. Under the single concentrability assumption, the tight rate $O(H^3SC^\star/\epsilon^2)$ is obtained by \cite{xie2021policy}. In particular, the recent study \cite{yin2021towards} introduces the \emph{intrinsic offline learning bound} that is not only instance-dependent but also subsumes previous optimal results.

\textbf{Offline RL with linear model representations.} Recently, there are more focus on studying the provable efficient offline RL under the linear model representations. \cite{jin2021pessimism} first shows offline RL with linear MDP is provably efficient by \emph{the pessimistic value iteration} (PEVI), which is an offline counterpart of LSVI-UCB in \cite{jin2020provably}. Their analysis deviates from their lower bound by a factor of $d\cdot H$ (check their Theorem~4.4 and 4.6). Later, \cite{xie2021bellman} considers function approximation under the Bellman-consistent assumptions, and, when realized to linear MDP setting, improve the sample complexity guarantee of \cite{jin2021pessimism} by a order $O(d)$ (Theorem~3.2). However, their improvement only holds for finite action space (due to the dependence $\log |\mathcal{A}|$) and by the direct reduction (from Theorem 3.1) their result does not imply a computationally tractable algorithm. In addition, there is no improvement on the horizon dependence. Concurrently, \cite{zanette2021provable} considers the Linear Bellman Complete model (which originates from its online version \cite{zanette2020learning}) and designs the \emph{actor-critic} style algorithm that achieves tight result under the assumption that the value function is bounded by $1$. While their algorithm is efficient (which is based on solving a sequence of second-order cone programs), the resulting learning bound requires the action space to be finite due to the mirror descent/natural policy gradient updates in the \emph{Actor} procedure \citep{agarwal2021theory}. Besides, assuming the value function to be less than $1$ simplifies the challenges in dealing with horizon $H$ since when rescale their result to $[0,H]$, there is a $H$ factor blow-up, which makes no improvement in the horizon dependence comparing to \cite{jin2021pessimism}. On the other hand, \cite{wang2021statistical,zanette2021exponential} study the statistical hardness of offline RL with linear representations by proofing the exponential lower bounds. As a result, none of the existing algorithms can achieve the statistical limit for the well-structured linear MDP model with the general (infinite or continuous) state-action spaces in the offline regime. 

\textbf{Variance-aware studies.} \cite{talebi2018variance} first incorporates the variance structure in online tabular MDPs and \cite{zanette2019tighter} tightens the result. For linear MDPs, \cite{zhou2021nearly} first uses variance structure to achieve near-optimal result and the \emph{Weighted OFUL} incorporates the variance structure explicitly in the regret bound. Recently, Variance-awareness is also considered in \cite{zhang2021variance} for horizon-free setting and for OPE problem \citep{min2021variance}. In particular, We point out that \cite{min2021variance} is the first work that uses variance reweighting for policy evaluation in offline RL, which inspires our study for policy optimization problem. The guarantee of \cite{min2021variance} strictly improves over \cite{duan2020minimax} for OPE problem.

\section{Proofs in Section~\ref{sec:main_result}}\label{sec:proof_main}

Instead of proofing the result for $v^\star-v^{\widehat{\pi}}$, in most parts of the proof we deal with $V_1^\star-V_1^{\widehat{\pi}}$, which is more general.

\subsection{Some preparations}

Define the Bellman update error $\zeta_h(s,a):=(\mathcal{T}_h\widehat{V}_{h+1})(s,a)-\widehat{Q}_h(s,a)$ and recall $\widehat{\pi}_h(s)=\argmax_{\pi_h}\langle\widehat{Q}_{h}(s, \cdot), \pi_{h}(\cdot \mid s)\rangle_{\mathcal{A}}$, then by the direct application of Lemma~\ref{lem:decompose_difference}
\begin{equation}\label{eqn:diff}
V_1^{\pi}(s)-V_1^{\widehat{\pi}}(s)\leq \sum_{h=1}^H\E_{\pi}\left[\zeta_h(s_h,a_h)\mid s_1=s\right]-\sum_{h=1}^H\E_{\widehat{\pi}}\left[\zeta_h(s_h,a_h)\mid s_1=s\right].
\end{equation}
The next lemma shows it is sufficient to bound the pessimistic penalty, which is the key in the proof.

\begin{lemma}\label{lem:bound_by_bouns}
	Suppose with probability $1-\delta$, it holds for all $h,s,a\in[H]\times\mathcal{S}\times{A}$ that $|(\mathcal{T}_h\widehat{V}_{h+1}-\widehat{\mathcal{T}}_h\widehat{V}_{h+1})(s,a)|\leq \Gamma_h(s,a)$, then it implies $\forall s,a,h\in\mathcal{S}\times\mathcal{A}\times[H]$, $0\leq \zeta_h(s,a)\leq 2\Gamma_h(s,a)$. Furthermore, it holds for any policy $\pi$ simultaneously, with probability $1-\delta$,
	\[
	V_1^{\pi}(s)-V_1^{\widehat{\pi}}(s)\leq \sum_{h=1}^H 2\cdot \E_\pi\left[\Gamma_h(s_h,a_h)\mid s_1=s\right].
	\]
\end{lemma}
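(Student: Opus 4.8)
The plan is to first pin down the pointwise behavior of the Bellman error $\zeta_h=(\mathcal{T}_h\widehat{V}_{h+1})-\widehat{Q}_h$, namely the sandwich $0\le\zeta_h(s,a)\le 2\Gamma_h(s,a)$, and then substitute it into the value-difference identity~(\ref{eqn:diff}). I work throughout on the event $\mathcal{E}$ of probability $1-\delta$ on which $|(\mathcal{T}_h\widehat{V}_{h+1}-\widehat{\mathcal{T}}_h\widehat{V}_{h+1})(s,a)|\le \Gamma_h(s,a)$ for every $(h,s,a)$; the key point for the ``simultaneously for all $\pi$'' claim is that $\mathcal{E}$ makes no reference to any comparator policy, so every deterministic consequence derived on $\mathcal{E}$ holds for all $\pi$ at once.

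As a preliminary I would record the ranges produced by the truncations in Algorithm~\ref{alg:VAPVI}: since $\widehat{Q}_h=\min\{\bar{Q}_h,\,H-h+1\}^{+}$ and $\widehat{V}_h=\max_{\pi_h}\langle\widehat{Q}_h,\pi_h\rangle$, a backward induction (with base case $\widehat{V}_{H+1}\equiv0$) gives $0\le\widehat{V}_h(\cdot)\le H-h+1$ for all $h$, and hence, using $0\le r\le1$, $0\le(\mathcal{T}_h\widehat{V}_{h+1})(s,a)=r_h(s,a)+(P_h\widehat{V}_{h+1})(s,a)\le 1+(H-h)=H-h+1$.

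For the sandwich I split according to which branch of $\min\{\cdot,H-h+1\}^{+}$ is active, writing $\bar{Q}_h=(\widehat{\mathcal{T}}_h\widehat{V}_{h+1})-\Gamma_h$. If $0<\bar{Q}_h<H-h+1$, then $\widehat{Q}_h=\bar{Q}_h$ and $\zeta_h=(\mathcal{T}_h\widehat{V}_{h+1}-\widehat{\mathcal{T}}_h\widehat{V}_{h+1})+\Gamma_h$, which lies in $[0,2\Gamma_h]$ on $\mathcal{E}$. If $\bar{Q}_h\le0$, then $\widehat{Q}_h=0$, so $\zeta_h=(\mathcal{T}_h\widehat{V}_{h+1})(s,a)\ge0$ by the preliminary step; and since $\bar{Q}_h\le0$ forces $\widehat{\mathcal{T}}_h\widehat{V}_{h+1}\le\Gamma_h$, on $\mathcal{E}$ we get $\zeta_h\le \widehat{\mathcal{T}}_h\widehat{V}_{h+1}+\Gamma_h\le 2\Gamma_h$. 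If $\bar{Q}_h\ge H-h+1$, then $\widehat{Q}_h=H-h+1$, and on $\mathcal{E}$, $\mathcal{T}_h\widehat{V}_{h+1}\ge\widehat{\mathcal{T}}_h\widehat{V}_{h+1}-\Gamma_h\ge H-h+1$; combined with the upper bound $\mathcal{T}_h\widehat{V}_{h+1}\le H-h+1$ this forces $\mathcal{T}_h\widehat{V}_{h+1}=H-h+1=\widehat{Q}_h$, so $\zeta_h=0\in[0,2\Gamma_h]$ (using $\Gamma_h\ge0$). This exhausts the cases, giving $0\le\zeta_h\le 2\Gamma_h$ pointwise on $\mathcal{E}$.

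It then remains to plug into~(\ref{eqn:diff}), which is an instance of Lemma~\ref{lem:decompose_difference}: the term $-\sum_{h}\E_{\widehat{\pi}}[\zeta_h(s_h,a_h)\mid s_1=s]$ is nonpositive since $\zeta_h\ge0$, so it can be dropped, while $\zeta_h\le 2\Gamma_h$ bounds $\sum_h\E_\pi[\zeta_h(s_h,a_h)\mid s_1=s]\le \sum_{h=1}^H 2\,\E_\pi[\Gamma_h(s_h,a_h)\mid s_1=s]$; since $\mathcal{E}$ was fixed independently of $\pi$, this holds for all $\pi$ on the same probability-$(1-\delta)$ event. I do not anticipate a genuine obstacle here; the only points requiring care are the $\bar{Q}_h\le0$ branch of the upper bound, where one must invoke $\bar{Q}_h\le0\Rightarrow\widehat{\mathcal{T}}_h\widehat{V}_{h+1}\le\Gamma_h$ rather than $\mathcal{E}$ directly, and the degenerate $\bar{Q}_h\ge H-h+1$ branch, where nonnegativity of $\zeta_h$ relies on the crude a priori bound $\mathcal{T}_h\widehat{V}_{h+1}\le H-h+1$.
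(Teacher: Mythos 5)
Your proposal is correct and follows essentially the same route as the paper: establish the pointwise sandwich $0\le\zeta_h\le 2\Gamma_h$ by a case analysis on the truncation of $\bar{Q}_h$, using the a priori bound $\mathcal{T}_h\widehat{V}_{h+1}\le H-h+1$, and then substitute into the value-difference identity \eqref{eqn:diff}, dropping the nonpositive $\E_{\widehat{\pi}}$ term. The only cosmetic difference is that the paper observes $\bar{Q}_h\le\mathcal{T}_h\widehat{V}_{h+1}\le H-h+1$ always holds on the event, so $\widehat{Q}_h=\max(\bar{Q}_h,0)$ and the upper-truncation branch need not be treated separately, whereas you handle it as a third case and show $\zeta_h=0$ there; both are valid.
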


\begin{proof}[Proof of Lemma~\ref{lem:bound_by_bouns}]
	
	We first show given $|(\mathcal{T}_h\widehat{V}_{h+1}-\widehat{\mathcal{T}}_h\widehat{V}_{h+1})(s,a)|\leq \Gamma_h(s,a)$, then $0\leq \zeta_h(s,a)\leq 2\Gamma_h(s,a)$, $\forall s,a,h\in\mathcal{S}\times\mathcal{A}\times[H]$.

	\textbf{Step1:} we first show $0\leq \zeta_h(s,a)$, $\forall s,a,h\in\mathcal{S}\times\mathcal{A}\times[H]$.
	
	Indeed, if $\bar{Q}_h(s,a)\leq 0$, then by definition $\widehat{Q}_h(s,a)=0$ and in this case $
	\zeta_h(s,a):=(\mathcal{T}_h\widehat{V}_{h+1})(s,a)-\widehat{Q}_h(s,a)=(\mathcal{T}_h\widehat{V}_{h+1})(s,a)\geq 0$; if $\bar{Q}_h(s,a)> 0$, then $ \widehat{Q}_h(s,a)\leq \bar{Q}_h(s,a)$ and 
	\begin{align*}
	\zeta_h(s,a):=&(\mathcal{T}_h\widehat{V}_{h+1})(s,a)-\widehat{Q}_h(s,a)\geq (\mathcal{T}_h\widehat{V}_{h+1})(s,a)-\bar{Q}_h(s,a)\\
	=&(\mathcal{T}_h\widehat{V}_{h+1})(s,a)-(\widehat{\mathcal{T}}_h\widehat{V}_{h+1})(s,a)+\Gamma_h(s,a)\geq 0.
	\end{align*}

	\textbf{Step2:} next we show $\zeta_h(s,a)\leq 2\Gamma_h(s,a)$, $\forall s,a,h\in\mathcal{S}\times\mathcal{A}\times[H]$.
	
	Indeed, we have $\widehat{Q}_h(s,a)=\max(\bar{Q}_h(s,a),0)$ and this is because: $\bar{Q}_{h}(x, a)=(\widehat{\mathcal{T}}_{h} \widehat{V}_{h+1})(x, a)-\Gamma_{h}(x, a) \leq(\mathcal{T}_{h} \widehat{V}_{h+1})(x, a) \leq H-h+1$. Therefore, in this case we have: 
	\begin{align*}
	\zeta_h(s,a):=&(\mathcal{T}_h\widehat{V}_{h+1})(s,a)-\widehat{Q}_h(s,a)\leq (\mathcal{T}_h\widehat{V}_{h+1})(s,a)-\bar{Q}_h(s,a)\\
	=&(\mathcal{T}_h\widehat{V}_{h+1})(s,a)-(\widehat{\mathcal{T}}_h\widehat{V}_{h+1})(s,a)+\Gamma_h(s,a)\leq 2\cdot\Gamma_h(s,a).
	\end{align*}
	
	For the last statement, denote $\mathfrak{F}:=\{0\leq\zeta_h(s,a)\leq 2\Gamma_h(s,a),\;\forall s,a,h\in\mathcal{S}\times\mathcal{A}\times[H]\}$. Note conditional on $\mathfrak{F}$, then by \eqref{eqn:diff}, $V_1^{\pi}(s)-V_1^{\widehat{\pi}}(s)\leq \sum_{h=1}^H 2\cdot \E_\pi [\Gamma_h(s_h,a_h)\mid s_1=s]$ holds for any policy $\pi$ almost surely. Therefore,
	\begin{align*}
	&\P\left[\forall \pi, \;\; V_1^{\pi}(s)-V_1^{\widehat{\pi}}(s)\leq \sum_{h=1}^H 2\cdot \E_\pi [\Gamma_h(s_h,a_h)\mid s_1=s].\right]\\
	=& \P\left[\forall \pi, \;\; V_1^{\pi}(s)-V_1^{\widehat{\pi}}(s)\leq \sum_{h=1}^H 2\cdot \E_\pi [\Gamma_h(s_h,a_h)\mid s_1=s]\middle| \mathfrak{F}\right]\cdot\P[\mathfrak{F}]\\
	+& \P\left[\forall \pi, \;\; V_1^{\pi}(s)-V_1^{\widehat{\pi}}(s)\leq \sum_{h=1}^H 2\cdot \E_\pi [\Gamma_h(s_h,a_h)\mid s_1=s]\middle| \mathfrak{F}^c\right]\cdot\P[\mathfrak{F}^c]\\
	\geq& \P\left[\forall \pi, \;\; V_1^{\pi}(s)-V_1^{\widehat{\pi}}(s)\leq \sum_{h=1}^H 2\cdot \E_\pi [\Gamma_h(s_h,a_h)\mid s_1=s]\middle| \mathfrak{F}\right]\cdot\P[\mathfrak{F}]
	\geq 1\cdot\P[\mathfrak{F}]\geq 1-\delta,\\
	\end{align*}
	which finishes the proof.
	
\end{proof}

\subsection{Bounding $\left|(\mathcal{T}_{h} \widehat{V}_{h+1})(s, a)-(\widehat{\mathcal{T}}_{h} \widehat{V}_{h+1})(s, a)\right|$.}

By Lemma~\ref{lem:bound_by_bouns}, it remains to bound $|(\mathcal{T}_{h} \widehat{V}_{h+1})(s, a)-(\widehat{\mathcal{T}}_{h} \widehat{V}_{h+1})(s, a)|$. Suppose $w_h$ is the coefficient corresponding to the $\mathcal{T}_{h} \widehat{V}_{h+1}$ (such $w_h$ exists by Lemma~\ref{lem:w_h}), \emph{i.e.} $\mathcal{T}_{h} \widehat{V}_{h+1}=\phi^\top w_h$, and recall $(\widehat{\mathcal{T}}_{h} \widehat{V}_{h+1})(s, a)=\phi(s, a)^{\top}\widehat{w}_{h}$, then:
\begin{equation}\label{eqn:decompose}
\begin{aligned}
&\left(\mathcal{T}_{h} \widehat{V}_{h+1}\right)(s, a)-\left(\widehat{\mathcal{T}}_{h} \widehat{V}_{h+1}\right)(s, a)=\phi(s, a)^{\top}\left(w_{h}-\widehat{w}_{h}\right) \\
=&\phi(s, a)^{\top} w_{h}-\phi(s, a)^{\top} \widehat{\Lambda}_{h}^{-1}\left(\sum_{\tau=1}^{K} \phi\left(s_{h}^{\tau}, a_{h}^{\tau}\right) \cdot\left(r_{h}^{\tau}+\widehat{V}_{h+1}\left(s_{h+1}^{\tau}\right)\right)/\widehat{\sigma}^2_h(s^\tau_h,a^\tau_h)\right) \\
=&\underbrace{\phi(s, a)^{\top} w_{h}-\phi(s, a)^{\top} \widehat{\Lambda}_{h}^{-1}\left(\sum_{\tau=1}^{K} \phi\left(s_{h}^{\tau}, a_{h}^{\tau}\right) \cdot\left(\mathcal{T}_{h} \widehat{V}_{h+1}\right)\left(s_{h}^{\tau}, a_{h}^{\tau}\right)/\widehat{\sigma}^2_h(s^\tau_h,a^\tau_h)\right)}_{(\mathrm{i})}\\
&\qquad+\underbrace{\phi(s, a)^{\top} \widehat{\Lambda}_{h}^{-1}\left(\sum_{\tau=1}^{K} \phi\left(s_{h}^{\tau}, a_{h}^{\tau}\right) \cdot\left(r_{h}^{\tau}+\widehat{V}_{h+1}\left(s_{h+1}^{\tau}\right)-\left(\mathcal{T}_{h} \widehat{V}_{h+1}\right)\left(s_{h}^{\tau}, a_{h}^{\tau}\right)\right)/\widehat{\sigma}^2_h(s^\tau_h,a^\tau_h)\right)}_{(\mathrm{ii})} .
\end{aligned}
\end{equation}

The term (i) is dealt by the following lemma.
\begin{lemma}\label{lem:higher_order}
	Recall $\kappa$ in Assumption~\ref{assum:cover}. Suppose $
	K \geq \max \left\{512 H^4/\kappa^2 \log \left(\frac{2 d}{\delta}\right), 4 \lambda H^2/\kappa\right\}
	$, then with probability $1-\delta$, for all $s,a,h\in\mathcal{S}\times\mathcal{A}\times[H]$
	\[
	\left|\phi(s, a)^{\top} w_{h}-\phi(s, a)^{\top} \widehat{\Lambda}_{h}^{-1}\left(\sum_{\tau=1}^{K} \phi\left(s_{h}^{\tau}, a_{h}^{\tau}\right) \cdot\left(\mathcal{T}_{h} \widehat{V}_{h+1}\right)\left(s_{h}^{\tau}, a_{h}^{\tau}\right)/\widehat{\sigma}^2(s^\tau_h,a^\tau_h)\right)\right|\leq \frac{2\lambda H^3\sqrt{d}/\kappa}{K}.
	\]
\end{lemma}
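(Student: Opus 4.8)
The plan is to exploit the fact that the term (i) is a pure ``bias'' contribution coming from the ridge regularization, and that it would vanish entirely if $\lambda=0$. Write $M_h:=\sum_{\tau=1}^K \phi(s_h^\tau,a_h^\tau)\phi(s_h^\tau,a_h^\tau)^\top/\widehat\sigma_h^2(s_h^\tau,a_h^\tau)$, so that $\widehat\Lambda_h=M_h+\lambda I_d$. Using $\left(\mathcal{T}_h\widehat V_{h+1}\right)(s_h^\tau,a_h^\tau)=\phi(s_h^\tau,a_h^\tau)^\top w_h$, the sum inside (i) is exactly $M_h w_h$, hence
\[
\phi(s,a)^\top w_h-\phi(s,a)^\top\widehat\Lambda_h^{-1}M_h w_h=\phi(s,a)^\top\widehat\Lambda_h^{-1}\big(\widehat\Lambda_h-M_h\big)w_h=\lambda\,\phi(s,a)^\top\widehat\Lambda_h^{-1}w_h.
\]
So the whole job reduces to bounding $|\lambda\,\phi(s,a)^\top\widehat\Lambda_h^{-1}w_h|$, and I will do this by Cauchy--Schwarz in the $\widehat\Lambda_h^{-1}$ inner product: $|\lambda\,\phi(s,a)^\top\widehat\Lambda_h^{-1}w_h|\le\lambda\,\|\phi(s,a)\|_{\widehat\Lambda_h^{-1}}\,\|w_h\|_{\widehat\Lambda_h^{-1}}$.

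Next I would bound the two norms separately. For $\|w_h\|_{\widehat\Lambda_h^{-1}}$ I use $\widehat\Lambda_h\succeq\lambda I_d$, giving $\|w_h\|_{\widehat\Lambda_h^{-1}}\le\|w_h\|_2/\sqrt\lambda$, and then invoke Lemma~\ref{lem:w_h} (the linear-representation lemma for $\mathcal{T}_h\widehat V_{h+1}$), which bounds $\|w_h\|_2$ by something of order $H\sqrt d$ since $\widehat V_{h+1}\in[0,H]$. For $\|\phi(s,a)\|_{\widehat\Lambda_h^{-1}}$, the crude bound $\|\phi(s,a)\|_{\widehat\Lambda_h^{-1}}\le\|\phi(s,a)\|_2/\sqrt\lambda\le1/\sqrt\lambda$ would only give an $O(\lambda H\sqrt d/\lambda)=O(H\sqrt d)$ bound, which is a constant, not $O(1/K)$ --- so that is not good enough. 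The point of the sample-size condition is that $\widehat\Lambda_h$ is actually of order $K$: since $\widehat\sigma_h^2\le H^2$ pointwise, $M_h\succeq H^{-2}\sum_\tau\phi(s_h^\tau,a_h^\tau)\phi(s_h^\tau,a_h^\tau)^\top$, and a matrix-concentration argument (matrix Bernstein / Lemma for empirical vs.\ population covariance, using Assumption~\ref{assum:cover} that $\lambda_{\min}(\Sigma_h^p)\ge\kappa$) shows that once $K\gtrsim H^4\kappa^{-2}\log(d/\delta)$ one has $\frac1K\sum_\tau\phi\phi^\top\succeq\frac\kappa2 I_d$ with probability $1-\delta$, whence $\widehat\Lambda_h\succeq M_h\succeq\frac{\kappa K}{2H^2}I_d$ and therefore $\|\phi(s,a)\|_{\widehat\Lambda_h^{-1}}\le\sqrt{2H^2/(\kappa K)}$.

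Putting the pieces together, $|\lambda\,\phi(s,a)^\top\widehat\Lambda_h^{-1}w_h|\le\lambda\cdot\sqrt{\tfrac{2H^2}{\kappa K}}\cdot\tfrac{H\sqrt d}{\sqrt\lambda}$, which is $O(\lambda H^2\sqrt d/\sqrt{\kappa K})$ --- still only $O(1/\sqrt K)$, not $O(1/K)$. To squeeze out the extra $1/\sqrt K$ I instead bound one of the two factors by its deterministic $1/\sqrt\lambda$ bound and the other by its $1/\sqrt K$ bound; more precisely, use $\|w_h\|_{\widehat\Lambda_h^{-1}}\le\|w_h\|_2\cdot\|\phi\text{-free part}\|$ differently: write $|\lambda\phi^\top\widehat\Lambda_h^{-1}w_h|\le\lambda\|\widehat\Lambda_h^{-1}\|_{\mathrm{op}}\|\phi\|_2\|w_h\|_2\le\lambda\cdot\frac{2H^2}{\kappa K}\cdot1\cdot H\sqrt d=\frac{2\lambda H^3\sqrt d/\kappa}{K}$, using $\|\widehat\Lambda_h^{-1}\|_{\mathrm{op}}\le 2H^2/(\kappa K)$ from the spectral lower bound, $\|\phi\|_2\le1$ and $\|w_h\|_2\le H\sqrt d$ (Lemma~\ref{lem:w_h}). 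This is exactly the claimed bound. Finally I take a union bound over $h\in[H]$ for the matrix-concentration event (adjusting $\delta\to\delta/H$ inside the logs, absorbed by $\widetilde O$ or just carried as stated), and the condition $K\ge4\lambda H^2/\kappa$ is what guarantees $\lambda I_d\preceq\frac{\kappa K}{4H^2}I_d$ so the $\lambda I_d$ term does not spoil the spectral lower bound. The main obstacle is the second step: getting the operator-norm control $\|\widehat\Lambda_h^{-1}\|_{\mathrm{op}}=O(H^2/(\kappa K))$ uniformly, which requires a clean matrix concentration inequality relating $\frac1K\sum_\tau\phi(s_h^\tau,a_h^\tau)\phi(s_h^\tau,a_h^\tau)^\top$ to $\Sigma_h^p$ and careful tracking of how the $H^2$ loss from $\widehat\sigma_h^2\le H^2$ and the $\kappa$ from Assumption~\ref{assum:cover} enter the sample-size threshold --- everything else is elementary linear algebra.
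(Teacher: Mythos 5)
Your proposal is correct and follows essentially the same route as the paper: the same ridge-bias identity reducing the quantity to $\lambda\,\phi(s,a)^{\top}\widehat{\Lambda}_h^{-1}w_h$, followed by a spectral lower bound $\widehat{\Lambda}_h\succeq\Omega(\kappa K/H^2)I_d$ derived from concentration of the empirical covariance under Assumption~\ref{assum:cover} (the paper routes this through Lemma~\ref{lem:sqrt_K_reduction} and the population matrix $\tilde{\Lambda}_h^p$, which is equivalent to your operator-norm bound $\norm{\widehat{\Lambda}_h^{-1}}\leq 2H^2/(\kappa K)$). The only nit is that Lemma~\ref{lem:w_h} gives $\norm{w_h}_2\leq 2H\sqrt{d}$ rather than $H\sqrt{d}$, so your final constant comes out as $4$ rather than $2$ --- immaterial to the argument.
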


\begin{proof}
	Recall $\mathcal{T}_{h} \widehat{V}_{h+1}=\phi^\top w_h$ and apply Lemma~\ref{lem:sqrt_K_reduction}, we obtain with probability $1-\delta$, for all $s,a,h\in\mathcal{S}\times\mathcal{A}\times[H]$,
	\begin{align*}
	&\phi(s, a)^{\top} w_{h}-\phi(s, a)^{\top} \widehat{\Lambda}_{h}^{-1}\left(\sum_{\tau=1}^{K} \phi\left(s_{h}^{\tau}, a_{h}^{\tau}\right) \cdot\left(\mathcal{T}_{h} \widehat{V}_{h+1}\right)\left(s_{h}^{\tau}, a_{h}^{\tau}\right)/\widehat{\sigma}^2(s^\tau_h,a^\tau_h)\right)\\
	=&\phi(s, a)^{\top} w_{h}-\phi(s, a)^{\top} \widehat{\Lambda}_{h}^{-1}\left(\sum_{\tau=1}^{K} \phi\left(s_{h}^{\tau}, a_{h}^{\tau}\right) \cdot\phi(s_h^\tau,a_h^\tau)^\top w_h/\widehat{\sigma}^2(s^\tau_h,a^\tau_h)\right)\\
	=&\phi(s, a)^{\top} w_{h}-\phi(s, a)^{\top} \widehat{\Lambda}_{h}^{-1}\left(\widehat{\Lambda}_h-\lambda I\right)w_h=\lambda \cdot\phi(s,a)^\top \widehat{\Lambda}^{-1}_h w_h\\
	\leq& \lambda \norm{\phi(s,a)}_{\widehat{\Lambda}^{-1}_h}\cdot\norm{w_h}_{\widehat{\Lambda}^{-1}_h}\leq \frac{\lambda}{K}\norm{\phi(s,a)}_{(\tilde{\Lambda}_h^p)^{-1}}\cdot\norm{w_h}_{(\tilde{\Lambda}_h^p)^{-1}}\\
	\leq&\frac{\lambda}{K}1\cdot\sqrt{\norm{(\tilde{\Lambda}_h^p)^{-1}}}\cdot 2H\sqrt{d}\cdot \sqrt{\norm{(\tilde{\Lambda}_h^p)^{-1}}}\\
	\end{align*}
	where $\tilde{\Lambda}_h^p:=\E_{\mu,h}\left[\widehat{\sigma}_h(s,a)^{-2}\phi(s,a) \phi(s,a)^\top\right]$ and the second inequality is by Lemma~\ref{lem:sqrt_K_reduction} (with $\phi'=\phi/\widehat{\sigma}_h$ and $\norm{\phi/\widehat{\sigma}_h}\leq \norm{\phi} \leq 1:=C$) and the third inequality uses  $\sqrt{a^\top\cdot A\cdot a}\leq \sqrt{\norm{a}_2\norm{A}_2\norm{a}_2}=\norm{a}_2\sqrt{\norm{A}_2}$ with $a$ to be either $\phi$ or $w_h$. Moreover, $\lambda_{\min}(\tilde{\Lambda}_h^p)\geq \kappa/\max_{h,s,a} \widehat{\sigma}_h(s,a)^{2}\geq \kappa/H^2$ implies $\norm{(\tilde{\Lambda}_h^p)^{-1}}\leq H^2/\kappa$, therefore for all $s,a,h\in\mathcal{S}\times\mathcal{A}\times[H]$, with probability $1-\delta$
	\[
	\left|\phi(s, a)^{\top} w_{h}-\phi(s, a)^{\top} \widehat{\Lambda}_{h}^{-1}\left(\sum_{\tau=1}^{K} \phi\left(s_{h}^{\tau}, a_{h}^{\tau}\right) \cdot\left(\mathcal{T}_{h} \widehat{V}_{h+1}\right)\left(s_{h}^{\tau}, a_{h}^{\tau}\right)/\widehat{\sigma}^2(s^\tau_h,a^\tau_h)\right)\right|\leq \frac{2\lambda H^3\sqrt{d}/\kappa}{K}.
	\] 	
\end{proof}

For term (ii), denote: $x_\tau=\frac{\phi(s_h^\tau,a_h^\tau)}{\widehat{\sigma}(s_h^\tau,a_h^\tau)},\quad \eta_\tau=\left(r_{h}^{\tau}+\widehat{V}_{h+1}\left(s_{h+1}^{\tau}\right)-\left(\mathcal{T}_{h} \widehat{V}_{h+1}\right)\left(s_{h}^{\tau}, a_{h}^{\tau}\right)\right)/\widehat{\sigma}(s^\tau_h,a^\tau_h)$, then by Cauchy inequality it follows
\begin{equation}\label{eqn:term_2}
\begin{aligned}
&\left|\phi(s, a)^{\top} \widehat{\Lambda}_{h}^{-1}\left(\sum_{\tau=1}^{K} \phi\left(s_{h}^{\tau}, a_{h}^{\tau}\right) \cdot\left(r_{h}^{\tau}+\widehat{V}_{h+1}\left(s_{h+1}^{\tau}\right)-\left(\mathcal{T}_{h} \widehat{V}_{h+1}\right)\left(s_{h}^{\tau}, a_{h}^{\tau}\right)\right)/\widehat{\sigma}^2_h(s^\tau_h,a^\tau_h)\right)\right|\\
\leq&\sqrt{\phi(s,a)^\top\widehat{\Lambda}_h^{-1}\phi(s,a)}\cdot\lvert\lvert \sum_{\tau=1}^K x_\tau \eta_\tau\rvert\rvert_{\widehat{\Lambda}_h^{-1}}\\
\end{aligned}
\end{equation}
\subsubsection{Analyzing the term $\sqrt{\phi(s,a)\widehat{\Lambda}_h^{-1}\phi(s,a)}$}\label{sec:analysis_phi}

Recall (in Theorem~\ref{thm:main}) the estimated $\widehat{\Lambda}_{h} =\sum_{\tau=1}^{K} \phi\left(s_{h}^{\tau}, a_{h}^{\tau}\right) \phi\left(s_{h}^{\tau}, a_{h}^{\tau}\right)^{\top}/\widehat{\sigma}^2(s^\tau_h,a^\tau_h)+\lambda \cdot I$ and ${\Lambda}_h=\sum_{\tau=1}^K \phi(s^\tau_h,a^\tau_h)^\top \phi(s^\tau_h,a^\tau_h)/\sigma_{\widehat{V}_{h+1}}^2(s^\tau_h,a^\tau_h) +\lambda I$. Then we have the following lemma to control the term $\sqrt{\phi(s,a)\widehat{\Lambda}_h^{-1}\phi(s,a)}$.

\begin{lemma}\label{lem:LambdaHat_to_Lambda}
	Denote the quantities $C_1=\max\{2\lambda, 128\log(2d/\delta),128H^4\log(2d/\delta)/\kappa^2\}$ and 
	$C_2=\max\{\frac{\lambda^2}{\kappa\log((\lambda+K)H/\lambda\delta)}, 96^2H^{12}d\log((\lambda+K)H/\lambda\delta)/\kappa^5\}$. Suppose the number of episode $K$ satisfies $K>\max\{C_1,C_2\}$, then with probability $1-\delta$,
	\[
	\sqrt{\phi(s,a)\widehat{\Lambda}_h^{-1}\phi(s,a)}\leq 2\sqrt{\phi(s,a) {\Lambda}_h^{-1}\phi(s,a)},\quad \forall s,a\in\mathcal{S}\times\mathcal{A}.
	\]
\end{lemma}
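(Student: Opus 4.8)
The plan is to deduce the quadratic-form inequality from a Loewner-order domination between the two regularized covariance matrices. It suffices to show $\Lambda_h \preccurlyeq 4\,\widehat\Lambda_h$ as positive definite matrices: by operator monotonicity of the matrix inverse this gives $\widehat\Lambda_h^{-1}\preccurlyeq 4\,\Lambda_h^{-1}$, hence $\phi(s,a)^\top\widehat\Lambda_h^{-1}\phi(s,a)\le 4\,\phi(s,a)^\top\Lambda_h^{-1}\phi(s,a)$ for every $(s,a)$, and taking square roots yields the claim. Since $\widehat\Lambda_h$ and $\Lambda_h$ are the common term $\lambda I_d$ plus a sum of the \emph{same} rank-one blocks $\phi(s_h^\tau,a_h^\tau)\phi(s_h^\tau,a_h^\tau)^\top$, weighted by $1/\widehat\sigma_h^2(s_h^\tau,a_h^\tau)$ and $1/\sigma^2_{\widehat V_{h+1}}(s_h^\tau,a_h^\tau)$ respectively, the domination $\Lambda_h\preccurlyeq 4\widehat\Lambda_h$ reduces to the pointwise weight comparison $\widehat\sigma_h^2(s_h^\tau,a_h^\tau)\le 4\,\sigma^2_{\widehat V_{h+1}}(s_h^\tau,a_h^\tau)$ for all $\tau\in[K]$. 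Because both variance proxies are truncated below at $1$, it is in turn enough to prove the uniform bound
\[
\sup_{(s,a)\in\mathcal S\times\mathcal A}\big|\widehat\sigma_h^2(s,a)-\sigma^2_{\widehat V_{h+1}}(s,a)\big|\le 3 ,
\]
(indeed any constant strictly below $3$), since then $\widehat\sigma_h^2\le\sigma^2_{\widehat V_{h+1}}+3\le 4\sigma^2_{\widehat V_{h+1}}$; the \emph{uniformity} over $(s,a)$ is exactly what lets us invoke the bound at the $\mathcal D$-points appearing in $\widehat\Lambda_h,\Lambda_h$ even though $\widehat\sigma_h^2$ is fit on the independent copy $\mathcal D'$. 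The hypotheses $K>\max\{C_1,C_2\}$ are precisely what drive this constant-level error.

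\textbf{Controlling the variance gap.} I would decompose, using that $x\mapsto\max\{1,x\}$ and the truncations $[\,\cdot\,]_{[0,\cdot]}$ are $1$-Lipschitz and that $[P_h\widehat V_{h+1}^2](s,a)$ and $[P_h\widehat V_{h+1}](s,a)$ already lie in $[0,(H-h+1)^2]$ and $[0,H-h+1]$,
\[
\big|\widehat\sigma_h^2-\sigma^2_{\widehat V_{h+1}}\big|\le\big|\langle\phi,\bar\beta_h\rangle-[P_h\widehat V_{h+1}^2]\big|+\big|\langle\phi,\bar\theta_h\rangle^2-([P_h\widehat V_{h+1}])^2\big|\le\big|\langle\phi,\bar\beta_h-\beta_h\rangle\big|+2(H-h+1)\big|\langle\phi,\bar\theta_h-\theta_h\rangle\big| ,
\]
where $\beta_h=\int_{\mathcal S}\widehat V_{h+1}^2\,d\nu_h$ and $\theta_h=\int_{\mathcal S}\widehat V_{h+1}\,d\nu_h$ are the true linear coefficients (the last step uses $|a^2-b^2|\le(a+b)|a-b|$ with $a,b\in[0,H-h+1]$). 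For each term I would plug in the ridge closed forms, e.g. $\bar\beta_h-\beta_h=\bar\Sigma_h^{-1}\sum_\tau\phi(\bar s_h^\tau,\bar a_h^\tau)\big(\widehat V_{h+1}^2(\bar s_{h+1}^\tau)-\phi(\bar s_h^\tau,\bar a_h^\tau)^\top\beta_h\big)-\lambda\bar\Sigma_h^{-1}\beta_h$, and bound $|\langle\phi(s,a),\bar\beta_h-\beta_h\rangle|$ by Cauchy--Schwarz in the $\bar\Sigma_h^{-1}$ norm: $\|\phi(s,a)\|_{\bar\Sigma_h^{-1}}$ times the sum of the self-normalized noise norm $\|\sum_\tau\phi(\bar s_h^\tau,\bar a_h^\tau)\varepsilon_\tau\|_{\bar\Sigma_h^{-1}}$ and the bias $\lambda\|\beta_h\|_{\bar\Sigma_h^{-1}}$ (and analogously for $\bar\theta_h$).

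\textbf{The two estimates, and the thresholds $C_1,C_2$.} Two ingredients finish it. First, a matrix Chernoff/Bernstein comparison of $\bar\Sigma_h=\sum_\tau\phi(\bar s_h^\tau,\bar a_h^\tau)\phi(\bar s_h^\tau,\bar a_h^\tau)^\top+\lambda I_d$ with $K\Sigma^p_h$ gives, once $K\gtrsim H^4\log(d/\delta)/\kappa^2$ (this is where $C_1$ enters), $\bar\Sigma_h\succcurlyeq\tfrac12 K\kappa\,I_d$, whence $\|\phi(s,a)\|_{\bar\Sigma_h^{-1}}\le\sqrt{2/(K\kappa)}$ and, using $\|\beta_h\|_2\le\sqrt d(H-h+1)^2$ and $\|\theta_h\|_2\le\sqrt d(H-h+1)$, the bias terms are of order $\lambda\sqrt d H^2/(K\kappa)$. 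Second, a self-normalized (Azuma/Hoeffding-type) concentration for vector-valued martingales, combined with a union bound over an $\varepsilon$-cover of the class of value functions $\widehat V_{h+1}$ realizable by the algorithm, controls $\|\sum_\tau\phi(\bar s_h^\tau,\bar a_h^\tau)\varepsilon_\tau\|_{\bar\Sigma_h^{-1}}$ by $\widetilde O(H^2\sqrt d)$; the covering is needed because $\widehat V_{h+1}$ depends on $\mathcal D'$ through $\widehat\sigma_{h+1}^2,\dots,\widehat\sigma_H^2$ and so is not independent of the noise. Multiplying the pieces, adding the first-moment and (doubled) second-moment contributions, and requiring the total to fall below the target constant forces $K\gtrsim H^{12}d\,\mathrm{polylog}(\cdot)/\kappa^5$, which is exactly $C_2$ — the large exponents are the cumulative cost of squaring the $\sqrt{\,\cdot\,}$-type radius, the extra $2(H-h+1)$ factor from the $a^2-b^2$ step, the $\kappa^{-1}$ from $\|\phi\|_{\bar\Sigma_h^{-1}}$, and the covering radius. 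A final union bound over $h\in[H]$ then delivers the claim with probability $1-\delta$.

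\textbf{Main obstacle.} The crux is the uniform control of the regression noise over every $\widehat V_{h+1}$ the algorithm can output: one must parametrize this class by $(\widehat w_{h+1},\dots,\widehat w_H,\widehat\Lambda_{h+1}^{-1},\dots,\widehat\sigma^2_{\cdot})$, produce an $\varepsilon$-net of log-cardinality polynomial in $d$ and logarithmic in the remaining parameters, and show that perturbing the parameters within the net perturbs $\widehat V_{h+1}^2(\cdot)$ — hence the regression target — by a controlled amount; propagating the $H,d,\kappa,\lambda$ dependence through this bookkeeping is what pins down the precise forms of $C_1$ and $C_2$. Everything else (the Loewner reduction, the Lipschitz/truncation manipulations, the matrix-Chernoff step) is routine.
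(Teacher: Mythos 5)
Your proposal is correct, and the final step takes a genuinely different (and cleaner) route than the paper. Both arguments reduce the claim to a uniform bound on $\sup_{s,a}|\widehat\sigma_h^2(s,a)-\sigma^2_{\widehat V_{h+1}}(s,a)|$, and your sketch of how to obtain that bound (ridge closed forms, Cauchy--Schwarz in the $\bar\Sigma_h^{-1}$ norm, matrix Chernoff to get $\|\phi\|_{\bar\Sigma_h^{-1}}\lesssim 1/\sqrt{\kappa K}$, self-normalized Hoeffding plus a covering argument over the realizable value-function class) is essentially the paper's Lemma~\ref{lem:bound_variance}. Where you diverge is in converting that bound into the quadratic-form comparison: the paper normalizes by $1/K$, bounds the operator-norm difference $\|\widehat\Lambda_h'-\Lambda_h'\|$ by the variance gap, and invokes a matrix perturbation inequality (Lemma~\ref{lem:convert_var}) whose prefactor involves $\|\Lambda_h'^{-1}\|\,\|\widehat\Lambda_h'^{-1}\|\approx(2H^2/\kappa)^2$; beating this condition-number factor is what drives the $K\gtrsim H^{12}d/\kappa^5$ requirement in $C_2$. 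Your multiplicative Loewner argument ($\widehat\sigma_\tau^2\le 4\sigma_\tau^2$ termwise $\Rightarrow\Lambda_h\preccurlyeq 4\widehat\Lambda_h\Rightarrow\widehat\Lambda_h^{-1}\preccurlyeq 4\Lambda_h^{-1}$) sidesteps the condition numbers entirely and only needs the variance gap below the constant $3$, thanks to the truncation of both proxies at $1$; that holds already once $K\gtrsim H^4d^3\log(\cdot)/\kappa$, which is implied by $K>C_2$ (since $\kappa\le 1/d$). Two small inaccuracies worth flagging, neither fatal: (i) your claim that driving the gap below a constant ``forces'' $K\gtrsim H^{12}d/\kappa^5$ is not right — that threshold is an artifact of the paper's perturbation route, and your own argument needs far less, so the hypotheses are simply more than sufficient; (ii) the self-normalized noise norm after the union bound over the $\varepsilon$-cover is $\widetilde O(H^2 d^{3/2})$, not $\widetilde O(H^2\sqrt d)$, because the log-covering number of the value class scales as $d^2$.
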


\begin{proof}[Proof of Lemma~\ref{lem:LambdaHat_to_Lambda}]

	By definition $\sqrt{\phi(s,a)\widehat{\Lambda}_h^{-1}\phi(s,a)}=\norm{\phi(s,a)}_{\widehat{\Lambda}^{-1}_h}$. Then denote 
	\[
	\widehat{\Lambda}'_h=\frac{1}{K}\widehat{\Lambda}_h,\quad {\Lambda}'_h=\frac{1}{K}{\Lambda}_h,
	\]
	where ${\Lambda}_h=\sum_{\tau=1}^K \phi(s^\tau_h,a^\tau_h)^\top \phi(s^\tau_h,a^\tau_h)/\sigma_{\widehat{V}_{h+1}}^2(s^\tau_h,a^\tau_h) +\lambda I$. Under the condition of $K$, by Lemma~\ref{lem:bound_variance}, with probability $1-\delta$
	\begin{equation}\label{eqn:difference_covariance}
	\begin{aligned}
	&\norm{\widehat{\Lambda}'_h-{\Lambda}'_h}\leq \sup_{s,a}\norm{\frac{\phi(s,a)\phi(s,a)^\top}{\widehat{\sigma}^2_h(s,a)}-\frac{\phi(s,a)\phi(s,a)^\top}{{\sigma}^{2}_{\widehat{V}_{h+1}}(s,a)}}\\
	&\leq \sup_{s,a}\left|\frac{\widehat{\sigma}^{2}_h(s,a)-{\sigma}^{2}_{\widehat{V}_{h+1}}(s,a)}{\widehat{\sigma}^2_h(s,a){\sigma}^{2}_{\widehat{V}_{h+1}}(s,a)}\right|\cdot \norm{\phi(s,a)}^2\leq \sup_{s,a}\left|\frac{\widehat{\sigma}^{2}_h(s,a)-{\sigma}^{2}_{\widehat{V}_{h+1}}(s,a)}{1}\right|\cdot 1\\
	&\leq 12\sqrt{\frac{H^4d}{\kappa K} \log \left(\frac{(\lambda+K)H}{\lambda\delta}\right)}+12\lambda\frac{H^2\sqrt{d}}{\kappa K}.
	\end{aligned}
	\end{equation}
	Next by Lemma~\ref{lem:matrix_con1} (with $\phi$ to be $\phi/\sigma_{\widehat{V}_{h+1}}$ and $C=1$), it holds with probability $1-\delta$, 
	\[
	\norm{\Lambda'_h-\left(\E_{\mu,h}[\phi(s,a)\phi(s,a)^\top/\sigma^2_{\widehat{V}_{h+1}}(s,a)]+\frac{\lambda}{K}I_d\right)}\leq \frac{4 \sqrt{2} }{\sqrt{K}}\left(\log \frac{2 d}{\delta}\right)^{1 / 2}.
	\]
	Therefore by \emph{Weyl's spectrum theorem} and the condition $K>\max\{2\lambda, 128\log(2d/\delta),128H^4\log(2d/\delta)/\kappa^2\}$, the above implies
	\begin{align*}
	\norm{\Lambda'_h}=&\lambda_{\max}(\Lambda'_h)\leq \lambda_{\max}\left(\E_{\mu,h}[\phi(s,a)\phi(s,a)^\top/\sigma^2_{\widehat{V}_{h+1}}(s,a)]\right)+\frac{\lambda}{K}+\frac{4 \sqrt{2} }{\sqrt{K}}\left(\log \frac{2 d}{\delta}\right)^{1 / 2}\\
	=&\norm{\E_{\mu,h}[\phi(s,a)\phi(s,a)^\top/\sigma^2_{\widehat{V}_{h+1}}(s,a)]}_2+\frac{\lambda}{K}+\frac{4 \sqrt{2} }{\sqrt{K}}\left(\log \frac{2 d}{\delta}\right)^{1 / 2}\\
	\leq&\norm{\phi(s,a)}^2+\frac{\lambda}{K}+\frac{4 \sqrt{2} }{\sqrt{K}}\left(\log \frac{2 d}{\delta}\right)^{1 / 2}\leq 1+\frac{\lambda}{K}+\frac{4 \sqrt{2} }{\sqrt{K}}\left(\log \frac{2 d}{\delta}\right)^{1 / 2}\leq 2,\\
	\lambda_{\min}(\Lambda'_h)\geq& \lambda_{\min}\left(\E_{\mu,h}[\phi(s,a)\phi(s,a)^\top/\sigma^2_{\widehat{V}_{h+1}}(s,a)]\right)+\frac{\lambda}{K}-\frac{4 \sqrt{2} }{\sqrt{K}}\left(\log \frac{2 d}{\delta}\right)^{1 / 2}\\
	\geq &\lambda_{\min}\left(\E_{\mu,h}[\phi(s,a)\phi(s,a)^\top/\sigma^2_{\widehat{V}_{h+1}}(s,a)]\right)-\frac{4 \sqrt{2} }{\sqrt{K}}\left(\log \frac{2 d}{\delta}\right)^{1 / 2}\\
	\geq &\frac{\kappa}{H^2}-\frac{4 \sqrt{2} }{\sqrt{K}}\left(\log \frac{2 d}{\delta}\right)^{1 / 2}\geq \frac{\kappa}{2H^2}.
	\end{align*}
	Hence with probability $1-\delta$, $\norm{\Lambda'_h}\leq 2$ and $\norm{\Lambda'^{-1}_h}=1/\lambda_{\min}(\Lambda'_h)\leq 2H^2/\kappa$. Similarly, one can show $\norm{\widehat{\Lambda}'^{-1}_h}\leq 2H^2/\kappa$ with high probability.

	Now apply Lemma~\ref{lem:convert_var} to $\widehat{\Lambda}'_h$ and ${\Lambda}'_h$ and a union bound, we obtain with probability $1-\delta$, for all $s,a$
	\begin{align*}
	\norm{\phi(s,a)}_{\widehat{\Lambda}'^{-1}_h}\leq& \left[1+\sqrt{\norm{\Lambda'^{-1}_h}\norm{\Lambda'_h}\cdot\norm{\widehat{\Lambda}'^{-1}_h}\cdot\norm{\widehat{\Lambda}'_h-\Lambda'_h}}\right]\cdot \norm{\phi(s,a)}_{\Lambda'^{-1}_h}\\
	\leq& \left[1+\sqrt{\frac{2H^2}{\kappa}\cdot 1\cdot \frac{2H^2}{\kappa}\cdot\norm{\widehat{\Lambda}'_h-\Lambda'_h}}\right]\cdot \norm{\phi(s,a)}_{\Lambda'^{-1}_h}\\
	\leq&\left[1+\sqrt{\frac{48H^4}{\kappa^2}\left(\sqrt{\frac{H^4d}{\kappa K} \log \left(\frac{(\lambda+K)H}{\lambda\delta}\right)}+\lambda\frac{H^2\sqrt{d}}{\kappa K}\right)}\right]\cdot \norm{\phi(s,a)}_{\Lambda'^{-1}_h}\\
	\leq&\left[1+\sqrt{\frac{96H^4}{\kappa^2}\sqrt{\frac{H^4d}{\kappa K} \log \left(\frac{(\lambda+K)H}{\lambda\delta}\right)}}\right]\cdot \norm{\phi(s,a)}_{\Lambda'^{-1}_h}\leq 2\norm{\phi(s,a)}_{\Lambda'^{-1}_h}\\
	\end{align*}
	where the third inequality uses \eqref{eqn:difference_covariance} and the last and the second last inequality use $K>\max\{\frac{\lambda^2}{\kappa\log((\lambda+K)H/\lambda\delta)}, 96^2H^{12}d\log((\lambda+K)H/\lambda\delta)/\kappa^5\}$. Note the above is equivalent to $\sqrt{\phi(s,a)\widehat{\Lambda}_h^{-1}\phi(s,a)}\leq 2\sqrt{\phi(s,a) {\Lambda}_h^{-1}\phi(s,a)}$ by multiplying $1/\sqrt{K}$ on both sides.

\end{proof}

\subsubsection{Analyzing the term $\lvert\lvert \sum_{\tau=1}^K x_\tau \eta_\tau\rvert\rvert_{\widehat{\Lambda}^{-1}}$}\label{sec:bernstein_analysis}
\begin{lemma}\label{lem:self_normalized_bound}
	Recall $x_\tau=\frac{\phi(s^\tau_h,a^\tau_h)}{\widehat{\sigma}(s^\tau_h,a^\tau_h)}$ and $\eta_\tau=\left(r_{h}^{\tau}+\widehat{V}_{h+1}\left(s_{h+1}^{\tau}\right)-\left(\mathcal{T}_{h} \widehat{V}_{h+1}\right)\left(s_{h}^{\tau}, a_{h}^{\tau}\right)\right)/\widehat{\sigma}(s^\tau_h,a^\tau_h)$. Let $C_{H,d,\kappa,K}:=36\sqrt{\frac{H^4d^3}{\kappa } \log \left(\frac{(\lambda+K)2KdH^2}{\lambda\delta}\right)}+12\lambda\frac{H^2\sqrt{d}}{\kappa }$ and denote
	\[
	\xi:=\sup_{V\in[0,H],\;s'\sim P_h(s,a),\;h\in[H]}\left|\frac{r_h+{V}\left(s'\right)-\left(\mathcal{T}_{h} {V}\right)\left(s, a\right)}{{\sigma}_{{V}}(s,a)}\right|.
	\]
	
	If $K\geq 4C_{H,d,\kappa,K}^2$ and $K\geq \widetilde{O}(H^6d/\kappa)$, then with probability $1-\delta$,
	\[
	\norm{\sum_{\tau=1}^K x_\tau \eta_\tau}_{\widehat{\Lambda}^{-1}}\leq 16  \sqrt{d \log \left(1+\frac{K}{\lambda d}\right) \cdot \log \left(\frac{4 K^{2}}{\delta}\right)}+4 \xi \log \left(\frac{4 K^{2}}{\delta}\right)\leq \widetilde{O}\max\big\{\sqrt{d},\xi\big\},
	\]
	where $\widetilde{O}$ absorbs the constants and Polylog terms.
\end{lemma}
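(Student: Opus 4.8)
The plan is to obtain the bound from a single application of the Bernstein inequality for self-normalized vector-valued martingales (Lemma~\ref{lem:Bern_mart}). The only real content is to check that, after the variance reweighting, the increments $x_\tau\eta_\tau$ form a martingale-difference sequence whose terms have norm $O(1)$ and conditional second moment $O(1)$; this is precisely what replaces the range parameter $H$ that a Hoeffding-type bound would pay by a constant, producing the $\sqrt d$-type leading term rather than a $\sqrt d\cdot H$- or $d$-type one. So everything reduces to reading off the three parameters required by Lemma~\ref{lem:Bern_mart}.

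First I would fix the randomness that is ``external'' to the step-$h$ regression in $\mathcal D$: condition on $\mathcal D'$ and on the portion of $\mathcal D$ at time steps $h+1,\dots,H$, so that $\widehat{V}_{h+1}$ and the weight function $\widehat{\sigma}_h(\cdot,\cdot)$ (hence $x_\tau=\phi(s_h^\tau,a_h^\tau)/\widehat{\sigma}_h(s_h^\tau,a_h^\tau)$) become deterministic. With respect to the natural per-episode filtration on $\mathcal D$, $x_\tau$ is then predictable, and
\[
\E[\eta_\tau\mid s_h^\tau,a_h^\tau]=\frac{r_h(s_h^\tau,a_h^\tau)+(P_h\widehat{V}_{h+1})(s_h^\tau,a_h^\tau)-(\mathcal{T}_h\widehat{V}_{h+1})(s_h^\tau,a_h^\tau)}{\widehat{\sigma}_h(s_h^\tau,a_h^\tau)}=0
\]
by the definition of the Bellman operator $\mathcal{T}_h$, so $\{x_\tau\eta_\tau\}_{\tau=1}^K$ is a martingale-difference sequence. (If the offline dataset is a set of full trajectories rather than per-step independent samples, this conditioning must be replaced by a union bound over an $\epsilon$-net of the class of possible $\widehat{V}_{h+1}$; the advantage of the Bernstein route is that such a net then only affects polylogarithmic factors.)

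Next I would collect the three quantitative inputs of Lemma~\ref{lem:Bern_mart}. (i) Since $\widehat{\sigma}_h\ge 1$ by construction and $\norm{\phi}_2\le 1$, we have $\norm{x_\tau}_2\le 1$. (ii) For the range of $\eta_\tau$: by Lemma~\ref{lem:bound_variance} (the estimate already used in~\eqref{eqn:difference_covariance}), together with the hypotheses $K\ge 4C_{H,d,\kappa,K}^2$ and $K\ge\widetilde{O}(H^6d/\kappa)$, we get $\norm{\widehat{\sigma}_h^2-\sigma_{\widehat{V}_{h+1}}^2}_\infty\le \tfrac12$; combined with $\widehat{\sigma}_h^2\ge 1$ and $\sigma_{\widehat{V}_{h+1}}^2\ge 1$ this yields the two-sided comparison $\tfrac12\,\sigma_{\widehat{V}_{h+1}}^2\le\widehat{\sigma}_h^2\le 2\,\sigma_{\widehat{V}_{h+1}}^2$, whence $|\eta_\tau|\le\sqrt2\,\big|r_h^\tau+\widehat{V}_{h+1}(s_{h+1}^\tau)-(\mathcal{T}_h\widehat{V}_{h+1})(s_h^\tau,a_h^\tau)\big|/\sigma_{\widehat{V}_{h+1}}(s_h^\tau,a_h^\tau)\le\sqrt2\,\xi$, so the range parameter can be taken of order $\xi$. (iii) For the conditional variance, $\E[\eta_\tau^2\mid s_h^\tau,a_h^\tau]=\Var_{P_h}(r_h+\widehat{V}_{h+1})(s_h^\tau,a_h^\tau)/\widehat{\sigma}_h^2(s_h^\tau,a_h^\tau)\le\sigma_{\widehat{V}_{h+1}}^2(s_h^\tau,a_h^\tau)/\widehat{\sigma}_h^2(s_h^\tau,a_h^\tau)\le 2$, a constant. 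Feeding $L=1$, a range of order $\xi$, a constant variance parameter, and the regularizer $\lambda$ into Lemma~\ref{lem:Bern_mart} yields, with probability $1-\delta$,
\[
\norm{\textstyle\sum_{\tau=1}^K x_\tau\eta_\tau}_{\widehat{\Lambda}_h^{-1}}\le 16\sqrt{d\log\!\Big(1+\tfrac{K}{\lambda d}\Big)\log\!\Big(\tfrac{4K^2}{\delta}\Big)}+4\xi\log\!\Big(\tfrac{4K^2}{\delta}\Big),
\]
and the simplification to $\widetilde{O}\big(\max\{\sqrt d,\xi\}\big)$ is immediate since every remaining factor is polylogarithmic in $d,K,H,1/\delta,1/\lambda$.

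The main obstacle is (ii)–(iii): the whole improvement rests on $\widehat{\sigma}_h^2$ being a genuine two-sided multiplicative approximation of the true conditional variance $\sigma_{\widehat{V}_{h+1}}^2$. Both the truncation $\max\{1,\cdot\}$ in the definition of $\sigma^2$ (which keeps the ratio $\sigma_{\widehat{V}_{h+1}}^2/\widehat{\sigma}_h^2$ from exploding when the true variance is near zero) and the variance-estimation guarantee of Lemma~\ref{lem:bound_variance} are essential; this is exactly why the sample-size conditions $K\ge 4C_{H,d,\kappa,K}^2$ and $K\ge\widetilde{O}(H^6d/\kappa)$ appear. A secondary, more bookkeeping-heavy obstacle is rigorously justifying that $\widehat{V}_{h+1}$ (a function learned from the data) may be treated as fixed when checking the martingale-difference property and the range bound — i.e.\ handling its statistical dependence on $\mathcal D$ either through the independence structure of the dataset across time steps or through a covering argument, while keeping that contribution inside the polylog terms so as not to inflate the $\sqrt d$ leading factor.
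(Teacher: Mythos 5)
Your overall strategy --- feed $\norm{x_\tau}_2\le 1$, a range of order $\xi$, and an $O(1)$ conditional variance into the Bernstein inequality for self-normalized martingales (Lemma~\ref{lem:Bern_mart}) --- is exactly the paper's, and your steps (i) and (ii) (the range bound via the variance-estimation error of Lemma~\ref{lem:bound_variance} and the two-sided comparison $\tfrac12\sigma^2_{\widehat V_{h+1}}\le\widehat\sigma_h^2\le 2\sigma^2_{\widehat V_{h+1}}$) match the paper's proof. The gap is in how you license treating $\widehat V_{h+1}$ as fixed in steps (ii)--(iii) and in the martingale-difference property. Your primary device --- condition on $\mathcal D'$ and on the portion of $\mathcal D$ at time steps $h+1,\dots,H$ --- fails outright here: $\mathcal D$ consists of full trajectories, so the step-$(h+1)$ tuple of episode $\tau$ contains $s_{h+1}^\tau$, and conditioning on it freezes precisely the randomness that is supposed to make $\eta_\tau$ conditionally mean-zero. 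Your fallback --- a union bound over an $\epsilon$-net of the class of possible $\widehat V_{h+1}$, claimed to cost ``only polylogarithmic factors'' --- is quantitatively wrong: by Lemma~B.3 of \cite{jin2021pessimism} the relevant class has $\log|\mathcal N_{h+1}(\epsilon)|=O(d^2\log(\cdot))$, so the union bound turns $\log(4K^2/\delta)$ into $O(d^2\log(\cdot))$ and the leading term of Lemma~\ref{lem:Bern_mart} into $O(d^{3/2})$ rather than $O(\sqrt d)$ --- exactly the loss this lemma exists to avoid (it is why Lemma~\ref{lem:bound_variance}, which does use covering, carries a $d^3$ inside its square root). Neither of your two proposed mechanisms therefore yields the stated bound.

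The paper's actual resolution, which is the idea missing from your writeup, is to make both Bernstein parameters deterministic without covering: for the range it takes a supremum over all $V\in[0,H]$ (this is why $\xi$ is defined with a $\sup_V$), and for the conditional variance it writes the Bellman residual as $\big(\mathcal B_h\widehat V_{h+1}-\mathcal B_hV^\star_{h+1}\big)+\mathcal B_hV^\star_{h+1}$ and combines Lemma~\ref{lem:diff} with the crude sup-norm bound $\norm{\widehat V_{h+1}-V^\star_{h+1}}_\infty\le\widetilde O\big(H^2\sqrt{d/(\kappa K)}\big)$ of Lemma~\ref{lem:crude_bound} and the conversion $\norm{\widehat\sigma_h^2-\sigma_h^{\star2}}_\infty\le\widetilde O\big(H^3\sqrt d/\sqrt{\kappa K}\big)$ of Lemma~\ref{lem:sigma_to_optimal}. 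This reduces the variance parameter to $\mathrm{Var}(\mathcal B_hV^\star_{h+1})/\sigma_h^{\star2}+\widetilde O\big(H^3\sqrt d/\sqrt{\kappa K}\big)\le\widetilde O(1)$, a statement about the fixed function $V^\star_{h+1}$, and it is here --- not in your step (ii) --- that the hypothesis $K\ge\widetilde O(H^6d/\kappa)$ is actually consumed. (The paper is itself not fully rigorous about the adaptedness of $\eta_\tau$ --- see its footnote invoking \cite{chung2006concentration} --- but the $V^\star$-anchoring is the substantive step your direct computation in (iii) skips.)
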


\begin{proof}[Proof of Lemma~\ref{lem:self_normalized_bound}]
	By construction, we have $\norm{x_\tau}\leq \norm{\phi/\widehat{\sigma}}\leq 1$ and by Lemma~\ref{lem:bound_variance}, with probability $1-\delta/3$, 
	\[
	\norm{{\sigma}_{\widehat{V}_{h+1}}-\widehat{\sigma}_h}_\infty=\sup_{s,a}\frac{\left|{\sigma}_{\widehat{V}_{h+1}}^2(s,a)-\widehat{\sigma}_h^2(s,a)\right|}{\left|{\sigma}_{\widehat{V}_{h+1}}(s,a)+\widehat{\sigma}_h(s,a)\right|}\leq \frac{1}{2}\norm{{\sigma}_{\widehat{V}_{h+1}}^2-\widehat{\sigma}_h^2}_\infty\leq C_{H,d,\kappa,K}\sqrt{\frac{1}{K}}
	\]
	Therefore, when $K\geq 4C_{H,d,\kappa,K}^2$, $C_{H,d,\kappa,K}\sqrt{\frac{1}{K}}\leq 1/2\leq {\sigma}_{\widehat{V}_{h+1}}(s^\tau_h,a^\tau_h)/2$ and hence
	\begin{align*}
	|\eta_\tau|&\leq \left|\frac{r_{h}^{\tau}+\widehat{V}_{h+1}\left(s_{h+1}^{\tau}\right)-\left(\mathcal{T}_{h} \widehat{V}_{h+1}\right)\left(s_{h}^{\tau}, a_{h}^{\tau}\right)}{{\sigma}_{\widehat{V}_{h+1}}(s^\tau_h,a^\tau_h)-\frac{C_{H,d,\kappa,K}}{K^{1/2}}}\right|\leq 2 \left|\frac{r_{h}^{\tau}+\widehat{V}_{h+1}\left(s_{h+1}^{\tau}\right)-\left(\mathcal{T}_{h} \widehat{V}_{h+1}\right)\left(s_{h}^{\tau}, a_{h}^{\tau}\right)}{{\sigma}_{\widehat{V}_{h+1}}(s^\tau_h,a^\tau_h)}\right|\\
	&\leq 2\sup_{V\in[0,H],\;s'\sim P_h(s,a)}\left|\frac{r+{V}\left(s'\right)-\left(\mathcal{T}_{h} {V}\right)\left(s, a\right)}{{\sigma}_{{V}}(s,a)}\right|:=\xi.\\
	\end{align*}
	Next, for a fixed function $V$, we define the Bellman error as $\mathcal{B}_h(V)(s,a)=r_h+V(s')-(\mathcal{T}_hV)(s,a)$, then
	\begin{align*}
	\mathrm{Var}\left[\eta_\tau|\mathcal{F}_{\tau-1}\right]=&\frac{\mathrm{Var}\left[r_{h}^{\tau}+\widehat{V}_{h+1}\left(s_{h+1}^{\tau}\right)-\left(\mathcal{T}_{h} \widehat{V}_{h+1}\right)\left(s_{h}^{\tau}, a_{h}^{\tau}\right)\middle|\mathcal{F}_{\tau-1}\right]}{\widehat{\sigma}^2(s^\tau_h,a^\tau_h)}\\
	=&\frac{\mathrm{Var}\left[\mathcal{B}_h\widehat{V}_{h+1}(s_h^\tau,a_h^\tau)-\mathcal{B}_h{V}^\star_{h+1}(s_h^\tau,a_h^\tau)+\mathcal{B}_h{V}^\star_{h+1}(s_h^\tau,a_h^\tau)\middle|\mathcal{F}_{\tau-1}\right]}{\widehat{\sigma}^2(s^\tau_h,a^\tau_h)}\\
	\leq &\frac{\mathrm{Var}\left[\mathcal{B}_h{V}^\star_{h+1}(s_h^\tau,a_h^\tau)\middle|\mathcal{F}_{\tau-1}\right]+8H\norm{\mathcal{B}_h\widehat{V}_{h+1}-\mathcal{B}_h{V}^\star_{h+1}}_\infty}{\widehat{\sigma}^2(s^\tau_h,a^\tau_h)}\\
	\leq &\frac{\mathrm{Var}\left[\mathcal{B}_h{V}^\star_{h+1}(s_h^\tau,a_h^\tau)\middle|\mathcal{F}_{\tau-1}\right]+16H\norm{\widehat{V}_{h+1}-{V}^\star_{h+1}}_\infty}{\widehat{\sigma}^2(s^\tau_h,a^\tau_h)}\\
	\leq & \frac{\mathrm{Var}\left[\mathcal{B}_h{V}^\star_{h+1}(s_h^\tau,a_h^\tau)\middle|\mathcal{F}_{\tau-1}\right]+\widetilde{O}(\frac{H^3\sqrt{d}}{\sqrt{\kappa K}})}{\widehat{\sigma}^2(s^\tau_h,a^\tau_h)}\\
	=& \frac{\mathrm{Var}\left[\mathcal{B}_h{V}^\star_{h+1}(s_h^\tau,a_h^\tau)\middle|s^\tau_h,a^\tau_h\right]+\widetilde{O}(\frac{H^3\sqrt{d}}{\sqrt{\kappa K}})}{\widehat{\sigma}^2(s^\tau_h,a^\tau_h)}\\
	=& \frac{\mathrm{Var}_{{V}^\star_{h+1}}(s_h^\tau,a_h^\tau)+\widetilde{O}(\frac{H^3\sqrt{d}}{\sqrt{\kappa K}})}{\widehat{\sigma}^2(s^\tau_h,a^\tau_h)}\leq \frac{2\mathrm{Var}_{{V}^\star_{h+1}}(s_h^\tau,a_h^\tau)+\widetilde{O}(\frac{H^3\sqrt{d}}{\sqrt{\kappa K}})}{{\sigma}^{\star 2}(s^\tau_h,a^\tau_h)}\leq 2+\frac{\widetilde{O}(\frac{H^3\sqrt{d}}{\sqrt{\kappa K}})}{{\sigma}^{\star 2}(s^\tau_h,a^\tau_h)}\\
	\leq &\widetilde{O}(1)
	\end{align*}
	where the first inequality is by Lemma~\ref{lem:diff}, the second inequality is by $\mathcal{T}_h$ is non-expansive, the third inequality is by Lemma~\ref{lem:crude_bound}, the next equality is by Markovian property, and the fourth inequality is by Lemma~\ref{lem:bound_variance} and Lemma~\ref{lem:sigma_to_optimal}. The fifth inequality uses definition $\sigma_{h,V}(s,a)^2:=\max\{1,\Var_{P_h}(V)(s,a)\}$ and the last one is by condition $K\geq \widetilde{O}(H^6d/\kappa)$ and $\sigma_{h,V^\star}(s,a)^2:=\max\{1,\Var_{P_h}(V^\star)(s,a)\}\geq 1$.
	Thus, by Bernstein inequality for self-normalized martingale (Lemma~\ref{lem:Bern_mart}),\footnote{To be rigorous, Lemma~\ref{lem:Bern_mart} needs to be modified since the absolute value bound and the variance bound here are in the high probability sense. However, this will not affect the validity of the result as the weaker version can also be obtained (see \cite{chung2006concentration} and a related discussion in \cite{yin2021near} Remark E.7.) To make the proof more readable, we do not include them here to avoid over-technicality.} with probability $1-\delta$,
	\[
	\norm{\sum_{\tau=1}^K x_\tau \eta_\tau}_{\widehat{\Lambda}^{-1}}\leq \widetilde{O}\left(\sqrt{d \log \left(1+\frac{K}{\lambda d}\right) \cdot \log \left(\frac{4 K^{2}}{\delta}\right)}\right)+4 \xi \log \left(\frac{4 K^{2}}{\delta}\right)\leq \widetilde{O}\max\big\{\sqrt{d},\xi\big\}
	\]
	where $\widetilde{O}$ absorbs the constants and Polylog terms.
\end{proof}

Recall $\mathcal{M}_1,\mathcal{M}_2,\mathcal{M}_3,\mathcal{M}_4$ in List~\ref{sec:notation}. Based on the above results, we have the following key lemma:
\begin{lemma}\label{lem:key_lemma}
	Assume $K>\max\{\mathcal{M}_1,\mathcal{M}_2,\mathcal{M}_3,\mathcal{M}_4\}$, for any $0<\lambda<\kappa$, suppose $\sqrt{d}>\xi$, where $\xi:=\sup_{V\in[0,H],\;s'\sim P_h(s,a),\;h\in[H]}\left|\frac{r_h+{V}\left(s'\right)-\left(\mathcal{T}_{h} {V}\right)\left(s, a\right)}{{\sigma}_{{V}}(s,a)}\right|$. Then with probability $1-\delta$, for all $h,s,a\in[H]\times\mathcal{S}\times\mathcal{A}$,
	\[
	\left|(\mathcal{T}_h\widehat{V}_{h+1}-\widehat{\mathcal{T}}_h\widehat{V}_{h+1})(s,a)\right|\leq \widetilde{O}\left(\sqrt{d}\sqrt{\phi(s,a) {\Lambda}_h^{-1}\phi(s,a)}\right)+\frac{2 H^3\sqrt{d}}{K},
	\]
	where ${\Lambda}_h=\sum_{\tau=1}^K \phi(s^\tau_h,a^\tau_h)^\top \phi(s^\tau_h,a^\tau_h)/\sigma_{\widehat{V}_{h+1}}^2(s^\tau_h,a^\tau_h) +\lambda I$ and $\widetilde{O}$ absorbs the universal constants and Polylog terms.
\end{lemma}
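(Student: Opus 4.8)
The plan is to assemble the three technical lemmas already in hand---Lemma~\ref{lem:higher_order}, Lemma~\ref{lem:LambdaHat_to_Lambda}, and Lemma~\ref{lem:self_normalized_bound}---on top of the decomposition \eqref{eqn:decompose}, which splits $(\mathcal{T}_h\widehat{V}_{h+1}-\widehat{\mathcal{T}}_h\widehat{V}_{h+1})(s,a)$ into the ``ridge-bias'' term $(\mathrm{i})$ and the ``noise'' term $(\mathrm{ii})$. First I would dispose of $(\mathrm{i})$: Lemma~\ref{lem:higher_order} already bounds it by $\tfrac{2\lambda H^3\sqrt d/\kappa}{K}$ uniformly over $s,a,h$, and since the hypothesis imposes $\lambda<\kappa$ this is at most $\tfrac{2H^3\sqrt d}{K}$---exactly the additive term in the claim. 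I would just need to note that the sample-size requirement of Lemma~\ref{lem:higher_order}, namely $K\ge\max\{512H^4\kappa^{-2}\log(2d/\delta),\,4\lambda H^2/\kappa\}=\mathcal{M}_3$, is contained in $K>\max\{\mathcal{M}_1,\mathcal{M}_2,\mathcal{M}_3,\mathcal{M}_4\}$.

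Next I would handle $(\mathrm{ii})$. Starting from the Cauchy--Schwarz bound \eqref{eqn:term_2}, $|(\mathrm{ii})|\le\sqrt{\phi(s,a)^\top\widehat\Lambda_h^{-1}\phi(s,a)}\cdot\big\|\sum_\tau x_\tau\eta_\tau\big\|_{\widehat\Lambda_h^{-1}}$, I would apply Lemma~\ref{lem:LambdaHat_to_Lambda} to replace $\widehat\Lambda_h^{-1}$ by $\Lambda_h^{-1}$ at the cost of a factor $2$ in the first factor (its hypotheses $K>\max\{C_1,C_2\}$ being subsumed by $\mathcal{M}_1$ and $\mathcal{M}_2$), and Lemma~\ref{lem:self_normalized_bound} to control the self-normalized martingale term by $\widetilde O(\max\{\sqrt d,\xi\})$ (its hypotheses $K\ge 4C_{H,d,\kappa,K}^2$ and $K\ge\widetilde O(H^6d/\kappa)$ being covered by $\mathcal{M}_2$ and $\mathcal{M}_4$). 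Finally the assumption $\sqrt d>\xi$ collapses $\widetilde O(\max\{\sqrt d,\xi\})$ to $\widetilde O(\sqrt d)$, giving $|(\mathrm{ii})|\le 2\,\widetilde O(\sqrt d)\,\sqrt{\phi(s,a)^\top\Lambda_h^{-1}\phi(s,a)}$.

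Adding the bounds on $(\mathrm{i})$ and $(\mathrm{ii})$ and taking a union bound over the $O(1)$ high-probability events (and over $h\in[H]$, the $\log H$ absorbed into $\widetilde O$), with the customary $\delta/\mathrm{poly}$ rescaling inside each constituent lemma, yields the stated inequality with probability $1-\delta$ for all $h,s,a$ simultaneously.

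I do not expect a serious obstacle in this step itself: the real content sits in the three constituent lemmas---particularly Lemma~\ref{lem:self_normalized_bound}, where the variance reweighting has to be used so that the Bernstein-type self-normalized bound produces $\sqrt d$ rather than a horizon-inflated $\sqrt{dH^2}$, and Lemma~\ref{lem:LambdaHat_to_Lambda}, which controls the perturbation of the plug-in covariance $\widehat\Lambda_h$ induced by replacing the true conditional variance $\sigma^2_{\widehat V_{h+1}}$ by its estimate $\widehat\sigma_h^2$. The only genuinely fiddly aspect of the assembly is the bookkeeping: checking that $\{\mathcal{M}_1,\mathcal{M}_2,\mathcal{M}_3,\mathcal{M}_4\}$ dominates every sample-size threshold demanded by the three lemmas and that the failure probabilities sum to at most $\delta$.
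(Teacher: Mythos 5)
Your proposal is correct and follows essentially the same route as the paper: the paper's proof of this lemma is precisely the combination of the decomposition \eqref{eqn:decompose}, Lemma~\ref{lem:higher_order} for term (i), the Cauchy--Schwarz step \eqref{eqn:term_2} together with Lemma~\ref{lem:LambdaHat_to_Lambda} and Lemma~\ref{lem:self_normalized_bound} for term (ii), and a union bound, with $\lambda<\kappa$ and $\sqrt{d}>\xi$ used exactly as you describe to absorb the constants.
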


\begin{proof}[Proof of Lemma~\ref{lem:key_lemma}]
	Combing \eqref{eqn:decompose}, Lemma~\ref{lem:higher_order}, \eqref{eqn:term_2}, Lemma~\ref{lem:LambdaHat_to_Lambda} and \ref{lem:self_normalized_bound} and a union bound to finish the proof.
\end{proof}

\subsection{Proof of the first part of Theorem~\ref{thm:main}}

\begin{theorem}[First part of Theorem~\ref{thm:main}]\label{thm:first_part}
	Let $K$ be the number of episodes. Suppose $\sqrt{d}>\xi$, where $\xi:=\sup_{V\in[0,H],\;s'\sim P_h(s,a),\;h\in[H]}\left|\frac{r_h+{V}\left(s'\right)-\left(\mathcal{T}_{h} {V}\right)\left(s, a\right)}{{\sigma}_{{V}}(s,a)}\right|$ and $K>\max\{\mathcal{M}_1,\mathcal{M}_2,\mathcal{M}_3,\mathcal{M}_4\}$\footnote{The definition of $\mathcal{M}_i$ is in List~\ref{sec:notation}.}. Then for any $0<\lambda<\kappa$, with probability $1-\delta$, for all policy $\pi$ simultaneously, the output $\widehat{\pi}$ of Algorithm~\ref{alg:VAPVI} satisfies
	\[
	v^\pi-v^{\widehat{\pi}}\leq \widetilde{O}\left(\sqrt{d}\cdot\sum_{h=1}^H\mathbb{E}_{\pi}\left[\left(\phi(\cdot, \cdot)^{\top} \Lambda_{h}^{-1} \phi(\cdot, \cdot)\right)^{1 / 2}\right]\right)+\frac{2 H^4\sqrt{d}}{K}
	\]
	where $\Lambda_h=\sum_{\tau=1}^K \frac{\phi(s_{h}^\tau,a_{h}^\tau)\cdot \phi(s_{h}^\tau,a_{h}^\tau)^\top}{\sigma^2_{\widehat{V}_{h+1}(s_{h}^\tau,a_{h}^\tau)}}+\lambda I_d$ and $\widetilde{O}$ absorbs the universal constants and the Polylog terms.
\end{theorem}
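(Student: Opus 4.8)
The plan is to bound the policy suboptimality by the pessimism penalty and then substitute the uniform bound on the Bellman estimation error that the preceding lemmas already supply. The first step is to invoke Lemma~\ref{lem:bound_by_bouns}: once we know that, on a high-probability event, the algorithm's penalty $\Gamma_h$ (Line~11 of Algorithm~\ref{alg:VAPVI}) dominates the Bellman error uniformly, i.e.\ $|(\mathcal{T}_h\widehat V_{h+1}-\widehat{\mathcal T}_h\widehat V_{h+1})(s,a)|\le \Gamma_h(s,a)$ for all $(h,s,a)$, then pessimism forces $\widehat Q_h\le \mathcal{T}_h\widehat V_{h+1}$ and $\zeta_h\le 2\Gamma_h$, which collapses the two-sided value-difference identity \eqref{eqn:diff} to the one-sided bound $V_1^\pi(s)-V_1^{\widehat\pi}(s)\le \sum_{h=1}^H 2\,\mathbb{E}_\pi[\Gamma_h(s_h,a_h)\mid s_1=s]$, valid simultaneously over all $\pi$. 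Taking $\mathbb{E}_{s_1\sim d_1}$ of both sides converts the left-hand side into $v^\pi-v^{\widehat\pi}$ and the right-hand side into $\sum_{h=1}^H 2\,\mathbb{E}_\pi[\Gamma_h]$ against the trajectory law.

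The substantive content is the verification of that uniform bound, which is precisely Lemma~\ref{lem:key_lemma}. I would decompose $\mathcal{T}_h\widehat V_{h+1}-\widehat{\mathcal T}_h\widehat V_{h+1}=\phi^\top(w_h-\widehat w_h)$ as in \eqref{eqn:decompose} into the ridge-bias term (i) and the variance-weighted noise term (ii). Term (i) is $O(\lambda H^3\sqrt d/(\kappa K))$ by Lemma~\ref{lem:higher_order}, hence $\le 2H^3\sqrt d/K$ since $\lambda<\kappa$, and is absorbed into the higher-order part of $\Gamma_h$. Term (ii) is handled by Cauchy--Schwarz in the $\widehat\Lambda_h^{-1}$ geometry, giving $\sqrt{\phi^\top\widehat\Lambda_h^{-1}\phi}\cdot\|\sum_\tau x_\tau\eta_\tau\|_{\widehat\Lambda_h^{-1}}$, and the self-normalized factor is $\widetilde O(\max\{\sqrt d,\xi\})=\widetilde O(\sqrt d)$ under the hypothesis $\sqrt d>\xi$ by Lemma~\ref{lem:self_normalized_bound}. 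Finally Lemma~\ref{lem:LambdaHat_to_Lambda} replaces $\sqrt{\phi^\top\widehat\Lambda_h^{-1}\phi}$ by $2\sqrt{\phi^\top\Lambda_h^{-1}\phi}$, which is where the sample-size requirement $K>\max\{\mathcal{M}_1,\mathcal{M}_2,\mathcal{M}_3,\mathcal{M}_4\}$ and $0<\lambda<\kappa$ enter.

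It then remains to assemble the pieces. On the intersection of the $O(1)$-many $1-\delta$ events (after a union bound and rescaling $\delta$), we have $\Gamma_h(s,a)\le \widetilde O(\sqrt d)\,(\phi^\top\Lambda_h^{-1}\phi)^{1/2}+2H^3\sqrt d/K$ for all $(h,s,a)$; plugging this into the reduction of the first paragraph and summing the higher-order term over the $H$ stages (folding in the constant from $\zeta_h\le2\Gamma_h$) yields $v^\pi-v^{\widehat\pi}\le \widetilde O\big(\sqrt d\sum_{h=1}^H\mathbb{E}_\pi[(\phi^\top\Lambda_h^{-1}\phi)^{1/2}]\big)+O(H^4\sqrt d/K)$, which is the claimed bound (up to the explicit constant on the higher-order term). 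Note that no quantity in the main term references $\widehat\pi$ itself except through $\Lambda_h$, which is defined via the fixed data and $\widehat V_{h+1}$, so the ``for all $\pi$'' statement is honest.

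The only genuine obstacle lives inside Lemma~\ref{lem:key_lemma}/Lemma~\ref{lem:self_normalized_bound}: obtaining the $\sqrt d$ rather than $d$ scaling of the self-normalized term requires that, after variance reweighting, the noise $\eta_\tau$ has $O(1)$ conditional variance. This hinges on a careful chain of variance-transfer estimates --- controlling $\|\widehat\sigma_h^2-\sigma^2_{\widehat V_{h+1}}\|_\infty\lesssim H^3\sqrt d/\sqrt{\kappa K}$ (Lemma~\ref{lem:bound_variance}), then passing from $\sigma^2_{\widehat V_{h+1}}$ to the reference variance $\sigma^{\star 2}$ via a law-of-total-variance-style comparison (Lemma~\ref{lem:sigma_to_optimal}) --- so that the Bernstein inequality for self-normalized martingales (Lemma~\ref{lem:Bern_mart}) can be applied with a constant-order variance proxy in place of the weaker Hoeffding-plus-covering argument; without the reweighting this step would only give the $d$-dependent bound. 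Everything downstream of Lemma~\ref{lem:key_lemma}, as organized above, is bookkeeping.
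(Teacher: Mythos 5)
Your proposal is correct and follows essentially the same route as the paper: the paper's proof of this statement is exactly the combination of Lemma~\ref{lem:bound_by_bouns} (reduction of the suboptimality to $\sum_{h}2\,\E_\pi[\Gamma_h]$) with Lemma~\ref{lem:key_lemma} (the uniform bound $|(\mathcal{T}_h\widehat V_{h+1}-\widehat{\mathcal T}_h\widehat V_{h+1})|\leq \widetilde O(\sqrt d)\,\|\phi\|_{\Lambda_h^{-1}}+2H^3\sqrt d/K$, itself assembled from the decomposition \eqref{eqn:decompose}, Lemma~\ref{lem:higher_order}, Lemma~\ref{lem:LambdaHat_to_Lambda} and Lemma~\ref{lem:self_normalized_bound}), followed by taking the expectation over $d_1$. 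Your identification of the $\sqrt d$-versus-$d$ issue in the self-normalized term as the substantive technical point, and of the $H^4/K$ higher-order term as the sum of $H$ copies of the $H^3/K$ bias, matches the paper's accounting.
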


\begin{proof}[Proof of Theorem~\ref{thm:first_part}]
	Combing Lemma~\ref{lem:bound_by_bouns} and Lemma~\ref{lem:key_lemma}, we directly have with probability $1-\delta$, for all policy $\pi$ simultaneously,
	\begin{equation}\label{eqn:bound_V_1}
	V^\pi_1(s)-V^{\widehat{\pi}}_1(s)\leq \widetilde{O}\left(\sqrt{d}\cdot\sum_{h=1}^H\mathbb{E}_{\pi}\left[\left(\phi(\cdot, \cdot)^{\top} \Lambda_{h}^{-1} \phi(\cdot, \cdot)\right)^{1 / 2}\middle|s_1=s\right]\right)+\frac{2 H^4\sqrt{d}}{K},
	\end{equation}
	now take the initial distribution $d_1$ on both sides to get the stated result.
\end{proof}

\subsection{Two Intermediate results}

The next two lemmas provide intermediate results in finishing the whole proofs.

\subsubsection{Bounding the variance}

\begin{lemma}\label{lem:bound_variance}
	Recall the definition $\widehat{\sigma}_h(\cdot,\cdot)^2=\max\{1,\widehat{\Var}_{P_h}\widehat{V}_{h+1}(\cdot,\cdot)\}+1$ and ${\sigma}_{\widehat{V}_{h+1}}(\cdot,\cdot)^2:=\max\{1,{\Var}_{P_h}\widehat{V}_{h+1}(\cdot,\cdot)\}+1$. Moreover, {\small $\big[\widehat{\Var}_{h} \widehat{V}_{h+1}\big](\cdot, \cdot)=\big\langle\phi(\cdot, \cdot), \bar{{\beta}}_{h}\big\rangle_{\left[0,(H-h+1)^{2}\right]}-\big[\big\langle\phi(\cdot, \cdot), \bar{{\theta}}_{h}\big\rangle_{[0, H-h+1]}\big]^{2}$} (where $\bar{\beta}_h$ and $\bar{\theta}_h$ are defined in Algorithm~\ref{alg:VAPVI}). Let $K \geq \max \left\{512(1/\kappa)^{2} \log \left(\frac{4 Hd}{\delta}\right), 4 \lambda/\kappa\right\}$, then with probability $1-\delta$,
	\[
	\sup_h\lvert\lvert \widehat{\sigma}^2_h-\sigma^{ 2}_{\widehat{V}_{h+1}}\rvert\rvert_\infty\leq 36\sqrt{\frac{H^4d^3}{\kappa K} \log \left(\frac{(\lambda+K)2KdH^2}{\lambda\delta}\right)}+12\lambda\frac{H^2\sqrt{d}}{\kappa K}.
	\]
\end{lemma}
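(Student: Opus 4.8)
The plan is to peel off the two outer non-linearities, reduce the claim to uniform-in-$(s,a)$ control of two linear-prediction errors coming from ridge regressions, and then run the standard ridge decomposition together with a matrix-concentration bound and a self-normalized (Hoeffding) martingale bound applied uniformly over an $\epsilon$-net of value functions.

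\textbf{Reduction.} First I would observe that $t\mapsto\max\{1,t\}$ (and adding a constant) is $1$-Lipschitz, so $\|\widehat\sigma_h^2-\sigma^2_{\widehat V_{h+1}}\|_\infty\le\|\widehat\Var_h\widehat V_{h+1}-\Var_{P_h}\widehat V_{h+1}\|_\infty$. Setting $\theta_h:=\int_{\cS}\widehat V_{h+1}\,d\nu_h$ and $\beta_h:=\int_{\cS}\widehat V_{h+1}^2\,d\nu_h$, Definition~\ref{def:linear_MDP} gives $[P_h\widehat V_{h+1}](s,a)=\langle\phi(s,a),\theta_h\rangle$, $[P_h\widehat V_{h+1}^2](s,a)=\langle\phi(s,a),\beta_h\rangle$, with $\|\theta_h\|_2\le(H-h+1)\sqrt d$ and $\|\beta_h\|_2\le(H-h+1)^2\sqrt d$. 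The truncations $\langle\cdot\rangle_{[0,(H-h+1)^2]}$ and $\langle\cdot\rangle_{[0,H-h+1]}$ project onto intervals that already contain the true quantities, so they are non-expansive; combining this with $|x^2-y^2|\le2(H-h+1)\,|x-y|$ for $x,y\in[0,H-h+1]$, it suffices to bound $\sup_{s,a}|\langle\phi(s,a),\bar\beta_h-\beta_h\rangle|$ and $\sup_{s,a}|\langle\phi(s,a),\bar\theta_h-\theta_h\rangle|$ and then combine with weight $2(H-h+1)$ on the latter.

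\textbf{Ridge decomposition and the two ingredients.} For the first-moment term I would use
\[
\langle\phi(s,a),\bar\theta_h-\theta_h\rangle=\phi(s,a)^\top\bar\Sigma_h^{-1}\!\sum_{\tau=1}^K\phi(\bar s_h^\tau,\bar a_h^\tau)\,\eta_\tau-\lambda\,\phi(s,a)^\top\bar\Sigma_h^{-1}\theta_h,
\]
with $\eta_\tau:=\widehat V_{h+1}(\bar s_{h+1}^\tau)-[P_h\widehat V_{h+1}](\bar s_h^\tau,\bar a_h^\tau)$, $|\eta_\tau|\le H$; the bias term is $\le\lambda\|\phi(s,a)\|_{\bar\Sigma_h^{-1}}\|\theta_h\|_{\bar\Sigma_h^{-1}}$ and the stochastic term is $\le\|\phi(s,a)\|_{\bar\Sigma_h^{-1}}\cdot\|\sum_\tau\phi(\bar s_h^\tau,\bar a_h^\tau)\eta_\tau\|_{\bar\Sigma_h^{-1}}$ by Cauchy--Schwarz. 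Two facts close it: (i) a matrix-concentration bound (Lemma~\ref{lem:matrix_con1} together with Assumption~\ref{assum:cover}) showing that when $K\ge\max\{512\kappa^{-2}\log(4Hd/\delta),4\lambda/\kappa\}$ one has $\tfrac1K\bar\Sigma_h\succeq\tfrac\kappa2 I$, so $\|\bar\Sigma_h^{-1}\|\le\tfrac{2}{K\kappa}$, hence $\|\phi(s,a)\|_{\bar\Sigma_h^{-1}}\le\sqrt{2/(K\kappa)}$, $\|\theta_h\|_{\bar\Sigma_h^{-1}}\le(H-h+1)\sqrt{2d/(K\kappa)}$, and likewise for $\beta_h$ with $(H-h+1)^2$; and (ii) a self-normalized Hoeffding/martingale bound for $\|\sum_\tau\phi(\bar s_h^\tau,\bar a_h^\tau)\eta_\tau\|_{\bar\Sigma_h^{-1}}$, which is $\widetilde O\big(H\sqrt{d\log((\lambda+K)/(\lambda\delta))}\big)$ up to the covering cost below. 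The second-moment term is handled identically with $|\eta_\tau^{(2)}|\le H^2$ and $\|\beta_h\|_2\le(H-h+1)^2\sqrt d$. Assembling everything, tracking the $\log((\lambda+K)\cdot\mathrm{poly}/\lambda\delta)$ factors, and taking a union bound over $h\in[H]$ yields a main term of the form $\sqrt{H^4 d^{\,c}\log(\cdot)/(\kappa K)}$ and a bias term $\lesssim\lambda H^2\sqrt d/(\kappa K)$.

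\textbf{Main obstacle.} The delicate point is that $\widehat V_{h+1}$ is \emph{not} independent of the data $\mathcal D'$ defining $\bar\Sigma_h,\bar\theta_h,\bar\beta_h$ — it depends on $\widehat\sigma_{h+1},\dots,\widehat\sigma_H$, which were themselves formed from $\mathcal D'$ — so $\{\phi(\bar s_h^\tau,\bar a_h^\tau)\eta_\tau\}_\tau$ is not literally a martingale-difference sequence for this realization and the self-normalized bound cannot be applied to $\widehat V_{h+1}$ directly. I would resolve this in the usual LSVI fashion: pass to an $\epsilon$-net $\mathcal N_\epsilon$ of the (data-independent) class $\mathcal V_{h+1}$ of all functions VAPVI can output at step $h+1$ — which requires an auxiliary lemma bounding $\|\widehat w_h\|_2$ and the spectrum of $\widehat\Lambda_h$ so that $\log|\mathcal N_\epsilon|$ is polynomial in $d$ and $\log(1/\epsilon)$ — apply ingredient (ii) to each fixed $V\in\mathcal N_\epsilon$, union bound over $\mathcal N_\epsilon$, and pay the discretization cost using $\|\phi\|_2\le1$ and the Lipschitz dependence of $\theta_h^V,\beta_h^V$ on $V$. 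Choosing $\epsilon\asymp 1/(K\cdot\mathrm{poly}(H,d))$ is what produces the logarithmic argument $\log((\lambda+K)2KdH^2/(\lambda\delta))$ in the statement, and the covering cardinality controls the exact power of $d$ in the main term. The remainder is routine bookkeeping verifying that the stated conditions on $K$ make all invoked high-probability events hold simultaneously.
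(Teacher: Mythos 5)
Your proposal follows essentially the same route as the paper's proof: reduce via non-expansiveness of the truncation and of $\max\{1,\cdot\}$ together with $|x^2-y^2|\le 2H|x-y|$ to the two linear-prediction errors, split each into a $\lambda$-bias term (controlled by the matrix-concentration bound giving $\|\bar\Sigma_h^{-1}\|\lesssim 1/(\kappa K)$) and a self-normalized martingale term, and handle the data-dependence of $\widehat V_{h+1}$ by a union bound over an $\epsilon$-net of the output value-function class (the paper invokes Lemma~B.3 of \cite{jin2021pessimism} for the covering number, with $\epsilon = d\sqrt{\lambda}/K$). The decomposition, the key lemmas, and the resolution of the main obstacle all match; the proposal is correct.
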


\begin{proof}
	\textbf{Step1:} we first show for all $h,s,a\in[H]\times\mathcal{S}\times\mathcal{A}$, with probability $1-\delta$
	\[
	\left|\langle\phi(s, a), \bar{{\beta}}_{h}\rangle_{\left[0,(H-h+1)^{2}\right]}-{\P}_{h}(\widehat{V}_{h+1})^{2}(s, a)\right|\leq 12\sqrt{\frac{H^4d^3}{\kappa K} \log \left(\frac{(\lambda+K)2KdH^2}{\lambda\delta}\right)}+4\lambda\frac{H^2\sqrt{d}}{\kappa K}.
	\]
	
	\textbf{Proof of Step1.}  Note
	{\small
		\begin{align*}
		&\left|\langle\phi(s, a), \bar{{\beta}}_{h}\rangle_{\left[0,(H-h+1)^{2}\right]}-{\P}_{h}(\widehat{V}_{h+1})^{2}(s, a)\right| \leq \left|\langle\phi(s, a), \bar{{\beta}}_{h}\rangle-{\P}_{h}(\widehat{V}_{h+1})^{2}(s, a)\right|\\
		=&\left|\phi(s,a)^\top\bar{\Sigma}_h^{-1}\sum_{\tau=1}^K\phi(\bar{s}_h^\tau,\bar{a}_h^\tau)\cdot \widehat{V}_{h+1}(\bar{s}_{h+1}^\tau)^2-{\P}_{h}(\widehat{V}_{h+1})^{2}(s, a)\right|\\
		=&\left|\phi(s,a)^\top\bar{\Sigma}_h^{-1}\sum_{\tau=1}^K\phi(\bar{s}_h^\tau,\bar{a}_h^\tau)\cdot \widehat{V}_{h+1}(\bar{s}_{h+1}^\tau)^2-\phi(s,a)^\top\int_\mathcal{S}(\widehat{V}_{h+1})^2(s')d\nu_h(s')\right|\\
		=&\left|\phi(s,a)^\top\bar{\Sigma}_h^{-1}\sum_{\tau=1}^K\phi(\bar{s}_h^\tau,\bar{a}_h^\tau)\cdot \widehat{V}_{h+1}(\bar{s}_{h+1}^\tau)^2-\phi(s,a)^\top\bar{\Sigma}_h^{-1}(\sum_{\tau=1}^K\phi(\bar{s}_h^\tau,\bar{a}_h^\tau)\phi(\bar{s}_h^\tau,\bar{a}_h^\tau)^\top+\lambda I)\int_\mathcal{S}(\widehat{V}_{h+1})^2(s')d\nu_h(s')\right|\\
		\leq &\underbrace{\left|\phi(s,a)^\top\bar{\Sigma}_h^{-1}\sum_{\tau=1}^K\phi(\bar{s}_h^\tau,\bar{a}_h^\tau)\cdot\left(\widehat{V}_{h+1}(\bar{s}_{h+1}^\tau)^2-{\P}_{h}(\widehat{V}_{h+1})^{2}(\bar{s}^\tau_h,\bar{a}^\tau_h)\right)\right|}_{\circled{1}}+\underbrace{\lambda\left|\phi(s,a)^\top\bar{\Sigma}_h^{-1}\int_\mathcal{S}(\widehat{V}_{h+1})^2(s')d\nu_h(s')\right|}_{\circled{2}}
		\end{align*}
	}
	
	For $\circled{2}$, since $K \geq \max \left\{512(1/\kappa)^{2} \log \left(\frac{4 Hd}{\delta}\right), 4 \lambda/\kappa\right\}$,
	by Lemma~\ref{lem:sqrt_K_reduction} and a union bound over $h\in[H]$, with probability $1-\delta$ for all $h,s,a\in[H]\times\mathcal{S}\times\mathcal{A}$,
	\begin{equation}\label{eqn:circle_2}
	\begin{aligned}
	\circled{2}\leq&\lambda \norm{\phi(s,a)}_{\bar{\Sigma}^{-1}_h}\norm{\int_\mathcal{S}(\widehat{V}_{h+1})^2(s')d\nu_h(s')}_{\bar{\Sigma}^{-1}_h}\\
	\leq &\lambda \frac{2}{\sqrt{K}}\norm{\phi(s,a)}_{({\Sigma}^{p}_h)^{-1}}\frac{2}{\sqrt{K}}\norm{\int_\mathcal{S}(\widehat{V}_{h+1})^2(s')d\nu_h(s')}_{({\Sigma}^{p}_h)^{-1}}\leq 4\lambda \norm{({\Sigma}^{p}_h)^{-1}}\frac{H^2\sqrt{d}}{K}\leq 4\lambda\frac{H^2\sqrt{d}}{\kappa K}.
	\end{aligned}
	\end{equation}
	
	For $\circled{1}$, we have
	\begin{equation}\label{eqn:cir_1}
	\circled{1}\leq  \norm{\phi(s,a)}_{\bar{\Sigma}^{-1}_h}\norm{\sum_{\tau=1}^K\phi(\bar{s}_h^\tau,\bar{a}_h^\tau)\cdot\left(\widehat{V}_{h+1}(\bar{s}_{h+1}^\tau)^2-{\P}_{h}(\widehat{V}_{h+1})^{2}(\bar{s}^\tau_h,\bar{a}^\tau_h)\right)}_{\bar{\Sigma}^{-1}_h}
	\end{equation}
	\textbf{Bounding using covering.} Note for any fix ${V}_{h+1}$, we can define $x_\tau = \phi(\bar{s}_h^\tau,\bar{a}_h^\tau)$ ($\norm{\phi}_2\leq1$) and $\eta_\tau={V}_{h+1}(\bar{s}_{h+1}^\tau)^2-{\P}_{h}({V}_{h+1})^{2}(\bar{s}^\tau_h,\bar{a}^\tau_h)$ is $H^2$-subgaussian, by Lemma~\ref{lem:Hoeff_mart} (where $t=K$ and $L=1$) with probability $1-\delta$,
	{\small
		\[
		\norm{\sum_{\tau=1}^K\phi(\bar{s}_h^\tau,\bar{a}_h^\tau)\cdot\left({V}_{h+1}(\bar{s}_{h+1}^\tau)^2-{\P}_{h}({V}_{h+1})^{2}(\bar{s}^\tau_h,\bar{a}^\tau_h)\right)}_{\bar{\Sigma}^{-1}_h}\leq \sqrt{8 H^{4}\cdot\frac{d}{2} \log \left(\frac{\lambda+K}{\lambda\delta}\right)}
		\]}
	
	let $\mathcal{N}_h(\epsilon)$ be the minimal $\epsilon$-cover (with respect the supremum norm) of $\mathcal{V}_h:=\{V_h:V_h(\cdot)=\max _{a \in \mathcal{A}}\left\{\min \{\phi(s, a)^{\top} \theta-C_1\sqrt{d\cdot \phi(\cdot, \cdot)^{\top} \widehat{\Lambda}_{h}^{-1} \phi(\cdot, \cdot)}-C_2, H-h+1\}^{+}\}\right\}$ . That is, for any $V\in\mathcal{V}_h$, there exists a value function $V'\in\mathcal{N}_h(\epsilon)$ such that $\sup_{s\in\mathcal{S}}|V(s)-V'(s)|<\epsilon$. Now by a union bound, we obtain with probability $1-\delta$
	{\small
	\[
	\sup_{V_{h+1}\in\mathcal{N}_{h+1}(\epsilon)}\norm{\sum_{\tau=1}^K\phi(\bar{s}_h^\tau,\bar{a}_h^\tau)\cdot\left({V}_{h+1}(\bar{s}_{h+1}^\tau)^2-{\P}_{h}({V}_{h+1})^{2}(\bar{s}^\tau_h,\bar{a}^\tau_h)\right)}_{\bar{\Sigma}^{-1}_h} \leq \sqrt{8 H^{4}\cdot\frac{d}{2} \log \left(\frac{\lambda+K}{\lambda\delta}|\mathcal{N}_{h+1}(\epsilon)|\right)}
	\]
	}which implies
	\begin{align*}
	&\norm{\sum_{\tau=1}^K\phi(\bar{s}_h^\tau,\bar{a}_h^\tau)\cdot\left(\widehat{V}_{h+1}(\bar{s}_{h+1}^\tau)^2-{\P}_{h}(\widehat{V}_{h+1})^{2}(\bar{s}^\tau_h,\bar{a}^\tau_h)\right)}_{\bar{\Sigma}^{-1}_h}\\
	 \leq &\sqrt{8 H^{4}\cdot\frac{d}{2} \log \left(\frac{\lambda+K}{\lambda\delta}|\mathcal{N}_{h+1}(\epsilon)|\right)}+4H^2\sqrt{\epsilon^2K^2/\lambda}
	\end{align*}	
	choosing $\epsilon=d\sqrt{\lambda}/K$, applying Lemma~B.3 of \cite{jin2021pessimism}\footnote{Note the same result in \cite{jin2021pessimism} applies even though we have an extra constant $C_2$.} to the covering number $\mathcal{N}_{h+1}(\epsilon)$ w.r.t. $\mathcal{V}_{h+1}$, we can further bound above by
	\begin{align*}
	\leq &\sqrt{8 H^{4}\cdot\frac{d^3}{2} \log \left(\frac{\lambda+K}{\lambda\delta}2dH K\right)}+4H^2\sqrt{d^2}\leq 6\sqrt{ H^{4}\cdot d^3 \log \left(\frac{\lambda+K}{\lambda\delta}2dH K\right)}
	\end{align*}

	 Apply a union bound for $h\in[H]$, we have with probability $1-\delta$, for all $h\in[H]$,
	\begin{equation}\label{eqn:apply_h_bound}
	\norm{\sum_{\tau=1}^K\phi(\bar{s}_h^\tau,\bar{a}_h^\tau)\cdot\left(\widehat{V}_{h+1}(\bar{s}_{h+1}^\tau)^2-{\P}_{h}(\widehat{V}_{h+1})^{2}(\bar{s}^\tau_h,\bar{a}^\tau_h)\right)}_{\bar{\Sigma}^{-1}_h}\leq 6\sqrt{{H^4d^3} \log \left(\frac{(\lambda+K)2KdH^2}{\lambda\delta}\right)}
	\end{equation}
	and similar to $\circled{2}$, with probability $1-\delta$ for all $h,s,a\in[H]\times\mathcal{S}\times\mathcal{A}$,
	\begin{equation}\label{eqn1}
	\norm{\phi(s,a)}_{\bar{\Sigma}^{-1}_h}\leq \frac{2\norm{({\Sigma}^{p}_h)^{-1}}^{1/2}}{\sqrt{ K}}\leq \frac{2}{\sqrt{\kappa K}}.
	\end{equation}
	Combing \eqref{eqn:circle_2}, \eqref{eqn:cir_1}, \eqref{eqn:apply_h_bound} and \eqref{eqn1} we obtain with probability $1-\delta$ for all $h,s,a\in[H]\times\mathcal{S}\times\mathcal{A}$,
	\[
	\left|\langle\phi(s, a), \bar{{\beta}}_{h}\rangle_{\left[0,(H-h+1)^{2}\right]}-{\P}_{h}(\widehat{V}_{h+1})^{2}(s, a)\right|\leq 12\sqrt{\frac{H^4d^3}{\kappa K} \log \left(\frac{(\lambda+K)2KdH^2}{\lambda\delta}\right)}+4\lambda\frac{H^2\sqrt{d}}{\kappa K}.
	\]

	\textbf{Step2:} we show for all $h,s,a\in[H]\times\mathcal{S}\times\mathcal{A}$, with probability $1-\delta$
	\begin{equation}\label{eqn2}
	\left|\langle\phi(s, a), \bar{{\theta}}_{h}\rangle_{\left[0,H-h+1\right]}-{\P}_{h}(\widehat{V}_{h+1})(s, a)\right|\leq 12\sqrt{\frac{H^2d^3}{\kappa K} \log \left(\frac{(\lambda+K)2KdH^2}{\lambda\delta}\right)}+4\lambda\frac{H\sqrt{d}}{\kappa K}.
	\end{equation}
	
	The proof of Step2 follows nearly the identical way as Step1 except $\widehat{V}^2_{h}$ is replaced by $\widehat{V}_h$.
	
	\textbf{Step3:} We prove $\sup_h\lvert\lvert \widehat{\sigma}^2_h-\sigma^{ 2}_{\widehat{V}_h}\rvert\rvert_\infty\leq 36\sqrt{\frac{H^4d^3}{\kappa K} \log \left(\frac{(\lambda+K)2KdH^2}{\lambda\delta}\right)}+12\lambda\frac{H^2\sqrt{d}}{\kappa K}$.
	
	\textbf{Proof of Step3.} By \eqref{eqn2}, 
	\begin{align*}
	&\left|\big[\big\langle\phi(\cdot, \cdot), \bar{{\theta}}_{h}\big\rangle_{[0, H-h+1]}\big]^{2}-\big[{\P}_{h}(\widehat{V}_{h+1})(s, a)\big]^2\right|\\
	=&\left|\langle\phi(s, a), \bar{{\theta}}_{h}\rangle_{\left[0,H-h+1\right]}+{\P}_{h}(\widehat{V}_{h+1})(s, a)\right|\cdot \left|\langle\phi(s, a), \bar{{\theta}}_{h}\rangle_{\left[0,H-h+1\right]}-{\P}_{h}(\widehat{V}_{h+1})(s, a)\right|\\
	\leq&2H\cdot \left|\langle\phi(s, a), \bar{{\theta}}_{h}\rangle_{\left[0,H-h+1\right]}-{\P}_{h}(\widehat{V}_{h+1})(s, a)\right|\leq 24\sqrt{\frac{H^4d^3}{\kappa K} \log \left(\frac{(\lambda+K)2KdH^2}{\lambda\delta}\right)}+8\lambda\frac{H^2\sqrt{d}}{\kappa K}.\\
	\end{align*}
	Combining this with Step1 we receive $\forall h,s,a\in[H]\times\mathcal{S}\times\mathcal{A}$, with probability $1-\delta$
	\[
	\bigg|\widehat{\Var}_{h} \widehat{V}_{h+1}(s,a)-{\Var}_{P_h}\widehat{V}_{h+1}(s,a)\bigg|\leq 36\sqrt{\frac{H^4d^3}{\kappa K} \log \left(\frac{(\lambda+K)2KdH^2}{\lambda\delta}\right)}+12\lambda\frac{H^2\sqrt{d}}{\kappa K}.
	\]
	Finally, by the non-expansiveness of operator $\max\{1,\cdot\}$, we have the stated result.
	
\end{proof}

\subsubsection{A crude bound on $\sup_h\lvert\lvert V^\star_h-\widehat{V}_h\rvert\rvert_\infty$.}

\begin{lemma}\label{lem:crude_bound}
	
	Define $\widehat{\sigma}_{h}(s, a)=\sqrt{\max \left\{1, \widehat{\Var}_{P_h} \widehat{V}_{h+1}(s, a)\right\}+1}$, if $K \geq \max \{\mathcal{M}_1,\mathcal{M}_2,\mathcal{M}_3,\mathcal{M}_4\}$ and $K>C\cdot H^4\kappa^2$, then with probability at least $1-\delta$, 
	\[
	\sup_h\norm{V^\star_h-\widehat{V}_h}_\infty\leq \widetilde{O}\left(\frac{H^2\sqrt{d}}{\sqrt{\kappa K}}\right).
	\]
\end{lemma}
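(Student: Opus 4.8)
The plan is to prove this by a backward induction on $h$, combining the value-difference decomposition with a uniform $\ell_\infty$ bound on the pessimistic bonus. Two structural facts from pessimism drive everything. First, on the $1-\delta$ event of Lemma~\ref{lem:key_lemma} (valid under $K>\max\{\mathcal{M}_1,\dots,\mathcal{M}_4\}$, after converting $\Lambda_h^{-1}$ to $\widehat{\Lambda}_h^{-1}$ by Lemma~\ref{lem:LambdaHat_to_Lambda} and taking the constant $C$ in $\Gamma_h$ large enough to absorb the polylog factors) we have $|(\mathcal{T}_h\widehat{V}_{h+1}-\widehat{\mathcal{T}}_h\widehat{V}_{h+1})(s,a)|\le\Gamma_h(s,a)$ for all $s,a,h$, so Lemma~\ref{lem:bound_by_bouns} gives $0\le\zeta_h(s,a)\le 2\Gamma_h(s,a)$. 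Second, an easy induction (using $\bar{Q}_h=\widehat{\mathcal{T}}_h\widehat{V}_{h+1}-\Gamma_h\le\mathcal{T}_h\widehat{V}_{h+1}\le\mathcal{T}_hV^\star_{h+1}=Q^\star_h\le H-h+1$, so that the $\min\{\cdot,H-h+1\}$ truncation is inactive and $\widehat{Q}_h=\max\{\bar{Q}_h,0\}\le Q^\star_h$ since $Q^\star_h\ge 0$) yields the one-sided domination $\widehat{V}_h\le V^\star_h$ for every $h$; hence $\norm{V^\star_h-\widehat{V}_h}_\infty=\sup_s\,(V^\star_h(s)-\widehat{V}_h(s))$.

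Next I would set up the one-step recursion. For any $s$, since $\widehat{V}_h(s)\ge\langle\widehat{Q}_h(s,\cdot),\pi^\star_h(\cdot\mid s)\rangle$,
\[
V^\star_h(s)-\widehat{V}_h(s)\le\E_{a\sim\pi^\star_h(\cdot\mid s)}\big[Q^\star_h(s,a)-\widehat{Q}_h(s,a)\big]=\E_{a\sim\pi^\star_h(\cdot\mid s)}\big[(P_h(V^\star_{h+1}-\widehat{V}_{h+1}))(s,a)+\zeta_h(s,a)\big]\le\norm{V^\star_{h+1}-\widehat{V}_{h+1}}_\infty+2\norm{\Gamma_h}_\infty,
\]
where the equality adds and subtracts $\mathcal{T}_h\widehat{V}_{h+1}$. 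Unrolling from $h$ down to the terminal stage with $V^\star_{H+1}=\widehat{V}_{H+1}=0$ gives $\norm{V^\star_h-\widehat{V}_h}_\infty\le\sum_{t=h}^H 2\norm{\Gamma_t}_\infty$, so it only remains to bound $\norm{\Gamma_t}_\infty$ uniformly in $t$.

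For the bonus bound, I would reuse the matrix-concentration step already carried out inside the proof of Lemma~\ref{lem:LambdaHat_to_Lambda}: under the stated $K$-conditions, $\lambda_{\min}(\widehat{\Lambda}_t)\ge\kappa K/(2H^2)$ with high probability, hence $\sup_{s,a}\sqrt{\phi(s,a)^\top\widehat{\Lambda}_t^{-1}\phi(s,a)}\le\norm{\phi}_2\,\norm{\widehat{\Lambda}_t^{-1}}_2^{1/2}\le\sqrt{2}\,H/\sqrt{\kappa K}$. Plugging this into $\Gamma_t=C\sqrt{d}\,(\phi^\top\widehat{\Lambda}_t^{-1}\phi)^{1/2}+2H^3\sqrt{d}/K$ gives $\norm{\Gamma_t}_\infty\le C\sqrt{2}\,H\sqrt{d}/\sqrt{\kappa K}+2H^3\sqrt{d}/K=\widetilde{O}(H\sqrt{d}/\sqrt{\kappa K})$, the last term being lower order under the stated conditions on $K$. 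Summing over the (at most $H$) stages yields $\sup_h\norm{V^\star_h-\widehat{V}_h}_\infty\le 2H\cdot\widetilde{O}(H\sqrt{d}/\sqrt{\kappa K})=\widetilde{O}(H^2\sqrt{d}/\sqrt{\kappa K})$, which is the claim.

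The step I expect to be the genuine obstacle is not any single estimate but the ordering of the argument: the sharp form of $\Gamma_h$ used above (via Lemma~\ref{lem:key_lemma}) rests on the Bernstein self-normalized bound of Lemma~\ref{lem:self_normalized_bound}, whose variance computation in turn invokes exactly a crude $\ell_\infty$ bound on $\norm{\widehat{V}_{h+1}-V^\star_{h+1}}_\infty$. This apparent circularity is broken by running the whole thing as a single backward induction on $h$: at stage $h$ the only input needed from Lemma~\ref{lem:self_normalized_bound} is $\norm{\widehat{V}_{t}-V^\star_{t}}_\infty$ for $t>h$, which has already been established at the previous stages (the base case $t=H+1$ being trivially $0$), so Lemma~\ref{lem:key_lemma} may legitimately be applied at stage $h$ and the recursion above then propagates the bound to stage $h$. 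Note that Lemma~\ref{lem:bound_variance}, also used inside each induction step, is itself proved by a covering/Hoeffding argument that does not reference the crude bound, so no further circularity is introduced; everything else is bookkeeping (matching the high-probability events across the cited lemmas, and checking the $O(H^3\sqrt{d}/K)$ tail is dominated).
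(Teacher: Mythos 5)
Your proof is correct and reaches the stated bound from the same two key estimates the paper uses ($0\leq\zeta_h\leq 2\Gamma_h$ from Lemma~\ref{lem:bound_by_bouns}, and $\sup_{s,a}\norm{\phi(s,a)}_{\widehat{\Lambda}_h^{-1}}\lesssim H/\sqrt{\kappa K}$ from Lemma~\ref{lem:sqrt_K_reduction}), but the assembly differs in two respects. First, the paper routes through $V^{\widehat{\pi}}_h$ as an intermediary: its Step~1 bounds $V^\star_h-V^{\widehat{\pi}}_h$ by instantiating \eqref{eqn:h_bound} at $\pi=\pi^\star$, its Step~2 bounds $\widehat{V}_h-V^{\widehat{\pi}}_h$ via the extended value-difference lemma with $\pi=\pi'=\widehat{\pi}$, and the triangle inequality finishes; you instead prove the one-sided domination $\widehat{V}_h\leq V^\star_h$ and run a single backward recursion $\norm{V^\star_h-\widehat{V}_h}_\infty\leq\norm{V^\star_{h+1}-\widehat{V}_{h+1}}_\infty+2\norm{\Gamma_h}_\infty$, which avoids the detour and is a touch more economical. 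Second, and more substantively, the circularity with Lemma~\ref{lem:self_normalized_bound}: the paper acknowledges it only in a footnote and proposes to break it by first proving an even cruder $1/\sqrt{K}$ bound that bypasses that lemma and then bootstrapping; you break it by noting that the variance computation at step $h$ (through Lemma~\ref{lem:sigma_to_optimal} and the crude bound) only consumes $\norm{\widehat{V}_t-V^\star_t}_\infty$ for $t\geq h+1$, so a single backward induction over $h$ suffices. Your resolution is arguably cleaner and more explicit than the paper's; the only care needed is to take the union bound over all $H$ stages' high-probability events up front rather than per stage. One cosmetic remark: the conversion via Lemma~\ref{lem:LambdaHat_to_Lambda} is not actually needed to certify $|(\mathcal{T}_h\widehat{V}_{h+1}-\widehat{\mathcal{T}}_h\widehat{V}_{h+1})|\leq\Gamma_h$, since the decomposition \eqref{eqn:term_2} already produces $\sqrt{\phi^\top\widehat{\Lambda}_h^{-1}\phi}$, which is exactly the quantity appearing in the algorithm's bonus; this does not affect anything downstream.
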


\begin{proof}
	\textbf{Step1:} We show with probability at least $1-\delta$, $\sup_h\norm{V^\star_h-{V}^{\widehat{\pi}}_h}_\infty\leq \widetilde{O}\left(\frac{H^2\sqrt{d}}{\sqrt{\kappa K}}\right)$.
	
	Indeed, combing Lemma~\ref{lem:bound_by_bouns} and Lemma~\ref{lem:key_lemma}, similar to the proof of Theorem~\ref{thm:first_part}, we directly have with probability $1-\delta$, for all policy $\pi$ simultaneously, and for all $s\in\mathcal{S}$, $h\in[H]$
	\begin{equation}\label{eqn:h_bound}
	V^\pi_h(s)-V^{\widehat{\pi}}_h(s)\leq \widetilde{O}\left(\sqrt{d}\cdot\sum_{t=h}^H\mathbb{E}_{\pi}\left[\left(\phi(\cdot, \cdot)^{\top} \Lambda_{t}^{-1} \phi(\cdot, \cdot)\right)^{1 / 2}\middle|s_h=s\right]\right)+\frac{2 H^4\sqrt{d}}{K},
	\end{equation}
	Next, since $K \geq \max \left\{512(1/\kappa)^{2} \log \left(\frac{4 Hd}{\delta}\right), 4 \lambda/\kappa\right\}$,
	by Lemma~\ref{lem:sqrt_K_reduction} and a union bound over $h\in[H]$, with probability $1-\delta$
	\begin{align*}
	\sup_{s,a}\norm{\phi(s,a)}_{\widehat{\Lambda}^{-1}_h}
	\leq \frac{2}{\sqrt{K}}\sup_{s,a}\norm{\phi(s,a)}_{{\Lambda}^{p-1}_h}\leq \frac{2H}{\sqrt{\kappa K}}, \;\;\forall h\in[H]. 
	\end{align*}
	Lastly, taking $\pi=\pi^\star$ in \eqref{eqn:h_bound} to obtain
	\begin{equation}\label{eqn:statistical_rate}
	\begin{aligned}
	0\leq V^{\pi^\star}_h(s)-V^{\widehat{\pi}}_h(s)\leq &\widetilde{O}\left(\sqrt{d}\cdot\sum_{t=h}^H\mathbb{E}_{\pi^\star}\left[\left(\phi(\cdot, \cdot)^{\top} \Lambda_{t}^{-1} \phi(\cdot, \cdot)\right)^{1 / 2}\middle|s_h=s\right]\right)+\frac{2 H^4\sqrt{d}}{K}\\
	\leq&\widetilde{O}\left(\frac{H^2\sqrt{d}}{\sqrt{\kappa K}}\right)+\frac{2 H^4\sqrt{d}}{K}.
	\end{aligned}
	\end{equation}
	This implies by using the condition $K>C\cdot H^4\kappa^2$, we finish the proof of Step1.
	
	\textbf{Step2:} We show with probability $1-\delta$, $\sup_h\norm{\widehat{V}_h-{V}^{\widehat{\pi}}_h}_\infty\leq \widetilde{O}\left(\frac{H^2\sqrt{d}}{\sqrt{\kappa K}}\right)$.
	
	Indeed, applying Extended Value Difference Lemma~\ref{lem:evd} for $\pi=\pi'=\widehat{\pi}$, then with probability $1-\delta$, for all $s,h$
	\begin{align*}
	&\left|\widehat{V}_h(s)-V^{\widehat{\pi}}_h(s)\right|=\left|\sum_{t=h}^H \E_{\widehat{\pi}}\left[\widehat{Q}_h(s_h,a_h)-\left(\mathcal{T}_h\widehat{V}_{h+1}\right)(s_h,a_h) \middle | s_h=s\right]\right|\\
	\leq&\sum_{t=h}^H  \norm{(\widehat{\mathcal{T}}_h\widehat{V}_{h+1}-\mathcal{T}_h\widehat{V}_{h+1})(s,a)}+\norm{\Gamma_h(s,a)}\\
	\leq& \widetilde{O}\left(H\sqrt{d}\norm{\sqrt{\phi(s,a) {\Lambda}_h^{-1}\phi(s,a)}}\right)+\frac{4 H^4\sqrt{d}}{K}\leq \widetilde{O}\left(\frac{H^2\sqrt{d}}{\sqrt{\kappa K}}\right)\\
	\end{align*}
	where the second inequality uses Lemma~\ref{lem:key_lemma}\footnote{To be absolutely rigorous, we cannot directly apply Lemma~\ref{lem:key_lemma} here since the crude bound has already been used in Lemma~\ref{lem:self_normalized_bound}. However, this can be resolved completely by first deriving an even cruder bound for $\sup_h\lvert\lvert V^\star_h-\widehat{V}_h\rvert\rvert_\infty$ that has $1/\sqrt{K}$ rate without using Lemma~C.5 (which we call it Lemma~C.8$^*$), and we can use Lemma~C.8$^*$ to show a similar result Lemma~C.5$^*$. Finally, we can use Lemma~C.5$^*$ here to finish the proof of this Lemma~\ref{lem:crude_bound}. However, we avoid explicitly doing this to prevent over-technicality.} and the last inequality follows the same procedure as Step1.
	
	\textbf{Step3:} Combine Step1 and Step2, by triangular inequality and a union bound we finish the proof of the lemma.
	
\end{proof}

\begin{remark}\label{remark:crude}
Note as an intermediate calculation, \eqref{eqn:statistical_rate} ensures a learning bound with order $\widetilde{O}(\frac{H^2\sqrt{d}}{\sqrt{\kappa K}})$. Here, the convergence rate is the standard statistical rate $\frac{1}{\sqrt{K}}$ and the $H^2$ dependence is loose. However, the feature dependence $\sqrt{d/\kappa}$ is roughly tight, since, in the well-explored case (Assumption~2 of \cite{wang2021statistical}), $\kappa=1/d$ and the $\sqrt{d/\kappa} = \sqrt{d^2}$ recovers the optimal feature dependence $dH\sqrt{T}$ in the online setting \citep{zhou2021nearly}. If $\kappa\ll 1/d$, then doing offline learning requires sample size proportional to $d/\kappa$, which reveals offline RL is harder when the exploration of behavior policy is insufficient. When $\kappa=0$, learning the optimal policy accurately cannot be guaranteed even if the sample/episode size $K\rightarrow\infty$. 
\end{remark}

\subsection{Proof of the second part of Theorem~\ref{thm:main}}\label{sec:proof_second}

\begin{lemma}\label{lem:sigma_to_optimal}
	Recall $\widehat{\sigma}_{h}=\sqrt{\max \left\{1, \widehat{\Var}_{P_h} \widehat{V}_{h+1}\right\}+1}$ and  $\sigma^\star_{h}=\sqrt{\max \left\{1, {\Var}_{P_h} {V}^\star_{h+1}\right\}+1}$. Let $K \geq \max \left\{512(1/\kappa)^{2} \log \left(\frac{4 Hd}{\delta}\right), 4 \lambda/\kappa\right\}$ and $K \geq \max \{\mathcal{M}_1,\mathcal{M}_2,\mathcal{M}_3,\mathcal{M}_4\}$, then with probability $1-\delta$,
	\[
	\sup_h\lvert\lvert \widehat{\sigma}^2_h-\sigma^{\star 2}_{h}\rvert\rvert_\infty\leq \widetilde{O}\left(\frac{H^3\sqrt{d}}{\sqrt{\kappa K}}\right).
	\]
\end{lemma}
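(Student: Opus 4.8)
The plan is to route $\widehat{\sigma}^2_h-\sigma^{\star 2}_h$ through the intermediate quantity $\sigma^2_{\widehat{V}_{h+1}}=\max\{1,\Var_{P_h}\widehat{V}_{h+1}\}+1$ and bound the two pieces by the triangle inequality, $\lvert\widehat{\sigma}^2_h-\sigma^{\star 2}_h\rvert\le \lvert\widehat{\sigma}^2_h-\sigma^2_{\widehat{V}_{h+1}}\rvert+\lvert\sigma^2_{\widehat{V}_{h+1}}-\sigma^{\star 2}_h\rvert$. The first term is exactly the estimation error of the variance operator applied to the \emph{fixed} function $\widehat{V}_{h+1}$ through the ridge-regression coefficients $\bar{\beta}_h,\bar{\theta}_h$, so it is controlled by Lemma~\ref{lem:bound_variance}: under the stated sample-size conditions it is at most $\widetilde{O}(\sqrt{H^4d^3/(\kappa K)})$, which (after absorbing constants and polylog factors into $\widetilde{O}$) is within the asserted rate.

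For the second term I would use the identity $\Var_{P_h}(V)=P_h(V^2)-(P_hV)^2$ together with the fact that both $\widehat{V}_{h+1}$ and $V^\star_{h+1}$ take values in $[0,H]$. Writing the difference as $[P_h((\widehat{V}_{h+1})^2-(V^\star_{h+1})^2)]-[(P_h\widehat{V}_{h+1})^2-(P_hV^\star_{h+1})^2]$ and using $\lvert a^2-b^2\rvert=\lvert a-b\rvert\cdot\lvert a+b\rvert\le 2H\lvert a-b\rvert$ for $a,b\in[0,H]$ together with the non-expansiveness of $P_h$ in sup-norm, one gets $\lvert\Var_{P_h}\widehat{V}_{h+1}(s,a)-\Var_{P_h}V^\star_{h+1}(s,a)\rvert\le 4H\,\norm{\widehat{V}_{h+1}-V^\star_{h+1}}_\infty$ for every $(s,a)$. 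Plugging in the crude bound of Lemma~\ref{lem:crude_bound}, $\sup_h\norm{V^\star_h-\widehat{V}_h}_\infty\le\widetilde{O}(H^2\sqrt{d}/\sqrt{\kappa K})$, yields $\lvert\Var_{P_h}\widehat{V}_{h+1}-\Var_{P_h}V^\star_{h+1}\rvert\le\widetilde{O}(H^3\sqrt{d}/\sqrt{\kappa K})$ uniformly in $(s,a)$; since $x\mapsto\max\{1,x\}+1$ is $1$-Lipschitz this transfers verbatim to $\lvert\sigma^2_{\widehat{V}_{h+1}}-\sigma^{\star 2}_h\rvert$. Combining the two pieces and taking a union bound over $h\in[H]$ (the failure probabilities of Lemma~\ref{lem:bound_variance} and Lemma~\ref{lem:crude_bound} are each $\delta$, so rescale $\delta\to\delta/2$) gives the claim.

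The only real subtlety is the dependency chain, not the arithmetic: Lemma~\ref{lem:crude_bound} is itself proved via Lemma~\ref{lem:key_lemma} and hence Lemma~\ref{lem:self_normalized_bound}, whose variance bound invokes (a form of) this very lemma, so a naive citation would be circular. To be rigorous I would first establish a weaker ``$1/\sqrt{K}$-type'' crude bound on $\sup_h\norm{V^\star_h-\widehat{V}_h}_\infty$ that uses only Hoeffding-type self-normalized concentration and no variance-to-optimal comparison (the Lemma~C.8$^\ast$/C.5$^\ast$ route flagged in the footnote of Lemma~\ref{lem:crude_bound}), use that weaker bound in place of Lemma~\ref{lem:crude_bound} in the step above, and only afterwards feed the resulting $\sigma$-comparison back in to sharpen the downstream Bernstein argument. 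Beyond that bootstrapping, the truncation bookkeeping and the polylog factors absorbed into $\widetilde{O}$ are entirely routine.
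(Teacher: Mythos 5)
Your proposal is correct and follows essentially the same route as the paper's proof: the same split through the intermediate quantity $\sigma^2_{\widehat{V}_{h+1}}$, the same $\Var = P_h(V^2)-(P_hV)^2$ decomposition giving a $4H\,\norm{\widehat{V}_{h+1}-V^\star_{h+1}}_\infty$ bound, and the same invocation of Lemma~\ref{lem:bound_variance} and Lemma~\ref{lem:crude_bound}. Your explicit treatment of the circularity through Lemma~\ref{lem:crude_bound} is the same bootstrapping fix the paper itself flags in a footnote, so there is no substantive difference.
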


\begin{proof}
	By definition and the non-expansiveness of $\max\{1,\cdot\}+1$, we have 
	\begin{align*}
	&\norm{{\sigma}^2_{\widehat{V}_{h+1}}-{\sigma}^{\star 2}_h}_\infty\leq \norm{\mathrm{Var}\widehat{V}_{h+1}-\mathrm{Var}{V}^\star_{h+1}}_\infty\\
	\leq&\norm{\P_h\left(\widehat{V}^2_{h+1}-{V}^{\star 2}_{h+1}\right)}_\infty+\norm{(\P_h\widehat{V}_{h+1})^2-(\P_h {V}^\star_{h+1})^{ 2}}_\infty\\
	\leq&\norm{\widehat{V}^2_{h+1}-{V}^{\star 2}_{h+1}}_\infty+\norm{(\P_h\widehat{V}_{h+1}+\P_h {V}^\star_{h+1})(\P_h\widehat{V}_{h+1}-\P_h {V}^\star_{h+1})}_\infty\\
	\leq&2H\norm{\widehat{V}_{h+1}-{V}^{\star }_{h+1}}_\infty+2H\norm{\P_h\widehat{V}_{h+1}-\P_h {V}^\star_{h+1}}_\infty\leq \widetilde{O}\left(\frac{H^3\sqrt{d}}{\sqrt{\kappa K}}\right).
	\end{align*}
	with probability $1-\delta$ for all $h\in[H]$, where the last inequality comes from Lemma~\ref{lem:crude_bound}. Combining this with Lemma~\ref{lem:bound_variance}, we have the stated result.
\end{proof}

\begin{lemma}\label{lem:Lambda_to_LambdaStar}
	Denote the quantities $C_1=\max\{2\lambda, 128\log(2d/\delta),128H^4\log(2d/\delta)/\kappa^2\}$ and 
	$C_2=\max\{\frac{\lambda^2}{\kappa\log((\lambda+K)H/\lambda\delta)}, 96^2H^{12}d\log((\lambda+K)H/\lambda\delta)/\kappa^5\}$. Suppose the number of episode $K$ satisfies $K>\max\{C_1,C_2\}$, then with probability $1-\delta$,
	\[
	\sqrt{\phi(s,a){\Lambda}_h^{-1}\phi(s,a)}\leq 2\sqrt{\phi(s,a) {\Lambda}_h^{\star-1}\phi(s,a)},\quad \forall s,a\in\mathcal{S}\times\mathcal{A},
	\]
\end{lemma}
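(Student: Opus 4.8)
The plan is to mirror the argument of Lemma~\ref{lem:LambdaHat_to_Lambda} almost verbatim, with the pair $(\widehat{\Lambda}_h,\Lambda_h)$ replaced by $(\Lambda_h,\Lambda_h^\star)$ and the variance gap $\|\widehat{\sigma}_h^2-\sigma^2_{\widehat V_{h+1}}\|_\infty$ replaced by $\|\sigma^2_{\widehat V_{h+1}}-\sigma^{\star 2}_h\|_\infty$. First I would normalize: set $\Lambda'_h=\frac1K\Lambda_h$ and $\Lambda_h^{\star\prime}=\frac1K\Lambda_h^\star$, so that the desired inequality is equivalent to $\|\phi(s,a)\|_{(\Lambda'_h)^{-1}}\le 2\,\|\phi(s,a)\|_{(\Lambda_h^{\star\prime})^{-1}}$ for all $(s,a)$, and it suffices to prove this normalized version and then multiply through by $1/\sqrt{K}$.

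The first step is to bound the operator-norm perturbation $\|\Lambda'_h-\Lambda_h^{\star\prime}\|$. Since $\|\phi\|\le1$ and both $\sigma^2_{\widehat V_{h+1}}\ge1$ and $\sigma^{\star2}_h\ge1$,
\[
\|\Lambda'_h-\Lambda_h^{\star\prime}\|\le \sup_{s,a}\Big|\tfrac{\sigma^{\star2}_h(s,a)-\sigma^2_{\widehat V_{h+1}}(s,a)}{\sigma^2_{\widehat V_{h+1}}(s,a)\,\sigma^{\star2}_h(s,a)}\Big|\le \big\|\sigma^2_{\widehat V_{h+1}}-\sigma^{\star2}_h\big\|_\infty\le \widetilde{O}\!\Big(\tfrac{H^3\sqrt d}{\sqrt{\kappa K}}\Big),
\]
where the last bound is precisely the intermediate estimate established inside the proof of Lemma~\ref{lem:sigma_to_optimal} (which itself follows from the crude bound Lemma~\ref{lem:crude_bound}); this is why the hypotheses $K\ge\max\{\mathcal M_1,\mathcal M_2,\mathcal M_3,\mathcal M_4\}$, in force whenever this lemma is applied, are needed in addition to $K>\max\{C_1,C_2\}$.

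The second step is to control the spectra of $\Lambda'_h$ and $\Lambda_h^{\star\prime}$ uniformly. Applying the matrix concentration Lemma~\ref{lem:matrix_con1} — with feature vector $\phi/\sigma_{\widehat V_{h+1}}$ for $\Lambda'_h$ and $\phi/\sigma^\star_h$ for $\Lambda_h^{\star\prime}$, both bounded by $1$ — together with Weyl's inequality and the eigenvalue bound $\lambda_{\min}\big(\E_{\mu,h}[\phi\phi^\top/\sigma^2]\big)\ge\kappa/H^2$, shows that under $K>\max\{2\lambda,128\log(2d/\delta),128H^4\log(2d/\delta)/\kappa^2\}$ both matrices satisfy $\|\cdot\|\le2$ and $\|(\cdot)^{-1}\|\le 2H^2/\kappa$ on a single high-probability event; this is identical to the corresponding step in Lemma~\ref{lem:LambdaHat_to_Lambda}. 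The third step then invokes the quadratic-form perturbation Lemma~\ref{lem:convert_var} for $\Lambda'_h$ and $\Lambda_h^{\star\prime}$, giving
\[
\|\phi(s,a)\|_{(\Lambda'_h)^{-1}}\le\Big(1+\sqrt{\|(\Lambda_h^{\star\prime})^{-1}\|\,\|\Lambda_h^{\star\prime}\|\,\|(\Lambda'_h)^{-1}\|\,\|\Lambda'_h-\Lambda_h^{\star\prime}\|}\Big)\,\|\phi(s,a)\|_{(\Lambda_h^{\star\prime})^{-1}};
\]
plugging in the bounds from Step 2 and Step 1 yields a prefactor $1+\sqrt{(8H^4/\kappa^2)\cdot\widetilde{O}(H^3\sqrt d/\sqrt{\kappa K})}$, which is at most $2$ because of the condition $K>\max\{C_1,C_2\}$ (recall $C_2\gtrsim H^{12}d/\kappa^5$; the extra factor of $H$ coming from the $H^3$- rather than $H^2$-variance rate is absorbed by the same $\mathcal M_i$ conditions). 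A union bound over $(s,a)$ closes the normalized statement, and rescaling by $1/\sqrt K$ gives the claim.

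There is essentially no genuine obstacle here — the whole argument is a rerun of Lemma~\ref{lem:LambdaHat_to_Lambda}. The one thing requiring care is that the variance perturbation we now feed in, $\|\sigma^2_{\widehat V_{h+1}}-\sigma^{\star2}_h\|_\infty$ controlled via Lemma~\ref{lem:sigma_to_optimal}/Lemma~\ref{lem:crude_bound}, has a slightly worse horizon dependence than the $\|\widehat\sigma_h^2-\sigma^2_{\widehat V_{h+1}}\|_\infty$ bound used in Lemma~\ref{lem:LambdaHat_to_Lambda}, so one must re-verify that the stated $K$-conditions still push the perturbation prefactor below $2$; they do, with room to spare. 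A secondary caveat is the usual circular-dependency issue — Lemma~\ref{lem:sigma_to_optimal} rests on Lemma~\ref{lem:crude_bound}, which is built from Lemma~\ref{lem:key_lemma} — which is resolved exactly as in the footnote to Lemma~\ref{lem:crude_bound}, by first running an even cruder $O(1/\sqrt K)$-rate bound.
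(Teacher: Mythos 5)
Your proposal is correct and follows essentially the same route as the paper's proof: normalize by $1/K$, bound $\|\Lambda'_h-\Lambda_h^{\star\prime}\|$ via the variance gap from Lemma~\ref{lem:sigma_to_optimal}, control the spectra with Lemma~\ref{lem:matrix_con1} and Weyl's theorem, and conclude with Lemma~\ref{lem:convert_var} under the stated $K$-conditions. The caveats you flag (the worse $H^3$ rate in the variance gap and the circularity resolved via the cruder-bound footnote) are exactly the points the paper also relies on.
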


\begin{proof}[Proof of Lemma~\ref{lem:Lambda_to_LambdaStar}]

	By definition $\sqrt{\phi(s,a){\Lambda}_h^{-1}\phi(s,a)}=\norm{\phi(s,a)}_{{\Lambda}^{-1}_h}$. Then denote 
	\[
	{\Lambda}'_h=\frac{1}{K}{\Lambda}_h,\quad {\Lambda}^{\star '}_h=\frac{1}{K}{\Lambda}^\star_h,
	\]
	where ${\Lambda}_h=\sum_{\tau=1}^K \phi(s^\tau_h,a^\tau_h)^\top \phi(s^\tau_h,a^\tau_h)/\sigma_{{V}^\star_{h+1}}^2(s^\tau_h,a^\tau_h) +\lambda I$. Under the condition of $K$, by Lemma~\ref{lem:sigma_to_optimal}, with probability $1-\delta$
	\begin{equation}\label{eqn:difference_covariance_star}
	\begin{aligned}
	&\norm{{\Lambda}^{\star '}_h-{\Lambda}'_h}\leq \sup_{s,a}\norm{\frac{\phi(s,a)\phi(s,a)^\top}{{\sigma}^{\star 2}_h(s,a)}-\frac{\phi(s,a)\phi(s,a)^\top}{{\sigma}^{2}_{\widehat{V}_{h+1}}(s,a)}}\\
	&\leq \sup_{s,a}\left|\frac{{\sigma}^{\star 2}_h(s,a)-{\sigma}^{2}_{\widehat{V}_{h+1}}(s,a)}{{\sigma}^{\star 2}_h(s,a){\sigma}^{2}_{\widehat{V}_{h+1}}(s,a)}\right|\cdot \norm{\phi(s,a)}^2\leq \sup_{s,a}\left|\frac{{\sigma}^{\star 2}_h(s,a)-{\sigma}^{2}_{\widehat{V}_{h+1}}(s,a)}{1}\right|\cdot 1\\
	&\leq \widetilde{O}\left(\frac{H^3\sqrt{d}}{\sqrt{\kappa K}}\right).
	\end{aligned}
	\end{equation}
	Next by Lemma~\ref{lem:matrix_con1} (with $\phi$ to be $\phi/\sigma_{{V}^\star_{h+1}}$ and $C=1$), it holds with probability $1-\delta$, 
	\vspace{1em}
	\[
	\norm{\Lambda^{\star '}_h-\left(\E_{\mu,h}[\phi(s,a)\phi(s,a)^\top/\sigma^2_{{V}^\star_{h+1}}(s,a)]+\frac{\lambda}{K}I_d\right)}\leq \frac{4 \sqrt{2} }{\sqrt{K}}\left(\log \frac{2 d}{\delta}\right)^{1 / 2}.
	\]
	\vspace{1em}
	Therefore by \emph{Weyl's spectrum theorem} and the condition $K>\max\{2\lambda, 128\log(2d/\delta),128H^4\log(2d/\delta)/\kappa^2\}$, the above implies
	\vspace{1em}
	\begin{align*}
	\norm{\Lambda^{\star '}_h}=&\lambda_{\max}(\Lambda^{\star '}_h)\leq \lambda_{\max}\left(\E_{\mu,h}[\phi(s,a)\phi(s,a)^\top/\sigma^2_{{V}^\star_{h+1}}(s,a)]\right)+\frac{\lambda}{K}+\frac{4 \sqrt{2} }{\sqrt{K}}\left(\log \frac{2 d}{\delta}\right)^{1 / 2}\\
	\leq&\norm{\E_{\mu,h}[\phi(s,a)\phi(s,a)^\top/\sigma^2_{{V}^\star_{h+1}}(s,a)]}+\frac{\lambda}{K}+\frac{4 \sqrt{2} }{\sqrt{K}}\left(\log \frac{2 d}{\delta}\right)^{1 / 2}\\
	\leq&\norm{\phi(s,a)}^2+\frac{\lambda}{K}+\frac{4 \sqrt{2} }{\sqrt{K}}\left(\log \frac{2 d}{\delta}\right)^{1 / 2}\leq 1+\frac{\lambda}{K}+\frac{4 \sqrt{2} }{\sqrt{K}}\left(\log \frac{2 d}{\delta}\right)^{1 / 2}\leq 2,\\
	\lambda_{\min}(\Lambda^{\star '}_h)\geq& \lambda_{\min}\left(\E_{\mu,h}[\phi(s,a)\phi(s,a)^\top/\sigma^2_{{V}^\star_{h+1}}(s,a)]\right)+\frac{\lambda}{K}-\frac{4 \sqrt{2} }{\sqrt{K}}\left(\log \frac{2 d}{\delta}\right)^{1 / 2}\\
	\geq &\lambda_{\min}\left(\E_{\mu,h}[\phi(s,a)\phi(s,a)^\top/\sigma^2_{{V}^\star_{h+1}}(s,a)]\right)-\frac{4 \sqrt{2} }{\sqrt{K}}\left(\log \frac{2 d}{\delta}\right)^{1 / 2}\\
	\geq &\frac{\kappa}{H^2}-\frac{4 \sqrt{2} }{\sqrt{K}}\left(\log \frac{2 d}{\delta}\right)^{1 / 2}\geq \frac{\kappa}{2H^2}.
	\end{align*}
	\vspace{1em}
	Hence with probability $1-\delta$, $\norm{\Lambda^{\star '}_h}\leq 2$ and $\norm{\Lambda^{\star '-1}_h}=1/\lambda_{\min}(\Lambda^{\star '}_h)\leq 2H^2/\kappa$. Similarly, $\norm{{\Lambda}^{'-1}_h}\leq 2H^2/\kappa$ with high probability.

	\vspace{1em}
	Now apply Lemma~\ref{lem:convert_var} to ${\Lambda}^{\star '}_h$ and ${\Lambda}'_h$ and a union bound, we obtain with probability $1-\delta$, for all $s,a$
	\begin{align*}
	\norm{\phi(s,a)}_{{\Lambda}'^{-1}_h}\leq& \left[1+\sqrt{\norm{\Lambda^{\star '-1}_h}\norm{\Lambda^{\star '}_h}\cdot\norm{{\Lambda}'^{-1}_h}\cdot\norm{{\Lambda}^{\star '}_h-\Lambda'_h}}\right]\cdot \norm{\phi(s,a)}_{\Lambda^{\star '-1}_h}\\
	\leq& \left[1+\sqrt{\frac{2H^2}{\kappa}\cdot 1\cdot \frac{2H^2}{\kappa}\cdot\norm{{\Lambda}^{\star '}_h-\Lambda'_h}}\right]\cdot \norm{\phi(s,a)}_{\Lambda^{\star '-1}_h}\\
	\leq&\left[1+\sqrt{\frac{H^4}{\kappa^2}\left[\widetilde{O}\left(\frac{H^3\sqrt{d}}{\sqrt{\kappa K}}\right)\right]}\right]\cdot \norm{\phi(s,a)}_{\Lambda^{\star '-1}_h}\leq 2\norm{\phi(s,a)}_{\Lambda^{\star '-1}_h}\\
	\end{align*}
	where the third inequality uses \eqref{eqn:difference_covariance_star} and the last inequality uses $K>\max\{\frac{\lambda^2}{\kappa\log((\lambda+K)H/\lambda\delta)}, 96^2H^{12}d\log((\lambda+K)H/\lambda\delta)/\kappa^5\}$. The claimed result follows straightforwardly by multiplying $1/\sqrt{K}$ on both sides of the above.
	
\end{proof}

\begin{proof}[Proof of Theorem~\ref{thm:main}] The first part of the theorem has been shown in Theorem~\ref{thm:first_part}. For the second part, apply Theorem~\ref{thm:first_part} with $\pi=\pi^\star$, then with probability $1-\delta$, 
	\[
	v^{\pi^\star}-v^{\widehat{\pi}}\leq \widetilde{O}\left(\sqrt{d}\cdot\sum_{h=1}^H\mathbb{E}_{\pi^\star}\left[\left(\phi(\cdot, \cdot)^{\top} \Lambda_{h}^{-1} \phi(\cdot, \cdot)\right)^{1 / 2}\right]\right)+\frac{2 H^4\sqrt{d}}{K},
	\]
	Now apply Lemma~\ref{lem:Lambda_to_LambdaStar} and a union bound, with probability $1-\delta$,
	\[
	0\leq v^\star-v^{\widehat{\pi}}\leq \widetilde{O}\left(\sqrt{d}\cdot\sum_{h=1}^H\mathbb{E}_{\pi^\star}\left[\left(\phi(\cdot, \cdot)^{\top} \Lambda_{h}^{\star-1} \phi(\cdot, \cdot)\right)^{1 / 2}\right]\right)+\frac{2 H^4\sqrt{d}}{K}.
	\]
	
\end{proof}

\section{Proof of Theorem~\ref{thm:main_improved}}\label{sec:proof_VAPVI-I}

First of all, we show the following lemma.

\begin{lemma}\label{lem:Gamma_I}
	Suppose $K>\max\{\mathcal{M}_1,\mathcal{M}_2,\mathcal{M}_3,\mathcal{M}_4\}$. Plug 
	\[
	\Gamma_h^I(s,a)\leftarrow \phi(s, a)^{\top} \left|\widehat{\Lambda}_{h}^{-1}\sum_{\tau=1}^{K} \frac{\phi\left(s_{h}^{\tau}, a_{h}^{\tau}\right) \cdot\left(r_{h}^{\tau}+\widehat{V}_{h+1}\left(s_{h+1}^{\tau}\right)-\left(\widehat{\mathcal{T}}_{h} \widehat{V}_{h+1}\right)\left(s_{h}^{\tau}, a_{h}^{\tau}\right)\right)}{\widehat{\sigma}^2_h(s^\tau_h,a^\tau_h)}\right|+\widetilde{O}(\frac{H^3d/\kappa}{K})
	\] in Algorithm~\ref{alg:VAPVI} and let $\mathcal{T}_h$ be the Bellman operator and $\widehat{\mathcal{T}}_h$ be the approximated Bellman operator. Then we have with probability $1-\delta$:
	\[
	|(\mathcal{T}_h\widehat{V}_{h+1}-\widehat{\mathcal{T}}_h\widehat{V}_{h+1})(s,a)|\leq \Gamma^I_h(s,a),\quad\forall s,a\in\mathcal{S}\times\mathcal{A}.
	\]
\end{lemma}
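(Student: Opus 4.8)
The plan is to revisit the decomposition \eqref{eqn:decompose} of $D_h(s,a):=(\mathcal{T}_h\widehat{V}_{h+1}-\widehat{\mathcal{T}}_h\widehat{V}_{h+1})(s,a)=\phi(s,a)^\top(w_h-\widehat{w}_h)$ into the ridge-bias term $(\mathrm{i})$ and the noise term $(\mathrm{ii})$, but to re-center $(\mathrm{ii})$ around the \emph{estimated} Bellman backup $\widehat{\mathcal{T}}_h\widehat{V}_{h+1}=\phi^\top\widehat{w}_h$ rather than the true one $\mathcal{T}_h\widehat{V}_{h+1}=\phi^\top w_h$ (both are linear in $\phi$ by Lemma~\ref{lem:w_h}). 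Concretely, $r_h^\tau+\widehat{V}_{h+1}(s_{h+1}^\tau)-(\mathcal{T}_h\widehat{V}_{h+1})(s_h^\tau,a_h^\tau)$ equals $\big(r_h^\tau+\widehat{V}_{h+1}(s_{h+1}^\tau)-(\widehat{\mathcal{T}}_h\widehat{V}_{h+1})(s_h^\tau,a_h^\tau)\big)-\phi(s_h^\tau,a_h^\tau)^\top(w_h-\widehat{w}_h)$; substituting this into $(\mathrm{ii})$ and using the normal-equation identity $\sum_\tau\phi(s_h^\tau,a_h^\tau)\phi(s_h^\tau,a_h^\tau)^\top/\widehat{\sigma}_h^2(s_h^\tau,a_h^\tau)=\widehat{\Lambda}_h-\lambda I$ makes a copy of $D_h(s,a)$ reappear on the right-hand side. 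Solving that scalar relation gives the key identity $D_h(s,a)=\tfrac12\big[(\mathrm{i})+A_h(s,a)+\lambda\,\phi(s,a)^\top\widehat{\Lambda}_h^{-1}(w_h-\widehat{w}_h)\big]$, where $A_h(s,a):=\phi(s,a)^\top\widehat{\Lambda}_h^{-1}\sum_\tau\phi(s_h^\tau,a_h^\tau)\big(r_h^\tau+\widehat{V}_{h+1}(s_{h+1}^\tau)-(\widehat{\mathcal{T}}_h\widehat{V}_{h+1})(s_h^\tau,a_h^\tau)\big)/\widehat{\sigma}_h^2(s_h^\tau,a_h^\tau)$ is precisely the data-dependent vector appearing (before the entrywise absolute value) in $\Gamma_h^I$ in \eqref{eqn:VAPVI-I}.

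Next I would bound the three pieces. Writing $A_h(s,a)=\phi(s,a)^\top v$ with $v$ the vector above, the non-negativity assumption $\phi\ge 0$ gives $|A_h(s,a)|=\big|\sum_i\phi_i(s,a)v_i\big|\le\sum_i\phi_i(s,a)|v_i|=\phi(s,a)^\top|v|$, which is exactly the leading term of $\Gamma_h^I$; since this quantity is non-negative, the harmless factor $\tfrac12$ can simply be dropped. Term $(\mathrm{i})$ is at most $2\lambda H^3\sqrt d/(\kappa K)$ by Lemma~\ref{lem:higher_order}. For the regularization residual I would combine the crude estimate $\sup_{s,a}\|\phi(s,a)\|_{\widehat{\Lambda}_h^{-1}}\le 2H/\sqrt{\kappa K}$ (already established inside the proof of Lemma~\ref{lem:crude_bound}) with a bound on $\|w_h-\widehat{w}_h\|_{\widehat{\Lambda}_h^{-1}}$: expanding $w_h-\widehat{w}_h=\lambda\widehat{\Lambda}_h^{-1}w_h+\widehat{\Lambda}_h^{-1}\sum_\tau\phi(s_h^\tau,a_h^\tau)\big(r_h^\tau+\widehat{V}_{h+1}(s_{h+1}^\tau)-(\mathcal{T}_h\widehat{V}_{h+1})(s_h^\tau,a_h^\tau)\big)/\widehat{\sigma}_h^2(s_h^\tau,a_h^\tau)$ and using $\|\widehat{\Lambda}_h^{-1}\|\le 2H^2/(\kappa K)$, $\|w_h\|_2\le 2H\sqrt d$ together with the self-normalized bound of Lemma~\ref{lem:self_normalized_bound} (which gives $\|\sum_\tau x_\tau\eta_\tau\|_{\widehat{\Lambda}_h^{-1}}\le\widetilde O(\sqrt d)$ under $\sqrt d>\xi$) yields $\|w_h-\widehat{w}_h\|_{\widehat{\Lambda}_h^{-1}}\le\widetilde O\!\big(H^2\sqrt d/(\kappa K)\big)$. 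Hence, using $0<\lambda<\kappa<1$, both $(\mathrm{i})$ and the residual are $\widetilde O\!\big(H^3 d/(\kappa K)\big)$, i.e.\ dominated by the higher-order term of $\Gamma_h^I$. Assembling the three estimates and taking a union bound over the events of Lemmas~\ref{lem:higher_order}, \ref{lem:self_normalized_bound}, \ref{lem:crude_bound} (all valid under $K>\max\{\mathcal{M}_1,\mathcal{M}_2,\mathcal{M}_3,\mathcal{M}_4\}$ and $\sqrt d>\xi$) then gives $|D_h(s,a)|\le\Gamma_h^I(s,a)$ for all $(s,a)$ simultaneously.

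The main obstacle is the self-referential step in the first paragraph: after re-centering, $D_h$ reappears inside $(\mathrm{ii})$, and one must recognize that the $\widehat{\Lambda}_h$-reweighted empirical average of $D_h(s_h^\tau,a_h^\tau)$ collapses — via the linearity of \emph{both} Bellman backups and the normal-equation identity — back to $D_h(s,a)$ itself up to a negligible $\lambda$-residual, which is what lets us solve for $D_h(s,a)$. This is precisely the manipulation that makes the dominant part of the resulting bonus equal to $\phi(s,a)^\top$ times a fixed non-negative data-dependent vector, the structural property that later (in the proof of Theorem~\ref{thm:main_improved}) lets $\E_\pi$ be pulled inside the square root and produce the sharper $\|\E_\pi[\phi]\|_{\Lambda_h^{-1}}$ term. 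Everything else is routine bookkeeping with the crude $\ell_2$/$\widehat{\Lambda}_h^{-1}$-norm estimates already developed for Theorem~\ref{thm:main}, so I expect no additional technical difficulty beyond this algebraic identity and the care needed to make all the high-probability events hold jointly.
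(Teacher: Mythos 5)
Your central step fails: the ``self-referential'' manipulation does not let you solve for $D_h(s,a)$. Write $\phi_\tau:=\phi(s_h^\tau,a_h^\tau)$. Since $\sum_\tau \phi_\tau\phi_\tau^\top/\widehat{\sigma}_h^2(s_h^\tau,a_h^\tau)=\widehat{\Lambda}_h-\lambda I$, your substitution indeed turns the noise term into $A_h(s,a)-D_h(s,a)+\lambda\,\phi(s,a)^\top\widehat{\Lambda}_h^{-1}(w_h-\widehat{w}_h)$; but the correct decomposition of $D_h$ is $(\mathrm{i})$ \emph{minus} that noise term, not plus (expand $w_h=\widehat{\Lambda}_h^{-1}\widehat{\Lambda}_h w_h$ to check --- the $+$ sign displayed in \eqref{eqn:decompose} is a typo that is harmless there because only absolute values are ever used). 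With the correct sign, the reappearing copy of $D_h$ enters with coefficient $+1$ on the right-hand side, cancels against the left-hand side, and the relation degenerates to the tautology $(\mathrm{i})=A_h+\lambda\,\phi(s,a)^\top\widehat{\Lambda}_h^{-1}(w_h-\widehat{w}_h)$, i.e.\ a restatement of the ridge normal equations; you learn nothing about $D_h$. You can also see directly that your claimed identity $D_h=\tfrac12[(\mathrm{i})+A_h+\lambda\,\phi^\top\widehat{\Lambda}_h^{-1}(w_h-\widehat{w}_h)]$ cannot be true: the normal equations give $\sum_\tau\phi_\tau\big(r_h^\tau+\widehat{V}_{h+1}(s_{h+1}^\tau)-(\widehat{\mathcal{T}}_h\widehat{V}_{h+1})(s_h^\tau,a_h^\tau)\big)/\widehat{\sigma}_h^2(s_h^\tau,a_h^\tau)=\lambda\widehat{w}_h$ \emph{exactly}, so $A_h=\lambda\,\phi^\top\widehat{\Lambda}_h^{-1}\widehat{w}_h$ and your right-hand side collapses to $\lambda\,\phi^\top\widehat{\Lambda}_h^{-1}w_h$, a deterministic $O(\lambda H^3\sqrt{d}/(\kappa K))$ quantity with no stochastic component --- which cannot equal the Bellman backup error, whose dominant part is the $1/\sqrt{K}$-order self-normalized martingale term controlled in Lemma~\ref{lem:self_normalized_bound}.

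The paper avoids this exactly because it does \emph{not} resolve the correction term algebraically: it keeps the three-term decomposition \eqref{eqn:decomp}, leaves $(\mathrm{ii})$ centered at $\widehat{\mathcal{T}}_h\widehat{V}_{h+1}$ as the leading (absolute-value) piece of $\Gamma_h^I$, and bounds the residual term $(\mathrm{iii})=\phi^\top\widehat{\Lambda}_h^{-1}\sum_\tau\phi_\tau\big((\widehat{\mathcal{T}}_h-\mathcal{T}_h)\widehat{V}_{h+1}\big)(s_h^\tau,a_h^\tau)/\widehat{\sigma}_h^2(s_h^\tau,a_h^\tau)$ by Cauchy--Schwarz, the estimate $\|\phi\|_{\widehat{\Lambda}_h^{-1}}\le 2H/\sqrt{\kappa K}$, and the crude uniform bound $\|\mathcal{T}_h\widehat{V}_{h+1}-\widehat{\mathcal{T}}_h\widehat{V}_{h+1}\|_\infty\le\widetilde{O}(H^2\sqrt{d/\kappa}/\sqrt{K})$ of Lemma~\ref{lem:difference}, absorbing it into the $\widetilde{O}(H^3d/\kappa/K)$ part of $\Gamma_h^I$. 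Your peripheral steps (the $\phi\ge 0$ argument $|\phi^\top v|\le\phi^\top|v|$ for the leading term, Lemma~\ref{lem:higher_order} for term $(\mathrm{i})$, and the estimate of the $\lambda$-residual) are fine and match the paper, but without a valid treatment of the correction term the argument does not go through.
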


\begin{proof}[Proof of Lemma~\ref{lem:Gamma_I}]
 Suppose $w_h$ is the coefficient corresponding to the $\mathcal{T}_{h} \widehat{V}_{h+1}$ (such $w_h$ exists by Lemma~\ref{lem:w_h}), \emph{i.e.} $\mathcal{T}_{h} \widehat{V}_{h+1}=\phi^\top w_h$, and recall $(\widehat{\mathcal{T}}_{h} \widehat{V}_{h+1})(s, a)=\phi(s, a)^{\top}\widehat{w}_{h}$, then:
\begin{equation}\label{eqn:decomp}
\begin{aligned}
&\left(\mathcal{T}_{h} \widehat{V}_{h+1}\right)(s, a)-\left(\widehat{\mathcal{T}}_{h} \widehat{V}_{h+1}\right)(s, a)=\phi(s, a)^{\top}\left(w_{h}-\widehat{w}_{h}\right) \\
=&\phi(s, a)^{\top} w_{h}-\phi(s, a)^{\top} \widehat{\Lambda}_{h}^{-1}\left(\sum_{\tau=1}^{K} \phi\left(s_{h}^{\tau}, a_{h}^{\tau}\right) \cdot\left(r_{h}^{\tau}+\widehat{V}_{h+1}\left(s_{h+1}^{\tau}\right)\right)/\widehat{\sigma}^2_h(s^\tau_h,a^\tau_h)\right) \\
=&\underbrace{\phi(s, a)^{\top} w_{h}-\phi(s, a)^{\top} \widehat{\Lambda}_{h}^{-1}\left(\sum_{\tau=1}^{K} \phi\left(s_{h}^{\tau}, a_{h}^{\tau}\right) \cdot\left(\mathcal{T}_{h} \widehat{V}_{h+1}\right)\left(s_{h}^{\tau}, a_{h}^{\tau}\right)/\widehat{\sigma}^2_h(s^\tau_h,a^\tau_h)\right)}_{(\mathrm{i})}\\
&\qquad+\underbrace{\phi(s, a)^{\top} \widehat{\Lambda}_{h}^{-1}\left(\sum_{\tau=1}^{K} \phi\left(s_{h}^{\tau}, a_{h}^{\tau}\right) \cdot\left(r_{h}^{\tau}+\widehat{V}_{h+1}\left(s_{h+1}^{\tau}\right)-\left(\widehat{\mathcal{T}}_{h} \widehat{V}_{h+1}\right)\left(s_{h}^{\tau}, a_{h}^{\tau}\right)\right)/\widehat{\sigma}^2_h(s^\tau_h,a^\tau_h)\right)}_{(\mathrm{ii})}\\
&\qquad+\underbrace{\phi(s, a)^{\top} \widehat{\Lambda}_{h}^{-1}\left(\sum_{\tau=1}^{K} \phi\left(s_{h}^{\tau}, a_{h}^{\tau}\right) \cdot\left(\left(\widehat{\mathcal{T}}_{h} \widehat{V}_{h+1}\right)\left(s_{h}^{\tau}, a_{h}^{\tau}\right)-\left(\mathcal{T}_{h} \widehat{V}_{h+1}\right)\left(s_{h}^{\tau}, a_{h}^{\tau}\right)\right)/\widehat{\sigma}^2_h(s^\tau_h,a^\tau_h)\right)}_{(\mathrm{iii})}\\
\end{aligned}
\end{equation}

For term (i), by Lemma~\ref{lem:higher_order} it is bounded by $\frac{2\lambda H^3\sqrt{d}/\kappa}{K}$ with probability $1-\delta/2$.\footnote{Note Here Lemma~\ref{lem:higher_order} still applies even if the $\Gamma_h$ changes since it works for all $\widehat{V}_h\in[0,H]$ so that $\norm{w_h}_2\leq 2H\sqrt{d}$ and the truncation (Line~13 in Algorithm~\ref{alg:VAPVI}) guarantees this.} 

For term (ii), it is bounded by 
\[
 \phi(s, a)^{\top} \left|\widehat{\Lambda}_{h}^{-1}\sum_{\tau=1}^{K} \frac{\phi\left(s_{h}^{\tau}, a_{h}^{\tau}\right) \cdot\left(r_{h}^{\tau}+\widehat{V}_{h+1}\left(s_{h+1}^{\tau}\right)-\left(\widehat{\mathcal{T}}_{h} \widehat{V}_{h+1}\right)\left(s_{h}^{\tau}, a_{h}^{\tau}\right)\right)}{\widehat{\sigma}^2_h(s^\tau_h,a^\tau_h)}\right|.
\]

For term (iii), by Cauchy inequality
\begin{align*}
&\phi(s, a)^{\top} \widehat{\Lambda}_{h}^{-1}\left(\sum_{\tau=1}^{K} \phi\left(s_{h}^{\tau}, a_{h}^{\tau}\right) \cdot\left(\left(\widehat{\mathcal{T}}_{h} \widehat{V}_{h+1}\right)\left(s_{h}^{\tau}, a_{h}^{\tau}\right)-\left(\mathcal{T}_{h} \widehat{V}_{h+1}\right)\left(s_{h}^{\tau}, a_{h}^{\tau}\right)\right)/\widehat{\sigma}^2_h(s^\tau_h,a^\tau_h)\right)\\
\leq & \norm{\phi(s,a)}_{\widehat{\Lambda}_{h}^{-1}}\cdot \norm{\sum_{\tau=1}^{K} \phi\left(s_{h}^{\tau}, a_{h}^{\tau}\right) \cdot\left(\left(\widehat{\mathcal{T}}_{h} \widehat{V}_{h+1}\right)\left(s_{h}^{\tau}, a_{h}^{\tau}\right)-\left(\mathcal{T}_{h} \widehat{V}_{h+1}\right)\left(s_{h}^{\tau}, a_{h}^{\tau}\right)\right)/\widehat{\sigma}^2_h(s^\tau_h,a^\tau_h)}_{ \widehat{\Lambda}_{h}^{-1}}\\
\leq&\frac{2H}{\sqrt{\kappa K}}\cdot \norm{\sum_{\tau=1}^{K} \phi\left(s_{h}^{\tau}, a_{h}^{\tau}\right) \cdot\left(\left(\widehat{\mathcal{T}}_{h} \widehat{V}_{h+1}\right)\left(s_{h}^{\tau}, a_{h}^{\tau}\right)-\left(\mathcal{T}_{h} \widehat{V}_{h+1}\right)\left(s_{h}^{\tau}, a_{h}^{\tau}\right)\right)/\widehat{\sigma}^2_h(s^\tau_h,a^\tau_h)}_{ \widehat{\Lambda}_{h}^{-1}}\\
\leq& \frac{2H}{\sqrt{\kappa K}}\cdot \widetilde{O}(\frac{H^2\sqrt{d/\kappa}}{\sqrt{K}})\cdot\sqrt{d}=\widetilde{O}(\frac{H^3d/\kappa}{K})
\end{align*}
where the first inequality is by Lemma~\ref{lem:sqrt_K_reduction} (with $\phi'=\phi/\widehat{\sigma}_h$ and $\norm{\phi/\widehat{\sigma}_h}\leq \norm{\phi} \leq 1:=C$) and the third inequality uses  $\sqrt{a^\top\cdot A\cdot a}\leq \sqrt{\norm{a}_2\norm{A}_2\norm{a}_2}=\norm{a}_2\sqrt{\norm{A}_2}$ with $a$ to be either $\phi$ or $w_h$. Moreover, $\lambda_{\min}(\tilde{\Lambda}_h^p)\geq \kappa/\max_{h,s,a} \widehat{\sigma}_h(s,a)^{2}\geq \kappa/H^2$ implies $\norm{(\tilde{\Lambda}_h^p)^{-1}}\leq H^2/\kappa$. 

The second inequality is true by denoting $x_\tau=\phi(s^\tau_h,a^\tau_h)/\widehat{\sigma}(s^\tau_h,a^\tau_h)$ and 
\[
\eta_\tau=\left(\left(\widehat{\mathcal{T}}_{h} \widehat{V}_{h+1}\right)\left(s_{h}^{\tau}, a_{h}^{\tau}\right)-\left(\mathcal{T}_{h} \widehat{V}_{h+1}\right)\left(s_{h}^{\tau}, a_{h}^{\tau}\right)\right)/\widehat{\sigma}_h(s^\tau_h,a^\tau_h)
\]
and use Lemma~\ref{lem:difference} as the condition for applying Lemma~\ref{lem:Hoeff_mart}. By collecting those three terms together we have the result.
\end{proof}

\subsection{Proof of Theorem~\ref{thm:main_improved}}

\begin{proof}
	Use Lemma~\ref{lem:Gamma_I} as the condition for Lemma~\ref{lem:bound_by_bouns} and average over initial distribution $d_1$, we obtain with probability $1-\delta$,
	\begin{equation}\label{eqn:suboptimality}
	\begin{aligned}
	&v^\pi-v^{\widehat{\pi}}\leq \\
	& \sum_{h=1}^H \E_{\pi_h}\left[\phi(s, a) \right]^\top\left|\widehat{\Lambda}_{h}^{-1}\sum_{\tau=1}^{K} \frac{\phi\left(s_{h}^{\tau}, a_{h}^{\tau}\right) \cdot\left(r_{h}^{\tau}+\widehat{V}_{h+1}\left(s_{h+1}^{\tau}\right)-\left(\widehat{\mathcal{T}}_{h} \widehat{V}_{h+1}\right)\left(s_{h}^{\tau}, a_{h}^{\tau}\right)\right)}{\widehat{\sigma}^2_h(s^\tau_h,a^\tau_h)}\right|+\widetilde{O}(\frac{H^4d/\kappa}{K})
	\end{aligned}
	\end{equation}
	Denote $A_h:=\sum_{\tau=1}^{K} \frac{\phi\left(s_{h}^{\tau}, a_{h}^{\tau}\right) \cdot\left(r_{h}^{\tau}+\widehat{V}_{h+1}\left(s_{h+1}^{\tau}\right)-\left({\mathcal{T}}_{h} \widehat{V}_{h+1}\right)\left(s_{h}^{\tau}, a_{h}^{\tau}\right)\right)}{\widehat{\sigma}^2_h(s^\tau_h,a^\tau_h)}$, then 
	\begin{align*}
	&\E_{\pi_h}\left[\phi(s, a) \right]^\top\left|\widehat{\Lambda}_{h}^{-1}\sum_{\tau=1}^{K} \frac{\phi\left(s_{h}^{\tau}, a_{h}^{\tau}\right) \cdot\left(r_{h}^{\tau}+\widehat{V}_{h+1}\left(s_{h+1}^{\tau}\right)-\left(\widehat{\mathcal{T}}_{h} \widehat{V}_{h+1}\right)\left(s_{h}^{\tau}, a_{h}^{\tau}\right)\right)}{\widehat{\sigma}^2_h(s^\tau_h,a^\tau_h)}\right|\\
	\leq &\E_{\pi_h}\left[\phi\right]^\top\cdot \left| \widehat{\Lambda}_{h}^{-1} A_h\right|+\E_{\pi_h}\left[\phi \right]^\top \left|\widehat{\Lambda}_{h}^{-1}\sum_{\tau=1}^{K} \frac{\phi\left(s_{h}^{\tau}, a_{h}^{\tau}\right) \cdot\left( {\mathcal{T}}_{h} \widehat{V}_{h+1}\left(s_{h}^{\tau}, a_{h}^{\tau}\right)-\widehat{\mathcal{T}}_{h} \widehat{V}_{h+1}\left(s_{h}^{\tau}, a_{h}^{\tau}\right)\right)}{\widehat{\sigma}^2_h(s^\tau_h,a^\tau_h)}\right|\\
	\end{align*}
	For the second term, it can be bounded similar to term (iii) in Lemma~\ref{lem:Gamma_I} and for the first term we have the following:
	\begin{align*}
	&\E_{\pi_h}\left[\phi\right]^\top\cdot \left| \widehat{\Lambda}_{h}^{-1} A_h\right|=\E_{\pi_h}\left[\phi\right]^\top\cdot \widehat{\Lambda}_{h}^{-1}\cdot\widehat{\Lambda}_{h}\left| \widehat{\Lambda}_{h}^{-1} A_h\right|\leq \norm{\E_{\pi_h}[\phi]}_{\widehat{\Lambda}^{-1}_h} \cdot\norm{\widehat{\Lambda}_{h}| \widehat{\Lambda}_{h}^{-1} A_h|}_{\widehat{\Lambda}^{-1}_h}\\
	\leq &\norm{\E_{\pi_h}[\phi]}_{\widehat{\Lambda}^{-1}_h} \cdot\norm{| A_h|}_{\widehat{\Lambda}^{-1}_h}
	\leq \widetilde{O}(\sqrt{d})\norm{\E_{\pi_h}[\phi]}_{\widehat{\Lambda}^{-1}_h}\leq \widetilde{O}(\sqrt{d}\norm{\E_{\pi_h}[\phi]}_{{\Lambda}^{-1}_h}),
	\end{align*}
	where the first inequality uses Cauchy's inequality, the second inequality uses $\widehat{\Lambda}_h$ is coordinate-wise positive (since we assume here $\phi\geq0$), the third inequality is identical to the analysis in Section~\ref{sec:bernstein_analysis} and the fourth inequality is identical to the analysis in Section~\ref{sec:analysis_phi} with $\phi$ replaced by $\E[\phi]$. Plug this back to \eqref{eqn:suboptimality} we finish the proof for the first part. For the second part, converting $\Lambda_h^{-1}$ to $\Lambda_h^{\star-1}$ is identical to Section~\ref{sec:proof_second}. This finishes the proof.
\end{proof}

\section{Proof of Minimax Lower bound Theorem~\ref{thm:lb} }\label{sec:proof_lower}

The proof follows the lower bound proof of \citet{zanette2021provable}. For completeness, we provide all the details in below.

\subsection{Construction}

Similar to the proof of [\citet{zanette2021provable}, Theorem 2], we construct a family of MDPs, each parameterized by a Boolean vector $u = ( u_1, \ldots, u_H )$ with each $u_h \in \{ -1, +1 \}^{d-2}$ for $h \in [H]$. The MDPs share the same transition kernel and are only different in the reward observations.

\begin{description}
	\item[State space:] At each time step $h$, there are two states $\cS = \{ +1, -1 \}$.
	\item[Action space:] The action space $\cA = \{ -1, 0, +1 \}^{d-2}$.
	\item[Feature map:] The feature map $\phi: \cS \times \cA \mapsto \mathbb{R}^d$ is given by
	\begin{align*}
	\phi(+1,a) = \begin{pmatrix}
	\frac{a}{\sqrt{2d}} \\ \tfrac{1}{\sqrt{2}} \\ 0
	\end{pmatrix} \in \mathbb{R}^d, \qquad \phi(-1,a) = \begin{pmatrix}
	\frac{a}{\sqrt{2d}} \\ 0 \\ \tfrac{1}{\sqrt{2}}
	\end{pmatrix} \in \mathbb{R}^d.
	\end{align*}
	The construction ensures the condition $\| \phi(s,a) \|_2 \leq 1$ for any $(s,a) \in \cS \times \cA$.
	\item[Transition kernel:] The transition probability $P_h(s' \mid s,a)$ is independent of action $a$. In other words, the Markov decision process reduces to a homogeneous Markov chain with transition matrix
	\begin{align*}
	{\bf P} = \begin{pmatrix}
	\tfrac{1}{2} & \tfrac{1}{2} \\ \tfrac{1}{2} & \tfrac{1}{2}
	\end{pmatrix} \in \mathbb{R}^2.
	\end{align*}
	By letting
	\begin{align*}
	\nu_h(+1) = \nu_h(-1) = \begin{pmatrix}
	{\bf 0}_{d-2} \\ \tfrac{1}{\sqrt{2}} \\ \tfrac{1}{\sqrt{2}} 
	\end{pmatrix} \in \mathbb{R}^d,
	\end{align*}
	we have $P_h(s' \mid s,a) = \langle \phi(s,a), \nu_h(s') \rangle$ to be a valid probability transition.
	\item[Reward observations:] For any MDP $M_u$, at each times step $h$, the reward follows a Gaussian distribution with
	\begin{align*}
	R_{u,h}(s,a) \sim \mathcal{N}\bigg( \frac{s}{\sqrt{6}} + \frac{\delta}{\sqrt{2d}} \langle a, u_h \rangle, 1 \bigg),
	\end{align*}
	where $\delta \in \big[0, \tfrac{1}{\sqrt{3d}}\big]$ determines to what extent the MDP models are different from each other. The mean reward function satisfies $r_{u,h}(s,a) = \langle \phi(s,a), \theta_{u,h} \rangle$ with
	\begin{align*}
	\theta_{u,h} = \begin{pmatrix}
	\delta u_h \\ \tfrac{1}{\sqrt{3}} \\ -\tfrac{1}{\sqrt{3}}
	\end{pmatrix} \in \mathbb{R}^d.
	\end{align*}
	\item[Offline data collection Scheme:] 
	The dataset $\mathcal{D} = \{ (s_h^{\tau}, a_h^{\tau}, r_h^{\tau}, s_{h+1}^{\tau}) \}_{\tau \in [K]}^{h \in [H]}$ consist of $K$ i.i.d. trajectories.
	All the trajectories initiate from uniform distribution.
	We take a behavior policy $\mu(\cdot \mid s)$ that is independent of state $s$. Let $\{ e_1, e_2, \ldots, e_{d-2} \}$ be the canonical bases of $\mathbb{R}^{d-2}$ and ${\bf 0}_{d-2} \in \mathbb{R}^{d-2}$ be the zero vector. The behavior policy $\mu$ is set as
	\begin{align*}
	\mu(e_j \mid s) = \tfrac{1}{d} \quad \text{for any $j \in [d-2]$} \qquad \text{and} \qquad \mu({\bf 0}_{d-2} \mid s) = \tfrac{2}{d}.
	\end{align*}
\end{description}

\subsection{Overview of proof}

The proof of the theorem is based on Assouad's method, where we first reduce the problem to binary hypothesis tests (\Cref{lemma:Assouad,lemma:lb_testing}) and then connect the testing error to the uncertainty quantity in the upper bound (\Cref{lemma:lb_uncertain}).
\begin{lemma}[Reduction to testing] \label{lemma:Assouad}
	There exists a universal constant $c_1 > 0$ such that
	\begin{align*}
	\inf_{\widehat{\pi}} \max_{u \in \cU} \E_u\big[V_u^{\star} - V_u^{\widehat{\pi}}\big] \geq c_1 \, \delta \sqrt{d} \, H \min_{u,u'\in\cU:D_H(u';u)=1} \inf_{\psi}\big[ \P_u(\psi \neq u) + \P_{u'}(\psi \neq u') \big],
	\end{align*}
	where $\widehat{\pi}$ denotes the output of any algorithm that maps from observations to an estimated policy. $\psi$ is any test function for parameter $u$ and $D_H$ is the hamming distance.
\end{lemma}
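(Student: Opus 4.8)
The plan is to apply Assouad's method: lower bound $\inf_{\widehat{\pi}}\max_{u}$ by the average over the hypercube $\cU$, and decompose that average along the $(d-2)H$ coordinates into two-point hypothesis tests. \textbf{Step 1 (the value gap is a sum of per-coordinate miscoordination probabilities).} Since ${\bf P}$ is symmetric, doubly stochastic and independent of the action, the marginal of $s_h$ is uniform on $\{-1,+1\}$ for every $h$, every policy and every $M_u$; thus $\E[s_h/\sqrt6]=0$ and for any policy $\pi$, $V_u^{\pi}=c+\frac{\delta}{\sqrt{2d}}\sum_{h=1}^{H}\E_{\pi}[\langle a_h,u_h\rangle]$ with $c$ independent of $\pi$ and $u$. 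Moreover $[\P_h V_{h+1}^{\star}](s,a)=\tfrac12\sum_{s'}V_{h+1}^{\star}(s')$ does not depend on $(s,a)$, so the optimal policy in $M_u$ plays $a_h=u_h$ coordinatewise, giving $V_u^{\star}=c+\frac{\delta(d-2)H}{\sqrt{2d}}$. Hence for any data-dependent $\widehat{\pi}$,
\[
V_u^{\star}-V_u^{\widehat{\pi}}=\frac{\delta}{\sqrt{2d}}\sum_{h=1}^{H}\sum_{j=1}^{d-2}\E_{\widehat{\pi}}\big[1-a_{h,j}u_{h,j}\big]\ \ge\ \frac{\delta}{\sqrt{2d}}\sum_{h=1}^{H}\sum_{j=1}^{d-2}\P_{\widehat{\pi}}\big(a_{h,j}\ne u_{h,j}\big),
\]
where $\P_{\widehat{\pi}}$ is the trajectory law of deploying $\widehat{\pi}$ in $M_u$, and the inequality uses $a_{h,j}\in\{-1,0,1\}$, $u_{h,j}\in\{-1,1\}$, which force $1-a_{h,j}u_{h,j}\ge\mathbbm{1}[a_{h,j}\ne u_{h,j}]$.

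\textbf{Step 2 (reduction to two-point tests and assembly).} Fix a coordinate $(h,j)$. Deploying $\widehat{\pi}$, drawing $a\sim\widehat{\pi}_h(\cdot\mid s_h)$ with $s_h$ uniform, and thresholding its $j$-th entry (sending the tie $a_{h,j}=0$ to $-1$) defines a randomized, data-dependent test $\psi_{h,j}$ for the bit $u_{h,j}$ with $\P_u(\psi_{h,j}\ne u_{h,j})\le\E_u[\P_{\widehat{\pi}}(a_{h,j}\ne u_{h,j})]$ for both values of $u_{h,j}$, losing no constant. Bounding $\max_u$ below by $|\cU|^{-1}\sum_u$, pairing each $u$ with the vertex $u'$ obtained by flipping bit $(h,j)$, and invoking the two-point identity $\inf_{\psi}[\P_u(\psi\ne u)+\P_{u'}(\psi\ne u')]=1-\TV(\P_u,\P_{u'})$, the average over $\cU$ obeys
\[
\frac{1}{|\cU|}\sum_{u\in\cU}\P_u(\psi_{h,j}\ne u_{h,j})\ \ge\ \frac12\min_{u,u'\in\cU:\,D_H(u';u)=1}\ \inf_{\psi}\big[\P_u(\psi\ne u)+\P_{u'}(\psi\ne u')\big].
\]
Summing the per-coordinate bounds over the $(d-2)H$ indices and combining with Step 1 yields $\inf_{\widehat{\pi}}\max_u\E_u[V_u^{\star}-V_u^{\widehat{\pi}}]\ge\frac{\delta(d-2)H}{2\sqrt{2d}}\min_{D_H(u';u)=1}\inf_\psi[\P_u(\psi\ne u)+\P_{u'}(\psi\ne u')]$; since $d-2\ge d/3$ for $d\ge3$ one has $\frac{d-2}{2\sqrt{2d}}\ge c_1\sqrt d$ with $c_1=\tfrac{1}{6\sqrt2}$, which is the claimed bound.

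\textbf{Expected obstacle.} The only delicate point is Step 2: the quantity naturally produced by the value gap is the ``soft'' probability $\P_{\widehat{\pi}}(a_{h,j}\ne u_{h,j})$, and one must convert it into a bona fide binary test error via an explicit thresholding rule and verify that averaging over the hypercube decouples coordinate-by-coordinate --- which works precisely because the value-gap lower bound in Step 1 is a \emph{sum} of per-coordinate indicators. The uniform-state computation, the Assouad pairing, and Le Cam's two-point identity are routine; the subsequent lemmas (\ref{lemma:lb_testing} and \ref{lemma:lb_uncertain}) then lower bound $1-\TV(\P_u,\P_{u'})$ by a KL/Pinsker argument using the Gaussian reward structure and translate the result into the uncertainty functional $\E_{\pi^{\star}}[\phi]^{\top}(\Lambda_h^{\star})^{-1}\E_{\pi^{\star}}[\phi]$.
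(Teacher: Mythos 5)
Your proposal is correct and follows essentially the same route as the paper: both express the value gap via the identity $V_u^{\star}-V_u^{\pi}=\frac{\delta}{\sqrt{2d}}\sum_h\|u_h-\E_\pi[a_h]\|_1$, lower bound it by a per-coordinate Hamming-type error, and then invoke Assouad's method to reduce to pairwise two-point tests. The only cosmetic difference is that you unpack Assouad's lemma explicitly (building a randomized test $\psi_{h,j}$ by sampling and thresholding an action), whereas the paper extracts the deterministic estimator $u_h^{\pi}=\sign(\E_{\pi}[a_h])$ and cites Assouad's lemma as a black box; the two reductions are equivalent.
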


\begin{lemma} \label{lemma:lb_testing}
	There exists a universal constant $c_2 > 0$ such that when taking
	$\delta := \frac{c_2 \, d}{\sqrt{K}}$,
	we have
	\begin{align} \label{eq:test>const}
	\min_{u,u'\in\cU:D_H(u';u)=1} \inf_{\psi}\big[ \P_u(\psi \neq u) + \P_{u'}(\psi \neq u') \big] \geq \frac{1}{2}.
	\end{align}
\end{lemma}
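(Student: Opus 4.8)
The plan is to reduce each binary test to Le Cam's two-point problem and then show that two neighboring MDP instances induce nearly indistinguishable data distributions, because they differ only in the reward law at a single time step and the behavior policy probes the differing coordinate only with tiny probability.

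First I would fix $u, u' \in \cU$ with $D_H(u';u) = 1$ and invoke the Neyman--Pearson lemma to write
\[
\inf_{\psi}\big[\P_u(\psi \neq u) + \P_{u'}(\psi \neq u')\big] = 1 - \TV(\P_u, \P_{u'}),
\]
where $\P_u$ denotes the law of the dataset $\mathcal{D}$ under $M_u$. Hence it suffices to prove $\TV(\P_u, \P_{u'}) \le 1/2$ for every such pair, which by Pinsker's inequality follows once $\mathrm{KL}(\P_u \,\|\, \P_{u'}) \le 1/2$.

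The core computation is the KL divergence. Since $D_H(u';u)=1$, the two parameter vectors agree except at one coordinate, say time step $h_0 \in [H]$ and dimension $j_0 \in [d-2]$, where $u'_{h_0,j_0} = -u_{h_0,j_0}$. The dataset is $K$ i.i.d.\ trajectories; along each trajectory the state process is the fixed Markov chain $\mathbf{P}$ and the actions are drawn from the fixed, state-independent behavior policy $\mu$ --- neither of which depends on the MDP index --- so the chain rule for KL collapses to the single term coming from the reward at step $h_0$, whose law is $\cN(m_u(a),1)$ versus $\cN(m_{u'}(a),1)$ with $m_u(a)-m_{u'}(a) = \frac{\delta}{\sqrt{2d}}\, a_{j_0}(u_{h_0,j_0}-u'_{h_0,j_0})$ (the $s/\sqrt{6}$ part cancels). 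Using $\mathrm{KL}(\cN(m_1,1)\,\|\,\cN(m_2,1)) = \frac{1}{2}(m_1-m_2)^2$, the fact that $|u_{h_0,j_0}-u'_{h_0,j_0}|=2$, and $\E_{a\sim\mu}[a_{j_0}^2] = 1/d$ (the action equals $e_{j_0}$ with probability $1/d$ and is orthogonal to $e_{j_0}$ otherwise), I would obtain
\[
\mathrm{KL}(\P_u \,\|\, \P_{u'}) = K\cdot \E_{a\sim\mu}\Big[\tfrac{1}{2}\big(m_u(a)-m_{u'}(a)\big)^2\Big] = K\cdot\frac{\delta^2}{d}\,\E_{a\sim\mu}[a_{j_0}^2] = \frac{K\delta^2}{d^2}.
\]
Plugging in $\delta = c_2 d/\sqrt{K}$ gives $\mathrm{KL}(\P_u\|\P_{u'}) = c_2^2$, hence $\TV(\P_u,\P_{u'}) \le c_2/\sqrt{2} \le 1/2$ for any fixed $c_2 \le 1/\sqrt{2}$, uniformly over all Hamming-one pairs; this is \eqref{eq:test>const}. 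I would also record that this choice respects the constraint $\delta \le 1/\sqrt{3d}$ built into the construction, i.e.\ $K \ge 3c_2^2 d^3$, which is subsumed by the hypothesis $K > c' d^3$ of Theorem~\ref{thm:lb}.

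The main obstacle --- really the only nontrivial point --- is the KL bookkeeping: one must argue carefully that the marginal distribution of the state-action visitations is identical under $M_u$ and $M_{u'}$ (which holds precisely because the transition kernel and $\mu$ are MDP-independent), so that after the chain-rule decomposition only the reward observation at step $h_0$ survives, and that $\mu$ reaches the differing coordinate $j_0$ only with probability $1/d$, which is exactly what produces the $d^2$ in the denominator and forces the scaling $\delta \asymp d/\sqrt{K}$. The remaining ingredients --- the closed-form Gaussian KL, tensorization over the $K$ trajectories, and Pinsker's inequality --- are routine.
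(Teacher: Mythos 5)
Your proposal is correct and follows essentially the same route as the paper: the paper invokes Tsybakov's Theorem 2.12 (which is exactly your Le Cam identity combined with Pinsker) to reduce to a KL bound, and then computes the same KL collapse to the single differing reward coordinate, with the factor $1/d$ from the behavior policy and the factor $\delta^2/d$ from the Gaussian mean shift yielding $K\delta^2/d^2 = c_2^2$. Your write-up is in fact somewhat more explicit than the paper's about the chain-rule cancellation of the transition/policy terms and about the compatibility of $\delta = c_2 d/\sqrt{K}$ with the constraint $\delta \le 1/\sqrt{3d}$.
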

When $K \gtrsim d^3$, $\delta := \frac{c_2 \, d}{\sqrt{K}}$ ensures that $\delta \leq 1/\sqrt{3d}$.
Combining \Cref{lemma:Assouad,lemma:lb_testing} yields a lower bound
\begin{align} \label{eq:lb_temp}
\inf_{\widehat{\pi}} \max_{u \in \cU} \E_u \big[ V_u^{\star} - V_u^{\widehat{\pi}} \big] \geq c \; \frac{d\sqrt{d} H}{\sqrt{K}},
\end{align}
where $c > 0$ is a universal constant. We then use the following \Cref{lemma:lb_uncertain} to connect the above lower bound to the uncertainty term $\sqrt{d} \cdot \sum_{h=1}^H \sqrt{\E_{\pi^{\star}} [ \phi]^{\top} (\Lambda_h^{\star})^{-1} \E_{\pi^{\star}} [ \phi]} $ for the chosen linear MDP instances class $\mathcal{M}$.
\begin{lemma} \label{lemma:lb_uncertain}
	There exists a universal constant $c_3 > 0$ such that for all $M\in\mathcal{M}$,
	\begin{align} \label{eq:lb_uncertain}
	\sum_{h=1}^H \sqrt{\E_{\pi^{\star}} [ \phi]^{\top} (\Lambda_h^{\star})^{-1} \E_{\pi^{\star}} [ \phi]} \leq c_3 \, \frac{ d \, H}{\sqrt{K}} .
	\end{align}
\end{lemma}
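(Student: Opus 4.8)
The plan is to evaluate every quantity appearing in \eqref{eq:lb_uncertain} explicitly on the hard family $\mathcal{M}$ and thereby reduce the claim to an upper bound on the single scalar $\E_{\pi^{\star},h}[\phi]^{\top}(\Sigma_h^p)^{-1}\E_{\pi^{\star},h}[\phi]$, where $\Sigma_h^p=\E_{\mu,h}[\phi(s,a)\phi(s,a)^{\top}]$ is the population covariance under the (state-independent) behavior policy. First I would identify the optimal policy and value functions of $M_u$: because $P_h$ does not depend on the action, the optimal action at step $h$ is $a_h^{\star}=u_h$ (it maximizes, for $\delta\ge0$, the only action-dependent part $\tfrac{\delta}{\sqrt{2d}}\langle a,u_h\rangle$ of the reward), and because the induced chain mixes in one step one gets $V_{h+1}^{\star}(+1)-V_{h+1}^{\star}(-1)=\tfrac{2}{\sqrt 6}$ for every $h<H$ (the action-dependent reward and the future value are identical in both states). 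Hence $\Var_h(V_{h+1}^{\star})(s,a)=\Var_{s'\sim\mathrm{Unif}}\big[V_{h+1}^{\star}(s')\big]=\tfrac16$ is a positive universal constant $c_\sigma$, independent of $(s,a)$ (the terminal step being dealt with by the construction so that $\Var_h(V_{h+1}^{\star})>0$ on the relevant range). Consequently $\Lambda_h^{\star}=c_\sigma^{-1}\sum_{k=1}^{K}\phi_k\phi_k^{\top}$ with $\phi_k:=\phi(s_h^k,a_h^k)$, and it remains to bound $\E_{\pi^{\star},h}[\phi]^{\top}\big(\sum_k\phi_k\phi_k^{\top}\big)^{-1}\E_{\pi^{\star},h}[\phi]$.

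Next I would plug the construction into the two objects above. Under $\pi^{\star}$ the state at step $h$ is uniform on $\{+1,-1\}$, so
\[
\E_{\pi^{\star},h}[\phi]=\Big(\tfrac{u_h}{\sqrt{2d}};\ \tfrac{1}{2\sqrt 2};\ \tfrac{1}{2\sqrt 2}\Big),\qquad \big\|\E_{\pi^{\star},h}[\phi]\big\|_2^2=\tfrac{d-2}{2d}+\tfrac14,
\]
while $\Sigma_h^p$ is block structured: the $(d-2)\times(d-2)$ action block equals $\tfrac{1}{2d}\,\E_{\mu}[aa^{\top}]=\tfrac{1}{2d^2}I_{d-2}$, the $2\times2$ state block equals $\tfrac14 I_2$, and the cross block is proportional to $\E_{\mu}[a]$ and hence vanishes by the symmetry of $\mu$ over signed coordinate directions, so $(\Sigma_h^p)^{-1}=\mathrm{diag}(2d^2 I_{d-2},\,4I_2)$ and $\lambda_{\min}(\Sigma_h^p)=\tfrac{1}{2d^2}$. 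Therefore, uniformly over $u$,
\[
\E_{\pi^{\star},h}[\phi]^{\top}(\Sigma_h^p)^{-1}\E_{\pi^{\star},h}[\phi]=2d^2\cdot\tfrac{\|u_h\|_2^2}{2d}+4\cdot\tfrac14=d\,\|u_h\|_2^2+1=d(d-2)+1\le d^2 .
\]

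The last step passes from $\Sigma_h^p$ to its empirical version. Since $\|\phi\|_2\le1$ and $\lambda_{\min}(\Sigma_h^p)=\Theta(1/d^2)$, a multiplicative (relative-error) matrix Chernoff bound gives, whenever $K\gtrsim d^2\log(dH/\delta)$ — in particular under $K>c'd^3$ — that $\sum_{k=1}^{K}\phi_k\phi_k^{\top}\succcurlyeq\tfrac12 K\,\Sigma_h^p$ for all $h\in[H]$ simultaneously with probability $1-\delta$; on this event $(\Lambda_h^{\star})^{-1}$ exists and $(\Lambda_h^{\star})^{-1}=c_\sigma\big(\sum_k\phi_k\phi_k^{\top}\big)^{-1}\preccurlyeq\tfrac{2c_\sigma}{K}(\Sigma_h^p)^{-1}$, hence $\E_{\pi^{\star},h}[\phi]^{\top}(\Lambda_h^{\star})^{-1}\E_{\pi^{\star},h}[\phi]\le\tfrac{2c_\sigma}{K}d^2=O(d^2/K)$. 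Taking square roots and summing over $h=1,\dots,H$ yields \eqref{eq:lb_uncertain} with a universal constant $c_3$; and then, since \eqref{eq:lb_uncertain} bounds the denominator of \eqref{eq:lb} uniformly over $\mathcal{M}$ by $c_3\sqrt d\,dH/\sqrt K$ while Lemmas~\ref{lemma:Assouad} and~\ref{lemma:lb_testing} give $\inf_{\widehat\pi}\sup_{M\in\mathcal{M}}\E_M[v^{\star}-v^{\widehat{\pi}}]\gtrsim d\sqrt d\,H/\sqrt K$, dividing produces the constant lower bound \eqref{eq:lb}.

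The step I expect to be the main obstacle is the spectral bookkeeping in the middle paragraph: one must confirm that $\Sigma_h^p$ really is well-conditioned on the support relevant to $\E_{\pi^{\star},h}[\phi]$ — i.e.\ that $\lambda_{\min}(\Sigma_h^p)\asymp 1/d^2$ and that no near-degenerate direction of $\Sigma_h^p$ aligns with $\E_{\pi^{\star},h}[\phi]$ so as to blow the quadratic form past $O(d^2)$ — and that the matrix-Chernoff transfer to $\tfrac1K\sum_k\phi_k\phi_k^{\top}$ genuinely goes through at sample size $K\asymp d^3$ rather than demanding $K\gtrsim d^4$; both points are sensitive to the normalization $\phi\propto a/\sqrt{2d}$ and to $\E_{\mu}[a]$ being (effectively) zero. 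The only other wrinkle is the bookkeeping at the last time step, where $\Var_h(V_{h+1}^{\star})$ degenerates; this is absorbed into the specification of the family $\mathcal{M}$.
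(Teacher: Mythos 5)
Your overall route is the same as the paper's: identify $\pi^\star_h=u_h$, show $\Var_{P_h}(V^\star_{u,h+1})\equiv\tfrac16$ so that $\Lambda_h^\star$ is just $6\sum_k\phi_k\phi_k^\top$, compute $\E_{\pi^\star}[\phi]$ explicitly, and reduce to an $O(d^2/K)$ bound on the quadratic form. Your explicit matrix-Chernoff transfer from $\sum_k\phi_k\phi_k^\top$ to $K\,\Sigma_h^p$ is actually a step the paper glosses over (it silently works with the population matrix $\Lambda_h^{\star,p}=6K\,\Sigma_h^p$), and your sample-size accounting $K\gtrsim d^2\log(\cdot)$ under $K>c'd^3$ is fine.

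However, the step you yourself flagged as the main obstacle is where the argument as written breaks. The behavior policy plays only the actions $e_1,\dots,e_{d-2}$ (each w.p.\ $1/d$) and $\mathbf{0}_{d-2}$ (w.p.\ $2/d$); there are no signed pairs, so $\E_\mu[a]=\tfrac1d\mathbf{1}_{d-2}\neq 0$ and the cross block of $\Sigma_h^p$ is $\tfrac{1}{4d\sqrt d}\mathbf{1}_{(d-2)\times 2}$, not zero --- this is visible in the paper's explicit
$\Lambda_h^{\star,p}=\tfrac{3K}{2}\bigl(\begin{smallmatrix}\frac{2}{d^2}I_{d-2} & \frac{1}{d\sqrt d}\mathbf{1}\\ \frac{1}{d\sqrt d}\mathbf{1} & I_2\end{smallmatrix}\bigr)$.
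Consequently $(\Sigma_h^p)^{-1}$ is not $\mathrm{diag}(2d^2I_{d-2},\,4I_2)$ and your value $d(d-2)+1$ for the quadratic form is not correct: the true inverse's action block is $2d^2\{I_{d-2}+\tfrac{1}{d-2}\mathbf{1}\mathbf{1}^\top\}$ together with negative off-diagonal blocks, and the cross block is large enough (its operator norm equals the geometric mean of the two diagonal blocks' scales) that it cannot be dismissed as a perturbation. The lemma's conclusion survives because the full Gaussian-elimination computation, carried out in the paper, collapses to $\|\phi(\pm1,u_h)\|^2_{(\Lambda_h^{\star,p})^{-1}}=\tfrac{2}{3K}\{\tfrac{d^2}{4}+\tfrac{d}{4(d-2)}(1-\mathbf{1}^\top u_h)^2+\tfrac14\}\lesssim d^2/K$ --- still $O(d^2)$ after rescaling by $K$ --- but you need to actually invert the full block matrix (or bound the Schur complement) rather than assert block-diagonality. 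A minor additional point: both you and the paper leave the degenerate step $h=H$ (where $V^\star_{H+1}\equiv0$) to the specification of $\mathcal{M}$, which is acceptable but worth stating once explicitly.
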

Plugging inequality~\eqref{lemma:lb_uncertain} into the bound~\eqref{eq:lb_temp}, we obtain the minimax lower bound~\eqref{eq:lb} in the statement of theorem.

\subsection{Reduction to testing via Assouad's method}

\begin{proof}[Proof of Lemma~\ref{lemma:Assouad}]
	For any index vector $u = (u_1, \ldots, u_H) \in \cU = \{ -1, +1 \}^{(d-2) \times H}$, the optimal policy for MDP instance $M_u$ is simply
	\begin{align*}
	\pi_h^{\star}(\cdot) = u_h \qquad \text{for $h \in [H]$}.
	\end{align*}
	Similar to the proof of Lemma 9 in \cite{zanette2021provable}, we can show that the value suboptimality of policy $\pi$ on MDP $M_u$ is given by
	\begin{align*}
	V_u^{\star} - V_u^{\pi} = \frac{\delta}{\sqrt{2d}} \sum_{h=1}^H \big\| u_h - \E_{\pi}[a_h] \big\|_1.
	\end{align*}
	Define $u^{\pi} = (u_1^{\pi}, \ldots, u_H^{\pi})$ with $u_h^{\pi} := \sign\big(\E_{\pi}[a_h]\big)$, then the $\ell_1$-norm is lower bounded as
	\begin{align*}
	\big\| u_h - \E_{\pi}[a_h] \big\|_1 \geq D_H(u_h^{\pi}; u_h),
	\end{align*}
	where $D_H (\cdot ; \cdot)$ denotes the Hamming distance. It follows that
	\begin{align} \label{eq:V-V}
	V_u^{\star} - V_u^{\pi} \geq \frac{\delta}{\sqrt{2d}} D_H(u^{\pi}; u).
	\end{align}
	We then apply Assouad's method (Lemma 2.12 in \cite{sampson1992nonparametric}) and obtain that
	\begin{align} \label{eq:Assouad}
	\inf_{\hat{u} \in \cU} \max_{u \in \cU} \E_u\big[ D_H(\hat{u}; u) \big] \geq \frac{(d-2)H}{2} \min_{u,u'\in\cU:D_H(u';u)=1} \inf_{\psi}\big[ \P_u(\psi \neq u) + \P_{u'}(\psi \neq u')  \big],
	\end{align}
	where $\psi$ is any test functions mapping from observations to $\{ u, u' \}$.
	Combining inequalities~\eqref{eq:V-V}~and~\eqref{eq:Assouad}, we finish the proof.
\end{proof}

\subsection{Lower bound on the testing error}

\begin{proof}[Proof of \Cref{lemma:lb_testing}]
	The proof of \Cref{lemma:lb_testing} is similar to that of Lemma~10 in \cite{zanette2021provable}. We first apply Theorem 2.12 in \cite{sampson1992nonparametric} to lower bound the testing error using Kullback–Leibler divergence and obtain
	\begin{align} \label{eq:test>KL}
	\min_{u,u'\in\cU:D_H(u';u)=1} \inf_{\psi}\big[ \P_u(\psi \neq u) + \P_{u'}(\psi \neq u') \big] \geq 1 - \Big( \tfrac{1}{2} \max_{u,u'\in\cU:D_H(u';u)=1} D_{\rm KL}(\Q_u \| \Q_{u'}) \Big)^{1/2}.
	\end{align}
	It only remains to estimate $D_{\rm KL}(\Q_u \| \Q_{u'})$.
	
	The probability density $\Q_u$ takes the form
	\begin{align*}
	\Q_u(\cD) = \prod_{k=1}^K \xi_1(s_1^k) \prod_{h=1}^H \mu\big(a_h^k \mid s_h^k) \; \big[ R_{u,h}(s_h^k, a_h^k)\big](r_h^k) \; \P_h(s_{h+1}^k \mid s_h^k, a_h^k) 
	\end{align*}
	where $\xi_1 = \big[\tfrac{1}{2}, \tfrac{1}{2}\big]$ is the initial distribution. It follows that
	
	\begin{align*}
&	D_{ KL}(\Q_u \| \Q_{u'})  = \E_u \big[ \log( \Q_u / \Q_{u'} ) \big] \\ 
	& = K \cdot \sum_{h=1}^H \E_u\Big[ \log\big(\big[ R_{u,h}(s_h^1, a_h^1)\big](r_h^1) \big/ \big[ R_{u',h}(s_h^1, a_h^1)\big](r_h^1) \big) \Big] \\ 
	& = \frac{K}{d} \sum_{j=1}^{d-2} D_{\rm KL} \Big( \cN\big( \tfrac{\delta}{\sqrt{2d}} \langle e_j, u_h \rangle, 1 \big) \Bigm\| \cN\big( \tfrac{\delta}{\sqrt{2d}} \langle e_j, u'_h \rangle, 1 \big) \Big).
	\end{align*}

	If we take $\delta = \frac{c_2 \, d}{\sqrt{K}}$, then inequality \eqref{eq:test>KL} ensures inequality \eqref{eq:test>const}, as claimed in the statement of the lemma.

\end{proof}

\subsection{Connection to the uncertainty term}

\begin{proof}[Proof of \Cref{lemma:lb_uncertain}]
	We first calculate the explicit form of the inverse of variance-rescaled covariance matrix $\Lambda_h^{\star,p}$. For each time step $h \in [H]$, the value function $V_{u,h+1}^{\star}$ takes the form
	\begin{align*}
	V_{u,h+1}^{\star} = \E_{\pi^{\star}} r_{u,h+1} + \big(\P_{h+1}^{\pi^{\star}} V_{u,h+2}^{\star}\big).
	\end{align*}
	Since $\big(\P_{h+1} V_{u,h+2}^{\star}\big)(+1) = \big(\P_{h+1} V_{u,h+2}^{\star}\big)(-1)$ and $r_{u,h+1}(+1, a) - r_{u,h+1}(-1, a) = 2/\sqrt{6}$, we have
	\begin{align*}
	\Var_{P_{h}}(V_{u,h+1}^{\star})(+1,a) = \Var_{P_{h}}(\E_{\pi^{\star}} r_{u,h+1})(+1,a) = \frac{1}{6}.
	\end{align*}
	Similarly,
	\begin{align*}
	\Var_{P_h}(V_{u,h+1}^{\star})(-1,a) =	\Var_{P_{h}}(V_{u,h+1}^{\star})(+1,a) = \frac{1}{6}.
	\end{align*}
	By routine calculation, we find that the population-level rescaled covariance matrix takes the form
	\begin{align*}
	\Lambda_h^{\star,p} = \frac{3K}{2} \, \begin{pmatrix}
	\frac{2}{d^2} {\bf I}_{d-2} & \frac{1}{d\sqrt{d}} {\bf 1}_{(d-2) \times 2} \\
	\frac{1}{d\sqrt{d}} {\bf 1}_{2 \times (d-2)} & {\bf I}_2
	\end{pmatrix} \in \mathbb{R}^{d \times d}
	\end{align*}
	for any $h \in [H]$.
	Applying Gaussian elimination on $\Lambda_h^{\star,p}$, we have
	\begin{align*}
	(\Lambda_h^{\star,p})^{-1} = \frac{2}{3K} \begin{pmatrix}
	\tfrac{d^2}{2} \big\{{\bf I}_{d-2} + \frac{1}{d-2} {\bf 1}_{(d-2) \times (d-2)}\big\} & - \frac{d\sqrt{d}}{2(d-2)} {\bf 1}_{(d-2) \times 2} \\ - \frac{d\sqrt{d}}{2(d-2)} {\bf 1}_{2 \times (d-2)} & \frac{1}{d-2} \begin{pmatrix}
	d-1 & 1 \\ 1 & d-1
	\end{pmatrix}
	\end{pmatrix}.
	\end{align*}
	
	For each time step $h \in [H]$, we have (by Jensen's inequality)
	\begin{align*}
	& \sqrt{\E_{\pi^{\star}} [ \phi]^{\top} (\Lambda_h^{\star})^{-1} \E_{\pi^{\star}} [ \phi]}
	\leq \frac{1}{2} \big\| \phi(+1, u_h) \big\|_{(\Lambda_h^{\star,p})^{-1}} + \frac{1}{2} \big\| \phi(-1, u_h) \big\|_{(\Lambda_h^{\star,p})^{-1}}.
	\end{align*}
	Recall that by our construction,
	\begin{align*}
	\phi(+1,u_h) = \begin{pmatrix}
	\frac{u_h}{\sqrt{2d}} \\ \tfrac{1}{\sqrt{2}} \\ 0
	\end{pmatrix} \in \mathbb{R}^d, \qquad \phi(-1,u_h) = \begin{pmatrix}
	\frac{u_h}{\sqrt{2d}} \\ 0 \\ \tfrac{1}{\sqrt{2}}
	\end{pmatrix} \in \mathbb{R}^d.
	\end{align*}
	It follows that
	\begin{align*}
	& \big\| \phi(+1, u_h) \big\|_{(\Lambda_h^{\star,p})^{-1}}^2 = \big\| \phi(-1, u_h) \big\|_{(\Lambda_h^{\star,p})^{-1}}^2 \\ & = \frac{2}{3K} \bigg\{ \frac{d}{4} u_h^{\top} \big\{{\bf I}_{d-2} + \tfrac{1}{d-2} {\bf 1}_{(d-2) \times (d-2)}\big\} u_h - \frac{d}{2(d-2)} {\bf 1}_{d-2}^{\top} u_h + \frac{d-1}{2(d-2)} \bigg\} \\ & = \frac{2}{3K} \bigg\{  \frac{d^2}{4} + \frac{d}{4(d-2)} \big(1 - {\bf 1}_{d-2}^{\top} u_h \big)^2 + \frac{1}{4} \bigg\} \\ & \leq \frac{2}{3K} \bigg\{ \frac{d^2}{4} + \frac{d(d-1)^2}{4(d-2)} + \frac{1}{4} \bigg\} = \frac{2}{3K} \bigg\{ \frac{d^2}{2} + \frac{d-1}{2(d-2)} \bigg\} \lesssim d^2/K.
	\end{align*}
	Therefore,
	\begin{align*}
	\sqrt{\E_{\pi^{\star}} [ \phi]^{\top} (\Lambda_h^{\star})^{-1} \E_{\pi^{\star}} [ \phi]}\lesssim d/\sqrt{K}.
	\end{align*}
	Taking the summation over $h \in [H]$, we obtain the bound~\eqref{eq:lb_uncertain} as claimed in the lemma statement.
	
\end{proof}

\subsection{Comparison to Lower bound in \cite{jin2021pessimism}}\label{sec:low_discuss}

 Generally speaking, Theorem~\ref{thm:lb} and lower bound in \cite{jin2021pessimism} are not directly comparable since both results are global minimax (not instance-dependent/local-minimax) lower bounds and their hardness only hold for a family of hard instances (which makes comparison outside of the family instances vacuum). However, for all the instances within the family, we can compare them. Since both papers use tabular hard instance constructions, we only compare $\sqrt{d}\sum_{h=1}^H\sqrt{\E_{\pi^{\star}} [ \phi]^{\top} (\Lambda_h^{\star})^{-1} \E_{\pi^{\star}} [ \phi]}$ and $\sum_{h=1}^H\E_{\pi^\star}[\sqrt{\phi(s_h,a_h)^\top\Lambda_h^{\star-1}\phi(s_h,a_h)}]$ under the tabular setting.
 
 Indeed, under the tabular setting $\phi(s,a)=\mathbf{1}_{s,a}$, $d=SA$, we have 
 \begin{align*}
 &\sum_{h=1}^H\mathbb{E}_{\pi^\star}\bigg[\sqrt{\phi(\cdot, \cdot)^{\top} \Lambda_{h}^{\star-1} \phi(\cdot, \cdot)}\bigg]=\sum_{h=1}^H\sum_{s,a}d^{\pi^\star}_h(s,a)\sqrt{\mathbf{1}_{s,a}^{\top} \Lambda_{h}^{\star-1} \mathbf{1}_{s,a}}\\
 &=\sum_{h=1}^H\sum_{s,a}d^{\pi^\star}_h(s,a)\sqrt{\mathbf{1}_{s,a}^{\top} \mathrm{diag}\bigg\{\frac{\Var_{P_{\cdot,\cdot}}(V^\star_{h+1})}{n_{h,\cdot,\cdot}}\bigg\} \mathbf{1}_{s,a}}\\
 &=\sum_{h=1}^H\sum_{s,a}d^{\pi^\star}_h(s,a)\sqrt{\frac{\Var_{P_{s,a}}(V^\star_{h+1})}{n_{h,s,a}}}\quad n_{h,s,a}:=\sum_{\tau=1}^K \mathbf{1}[s^\tau_h,a^\tau_h=s,a]
 \end{align*}
 and 
 \begin{align*}
 &\sqrt{d}\sum_{h=1}^H\sqrt{\E_{\pi^{\star}} [ \phi]^{\top} (\Lambda_h^{\star})^{-1} \E_{\pi^{\star}} [ \phi]}=\sqrt{SA}\sum_{h=1}^H\sqrt{\E_{\pi^{\star}} [ \mathbf{1}_{s,a}]^{\top}  \mathrm{diag}\bigg\{\frac{\Var_{P_{\cdot,\cdot}}(V^\star_{h+1})}{n_{h,\cdot,\cdot}}\bigg\}\E_{\pi^{\star}} [ \mathbf{1}_{s,a}]}\\
 =&\sqrt{SA}\sum_{h=1}^H\sqrt{\mathrm{Vec}(d_h^{\pi^\star}(\cdot,\cdot))^{\top}  \mathrm{diag}\bigg\{\frac{\Var_{P_{\cdot,\cdot}}(V^\star_{h+1})}{n_{h,\cdot,\cdot}}\bigg\}\mathrm{Vec}(d_h^{\pi^\star}(\cdot,\cdot))}\\
 =&\sqrt{SA}\sum_{h=1}^H \sqrt{\sum_{s,a} d_h^{\pi^\star}(s,a)^2\frac{\Var_{P_{s,a}}(V^\star_{h+1})}{n_{h,s,a}}}=\sum_{h=1}^H \sqrt{SA}\sqrt{\sum_{s,a} d_h^{\pi^\star}(s,a)^2\frac{\Var_{P_{s,a}}(V^\star_{h+1})}{n_{h,s,a}}}\\
 \geq &\sum_{h=1}^H\sum_{s,a}d^{\pi^\star}_h(s,a)\sqrt{\frac{\Var_{P_{s,a}}(V^\star_{h+1})}{n_{h,s,a}}},
 \end{align*}
 where the last step uses C-S inequality. This finishes verification.

\section{A Numerical Simulation}

\subsection{A Linear MDP construction}

We consider a synthetic linear MDP example that is similar to \cite{min2021variance} but with some modifications for the offline learning task. The MDP instance we use consists of $|\mathcal{S}|=2$ states and $|\mathcal{A}|=100$ actions, and feature dimension $d=100$. We set $\mathcal{S}=\{0,1\}$ and $\mathcal{A}=\{0,1,\ldots,99\}$ respectively. For each action $a\in\{0,1,\ldots,99\}$, we use binary encoding to obtain a vector $\mathbf{a}\in\R^8$ using its binary representation (\emph{i.e.} each coordinate is either $0$ or $1$). we interchangebly use $a$ and and its vector representation $\mathbf{a}$ for the ease of explanation. We first define
$
\delta(s, a)= \begin{cases}1 & \text { if } \mathbf{1}\{s=0\}=\mathbf{1}\{a=0\} \\ 0 & \text { otherwise }\end{cases},
$
then the non-stationary linear MDP is specified by the following configuration

\begin{itemize}
	\item Feature mapping:
	\[
	\boldsymbol{\phi}(s, a)=\left(\mathbf{a}^{\top}, \delta(s, a), 1-\delta(s, a)\right)^{\top} \in \mathbb{R}^{10}
	\]
	\item The true measure $\nu_h$
	\[
	\boldsymbol{\nu}_{h}(s)=\left(0, \ldots, 0,(1-s) \oplus \alpha_{h}, s \oplus \alpha_{h}\right),
	\]
	where $\{\alpha_{h}\}_{h\in[H]}$ is a sequence of integers taking values $0$ or $1$ and $\oplus$ is the standard XOR operator. We define
	\[
	\theta_h\equiv (0,\ldots,0,r,1-r)\in\mathbb{R}^{10}
	\]
	with the choice of $r=0.9$. The transition follows $P_h(s'|s,a)=\langle \phi(s,a),\nu_h(s')\rangle$ and the mean reward function $r_h(s,a)=\langle \phi(s,a), \theta_h \rangle$.
	\item Behavior policy: always choose action $a=0$ with probability $p$, and other actions uniformly with probability $(1-p)/99$. The initial distribution chooses $s=0$ and $s=1$ with equal probability $1/2$. We use $p=0.6$.
	
\end{itemize}

\begin{figure}[H]
	\centering     
	\subfigure[Suboptimality vs. Episode $K$ (Horizon $H=20$)]{\label{fig:mdp}\includegraphics[width=0.3\linewidth]{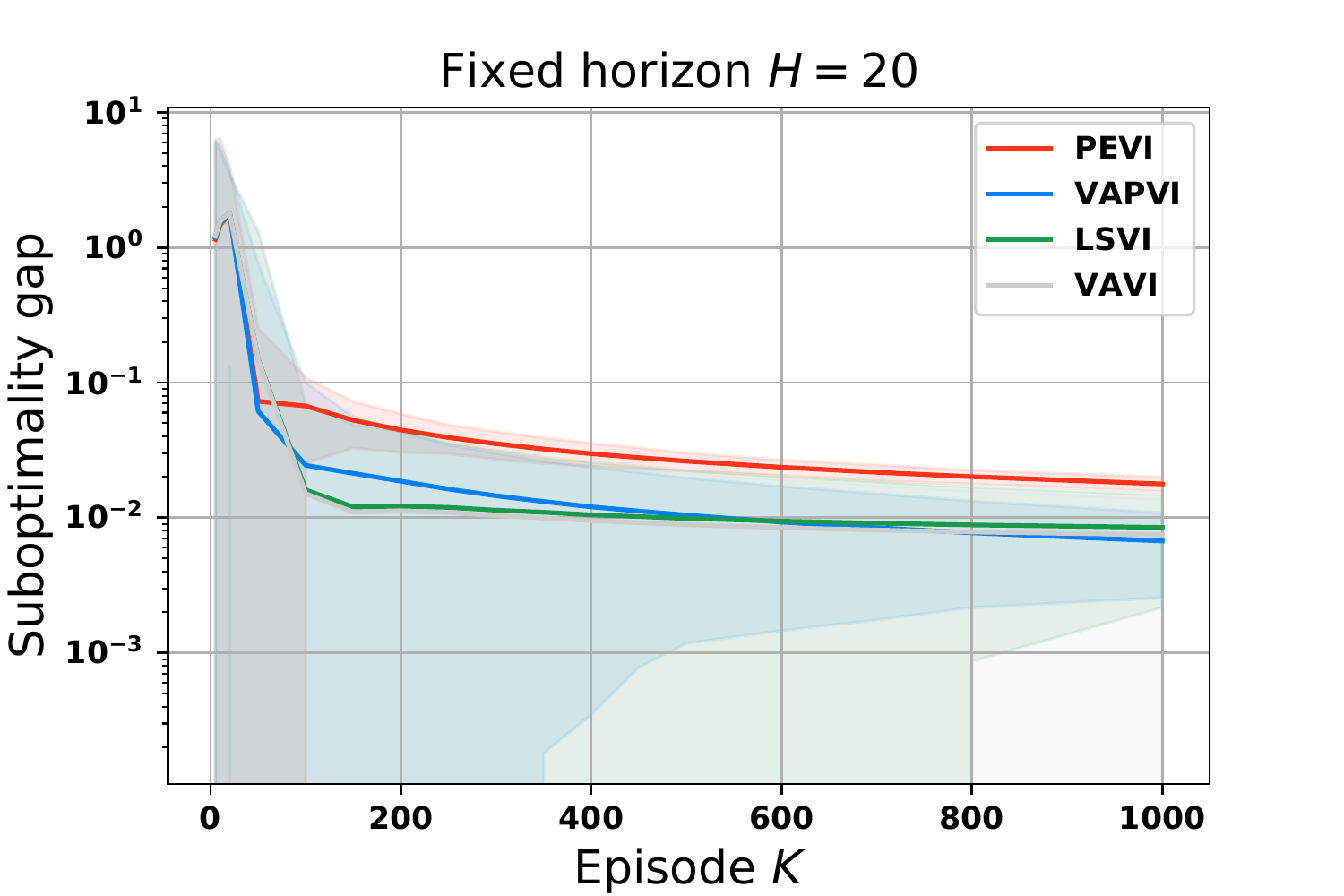}}
	\subfigure[Suboptimality vs. Episode $K$ (Horizon $H=30$)]{\label{fig:different_H}\includegraphics[width=0.3\linewidth]{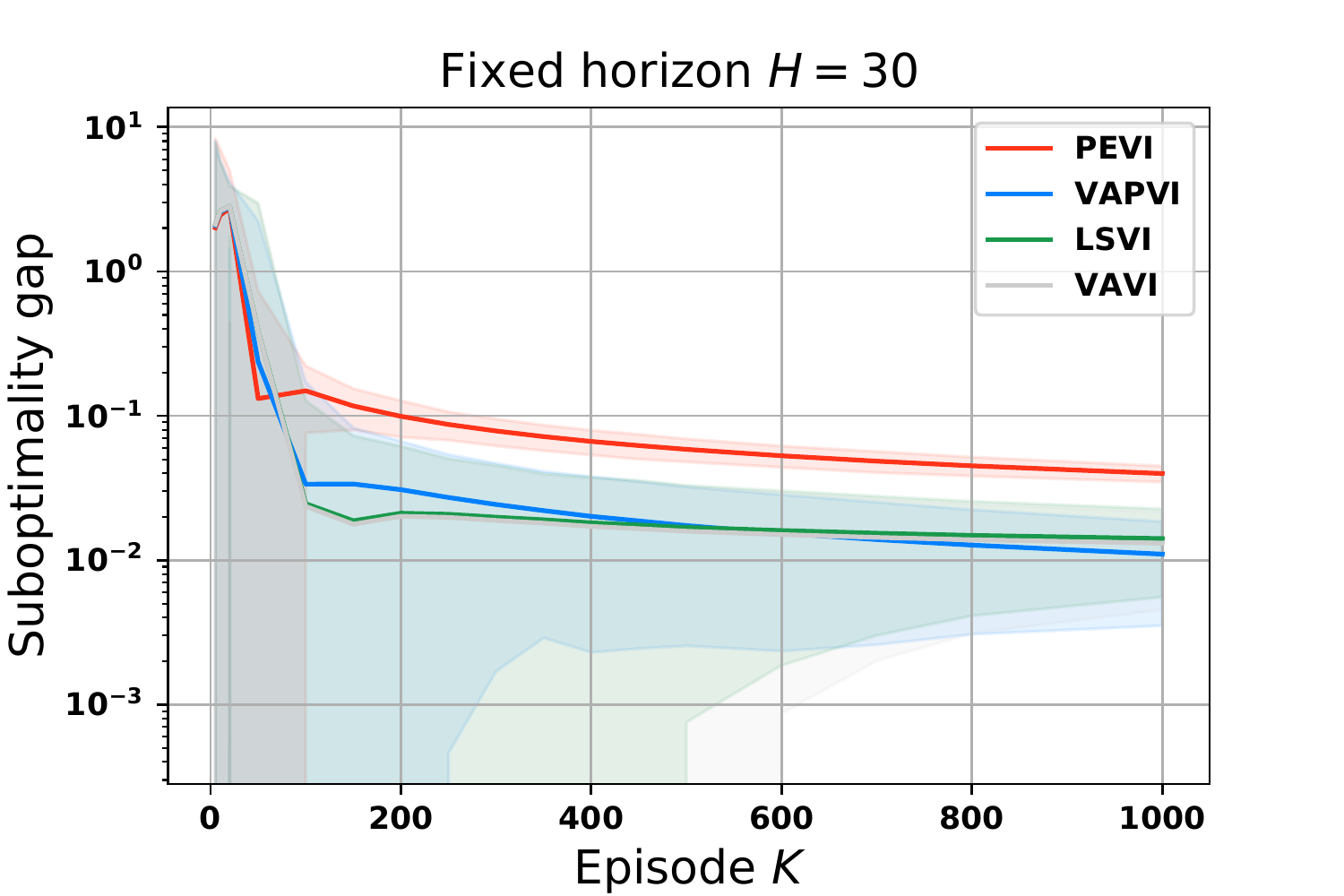}}
	\subfigure[Suboptimality vs. Episode $K$ (Horizon $H=50$)]{\label{fig:different_H_1}\includegraphics[width=0.3\linewidth]{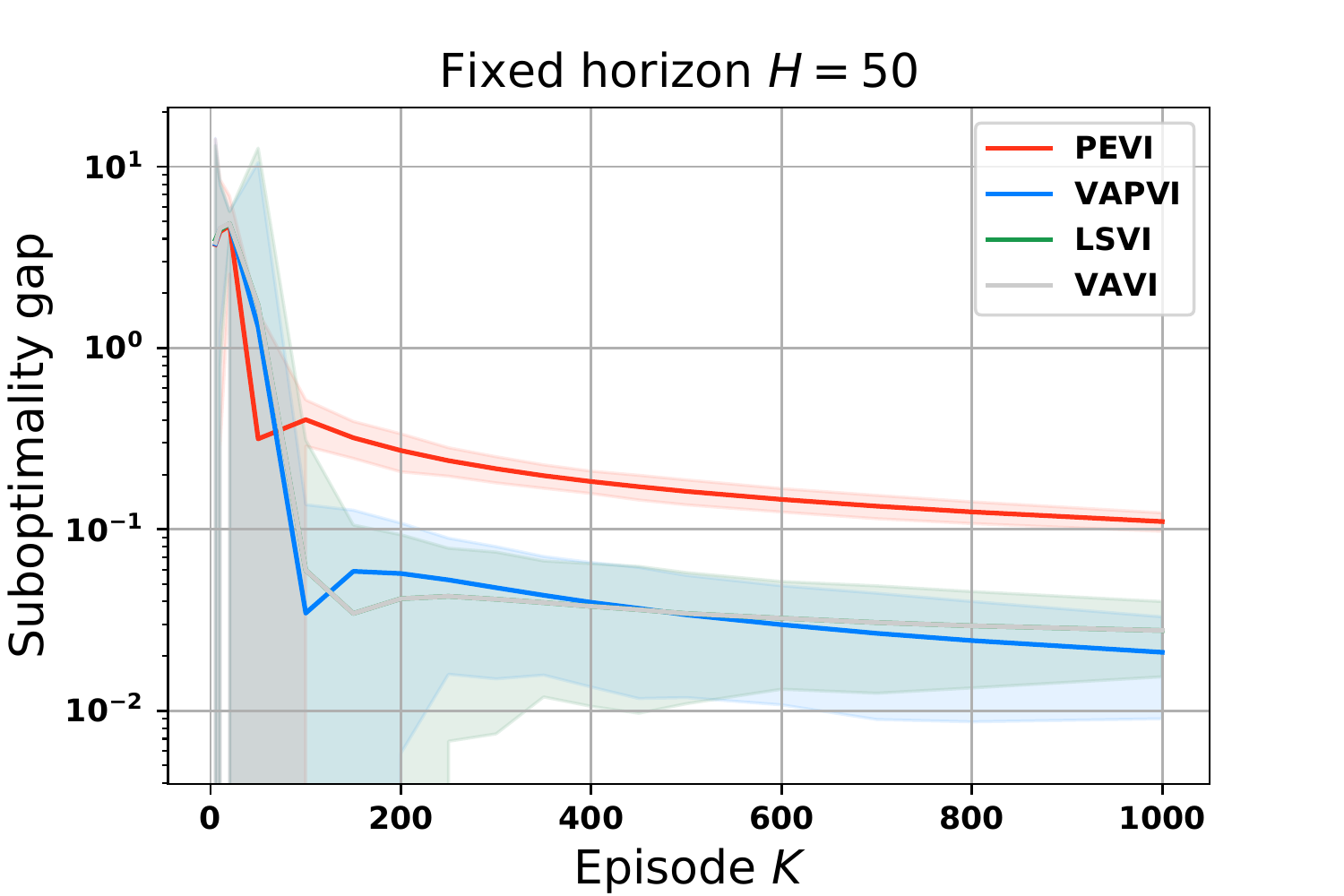}}
	\caption{Comparison between PEVI and VAPVI in the non-stationary linear MDP instance described above. In each figure, y-axis denotes suboptimality gap $v^\star-v^{\widehat{\pi}}$, x-axis denotes number of episodes $K$. The problem horizons are fixed to be $H=20,30,50$. The solid line denotes the average suboptimality gap over 50 trials and the error bar area is the corresponding standard deviation. The range of $K$ is from 5 to 1000.}
	\label{fig:main}
\end{figure}

\subsection{Empirical comparison between PEVI and VAPVI on the constructed linear MDP}

We compare \emph{Pessimistic Value Iteration} (PEVI) in \cite{jin2021pessimism} and our VAPVI Algorithm~\ref{alg:VAPVI} in Figure~\ref{fig:main}, with horizon to be $H=20,30,50$. In addition, we add the non-pessimistic version for both algorithms, \emph{i.e.} least-square value iteration (LSVI) and variance-aware value iteration (VAVI). The true optimal value $v^\star$ is computed via value iteration using the underlying transition kernels. For the empirical validation of VAPVI, we do not split the data and, in particular, in all the methods we choose $\lambda=0.01$ (instead of $\lambda=1$ used in theory \citep{jin2021pessimism} which causes over-regularization in the simulation). 

We can observe VAPVI outperforms PEVI and the gap becomes larger when horizon $H$ increases. One main reason for this to happen is due to the bonus used in PEVI \citep{jin2021pessimism} 
{\small\[
O\left[dH \cdot\left(\phi(\cdot, \cdot)^{\top} \Sigma_{h}^{-1} \phi(\cdot, \cdot)\right)^{1 / 2}\right]
\]}
is overly pessimistic comparing to our 
{\small\[
O\left[\sqrt{d} \cdot\left(\phi(\cdot, \cdot)^{\top} \Lambda_{h}^{-1} \phi(\cdot, \cdot)\right)^{1 / 2}\right]
\]}
when $H$ becomes larger and this could potentially make the learning less accurate. In addition, both non-pessimistic algorithms exhibit similar accuracy, and this is partially owing to our truncation scheme $\widehat{\sigma}_h(\cdot,\cdot)^2\leftarrow\max\{1,\widehat{\Var}_{P_h}\widehat{V}_{h+1}(\cdot,\cdot)\} $ so $\widehat{\sigma}_h(\cdot,\cdot)^2$ will just be $1$ when the estimated variance is small. Lastly, variance-aware pessimism eventually outperforms non-pessimism algorithms when sample size is large and this might come from that the pessimistic bonus is estimated more accurately when more samples are collected.

\section{Some missing derivations and discussions}\label{sec:detailed_discussion}

\subsection{Regarding coverage assumption}\label{sec:discuss_assumption}

Now we discuss the feature coverage assumption. Indeed, even if Assumption~\ref{assum:cover} is not satisfied, we can still learn in the effective subspan of $\Sigma_{h}^{p}:=\mathbb{E}_{\mu, h}\left[\phi(s, a) \phi(s, a)^{\top}\right]$. Concretely, since $\Sigma_{h}^{p}$ is symmetric, by orthogonal decomposition we have 
$
\Sigma_{h}^{p} = Z_h\Lambda Z_h^\top,
$
where $Z_h$ (can be estimated using the samples for practical purpose) consists of orthogonal basis and $\Lambda$ consists of eigenvalues of $\Sigma_{h}^{p}$ in the diagonal. Suppose we do not have a full coverage, i.e.
\[
\Lambda=\mathrm{diag}[\lambda_1,\lambda_2,...,\lambda_{d'},0,...,0]\quad \mathrm{with}\quad d'<d,
\]
then we can create transformed features $\phi'_h(s,a)=Z_h\cdot \phi_h(s,a)$, and then 
\[
\mathbb{E}_{\mu, h}\left[\phi'_h(s, a) \phi'_h(s, a)^{\top}\right]=\Lambda = \mathrm{diag}[\lambda_1,\lambda_2,...,\lambda_{d'},0,...,0].
\]
Then we can do learning w.r.t. the truncated features $\phi'_h|_{1:d'}$'s instead of the original $\phi$. It reduces to the weaker notion of $\kappa:=\min_{h\in[H]}\{\kappa_h:s.t.\;\kappa_h=\text{smallest positive eigenvalue at time }h\}$.

\subsection{On Variance-Awareness for Linear Mixture MDP}\label{sec:linear_mixture}
In this section we provide a short discussion of applying VAPVI for the setting where the model is described by a mixture of linear kernels. We first recall the definition of linear mixture MDP models.

\begin{definition}[Linear Mixture Models]
	We assume the MDP is linear w.r.t. feature map $\psi:\mathcal{S}\times\mathcal{A}\times \mathcal{S}\rightarrow \R^d$, i.e., for any $h\in[H]$, there exists $\theta_h\in\R^d$ with $\norm{\theta_h}_2\leq B$ for $B$ bounded such that
	\[
	P_h(s'|s,a)=\psi(s,a,s')^\top \theta_h
	\]
	for all $(s,a,s')\in\mathcal{S}\times\mathcal{A}\times\mathcal{S}$. Also, for any bounded function $V:\mathcal{S}\rightarrow[0,1]$,
	\[
	\norm{\int_{\mathcal{S}}\psi(s,a,s')\cdot V(s')ds'}_2\leq \sqrt{d}
	\] 
	for any $(s,a)\in\mathcal{S}\times\mathcal{A}$.
\end{definition}

In this setting, similar to \cite{cai2020provably}, one can create a value-dependent state-action feature as 
\[
\phi^V_h(\cdot,\cdot)=\int_{\mathcal{S}}\psi(\cdot,\cdot,s')V_{h+1}(s')ds'
\]
Then replacing the all the feature mapping $\phi$ in Algorithm~\ref{alg:VAPVI} by $\phi^V_h$, VAPVI can be similarly conducted for Linear mixture MDPs. However, there are three differences:
\begin{itemize}
	\item $\phi^V_h$ depends on $V$, which could incur extra randomness when instantiated with VAPVI (since $V$ is plugged by $\widehat{V}_{h+1}$);
	\item Unlike $\phi$ in linear MDP, $\phi_h$ is different for all the time step $h$ (since it is coupled with $V_{h+1}$). 
	\item The reward might not admit linear in feature structure, which requires modifications of the algorithm (\emph{e.g.} regressing over $P_hV_{h+1}(\cdot,\cdot)$ instead of $P_hV_{h+1}(\cdot,\cdot)+r_h(\cdot,\cdot)=Q_h(\cdot,\cdot)$ in \cite{cai2020provably}). 
\end{itemize}
We leave how to analyze VAPVI-style algorithm for linear mixture MDPs as the future works.

\subsection{Derivation of \eqref{eqn:comparison}}

When reducing Theorem~\ref{thm:main},\ref{thm:main_improved} to the tabular case, set $\phi(s,a)=\mathbf{1}_{s,a}$, $d=SA$, $\lambda=0$, and recall by Assumption~\ref{assum:single_concen} (let's assume $\pi^\star$ is a deterministic policy as it always exists in tabular MDP) $C^\star:=\sup_{h,s,a}d^{\pi^\star}_h(s,a)/d^{\mu}_h(s,a)$, then for Theorem~\ref{thm:main}
{\small\begin{align*}
&\sqrt{d}\cdot\sum_{h=1}^H\mathbb{E}_{\pi^\star}\bigg[\sqrt{\phi(\cdot, \cdot)^{\top} \Lambda_{h}^{\star-1} \phi(\cdot, \cdot)}\bigg]=\sqrt{d}\cdot\sum_{h=1}^H\sum_{s,a}d^{\pi^\star}_h(s,a)\sqrt{\mathbf{1}_{s,a}^{\top} \Lambda_{h}^{\star-1} \mathbf{1}_{s,a}}\\
&=\sqrt{SA}\cdot\sum_{h=1}^H\sum_{s,a}d^{\pi^\star}_h(s,a)\sqrt{\mathbf{1}_{s,a}^{\top} \mathrm{diag}\bigg\{\frac{\Var_{P_{\cdot,\cdot}}(V^\star_{h+1})}{n_{h,\cdot,\cdot}}\bigg\} \mathbf{1}_{s,a}}\\
&=\sqrt{SA}\cdot\sum_{h=1}^H\sum_{s,a}d^{\pi^\star}_h(s,a)\sqrt{\frac{\Var_{P_{s,a}}(V^\star_{h+1})}{n_{h,s,a}}}\quad n_{h,s,a}:=\sum_{\tau=1}^K \mathbf{1}[s^\tau_h,a^\tau_h=s,a]\\
&\lesssim \sqrt{SA}\cdot\sum_{h=1}^H\sum_{s,a}d^{\pi^\star}_h(s,a)\sqrt{\frac{\Var_{P_{s,a}}(V^\star_{h+1})}{K\cdot d^\mu_h(s,a)}}\\
&\leq \sqrt{SAC^\star/K}\cdot \sum_{h=1}^H\sum_{s,a}\sqrt{{d^{\pi^\star}_h(s,a)\Var_{P_{s,a}}(V^\star_{h+1})}}\\
&= \sqrt{SAC^\star/K}\cdot \sum_{h=1}^H\sum_{s}\sqrt{{d^{\pi^\star}_h(s,\pi^\star(s))\Var_{P_{s,\pi^\star(s)}}(V^\star_{h+1})}}\\
&\leq \sqrt{SAC^\star/K}\cdot \sum_{h=1}^H \sqrt{S \cdot \sum_{s}d^{\pi^\star}_h(s,\pi^\star(s))\Var_{P_{s,\pi^\star(s)}}(V^\star_{h+1})}\\
&\leq \sqrt{S^2AC^\star/K}\cdot\sqrt{H\sum_{h=1}^H \sum_{s}d^{\pi^\star}_h(s,\pi^\star(s))\Var_{P_{s,\pi^\star(s)}}(V^\star_{h+1})}\\
&= \sqrt{S^2AC^\star/K}\cdot\sqrt{H\cdot\sum_{h=1}^H \E_{\pi^\star_h}[\Var_{P_{(\cdot,\cdot)}}(V^\star_{h+1})]}\leq  \sqrt{H^3S^2AC^\star/K}
\end{align*}}

where the first inequality is by Chernoff bound and the last one is by Lemma~3.4. of \cite{yin2020asymptotically} (Law of total variances). The rest of them are from Cauchy's inequality. Similarly, for Theorem~\ref{thm:main_improved}, we also have 
{\small\begin{align*}
&\sqrt{d}\cdot\sum_{h=1}^H\sqrt{\mathbb{E}_{\pi^\star}[\phi]^{\top} \Lambda_{h}^{\star-1} \mathbb{E}_{\pi^\star}[\phi]}=\sqrt{d}\cdot\sum_{h=1}^H\sqrt{\mathrm{Vec}\{d^{\pi^\star}\} \Lambda_{h}^{\star-1} \mathrm{Vec}\{d^{\pi^\star}\}}\\
&=\sqrt{d}\cdot\sum_{h=1}^H\sqrt{\mathrm{Vec}\{d^{\pi^\star}\} \mathrm{diag}\bigg\{\frac{\Var_{P_{\cdot,\cdot}}(V^\star_{h+1})}{n_{h,\cdot,\cdot}}\bigg\} \mathrm{Vec}\{d^{\pi^\star}\}}\\
&=\sqrt{SA}\cdot\sum_{h=1}^H\sqrt{\sum_{s,a}d^{\pi^\star}_h(s,a)^2\frac{\Var_{P_{s,a}}(V^\star_{h+1})}{n_{h,s,a}}}\\
&\lesssim \sqrt{SA}\cdot\sum_{h=1}^H\sqrt{\sum_{s,a}d^{\pi^\star}_h(s,a)^2\frac{\Var_{P_{s,a}}(V^\star_{h+1})}{K\cdot d^\mu_h(s,a)}}\\
&\leq \sqrt{SAC^\star/K}\cdot \sum_{h=1}^H\sqrt{\sum_{s,a}{d^{\pi^\star}_h(s,a)\Var_{P_{s,a}}(V^\star_{h+1})}}\\
&= \sqrt{SAC^\star/K}\cdot \sum_{h=1}^H\sqrt{\sum_{s}{d^{\pi^\star}_h(s,\pi^\star(s))\Var_{P_{s,\pi^\star(s)}}(V^\star_{h+1})}}\\
&\leq \sqrt{SAC^\star/K}\cdot\sqrt{H\cdot\sum_{h=1}^H \E_{\pi^\star_h}[\Var_{P_{(\cdot,\cdot)}}(V^\star_{h+1})]}\leq  \sqrt{H^3SAC^\star/K}.\\
\end{align*}}

\section{Auxiliary Lemmas}

\begin{lemma}[Matrix McDiarmid inequality / Matrix Chernoff bound \citep{tropp2012user}]\label{lem:MMI}
	Let $z_k$, $k=1,\ldots,K$ be independent random vectors in $\R^d$, and let $H$ be a mapping that maps $K$ vectors to a $d\times d$ symmetric matrix. Assume there exists a sequence of fixed symmetric matrices $\{A_k\}_{k\in[K]}$ such that for $z_k,z_k'$ ranges over all possible values for each $k\in[K]$, it holds 
	\[
	(H(z_1,\ldots,z_k,\ldots,z_K)-H(z_1,\ldots,z_k',\ldots,z_K))^2\preceq A_k^2.
	\]
	Define $\sigma^2:=\norm{\sum_kA^2_k}$. Then for any $t>0$,
	\[
	\mathbb{P}\left\{\norm{{H}(z_1,\ldots,z_K)-\mathbb{E} {H}(z_1,\ldots,z_K)} \geq t\right\} \leq d \cdot \exp \left(\frac{-t^{2}}{8 \sigma^{2}}\right)
	\]
\end{lemma}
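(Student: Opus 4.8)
The plan is to obtain this as a consequence of the matrix Azuma inequality, via a Doob martingale and the matrix Laplace transform method. First I would reduce the two-sided statement to a one-sided one: for a symmetric matrix $\norm{H-\E H}=\max\{\lambda_{\max}(H-\E H),\,\lambda_{\max}(\E H-H)\}$, so it suffices to bound $\P\{\lambda_{\max}(H-\E H)\ge t\}$, the bound for $\lambda_{\max}(\E H-H)$ following by replacing $H$ with $-H$ (the hypothesis is invariant under this), and a union bound over the two events then yields the stated form up to an absorbed constant.

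Second, I would introduce the Doob martingale $Y_k:=\E[H\mid z_1,\dots,z_k]$ for $k=0,1,\dots,K$, so that $Y_0=\E H$, $Y_K=H$, and the increments $\Delta_k:=Y_k-Y_{k-1}$ form a matrix martingale difference sequence, $\E[\Delta_k\mid z_1,\dots,z_{k-1}]=0$. The crucial step is to transfer the semidefinite bounded-differences hypothesis to the increments, i.e.\ to show $\Delta_k^2\preceq A_k^2$ almost surely. Using independence we may write $Y_k=\E_{z_{k+1},\dots,z_K}[H]$, so $\Delta_k=\E_{z_k'}\E_{z_{k+1},\dots,z_K}\big[H(\dots,z_k,\dots)-H(\dots,z_k',\dots)\big]$ with $z_k'$ an independent copy of $z_k$; since the bracketed matrix has square $\preceq A_k^2$ and $x\mapsto x^2$ is operator convex, two applications of the operator Jensen inequality give $\Delta_k^2\preceq A_k^2$. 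In particular $\norm{\Delta_k}\le\norm{A_k}$, and the matrix Hoeffding moment bound then yields $\E[e^{\theta\Delta_k}\mid z_1,\dots,z_{k-1}]\preceq\exp(2\theta^2A_k^2)$ for every $\theta>0$.

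Third, I would run the matrix Laplace transform argument on $\sum_{k=1}^K\Delta_k=H-\E H$: for $\theta>0$, Markov's inequality for the trace exponential gives $\P\{\lambda_{\max}(H-\E H)\ge t\}\le e^{-\theta t}\,\E\,\tr\exp\!\big(\theta\sum_{k=1}^K\Delta_k\big)$. Peeling off the increments one at a time via the tower property and Lieb's concavity theorem (which makes $A\mapsto\tr\exp(L+\log A)$ concave, allowing each conditional moment generating function to be inserted inside the trace exponential), together with monotonicity of the trace exponential and $\norm{\sum_kA_k^2}=\sigma^2$, gives $\E\,\tr\exp\!\big(\theta\sum_k\Delta_k\big)\le\tr\exp\!\big(2\theta^2\sum_kA_k^2\big)\le d\,e^{2\theta^2\sigma^2}$. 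Hence $\P\{\lambda_{\max}(H-\E H)\ge t\}\le d\,e^{-\theta t+2\theta^2\sigma^2}$, and the choice $\theta=t/(4\sigma^2)$ produces $d\,e^{-t^2/(8\sigma^2)}$. Combining with the first reduction finishes the argument.

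The step I expect to be the main obstacle is making the cumulant machinery rigorous: establishing the conditional sub-Gaussian bound $\E[e^{\theta\Delta_k}\mid z_1,\dots,z_{k-1}]\preceq\exp(2\theta^2A_k^2)$ from $\Delta_k^2\preceq A_k^2$ (this requires a matrix Hoeffding-type moment lemma), and telescoping the trace exponential along the martingale, where Lieb's concavity theorem is the essential nontrivial input. The remaining pieces — the reduction to one side, the operator-Jensen transfer of the bounded-differences condition to the Doob increments, and the scalar optimization over $\theta$ — are routine.
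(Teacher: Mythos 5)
Your outline is correct. The paper does not prove this lemma at all — it is imported verbatim from \cite{tropp2012user} — and what you have written is precisely the standard argument from that reference (Doob martingale, operator-Jensen transfer of the bounded-differences hypothesis to the increments, matrix Hoeffding cgf bound, and the Lieb/Laplace-transform peeling), with the correct constants ($\theta = t/(4\sigma^2)$ giving $e^{-t^2/(8\sigma^2)}$) and with the one honest caveat, which you already flag, that the two-sided operator-norm statement strictly yields a factor $2d$ rather than $d$.
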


\begin{lemma}[Hoeffding inequality for self-normalized martingales \citep{abbasi2011improved}]\label{lem:Hoeff_mart} Let $\{\eta_t\}_{t=1}^\infty$ be a real-valued stochastic process. Let $\{\mathcal{F}_t\}_{t=0}^\infty$ be a filtration, such that $\eta_t$ is $\mathcal{F}_t$-measurable. Assume $\eta_t$ also satisfies $\eta_t$ given $\mathcal{F}_{t-1}$ is zero-mean and $R$-subgaussian, \emph{i.e.}
	\[
	\forall \lambda \in \mathbb{R}, \quad \mathbb{E}\left[e^{\lambda \eta_{t} }\mid \mathcal{F}_{t-1}\right] \leq e^{\lambda^{2} R^{2} / 2}
	\]
	Let $\{x_t\}_{t=1}^\infty$ be an $\mathbb{R}^d$-valued stochastic process where $x_t$ is $\mathcal{F}_{t-1}$ measurable and $\norm{x_t}\leq L$. Let $\Lambda_t=\lambda I_d+\sum_{s=1}^tx_sx^\top_s$. Then for any $\delta>0$, with probability $1-\delta$, for all $t>0$,
	\[
	\left\|\sum_{s=1}^{t} {x}_{s} \eta_{s}\right\|_{{\Lambda}_{t}^{-1}}^2 \leq 8 R^{2}\cdot\frac{d}{2} \log \left(\frac{\lambda+tL}{\lambda\delta}\right).
	\]
\end{lemma}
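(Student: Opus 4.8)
# Proof Proposal for Lemma~\ref{lem:Hoeff_mart}

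The plan is to establish the self-normalized Hoeffding bound via the standard ``pseudo-maximization'' (method of mixtures) argument of \citet{abbasi2011improved}. The core idea is to build a supermartingale out of an exponential moment-generating function, integrate it against a Gaussian mixture over a free parameter, and then apply the optional stopping / maximal inequality for nonnegative supermartingales.

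\textbf{Step 1: Construct the exponential supermartingale.} For a fixed deterministic vector $v \in \mathbb{R}^d$, define $M_t^v := \exp\!\big(\sum_{s=1}^t \big[\langle v, x_s\rangle \eta_s - \tfrac{R^2}{2}\langle v, x_s\rangle^2\big]\big)$. Using the conditional $R$-subgaussianity of $\eta_t$ given $\mathcal{F}_{t-1}$ and that $x_t$ is $\mathcal{F}_{t-1}$-measurable, one checks $\mathbb{E}[M_t^v \mid \mathcal{F}_{t-1}] \le M_{t-1}^v$, so $\{M_t^v\}$ is a nonnegative supermartingale with $\mathbb{E}[M_t^v] \le M_0^v = 1$.

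\textbf{Step 2: Mix over $v$.} Take $v \sim \mathcal{N}(0, \lambda^{-1} I_d)$ independent of everything, and set $\bar M_t := \mathbb{E}_v[M_t^v]$. By Fubini, $\{\bar M_t\}$ is still a nonnegative supermartingale with $\mathbb{E}[\bar M_t]\le 1$. Computing the Gaussian integral explicitly (completing the square in the exponent, with the quadratic form $\tfrac{R^2}{2}\sum_s x_s x_s^\top + \tfrac{\lambda}{2} I_d = \tfrac{R^2}{2}(\Lambda_t \cdot \tfrac{1}{R^2}\cdot R^2)$ — more precisely one gets the matrix $\tfrac{\lambda}{2}I_d + \tfrac{R^2}{2}\sum_s x_sx_s^\top$) yields
\begin{equation*}
\bar M_t = \left(\frac{\det(\lambda I_d)}{\det\big(\lambda I_d + R^2\sum_{s=1}^t x_s x_s^\top\big)}\right)^{1/2} \exp\!\left(\frac{1}{2R^2}\Big\|\sum_{s=1}^t x_s\eta_s\Big\|^2_{(\lambda I_d + R^2\sum_{s}x_sx_s^\top)^{-1}}\right).
\end{equation*}

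\textbf{Step 3: Maximal inequality and determinant bound.} Apply the maximal inequality for nonnegative supermartingales (Ville's inequality): $\P[\exists t: \bar M_t \ge 1/\delta] \le \delta$. On the complementary event, rearranging gives, for all $t$,
\begin{equation*}
\Big\|\sum_{s=1}^t x_s\eta_s\Big\|^2_{(\lambda I_d + R^2\sum_s x_sx_s^\top)^{-1}} \le 2R^2\log\!\left(\frac{1}{\delta}\Big(\tfrac{\det(\lambda I_d + R^2\sum_s x_sx_s^\top)}{\det(\lambda I_d)}\Big)^{1/2}\right).
\end{equation*}
Then I would absorb the factor $R^2$ inside the matrix (replacing $\lambda I_d + R^2\sum x_sx_s^\top$ by $R^2\Lambda_t$ after rescaling, or simply noting $\Lambda_t \preceq \lambda I_d + \sum x_sx_s^\top$ suffices with a mild adjustment), and bound the log-determinant ratio using $\|x_s\|\le L$: since $\lambda I_d + \sum_{s=1}^t x_sx_s^\top \preceq (\lambda + tL^2)I_d$ — or using the trace-determinant AM-GM inequality $\det(\Lambda_t)\le (\tfrac{\operatorname{tr}\Lambda_t}{d})^d \le (\lambda + \tfrac{tL^2}{d})^d$ — one gets $\log\frac{\det\Lambda_t}{\det(\lambda I_d)} \le d\log\frac{\lambda+tL}{\lambda}$ (after the convenient but slightly lossy simplification $L^2 \rightsquigarrow L$ that matches the lemma's stated form). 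Combining, $\big\|\sum_{s=1}^t x_s\eta_s\big\|^2_{\Lambda_t^{-1}} \le R^2 d\log\frac{\lambda+tL}{\lambda\delta} \cdot (\text{constant})$, and tracking the constants from the rescaling gives the claimed $8R^2\cdot\tfrac d2 \log\frac{\lambda+tL}{\lambda\delta}$.

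\textbf{Main obstacle.} The only genuinely delicate point is the bookkeeping in Steps 2–3: getting the Gaussian-mixture normalization and the $R^2$ rescaling to land exactly on the stated constant, and justifying the matrix inequality $\lambda I_d + \sum x_s x_s^\top \succeq \Lambda_t$ in the self-normalized norm (the norm $\|\cdot\|_{A^{-1}}$ is order-reversing, so one must be careful which matrix appears where). Everything else is a direct transcription of the Abbasi-Yadkori--Pál--Szepesvári argument; since the statement is quoted verbatim as an auxiliary lemma from \citet{abbasi2011improved}, it would also be legitimate to simply cite that reference rather than reproduce the proof.
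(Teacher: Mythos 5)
The paper does not prove this lemma at all: it is imported verbatim from \citet{abbasi2011improved} as an auxiliary result, so there is no in-paper argument to compare against. Your reconstruction of the method-of-mixtures proof is the right one and, modulo one bookkeeping issue, would reproduce the cited result (with the paper's generous constant $8$ and the cosmetic $L^2 \rightsquigarrow L$ substitution, which is harmless here since every application in the paper has $L=1$).

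The one point worth fixing is precisely the "main obstacle" you flag, and it is cleaner than you suggest. With your choice of prior $v\sim\mathcal{N}(0,\lambda^{-1}I_d)$, completing the square produces the matrix $\lambda I_d+R^2\sum_s x_sx_s^\top$ in both the determinant and the quadratic form (and the exponent is $\tfrac12\|\cdot\|^2$ there, not $\tfrac{1}{2R^2}\|\cdot\|^2$ as written, so your displayed $\bar M_t$ is internally inconsistent). Worse, the conversion you then propose goes the wrong way when $R\geq 1$: from $\Lambda_t\preceq \lambda I_d+R^2\sum_s x_sx_s^\top$ one gets $(\lambda I_d+R^2\sum_s x_sx_s^\top)^{-1}\preceq \Lambda_t^{-1}$, i.e.\ the mixture controls a \emph{smaller} self-normalized norm than the one you need to bound. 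The standard remedy is to take the mixing distribution $v\sim\mathcal{N}\big(0,(\lambda R^2)^{-1}I_d\big)$ instead; then $A+\Sigma^{-1}=R^2\Lambda_t$ exactly, the Gaussian integral yields $\bar M_t=\big(\det(\lambda I_d)/\det\Lambda_t\big)^{1/2}\exp\big(\tfrac{1}{2R^2}\|\sum_s x_s\eta_s\|^2_{\Lambda_t^{-1}}\big)$ with the $R^2$ factors in the determinant cancelling, and Ville's inequality plus the trace--determinant bound $\det\Lambda_t\leq(\lambda+tL^2/d)^d$ gives the claim directly with no matrix comparison. With that substitution your Steps 1 and 3 go through unchanged, and the rest of the argument is a faithful transcription of the cited proof; as you note, simply citing \citet{abbasi2011improved} is also what the paper itself does.
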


\begin{lemma}[Bernstein inequality for self-normalized martingales \citep{zhou2021nearly}]\label{lem:Bern_mart} Let $\{\eta_t\}_{t=1}^\infty$ be a real-valued stochastic process. Let $\{\mathcal{F}_t\}_{t=0}^\infty$ be a filtration, such that $\eta_t$ is $\mathcal{F}_t$-measurable. Assume $\eta_t$ also satisfies 
	\[
	\left|\eta_{t}\right| \leq R, \mathbb{E}\left[\eta_{t} \mid \mathcal{F}_{t-1}\right]=0, \mathbb{E}\left[\eta_{t}^{2} \mid \mathcal{F}_{t-1}\right] \leq \sigma^{2}.
	\]
	
	Let $\{x_t\}_{t=1}^\infty$ be an $\mathbb{R}^d$-valued stochastic process where $x_t$ is $\mathcal{F}_{t-1}$ measurable and $\norm{x_t}\leq L$. Let $\Lambda_t=\lambda I_d+\sum_{s=1}^tx_sx^\top_s$. Then for any $\delta>0$, with probability $1-\delta$, for all $t>0$,
	\[
	\left\|\sum_{s=1}^{t} \mathbf{x}_{s} \eta_{s}\right\|_{\boldsymbol{\Lambda}_{t}^{-1}} \leq 8 \sigma \sqrt{d \log \left(1+\frac{t L^{2}}{\lambda d}\right) \cdot \log \left(\frac{4 t^{2}}{\delta}\right)}+4 R \log \left(\frac{4 t^{2}}{\delta}\right)
	\]
\end{lemma}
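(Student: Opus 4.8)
The statement is the vector-valued Bernstein (self-normalized) inequality of \citet{zhou2021nearly}, so the plan is to combine a \emph{local} conditional moment generating function estimate for the scalar noise with a self-normalized martingale argument of the Abbasi-Yadkori / de la Pe\~na--Klass--Lai ``method of mixtures'' type, which is exactly the machinery behind Lemma~\ref{lem:Hoeff_mart}. The first step is the Bennett-type MGF bound: since $\mathbb{E}[\eta_s\mid\mathcal{F}_{s-1}]=0$, $\mathbb{E}[\eta_s^2\mid\mathcal{F}_{s-1}]\le\sigma^2$ and $|\eta_s|\le R$, the elementary inequality $e^x\le 1+x+x^2$ for $|x|\le 1$ applied with $x=\nu\eta_s$ yields, for every $\nu$ with $|\nu|\le 1/R$,
\[
\mathbb{E}\big[e^{\nu\eta_s}\mid\mathcal{F}_{s-1}\big]\le 1+\nu^2\sigma^2\le e^{\nu^2\sigma^2}.
\]
Thus $\eta_s$ is conditionally ``$\sqrt{2}\,\sigma$-subgaussian'', but only for MGF arguments restricted to the interval $[-1/R,1/R]$; this restriction is the entire source of difficulty and of the extra additive term.

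Next I would run the self-normalization. Writing $S_t=\sum_{s\le t}x_s\eta_s$ and $W_t=\sum_{s\le t}x_sx_s^\top$, for a fixed $\theta\in\mathbb{R}^d$ the process $M_t(\theta)=\exp\!\big(\langle\theta,S_t\rangle-\sigma^2\,\theta^\top W_t\theta\big)$ is a supermartingale \emph{provided} $|\langle\theta,x_s\rangle|\le 1/R$ for all $s\le t$ (so that the local MGF bound above applies to the increment with $\nu=\langle\theta,x_s\rangle$). On the event where this holds, mixing $M_t(\theta)$ against a centred Gaussian prior on $\theta$ and carrying out the Gaussian integral gives, in the usual way, $\big\|S_t\big\|_{\Lambda_t^{-1}}\lesssim\sigma\sqrt{d\log(\det\Lambda_t/\lambda^d)+\log(1/\delta)}$, and $\det\Lambda_t\le(\lambda+tL^2/d)^d$ (from $\|x_s\|_2\le L$) converts $\log(\det\Lambda_t/\lambda^d)$ into $d\log(1+tL^2/(\lambda d))$, producing the first term of the claimed bound.

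The genuine work is removing the ``all linear functionals stay bounded by $1/R$'' caveat, since that couples the mixing direction $\theta$ to the data $x_s$. I would handle this with a stopping-time truncation: let $\tau$ be the first time $\|S_t\|_{\Lambda_t^{-1}}$ exceeds the target threshold $\beta_t$; for $t<\tau$, Cauchy--Schwarz on the ellipsoid $\{\theta:\theta^\top\Lambda_t\theta\le 1\}$ forces the relevant $|\langle\theta,x_s\rangle|$ to be small, so the supermartingale argument is valid on $[0,\tau\wedge t]$ and the fixed-$\delta$ concentration applies there, giving a contradiction unless $\tau>t$. The Bernstein (as opposed to subgaussian) shape then forces a two-regime split: when the accumulated ``variance'' $\sigma^2\theta^\top W_t\theta$ dominates one gets the $\sigma\sqrt{d\log(\cdot)\log(1/\delta)}$ branch, while when a single increment is large relative to that variance one falls back on the crude bound $|\eta_s|\le R$, contributing the additive $4R\log(4t^2/\delta)$ term. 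A final union bound / peeling over the dyadic scales of $t$ upgrades the fixed-$t$ statement to ``for all $t\ge1$ simultaneously,'' at the price of replacing $\log(1/\delta)$ by $\log(4t^2/\delta)$. The main obstacle is exactly this decoupling of the mixing direction from the data within the limited range of validity of the local MGF bound; once the stopping-time device is in place the remainder is determinant bookkeeping. (Alternatively, since the statement is quoted verbatim, one may simply invoke \citet{zhou2021nearly}.)
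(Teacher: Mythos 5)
This lemma is imported verbatim from \citet{zhou2021nearly} and the paper supplies no proof of its own, so your closing remark (``simply invoke the reference'') is exactly what the paper does and is sufficient. Regarding your attempted from-scratch sketch: it takes a genuinely different route from the actual proof in \citet{zhou2021nearly}, which does not use the method of mixtures at all. Their argument expands $\|S_t\|_{\Lambda_t^{-1}}^2$ via a recursion in $t$ (using the Sherman--Morrison update of $\Lambda_t^{-1}$), applies Freedman's inequality to the resulting scalar martingale $\sum_s \eta_s\langle x_s,\Lambda_{s-1}^{-1}S_{s-1}\rangle$ together with the elliptical potential lemma to control $\sum_s\|x_s\|_{\Lambda_{s-1}^{-1}}^2$, and closes the loop with an induction/peeling step; the additive $R\log(4t^2/\delta)$ term comes out of Freedman's two-regime tail rather than from a mixture computation.

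Your method-of-mixtures route has a concrete gap at precisely the step you flag as ``the genuine work.'' The Gaussian mixing integral must run over all $\theta\in\mathbb{R}^d$ to produce the clean $\det(\Lambda_t)$ formula, but your supermartingale property only holds for $\theta$ satisfying $|\langle\theta,x_s\rangle|\le 1/R$ for every $s\le t$. The stopping time you introduce is defined through $\|S_t\|_{\Lambda_t^{-1}}$, which constrains the data-dependent vector $S_t$ but says nothing about the fixed mixing directions $\theta$; and even on the unit ellipsoid $\{\theta:\theta^\top\Lambda_t\theta\le1\}$, Cauchy--Schwarz only gives $|\langle\theta,x_s\rangle|\le\|x_s\|_{\Lambda_t^{-1}}$, which can be as large as $L/\sqrt{\lambda}$ and need not be below $1/R$. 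So the decoupling is asserted rather than established, and this is exactly where a naive Bernstein-type method of mixtures breaks down (which is why the literature proof goes through Freedman plus recursion instead). Since the lemma is a quoted external result, none of this affects the paper; but if you intend the sketch as a standalone proof, the stopping-time step would need to be replaced by the Freedman-based argument or by a substantially more careful truncated-mixture construction.
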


\begin{lemma}[Converting the variance under the matrix norm]\label{lem:convert_var}
	Let $\Lambda_1$ and $\Lambda_2\in\mathbb{R}^{d\times d}$ are two positive semi-definite matrices. Then:
	\[
	\norm{\Lambda_1^{-1}}\leq \norm{\Lambda_2^{-1}}+\norm{\Lambda_1^{-1}}\cdot \norm{\Lambda_2^{-1}}\cdot\norm{\Lambda_1-\Lambda_2}
	\]
	and 
	\[
	\norm{\phi}_{\Lambda_1^{-1}}\leq \left[1+\sqrt{\norm{\Lambda_2^{-1}}\norm{\Lambda_2}\cdot\norm{\Lambda_1^{-1}}\cdot\norm{\Lambda_1-\Lambda_2}}\right]\cdot \norm{\phi}_{\Lambda_2^{-1}}.
	\]
	for all $\phi\in\mathbb{R}^d$.
\end{lemma}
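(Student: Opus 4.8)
The final statement to prove is Lemma~\ref{lem:convert_var}, which provides two matrix-norm inequalities relating $\Lambda_1^{-1}$ and $\Lambda_2^{-1}$ for positive semi-definite matrices.

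\textbf{Proof plan for the first inequality.} The plan is to use the resolvent-type identity $\Lambda_1^{-1} - \Lambda_2^{-1} = \Lambda_1^{-1}(\Lambda_2 - \Lambda_1)\Lambda_2^{-1}$, which holds whenever both inverses exist. First I would write
\[
\norm{\Lambda_1^{-1}} \leq \norm{\Lambda_2^{-1}} + \norm{\Lambda_1^{-1} - \Lambda_2^{-1}}
\]
by the triangle inequality, and then bound the second term by submultiplicativity of the operator norm:
\[
\norm{\Lambda_1^{-1} - \Lambda_2^{-1}} = \norm{\Lambda_1^{-1}(\Lambda_2 - \Lambda_1)\Lambda_2^{-1}} \leq \norm{\Lambda_1^{-1}}\cdot\norm{\Lambda_1 - \Lambda_2}\cdot\norm{\Lambda_2^{-1}}.
\]
Combining these two displays yields the claim directly. (One should note implicitly that the statement is only meaningful when the inverses exist; as used in the paper, $\Lambda_i$ are regularized by $\lambda I$ so this is not an issue.)

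\textbf{Proof plan for the second inequality.} The plan here is to control $\norm{\phi}_{\Lambda_1^{-1}}^2 = \phi^\top \Lambda_1^{-1}\phi$ by $\phi^\top \Lambda_2^{-1}\phi$ plus a cross term. I would write $\Lambda_1^{-1} = \Lambda_2^{-1} + (\Lambda_1^{-1} - \Lambda_2^{-1})$, so that
\[
\phi^\top\Lambda_1^{-1}\phi = \phi^\top\Lambda_2^{-1}\phi + \phi^\top(\Lambda_1^{-1}-\Lambda_2^{-1})\phi \leq \norm{\phi}_{\Lambda_2^{-1}}^2 + \norm{\Lambda_1^{-1}-\Lambda_2^{-1}}\cdot\norm{\phi}_2^2.
\]
Then I would bound $\norm{\phi}_2^2 \leq \norm{\Lambda_2}\cdot\norm{\phi}_{\Lambda_2^{-1}}^2$ (since $\norm{\phi}_2^2 = \phi^\top\phi = \phi^\top\Lambda_2^{1/2}\Lambda_2^{-1}\Lambda_2^{1/2}\phi \leq \norm{\Lambda_2}\cdot\phi^\top\Lambda_2^{-1}\phi$), and apply the bound on $\norm{\Lambda_1^{-1}-\Lambda_2^{-1}}$ from the first part. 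This gives
\[
\norm{\phi}_{\Lambda_1^{-1}}^2 \leq \left(1 + \norm{\Lambda_2^{-1}}\norm{\Lambda_2}\norm{\Lambda_1^{-1}}\norm{\Lambda_1-\Lambda_2}\right)\norm{\phi}_{\Lambda_2^{-1}}^2.
\]
Taking square roots and using $\sqrt{1+x}\leq 1+\sqrt{x}$ for $x\geq 0$ yields the stated bound.

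\textbf{Main obstacle.} This lemma is essentially a linear-algebra exercise, so there is no deep obstacle; the only points requiring mild care are (i) justifying the resolvent identity $\Lambda_1^{-1}-\Lambda_2^{-1} = \Lambda_1^{-1}(\Lambda_2-\Lambda_1)\Lambda_2^{-1}$, which follows by left-multiplying by $\Lambda_1$ and right-multiplying by $\Lambda_2$, and (ii) the step $\norm{\phi}_2^2\leq\norm{\Lambda_2}\norm{\phi}_{\Lambda_2^{-1}}^2$, which uses that $\Lambda_2$ is symmetric positive semi-definite so it admits a square root. The use of $\sqrt{1+x}\leq 1+\sqrt{x}$ at the end is what produces the slightly loose but convenient additive form in the statement rather than the tighter multiplicative $\sqrt{1+x}$ form.
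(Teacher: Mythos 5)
Your proposal is correct and follows essentially the same route as the paper: the first inequality via the triangle inequality plus the resolvent identity $\Lambda_1^{-1}-\Lambda_2^{-1}=\Lambda_1^{-1}(\Lambda_2-\Lambda_1)\Lambda_2^{-1}$, and the second via splitting $\phi^\top\Lambda_1^{-1}\phi$ into $\phi^\top\Lambda_2^{-1}\phi$ plus a remainder and finishing with $\sqrt{1+x}\le 1+\sqrt{x}$. The only cosmetic difference is that the paper conjugates by $\Lambda_2^{\pm 1/2}$ and bounds $\norm{\Lambda_2^{1/2}\Lambda_1^{-1}\Lambda_2^{1/2}-I}$ inside the quadratic form, whereas you bound the remainder by $\norm{\Lambda_1^{-1}-\Lambda_2^{-1}}\norm{\phi}_2^2$ and then use $\norm{\phi}_2^2\le\norm{\Lambda_2}\norm{\phi}_{\Lambda_2^{-1}}^2$; both yield the identical constant.
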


\begin{proof}
	For the first part, note
	\[
	\norm{\Lambda_1^{-1}}\leq \norm{\Lambda_2^{-1}}+\norm{\Lambda_1^{-1}-\Lambda_2^{-1}}\leq \norm{\Lambda_2^{-1}}+\norm{\Lambda_2^{-1}}\norm{\Lambda_1-\Lambda_2}\norm{\Lambda_1^{-1}}
	\]
	For the second one,
	
	\begin{align*}
	&\norm{\phi}_{\Lambda_1^{-1}}=\sqrt{\phi^\top \Lambda_1^{-1} \phi}
	=\sqrt{\phi^\top \left(\Lambda_1^{-1}-\Lambda_2^{-1}\right) \phi+\phi^\top \Lambda_2^{-1} \phi}\\
	=&\sqrt{\phi^\top \Lambda_2^{-1/2}\left(\Lambda_2^{1/2}\Lambda_1^{-1}\Lambda_2^{1/2}-I+I\right)  \Lambda_2^{-1/2} \phi}
	\leq \sqrt{\norm{\phi}_{\Lambda_2^{-1}}\cdot \left(1+\norm{\Lambda_2^{1/2}\Lambda_1^{-1}\Lambda_2^{1/2}-I}\right) \norm{\phi}_{\Lambda_2^{-1}}}\\
	\leq & \left(1+\norm{\Lambda_2^{1/2}\Lambda_1^{-1}\Lambda_2^{1/2}-I}^{1/2}\right)\cdot\norm{\phi}_{\Lambda_2^{-1}}
	=\left(1+\norm{\Lambda_2^{1/2}\Lambda_1^{-1}\left(\Lambda_2-\Lambda_1\right)\Lambda_2^{-1}\Lambda_2^{1/2}}^{1/2}\right)\cdot\norm{\phi}_{\Lambda_2^{-1}}\\
	\leq &\left(1+\sqrt{\norm{\Lambda_2}\norm{\Lambda_1^{-1}}\norm{\Lambda_2^{-1}}\norm{\Lambda_1-\Lambda_2}}\right)\cdot\norm{\phi}_{\Lambda_2^{-1}}\\
	\end{align*}
	
\end{proof}

\begin{lemma}[Lemma H.4 of \cite{min2021variance}]\label{lem:matrix_con1}
	let $\phi: \mathcal{S}\times\mathcal{A}\rightarrow \R^d$ satisfies $\norm{\phi(s,a)}\leq C$ for all $s,a\in\mathcal{S}\times\mathcal{A}$. For any $K>0,\lambda>0$, define $\bar{G}_K=\sum_{k=1}^K \phi(s_k,a_k)\phi(s_k,a_k)^\top+\lambda I_d$ where $(s_k,a_k)$'s are i.i.d samples from some distribution $\nu$. Then with probability $1-\delta$,
	\[
	\norm{\frac{\bar{G}_K}{K}-\E_\nu\left[\frac{\bar{G}_K}{K}\right]}\leq \frac{4 \sqrt{2} C^{2}}{\sqrt{K}}\left(\log \frac{2 d}{\delta}\right)^{1 / 2}.
	\]
\end{lemma}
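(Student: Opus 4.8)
The plan is to apply the Matrix McDiarmid / Matrix Chernoff bound (Lemma~\ref{lem:MMI}) directly to the centered empirical second-moment matrix. First I would note that the deterministic term $\lambda I_d$ cancels in the centering, so that
\[
\frac{\bar{G}_K}{K}-\E_\nu\!\left[\frac{\bar{G}_K}{K}\right]=\frac{1}{K}\sum_{k=1}^K\Big(\phi(s_k,a_k)\phi(s_k,a_k)^\top-\E_\nu[\phi(s,a)\phi(s,a)^\top]\Big).
\]
Set $z_k=(s_k,a_k)$, which are i.i.d.\ and hence independent, and define the symmetric-matrix-valued map $H(z_1,\dots,z_K):=\frac{1}{K}\sum_{k=1}^K\phi(s_k,a_k)\phi(s_k,a_k)^\top$, whose expectation is $\E_\nu[\phi(s,a)\phi(s,a)^\top]$. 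It then suffices to control $\norm{H-\E H}$.

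Next I would verify the bounded-differences hypothesis required by Lemma~\ref{lem:MMI}. Replacing the $k$-th input $z_k$ by an arbitrary $z_k'$ alters only the $k$-th summand of $H$, so the increment is $D=\frac{1}{K}\big(\phi(s_k,a_k)\phi(s_k,a_k)^\top-\phi(s_k',a_k')\phi(s_k',a_k')^\top\big)$; since each rank-one term satisfies $\norm{\phi(s,a)\phi(s,a)^\top}=\norm{\phi(s,a)}^2\le C^2$, the symmetric matrix $D$ obeys $\norm{D}\le 2C^2/K$, whence $D^2\preceq\norm{D}^2 I_d\preceq \frac{4C^4}{K^2}I_d=:A_k^2$. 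Consequently $\sigma^2:=\norm{\sum_{k=1}^K A_k^2}=4C^4/K$, and Lemma~\ref{lem:MMI} yields, for every $t>0$,
\[
\P\Big\{\norm{H-\E H}\ge t\Big\}\le d\,\exp\!\Big(\frac{-t^2}{8\sigma^2}\Big)=d\,\exp\!\Big(\frac{-t^2 K}{32C^4}\Big).
\]
Finally I would choose $t=\frac{4\sqrt{2}\,C^2}{\sqrt{K}}\sqrt{\log(d/\delta)}$, which makes the right-hand side equal to $\delta$, and use $\log(d/\delta)\le\log(2d/\delta)$ to obtain the stated bound $\norm{\bar{G}_K/K-\E_\nu[\bar{G}_K/K]}\le\frac{4\sqrt{2}\,C^2}{\sqrt{K}}(\log(2d/\delta))^{1/2}$ with probability at least $1-\delta$.

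I do not expect a genuine obstacle here; the argument is a routine matrix-concentration computation. The only points that need care are (i) confirming that the McDiarmid increments are dominated in the Loewner order by a scalar multiple of the identity, which follows from the operator-norm bound on rank-one matrices, and (ii) the bookkeeping of constants so that $8\sigma^2=32C^4/K$ produces exactly the factor $4\sqrt{2}$. As an alternative route one could instead invoke the matrix Bernstein inequality of \cite{tropp2012user} on the centered summands $X_k=\frac{1}{K}(\phi(s_k,a_k)\phi(s_k,a_k)^\top-\E_\nu[\phi(s,a)\phi(s,a)^\top])$, for which $\norm{X_k}\le 2C^2/K$ and the matrix variance proxy satisfies $\norm{\sum_k\E X_k^2}\le C^4/K$; this delivers the same $K^{-1/2}$ rate up to universal constants.
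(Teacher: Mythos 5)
Your proposal is correct and follows essentially the same route as the paper's own proof: both apply the matrix McDiarmid inequality (Lemma~\ref{lem:MMI}) after checking the bounded-differences condition with $A_k^2=\tfrac{4C^4}{K^2}I_d$ and $\sigma^2=4C^4/K$, differing only in the cosmetic detail of how the increment is dominated (you use $D^2\preceq\norm{D}^2 I_d$ for symmetric $D$, the paper uses $(a-b)^2\preceq 2a^2+2b^2$). The constant bookkeeping and the final choice of $t$ match the stated bound.
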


\begin{proof}[Proof of Lemma~\ref{lem:matrix_con1}]
	For completeness, we provide the proof of Lemma~\ref{lem:matrix_con1}. Let $x_k=\phi(s_k,a_k)$. Denote $\widetilde{\Sigma}_h$ as the matrix obtained by replacing the k-th vector $x_k$ in $\widehat{\Sigma}_h$ by $\widetilde{x}_k$ and leaving the rest $K-1$ vectors unchanged. Then
	\begin{align*}
	\left(\frac{\widehat{\Sigma}_h}{K}-\frac{\widetilde{\Sigma}_h}{K}\right)^2=\left(\frac{x_kx_k^\top-\tilde{x}_k\tilde{x}_k^\top}{K}\right)\preceq \frac{1}{K^2}\left(2x_kx_k^\top x_kx_k^\top+2\tilde{x}_k\tilde{x}_k^\top\tilde{x}_k\tilde{x}_k^\top\right)\preceq \frac{4C^4}{K^2}I_d:=A^2_k.
	\end{align*}
	Notice that $\norm{\sum_k^K A^2_k}=\frac{4C^4}{K}$, by Lemma~\ref{lem:MMI} we have the result.

\end{proof}

\begin{lemma}[Lemma~H.5. of \cite{min2021variance}]\label{lem:sqrt_K_reduction}
	Let $\phi:\mathcal{S}\times\mathcal{A}\rightarrow \mathbb{R}^d$ be a bounded function s.t. $\norm{\phi}_2\leq C$. Define $\bar{G}_K=\sum_{k=1}^K \phi(s_k,a_k)\phi(s_k,a_k)^\top+\lambda I_d$ where $(s_k,a_k)$'s are i.i.d samples from some distribution $\nu$. Let $G=\mathbb{E}_\nu[\phi(s,a)\phi(s,a)^\top]$. Then for any $\delta\in(0,1)$, if $K$ satisfies 
	\[
	K \geq \max \left\{512 C^{4}\left\|\mathbf{G}^{-1}\right\|^{2} \log \left(\frac{2 d}{\delta}\right), 4 \lambda\left\|\mathbf{G}^{-1}\right\|\right\}.
	\]
	Then with probability at least $1-\delta$, it holds simultaneously for all $u\in\mathbb{R}^d$ that 
	\[
	\norm{u}_{\bar{G}_K^{-1}}\leq \frac{2}{\sqrt{K}}\norm{u}_{G^{-1}}.
	\]
	
\end{lemma}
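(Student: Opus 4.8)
The plan is to use the matrix concentration bound of Lemma~\ref{lem:matrix_con1} to control the empirical second-moment matrix in operator norm, upgrade this to a Loewner (PSD) lower bound on $\bar{G}_K/K$, and then invert. Write $\widehat{G}_K := \frac{1}{K}\sum_{k=1}^K\phi(s_k,a_k)\phi(s_k,a_k)^\top = \frac{1}{K}\bar{G}_K-\frac{\lambda}{K}I_d$, so that $\E_\nu[\widehat{G}_K]=G$ and the left-hand side of Lemma~\ref{lem:matrix_con1} is exactly $\norm{\widehat{G}_K-G}$ (the $\frac{\lambda}{K}I_d$ terms cancel). Thus with probability at least $1-\delta$ we have $\norm{\widehat{G}_K-G}\le \epsilon_K$ where $\epsilon_K := \frac{4\sqrt{2}C^2}{\sqrt{K}}\big(\log(2d/\delta)\big)^{1/2}$.

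Next I would convert this into a Loewner bound. Since $G\succeq \lambda_{\min}(G)\,I_d$ and $\lambda_{\min}(G)=1/\norm{G^{-1}}$, we have $I_d\preceq \norm{G^{-1}}\,G$, so the operator-norm bound gives $\widehat{G}_K\succeq G-\epsilon_K I_d\succeq \big(1-\epsilon_K\norm{G^{-1}}\big)G$. Plugging in the hypothesis $K\ge 512\,C^4\norm{G^{-1}}^2\log(2d/\delta)$ yields $\epsilon_K\norm{G^{-1}}\le \frac{4\sqrt{2}}{\sqrt{512}}=\frac14$, hence $\widehat{G}_K\succeq \frac34 G\succeq \frac14 G$. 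Because $\bar{G}_K/K=\widehat{G}_K+\frac{\lambda}{K}I_d\succeq \widehat{G}_K$, we also get $\bar{G}_K/K\succeq \frac14 G$ on the same event. Both matrices being positive definite, this inverts to $\big(\bar{G}_K/K\big)^{-1}\preceq 4\,G^{-1}$, so for every $u\in\R^d$, $u^\top\big(\bar{G}_K/K\big)^{-1}u\le 4\,u^\top G^{-1}u$, i.e. $K\,\norm{u}_{\bar{G}_K^{-1}}^2\le 4\,\norm{u}_{G^{-1}}^2$. Taking square roots gives $\norm{u}_{\bar{G}_K^{-1}}\le \frac{2}{\sqrt{K}}\norm{u}_{G^{-1}}$ simultaneously for all $u$, which is the claim.

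There is no deep obstacle here; the care needed is purely bookkeeping: (i) matching the high-probability event of Lemma~\ref{lem:matrix_con1} to the quantity $\norm{\widehat{G}_K-G}$; (ii) the passage from an operator-norm perturbation to a Loewner lower bound via $I_d\preceq\norm{G^{-1}}G$; and (iii) verifying that the constant $512$ in the sample-size condition is exactly what forces $\epsilon_K\norm{G^{-1}}\le\frac14$. I expect step (iii)—getting the numerical constant to line up—to be the only place requiring attention. The second condition $K\ge 4\lambda\norm{G^{-1}}$ is not actually used for this direction (the regularizer $\frac{\lambda}{K}I_d$ only helps the lower bound), but it is harmless and is retained to mirror the hypotheses under which Lemma~\ref{lem:matrix_con1} is invoked elsewhere.
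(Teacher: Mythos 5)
Your argument is correct: the reduction from the operator-norm concentration of Lemma~\ref{lem:matrix_con1} to the Loewner bound $\bar{G}_K/K\succeq \tfrac14 G$ via $I_d\preceq\norm{G^{-1}}G$ is sound, the constant $512$ does give $\epsilon_K\norm{G^{-1}}\le\tfrac14$ exactly as you compute, and the inversion step is valid for positive definite matrices. The paper itself imports this lemma from \cite{min2021variance} without reproducing a proof, so there is nothing to diverge from; your write-up supplies a self-contained derivation from the paper's own Lemma~\ref{lem:matrix_con1}, and your observation that the condition $K\ge 4\lambda\norm{G^{-1}}$ is not needed for this one-sided direction is also accurate.
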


\begin{lemma}[Extended Value Difference (Section~B.1 in \cite{cai2020provably})]\label{lem:evd}
	Let $\pi=\{\pi_h\}_{h=1}^H$ and $\pi'=\{\pi'_h\}_{h=1}^H$ be two arbitrary policies and let $\{\widehat{Q}_h\}_{h=1}^H$ be any given Q-functions. Then define $\widehat{V}_{h}(s):=\langle\widehat{Q}_{h}(s, \cdot), \pi_{h}(\cdot \mid s)\rangle$ for all $s\in\mathcal{S}$. Then for all $s\in\mathcal{S}$,
	
	\begin{equation}
	\begin{aligned}
	\widehat{V}_{1}(s)-V_{1}^{\pi^{\prime}}(s)=& \sum_{h=1}^{H} \mathbb{E}_{\pi^{\prime}}\left[\langle\widehat{Q}_{h}\left(s_{h}, \cdot\right), \pi_{h}\left(\cdot \mid s_{h}\right)-\pi_{h}^{\prime}\left(\cdot \mid s_{h}\right)\rangle \mid s_{1}=s\right] \\
	&+\sum_{h=1}^{H} \mathbb{E}_{\pi^{\prime}}\left[\widehat{Q}_{h}\left(s_{h}, a_{h}\right)-\left(\mathcal{T}_{h} \widehat{V}_{h+1}\right)\left(s_{h}, a_{h}\right) \mid s_{1}=s\right]
	\end{aligned}
	\end{equation}
	where $(\mathcal{T}_{h} V)(\cdot,\cdot):=r_h(\cdot,\cdot)+(P_hV)(\cdot,\cdot)$ for any $V\in\R^S$.
\end{lemma}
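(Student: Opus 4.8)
The plan is to derive a one-step recursion for the difference $\widehat{V}_h(s)-V_h^{\pi'}(s)$ and then telescope it along trajectories generated by $\pi'$. Throughout I use the boundary convention $\widehat{V}_{H+1}\equiv 0$, which matches $V_{H+1}^{\pi'}\equiv 0$ in the finite-horizon MDP, so that the terminal term of the telescope vanishes.

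First I would fix $h$ and a state $s$. Writing $V_h^{\pi'}(s)=\langle Q_h^{\pi'}(s,\cdot),\pi'_h(\cdot\mid s)\rangle$ with $Q_h^{\pi'}=\mathcal{T}_hV_{h+1}^{\pi'}$, and inserting $\langle\widehat{Q}_h(s,\cdot),\pi'_h(\cdot\mid s)\rangle$, I get
\[
\widehat{V}_h(s)-V_h^{\pi'}(s)=\langle\widehat{Q}_h(s,\cdot),\pi_h(\cdot\mid s)-\pi'_h(\cdot\mid s)\rangle+\langle\widehat{Q}_h(s,\cdot)-Q_h^{\pi'}(s,\cdot),\pi'_h(\cdot\mid s)\rangle .
\]
In the second inner product I insert $(\mathcal{T}_h\widehat{V}_{h+1})(s,\cdot)$ and use $\mathcal{T}_h\widehat{V}_{h+1}-Q_h^{\pi'}=P_h(\widehat{V}_{h+1}-V_{h+1}^{\pi'})$ (the reward cancels). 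Averaging $\langle [P_h(\widehat{V}_{h+1}-V_{h+1}^{\pi'})](s,\cdot),\pi'_h(\cdot\mid s)\rangle$ over $a\sim\pi'_h(\cdot\mid s)$ and $s'\sim P_h(\cdot\mid s,a)$ identifies it with $\E_{\pi'}[\widehat{V}_{h+1}(s_{h+1})-V_{h+1}^{\pi'}(s_{h+1})\mid s_h=s]$, giving the recursion
\begin{align*}
\widehat{V}_h(s)-V_h^{\pi'}(s)={}&\langle\widehat{Q}_h(s,\cdot),\pi_h(\cdot\mid s)-\pi'_h(\cdot\mid s)\rangle+\E_{a\sim\pi'_h(\cdot\mid s)}\big[\widehat{Q}_h(s,a)-(\mathcal{T}_h\widehat{V}_{h+1})(s,a)\big]\\
&+\E_{\pi'}\big[\widehat{V}_{h+1}(s_{h+1})-V_{h+1}^{\pi'}(s_{h+1})\mid s_h=s\big].
\end{align*}

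Then I would apply $\E_{\pi'}[\,\cdot\mid s_1=s]$ to this identity at each level $h$, set $D_h:=\E_{\pi'}[\widehat{V}_h(s_h)-V_h^{\pi'}(s_h)\mid s_1=s]$, and observe via the tower property that $D_h$ equals the two expected ``correction'' terms at level $h$ plus $D_{h+1}$. Summing over $h=1,\dots,H$ telescopes $D_1-D_{H+1}$, and since $D_1=\widehat{V}_1(s)-V_1^{\pi'}(s)$ and $D_{H+1}=0$, exactly the two claimed sums remain. There is no genuine obstacle here: the argument is a standard performance-difference computation, and the only points needing care are keeping the conditioning consistent when passing from the pointwise recursion to trajectory expectations under $\pi'$, and the boundary convention that closes the telescope. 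Since the statement is quoted from \citet{cai2020provably} (Section~B.1), it would also be legitimate simply to cite it, but the self-contained derivation above is short.
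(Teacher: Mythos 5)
Your proposal is correct and follows essentially the same route as the paper's proof: the same one-step decomposition $\widehat{V}_h-V_h^{\pi'}=\langle\widehat{Q}_h,\pi_h-\pi'_h\rangle+\langle P_h(\widehat{V}_{h+1}-V^{\pi'}_{h+1}),\pi'_h\rangle+\langle \widehat{Q}_h-\mathcal{T}_h\widehat{V}_{h+1},\pi'_h\rangle$, followed by recursion/telescoping under $\E_{\pi'}$ with the boundary $\widehat{V}_{H+1}=V^{\pi'}_{H+1}=0$. Your write-up just makes the telescoping and conditioning steps slightly more explicit than the paper does.
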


\begin{proof}
	
	Denote $\xi_h=\widehat{Q}_h-\mathcal{T}_h\widehat{V}_{h+1}$. For any $h\in[H]$, we have 
	\begin{align*}
	\widehat{V}_h-V^{\pi'}_h&=\langle \widehat{Q}_h,\pi_h\rangle-\langle {Q}^{\pi'}_h,\pi'_h\rangle\\
	&=\langle \widehat{Q}_h,\pi_h-\pi'_h\rangle+\langle \widehat{Q}_h-{Q}^{\pi'}_h,\pi'_h\rangle\\
	&=\langle \widehat{Q}_h,\pi_h-\pi'_h\rangle+\langle P_h(\widehat{V}_{h+1}-V^{\pi'}_{h+1})+\xi_h,\pi'_h\rangle\\
	&=\langle \widehat{Q}_h,\pi_h-\pi'_h\rangle+\langle P_h(\widehat{V}_{h+1}-V^{\pi'}_{h+1}),\pi'_h\rangle+
	\langle \xi_h,\pi'_h\rangle
	\end{align*}
	recursively apply the above for $\widehat{V}_{h+1}-V^{\pi'}_{h+1}$ and use the $\mathbb{E}_{\pi'}$ notation (instead of the inner product of $P_h,\pi'_h$) we can finish the prove of this lemma.
\end{proof}

\begin{lemma}\label{lem:decompose_difference} 
	Let $\widehat{\pi}=\left\{\widehat{\pi}_{h}\right\}_{h=1}^{H}$ and $\widehat{Q}_h(\cdot,\cdot)$ be the arbitrary policy and Q-function and also $\widehat{V}_h(s)=\langle \widehat{Q}_h(s,\cdot),\widehat{\pi}_h(\cdot|s)\rangle$ $\forall s\in\mathcal{S}$.  and $\zeta_h(s,a):=(\mathcal{T}_h\widehat{V}_{h+1})(s,a)-\widehat{Q}_h(s,a)$ (element-wisely) to be the Bellman update error. Then for any arbitrary $\pi$, we have 
	\begin{align*}
	V_1^{\pi}(s)-V_1^{\widehat{\pi}}(s)=&\sum_{h=1}^H\E_{\pi}\left[\zeta_h(s_h,a_h)\mid s_1=s\right]-\sum_{h=1}^H\E_{\widehat{\pi}}\left[\zeta_h(s_h,a_h)\mid s_1=s\right]\\
	+&\sum_{h=1}^{H} \mathbb{E}_{\pi}\left[\langle\widehat{Q}_{h}\left(s_{h}, \cdot\right), \pi_{h}\left(\cdot | s_{h}\right)-\widehat{\pi}_{h}\left(\cdot | s_{h}\right)\rangle \mid s_{1}=x\right]
	\end{align*}
	where the expectation are taken over $s_h,a_h$.
\end{lemma}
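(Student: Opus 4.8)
The plan is to derive the identity as a direct consequence of the Extended Value Difference Lemma (Lemma~\ref{lem:evd}), applied twice. The key observation is that the ``$\widehat V$'' appearing in Lemma~\ref{lem:evd} is, by definition, $\widehat V_h(s)=\langle\widehat Q_h(s,\cdot),\pi_h(\cdot\mid s)\rangle$ where $\pi$ is the \emph{first} policy argument; here $\widehat V_h(s)=\langle\widehat Q_h(s,\cdot),\widehat\pi_h(\cdot\mid s)\rangle$, so in both applications I would take the first policy slot to be $\widehat\pi$ and let only the comparator (second) policy vary. Also note $\widehat Q_h-\mathcal{T}_h\widehat V_{h+1}=-\zeta_h$, which accounts for the sign convention.

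First I would invoke Lemma~\ref{lem:evd} with first policy $\widehat\pi$ and comparator $\pi$, obtaining
\[
\widehat V_1(s)-V_1^{\pi}(s)=\sum_{h=1}^H\E_{\pi}\big[\langle\widehat Q_h(s_h,\cdot),\widehat\pi_h(\cdot\mid s_h)-\pi_h(\cdot\mid s_h)\rangle\mid s_1=s\big]-\sum_{h=1}^H\E_{\pi}\big[\zeta_h(s_h,a_h)\mid s_1=s\big].
\]
Then I would invoke Lemma~\ref{lem:evd} again with first policy $\widehat\pi$ and comparator also $\widehat\pi$; the inner-product term vanishes identically and what remains is
\[
\widehat V_1(s)-V_1^{\widehat\pi}(s)=-\sum_{h=1}^H\E_{\widehat\pi}\big[\zeta_h(s_h,a_h)\mid s_1=s\big].
\]
Subtracting the first display from the second cancels $\widehat V_1(s)$, and after rewriting $\widehat\pi_h-\pi_h=-(\pi_h-\widehat\pi_h)$ one reads off exactly
\[
V_1^{\pi}(s)-V_1^{\widehat\pi}(s)=\sum_{h=1}^H\E_{\pi}\big[\zeta_h(s_h,a_h)\mid s_1=s\big]-\sum_{h=1}^H\E_{\widehat\pi}\big[\zeta_h(s_h,a_h)\mid s_1=s\big]+\sum_{h=1}^H\E_{\pi}\big[\langle\widehat Q_h(s_h,\cdot),\pi_h(\cdot\mid s_h)-\widehat\pi_h(\cdot\mid s_h)\rangle\mid s_1=s\big].
\]

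Since this route uses Lemma~\ref{lem:evd} purely as a black box, there is no genuine difficulty; the only care needed is bookkeeping --- matching hat/non-hat policies to the first/second slots of Lemma~\ref{lem:evd} so that its internal $\widehat V$ is the one built from $\widehat\pi$, and tracking the sign arising from $\zeta_h=\mathcal{T}_h\widehat V_{h+1}-\widehat Q_h$ versus the $\widehat Q_h-\mathcal{T}_h\widehat V_{h+1}$ term in Lemma~\ref{lem:evd}. As an alternative (sketched only), one could reprove the identity from scratch by the same one-step recursion underlying Lemma~\ref{lem:evd}: peel off layer $h$ of $V_1^{\pi}-V_1^{\widehat\pi}$, insert and subtract $\langle\widehat Q_h,\pi_h\rangle$ and $\langle\widehat Q_h,\widehat\pi_h\rangle$, use $Q_h^{\pi}=r_h+P_hV_{h+1}^{\pi}$ to expose $\zeta_h$, and iterate; but the two-fold application of Lemma~\ref{lem:evd} is the cleanest presentation.
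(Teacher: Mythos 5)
Your proposal is correct and follows essentially the same route as the paper: two applications of the extended value difference lemma with $\widehat{\pi}$ in the first slot (comparators $\pi$ and $\widehat{\pi}$ respectively), followed by cancelling $\widehat{V}_1(s)$ and flipping signs to match $\zeta_h$. The only difference is cosmetic bookkeeping in how the two identities are combined.
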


\begin{proof}
	Note the gap can be rewritten as 
	\[
	V_1^{\pi}(s)-V_1^{\widehat{\pi}}(s)=V_1^{\pi}(s)-\widehat{V}_1(s)+\widehat{V}_1(s)-V_1^{\widehat{\pi}}(s).
	\]
	By Lemma~\ref{lem:evd} with $\pi=\widehat{\pi}$, $\pi'=\pi$, we directly have 
	\begin{equation}\label{eqn:inter1}
	V_1^{\pi}(s)-\widehat{V}_1(s)=\sum_{h=1}^H\E_{\pi}\left[\zeta_h(s_h,a_h)\mid s_1=s\right]
	+\sum_{h=1}^{H} \mathbb{E}_{\pi}\left[\langle\widehat{Q}_{h}\left(s_{h}, \cdot\right), \pi_{h}\left(\cdot | s_{h}\right)-\widehat{\pi}_{h}\left(\cdot | s_{h}\right)\rangle \mid s_{1}=s\right]
	\end{equation}
	Next apply Lemma~\ref{lem:evd} again with $\pi=\pi'=\widehat{\pi}$, we directly have 
	\begin{equation}\label{eqn:inter2}
	\widehat{V}_1(s)-V_1^{\widehat{\pi}}(s)=-\sum_{h=1}^H\E_{\widehat{\pi}}\left[\zeta_h(s_h,a_h)\mid s_1=s\right].
	\end{equation}
	Combine the above two results we prove the stated result.
\end{proof}

\begin{lemma}\label{lem:w_h}
	For a linear MDP, for any $0\leq V(\cdot)\leq H$, then there exists a $w_h\in\R^d$ s.t. $\mathcal{T}_h V=\langle \phi, w_h \rangle$ and $\norm{w_h}_2\leq 2H\sqrt{d}$ for all $h\in[H]$. Here $\mathcal{T}_h(V)(s,a)=r_h(x,a)+(P_hV)(s,a)$. Similarly, for any $\pi$, there exists $w^\pi_h\in\R^d$, such that $Q^\pi_h=\langle \phi,w^\pi_h\rangle$ with $\norm{w_h^\pi}_2\leq 2(H-h+1)\sqrt{d}$.
\end{lemma}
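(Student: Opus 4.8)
The plan is to give an explicit formula for $w_h$ directly from the linear MDP structure. Recall that for a linear MDP (Definition~\ref{def:linear_MDP}), $P_h(s'\mid s,a)=\langle\phi(s,a),\nu_h(s')\rangle$ and $r_h(s,a)=\langle\phi(s,a),\theta_h\rangle$. First I would write, for any $V$ with $0\le V(\cdot)\le H$,
\[
(\mathcal{T}_h V)(s,a)=r_h(s,a)+(P_h V)(s,a)=\langle\phi(s,a),\theta_h\rangle+\int_{\mathcal{S}}V(s')\,\mathrm{d}(\langle\phi(s,a),\nu_h(s')\rangle)=\Big\langle\phi(s,a),\ \theta_h+\int_{\mathcal{S}}V(s')\,\mathrm{d}\nu_h(s')\Big\rangle,
\]
where the last equality uses linearity of the integral in the (vector-valued) measure $\nu_h$. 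Hence we may take $w_h:=\theta_h+\int_{\mathcal{S}}V(s')\,\mathrm{d}\nu_h(s')$, which establishes $\mathcal{T}_h V=\langle\phi,w_h\rangle$.

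Next I would bound $\|w_h\|_2$ by the triangle inequality: $\|w_h\|_2\le\|\theta_h\|_2+\|\int_{\mathcal{S}}V(s')\,\mathrm{d}\nu_h(s')\|_2$. By Definition~\ref{def:linear_MDP}, $\|\theta_h\|_2\le 1\le H$. For the second term, since $0\le V\le H$, we have $\|\int_{\mathcal{S}}V(s')\,\mathrm{d}\nu_h(s')\|_2\le H\int_{\mathcal{S}}\|\mathrm{d}\nu_h(s')\|\le H\cdot\|\nu_h(\mathcal{S})\|\le H\sqrt{d}$, using the total-variation-type bound $\|\nu_h(\mathcal{S})\|_2\le\sqrt{d}$ from the definition. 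Combining, $\|w_h\|_2\le H+H\sqrt{d}\le 2H\sqrt{d}$ (since $\sqrt d\ge 1$), giving the claim for all $h\in[H]$.

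For the statement about $Q^\pi_h$, I would argue by backward induction on $h$ using the Bellman equation $Q^\pi_h=r_h+P_h V^\pi_{h+1}$ with $V^\pi_{h+1}(\cdot)=\langle Q^\pi_{h+1}(\cdot,\cdot),\pi_{h+1}(\cdot\mid\cdot)\rangle$. Note that $0\le V^\pi_{h+1}\le H-h$ (sum of at most $H-h$ rewards each in $[0,1]$). Applying the same computation as above with $V=V^\pi_{h+1}$ gives $Q^\pi_h=\langle\phi,w^\pi_h\rangle$ with $w^\pi_h=\theta_h+\int_{\mathcal{S}}V^\pi_{h+1}(s')\,\mathrm{d}\nu_h(s')$, and the norm bound $\|w^\pi_h\|_2\le 1+(H-h)\sqrt d\le 2(H-h+1)\sqrt d$. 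There is essentially no hard step here — the only mild subtlety is justifying the interchange of integration and the inner product when $\nu_h$ is a general (signed or unsigned) measure rather than a density, which is immediate since $\phi(s,a)$ is a fixed finite-dimensional vector, so $\int_{\mathcal{S}}\langle\phi(s,a),\nu_h(s')\rangle V(s')\,\mathrm ds'=\langle\phi(s,a),\int_{\mathcal{S}}V(s')\,\mathrm d\nu_h(s')\rangle$ coordinatewise.
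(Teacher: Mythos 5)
Your proof is correct and follows essentially the same route as the paper's: the explicit formula $w_h=\theta_h+\int_{\mathcal{S}}V(s')\,\mathrm{d}\nu_h(s')$, the triangle-inequality bound $\|w_h\|_2\le 1+H\sqrt{d}\le 2H\sqrt{d}$ using $\|\theta_h\|_2\le 1$ and $\|\nu_h(\mathcal{S})\|\le\sqrt{d}$, and backward induction with $V^\pi_{h+1}\le H-h$ for the second part. No issues.
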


\begin{proof}
	By definition,
	\begin{align*}
	\mathcal{T}_h V=r_h+(P_hV)&=\langle \phi,\theta_h\rangle+\langle \phi,\int_\mathcal{S}V(s)d\nu_h(s)\rangle\\
	\Rightarrow w_h&=\theta_h+\int_\mathcal{S}V(s)d\nu_h(s),
	\end{align*}
	therefore $\norm{w_h}_2\leq \norm{\theta_h}_2+H\cdot\norm{\nu_h(\mathcal{S})}\leq 1+H\sqrt{d}\leq 2H\sqrt{d}$. The proof of the second part is similar by backward induction and the fact $V^\pi_h\leq H-h+1$ for any $\pi$.
\end{proof}

\begin{lemma}\label{lem:difference}
	For any pessimistic bonus design $\Gamma_h$, suppose $K>\max\{\mathcal{M}_1,\mathcal{M}_2,\mathcal{M}_3,\mathcal{M}_4\}$, then with probability $1-\delta$, Algorithm~\ref{alg:VAPVI} yields
	\[
	\norm{\mathcal{T}_h\widehat{V}_{h+1}-\widehat{\mathcal{T}}_h\widehat{V}_{h+1}}_\infty\leq \widetilde{O}(\frac{H^2\sqrt{d/\kappa}}{\sqrt{K}})
	\]
\end{lemma}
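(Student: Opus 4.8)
\textbf{Proof proposal for Lemma~\ref{lem:difference}.}

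The plan is to bound $\norm{\mathcal{T}_h\widehat{V}_{h+1}-\widehat{\mathcal{T}}_h\widehat{V}_{h+1}}_\infty$ by revisiting the decomposition~\eqref{eqn:decompose} that already appears in the proof of Lemma~\ref{lem:key_lemma}, but now tracking the $\ell_\infty$ norm over $(s,a)$ directly rather than producing the instance-dependent $\sqrt{\phi^\top\Lambda_h^{-1}\phi}$ factor. Recall that $\mathcal{T}_h\widehat{V}_{h+1}=\phi^\top w_h$ with $w_h$ guaranteed by Lemma~\ref{lem:w_h} (so $\norm{w_h}_2\le 2H\sqrt d$), and $\widehat{\mathcal{T}}_h\widehat{V}_{h+1}=\phi^\top\widehat w_h$. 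The difference splits into term $(\mathrm{i})$ and term $(\mathrm{ii})$ exactly as in~\eqref{eqn:decompose}.

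For term $(\mathrm{i})$, I would invoke Lemma~\ref{lem:higher_order} verbatim: under $K>\max\{\mathcal{M}_3,\ldots\}$ it is bounded uniformly over $s,a,h$ by $\tfrac{2\lambda H^3\sqrt d/\kappa}{K}$, which is $\widetilde O(1/K)$ and certainly absorbed into $\widetilde O(H^2\sqrt{d/\kappa}/\sqrt K)$. For term $(\mathrm{ii})$, I apply Cauchy–Schwarz as in~\eqref{eqn:term_2} to get the product $\sqrt{\phi(s,a)^\top\widehat\Lambda_h^{-1}\phi(s,a)}\cdot\norm{\sum_\tau x_\tau\eta_\tau}_{\widehat\Lambda_h^{-1}}$. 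The second factor is controlled by Lemma~\ref{lem:self_normalized_bound}, giving $\widetilde O(\max\{\sqrt d,\xi\})=\widetilde O(\sqrt d)$ (using $\sqrt d>\xi$, though even without this assumption the bound degrades only by a constant in the relevant regime). For the first factor, instead of keeping it instance-dependent I would bound it uniformly: by Lemma~\ref{lem:sqrt_K_reduction} applied with $\phi'=\phi/\widehat\sigma_h$ (so $\norm{\phi'}\le 1$) and $\mathbf G=\widetilde\Lambda_h^p$ whose minimum eigenvalue is $\ge\kappa/H^2$ (shown inside the proof of Lemma~\ref{lem:higher_order}), one gets $\sup_{s,a}\norm{\phi(s,a)}_{\widehat\Lambda_h^{-1}}\le\tfrac{2}{\sqrt K}\cdot\sqrt{H^2/\kappa}=\tfrac{2H}{\sqrt{\kappa K}}$. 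Multiplying the two factors yields $\widetilde O(\sqrt d)\cdot\tfrac{2H}{\sqrt{\kappa K}}=\widetilde O(H\sqrt{d/\kappa}/\sqrt K)$, which is even a factor $H$ better than the claimed $\widetilde O(H^2\sqrt{d/\kappa}/\sqrt K)$; combining with term $(\mathrm{i})$ and taking a union bound over $h\in[H]$ (folded into $\widetilde O$) finishes the proof.

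The main obstacle is purely bookkeeping: one must make sure that the high-probability events underlying Lemma~\ref{lem:higher_order}, Lemma~\ref{lem:self_normalized_bound} (which internally uses the crude bound Lemma~\ref{lem:crude_bound}), and the uniform bound on $\norm{\phi}_{\widehat\Lambda_h^{-1}}$ via Lemma~\ref{lem:sqrt_K_reduction} all hold simultaneously, which requires $K>\max\{\mathcal{M}_1,\mathcal{M}_2,\mathcal{M}_3,\mathcal{M}_4\}$ and a final union bound over $h$ and over the three events. There is also the mild circularity already flagged in the footnote to Lemma~\ref{lem:crude_bound} — Lemma~\ref{lem:self_normalized_bound}'s variance computation invokes a crude $\ell_\infty$ bound on $V^\star_h-\widehat V_h$ — but this is resolved exactly as in that footnote by first establishing a cruder $1/\sqrt K$-rate version, so it does not genuinely impede the argument. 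Everything else is a direct reuse of lemmas already proved in Appendix~\ref{sec:proof_main}.
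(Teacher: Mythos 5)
Your decomposition into terms $(\mathrm{i})$ and $(\mathrm{ii})$, the treatment of $(\mathrm{i})$ via Lemma~\ref{lem:higher_order}, and the uniform bound $\sup_{s,a}\norm{\phi(s,a)}_{\widehat{\Lambda}_h^{-1}}\leq 2H/\sqrt{\kappa K}$ via Lemma~\ref{lem:sqrt_K_reduction} all match the paper exactly. Where you diverge is the martingale factor $\norm{\sum_\tau x_\tau\eta_\tau}_{\widehat{\Lambda}_h^{-1}}$: the paper bounds it by $\widetilde{O}(H\sqrt{d})$ using only the Hoeffding self-normalized inequality (Lemma~\ref{lem:Hoeff_mart}) with $R=H$, since $|\eta_\tau|\leq H$ after dividing by $\widehat{\sigma}_h\geq 1$; you instead route through the Bernstein-based Lemma~\ref{lem:self_normalized_bound} to get $\widetilde{O}(\sqrt{d})$, which is why you land a factor of $H$ below the stated rate. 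Both are arithmetically valid, but the choice is not a matter of taste here. Lemma~\ref{lem:difference} is intended to be a cheap, self-contained a priori bound — it is exactly the kind of ``even cruder $1/\sqrt{K}$-rate bound obtained without the variance machinery'' that the footnote to Lemma~\ref{lem:crude_bound} invokes to break the circular dependency among Lemma~\ref{lem:self_normalized_bound}, Lemma~\ref{lem:key_lemma}, and Lemma~\ref{lem:crude_bound}. By proving it via Lemma~\ref{lem:self_normalized_bound}, you make it a consumer of the crude bound on $\norm{V^\star_h-\widehat{V}_h}_\infty$ and of the variance-estimation lemmas rather than a primitive that those arguments can bootstrap from; the circularity you dismiss as ``mild bookkeeping'' would then have to be broken by yet another Hoeffding-type bound, which is precisely the paper's proof. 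So your route is correct as a standalone derivation of the inequality (indeed with a better $H$), but the paper's lighter Hoeffding-only argument is the structurally appropriate one, and the $H^2$ in the statement reflects that deliberately crude choice rather than a loose analysis.
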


\begin{proof}[Proof of Lemma~\ref{lem:difference}]
	Suppose $w_h$ is the coefficient corresponding to the $\mathcal{T}_{h} \widehat{V}_{h+1}$ (such $w_h$ exists by Lemma~\ref{lem:w_h}), \emph{i.e.} $\mathcal{T}_{h} \widehat{V}_{h+1}=\phi^\top w_h$, and recall $(\widehat{\mathcal{T}}_{h} \widehat{V}_{h+1})(s, a)=\phi(s, a)^{\top}\widehat{w}_{h}$, then:
	\begin{equation}
	\begin{aligned}
	&\left(\mathcal{T}_{h} \widehat{V}_{h+1}\right)(s, a)-\left(\widehat{\mathcal{T}}_{h} \widehat{V}_{h+1}\right)(s, a)=\phi(s, a)^{\top}\left(w_{h}-\widehat{w}_{h}\right) \\
	=&\phi(s, a)^{\top} w_{h}-\phi(s, a)^{\top} \widehat{\Lambda}_{h}^{-1}\left(\sum_{\tau=1}^{K} \phi\left(s_{h}^{\tau}, a_{h}^{\tau}\right) \cdot\left(r_{h}^{\tau}+\widehat{V}_{h+1}\left(s_{h+1}^{\tau}\right)\right)/\widehat{\sigma}^2_h(s^\tau_h,a^\tau_h)\right) \\
	=&\underbrace{\phi(s, a)^{\top} w_{h}-\phi(s, a)^{\top} \widehat{\Lambda}_{h}^{-1}\left(\sum_{\tau=1}^{K} \phi\left(s_{h}^{\tau}, a_{h}^{\tau}\right) \cdot\left(\mathcal{T}_{h} \widehat{V}_{h+1}\right)\left(s_{h}^{\tau}, a_{h}^{\tau}\right)/\widehat{\sigma}^2_h(s^\tau_h,a^\tau_h)\right)}_{(\mathrm{i})}\\
	&\qquad+\underbrace{\phi(s, a)^{\top} \widehat{\Lambda}_{h}^{-1}\left(\sum_{\tau=1}^{K} \phi\left(s_{h}^{\tau}, a_{h}^{\tau}\right) \cdot\left(r_{h}^{\tau}+\widehat{V}_{h+1}\left(s_{h+1}^{\tau}\right)-\left(\mathcal{T}_{h} \widehat{V}_{h+1}\right)\left(s_{h}^{\tau}, a_{h}^{\tau}\right)\right)/\widehat{\sigma}^2_h(s^\tau_h,a^\tau_h)\right)}_{(\mathrm{ii})} .
	\end{aligned}
	\end{equation}
	
	For term (i), it is bounded by $\frac{2\lambda H^3\sqrt{d}/\kappa}{K}$ with probability $1-\delta$ by Lemma~\ref{lem:higher_order}.
	
	For term (ii), by Cauchy inequality it is bounded by
	\begin{align*}
	&\norm{\phi(s, a)}_{\widehat{\Lambda}_{h}^{-1}}\cdot \norm{\sum_{\tau=1}^{K} \phi\left(s_{h}^{\tau}, a_{h}^{\tau}\right) \cdot\left(r_{h}^{\tau}+\widehat{V}_{h+1}\left(s_{h+1}^{\tau}\right)-\left(\mathcal{T}_{h} \widehat{V}_{h+1}\right)\left(s_{h}^{\tau}, a_{h}^{\tau}\right)\right)/\widehat{\sigma}^2_h(s^\tau_h,a^\tau_h)}_{\widehat{\Lambda}_{h}^{-1}}\\
	\leq& \frac{2H}{\sqrt{\kappa K}}\norm{\sum_{\tau=1}^{K} \phi\left(s_{h}^{\tau}, a_{h}^{\tau}\right) \cdot\left(r_{h}^{\tau}+\widehat{V}_{h+1}\left(s_{h+1}^{\tau}\right)-\left(\mathcal{T}_{h} \widehat{V}_{h+1}\right)\left(s_{h}^{\tau}, a_{h}^{\tau}\right)\right)/\widehat{\sigma}^2_h(s^\tau_h,a^\tau_h)}_{\widehat{\Lambda}_{h}^{-1}}\\
	\leq &\frac{2H}{\sqrt{\kappa K}}\cdot \tilde{O}(H\sqrt{d})=\widetilde{O}(\frac{H^2\sqrt{d/\kappa}}{\sqrt{K}}),
	\end{align*}
	where the first inequality is by Lemma~\ref{lem:sqrt_K_reduction} (with $\phi'=\phi/\widehat{\sigma}_h$ and $\norm{\phi/\widehat{\sigma}_h}\leq \norm{\phi} \leq 1:=C$) and the third inequality uses  $\sqrt{a^\top\cdot A\cdot a}\leq \sqrt{\norm{a}_2\norm{A}_2\norm{a}_2}=\norm{a}_2\sqrt{\norm{A}_2}$ with $a$ to be either $\phi$ or $w_h$. Moreover, $\lambda_{\min}(\tilde{\Lambda}_h^p)\geq \kappa/\max_{h,s,a} \widehat{\sigma}_h(s,a)^{2}\geq \kappa/H^2$ implies $\norm{(\tilde{\Lambda}_h^p)^{-1}}\leq H^2/\kappa$. The second inequality comes from Lemma~\ref{lem:Hoeff_mart} with $R=H$ since $|\eta_\tau|=|(r_{h}^{\tau}+\widehat{V}_{h+1}\left(s_{h+1}^{\tau}\right)-(\mathcal{T}_{h} \widehat{V}_{h+1})(s_{h}^{\tau}, a_{h}^{\tau}))/\widehat{\sigma}_h(s^\tau_h,a^\tau_h)|\leq H$ and $|x_\tau|=|\phi(s^\tau_h,a^\tau_h)/\widehat{\sigma}_h(s^\tau_h,a^\tau_h)|\leq 1$.
	
	The final result is obtained by absorbing the term (i) via the condition $K>\max\{\mathcal{M}_1,\mathcal{M}_2,\mathcal{M}_3,\mathcal{M}_4\}$.
\end{proof}

\begin{lemma}\label{lem:diff}
	Suppose random variables $\norm{X}_\infty\leq 2H$, $\norm{Y}_\infty\leq 2H$, then
	\[
	|\Var(X)-\Var(Y)|\leq 8H\cdot\norm{X-Y}_\infty.
	\]
\end{lemma}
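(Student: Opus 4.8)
The plan is to use the standard identity $\Var(Z) = \E[Z^2] - (\E Z)^2$ and split the difference $\Var(X) - \Var(Y)$ into a second-moment part and a squared-mean part, then bound each by a difference-of-squares factorization. First I would write
\[
\Var(X) - \Var(Y) = \big(\E[X^2] - \E[Y^2]\big) - \big((\E X)^2 - (\E Y)^2\big),
\]
so that by the triangle inequality it suffices to bound each of the two bracketed terms by $4H\,\|X-Y\|_\infty$.

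For the first term, I would factor pointwise $X^2 - Y^2 = (X-Y)(X+Y)$ and use $\|X+Y\|_\infty \le \|X\|_\infty + \|Y\|_\infty \le 4H$, giving $|X^2 - Y^2| \le 4H\,\|X-Y\|_\infty$ almost surely; taking expectations yields $|\E[X^2] - \E[Y^2]| \le 4H\,\|X-Y\|_\infty$. For the second term, I would factor $(\E X)^2 - (\E Y)^2 = (\E X - \E Y)(\E X + \E Y)$, bound $|\E X + \E Y| \le |\E X| + |\E Y| \le \|X\|_\infty + \|Y\|_\infty \le 4H$ (Jensen), and bound $|\E X - \E Y| = |\E[X-Y]| \le \E|X-Y| \le \|X-Y\|_\infty$, so that $|(\E X)^2 - (\E Y)^2| \le 4H\,\|X-Y\|_\infty$ as well. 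Adding the two bounds gives the claimed $8H\,\|X-Y\|_\infty$.

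There is essentially no obstacle here; this is a routine computation. The only point requiring a moment's care is the interpretation of $\|\cdot\|_\infty$: the bounds $|X+Y| \le 4H$ and $|X-Y| \le \|X-Y\|_\infty$ hold almost surely, and one should pass to expectations before combining, using $|\E W| \le \E|W| \le \|W\|_\infty$ throughout. No special structure of $P_h$ or of the value functions is used, so the lemma applies verbatim in the contexts where it is invoked (e.g.\ with $X = \mathcal{B}_h\widehat{V}_{h+1}$ and $Y = \mathcal{B}_h V^\star_{h+1}$, both bounded by $2H$).
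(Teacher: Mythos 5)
Your proof is correct and follows essentially the same route as the paper: both decompose $\Var(X)-\Var(Y)$ via the identity $\E[Z^2]-(\E Z)^2$, factor the resulting differences of squares, and bound each of the two pieces by $4H\norm{X-Y}_\infty$ using $\norm{X+Y}_\infty\le 4H$ and $|\E[X-Y]|\le\norm{X-Y}_\infty$. The only difference is cosmetic (the paper groups the terms as $\E[(X+Y)(X-Y)]-\E[X+Y]\E[X-Y]$ before bounding), so no further comment is needed.
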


\begin{proof}[Proof of Lemma~\ref{lem:diff}]
	\begin{align*}
	|\Var(X)-\Var(Y)|=&|\E[X^2]-\E[Y^2]-(\E[X]^2-\E[Y]^2)|=|\E[(X+Y)(X-Y)]-(\E[X+Y])(\E[X-Y])|\\
	\leq & \E[|X+Y|\cdot |X-Y|]+4H\cdot\norm{X-Y}_\infty\\
	\leq &4H\E[|X-Y|]+4H\cdot\norm{X-Y}_\infty=8H\cdot\norm{X-Y}_\infty.
	\end{align*}
	
\end{proof}


\end{document}